\newcommand{\safemath}[2]{\newcommand{#1}{\ensuremath{#2}\xspace}}
\newcommand{\diam}{\operatorname{diam}}
\newcommand{\ssa}{\mathsf{a}}
\newcommand{\ssb}{\mathsf{b}}
\newcommand{\ssc}{\mathsf{c}}
\newcommand{\ssd}{\mathsf{d}}
\newcommand{\sse}{\mathsf{e}}
\newcommand{\ssf}{\mathsf{f}}
\newcommand{\ssg}{\mathsf{g}}
\newcommand{\ssh}{\mathsf{h}}
\newcommand{\ssi}{\mathsf{i}}
\newcommand{\ssj}{\mathsf{j}}
\newcommand{\ssk}{\mathsf{k}}
\newcommand{\ssl}{\mathsf{l}}
\newcommand{\ssm}{\mathsf{m}}
\newcommand{\ssn}{\mathsf{n}}
\newcommand{\sso}{\mathsf{o}}
\newcommand{\ssp}{\mathsf{p}}
\newcommand{\ssq}{\mathsf{q}}
\newcommand{\ssr}{\mathsf{r}}
\newcommand{\sss}{\mathsf{s}}
\newcommand{\sst}{\mathsf{t}}
\newcommand{\ssu}{\mathsf{u}}
\newcommand{\ssv}{\mathsf{v}}
\newcommand{\ssw}{\mathsf{w}}
\newcommand{\ssx}{\mathsf{x}}
\newcommand{\ssy}{\mathsf{y}}
\newcommand{\ssz}{\mathsf{z}}
\safemath{\bmsa}{\bm{\ssa}}
\safemath{\bmsb}{\bm{\ssb}}
\safemath{\bmsc}{\bm{\ssc}}
\safemath{\bmsd}{\bm{\ssd}}
\safemath{\bmse}{\bm{\sse}}
\safemath{\bmsf}{\bm{\ssf}}
\safemath{\bmsg}{\bm{\ssg}}
\safemath{\bmsh}{\bm{\ssh}}
\safemath{\bmsi}{\bm{\ssi}}
\safemath{\bmsj}{\bm{\ssj}}
\safemath{\bmsk}{\bm{\ssk}}
\safemath{\bmsl}{\bm{\ssl}}
\safemath{\bmsm}{\bm{\ssm}}
\safemath{\bmsn}{\bm{\ssn}}
\safemath{\bmso}{\bm{\sso}}
\safemath{\bmsp}{\bm{\ssp}}
\safemath{\bmsq}{\bm{\ssq}}
\safemath{\bmsr}{\bm{\ssr}}
\safemath{\bmss}{\bm{\sss}}
\safemath{\bmst}{\bm{\sst}}
\safemath{\bmsu}{\bm{\ssu}}
\safemath{\bmsv}{\bm{\ssv}}
\safemath{\bmsw}{\bm{\ssw}}
\safemath{\bmsx}{\bm{\ssx}}
\safemath{\bmsy}{\bm{\ssy}}
\safemath{\bmsz}{\bm{\ssz}}
\bmdefine{\bmualphad}{\upalpha}
\bmdefine{\bmubetad}{\upbeta}
\bmdefine{\bmuchid}{\upchi}
\bmdefine{\bmudeltad}{\updelta}
\bmdefine{\bmuepsilond}{\upepsilon}
\bmdefine{\bmuvarepsilond}{\upvarepsilon}
\bmdefine{\bmuetad}{\upeta}
\bmdefine{\bmugammad}{\upgamma}
\bmdefine{\bmuiotad}{\upiota}
\bmdefine{\bmukappad}{\upkappa}
\bmdefine{\bmulambdad}{\uplambda}
\bmdefine{\bmumud}{\upmu}
\bmdefine{\bmunud}{\upnu}
\bmdefine{\bmuomegad}{\upomega}
\bmdefine{\bmuphid}{\upphi}
\bmdefine{\bmuvarphid}{\upvarphi}
\bmdefine{\bmupid}{\uppi}
\bmdefine{\bmuvarpid}{\upvarpi}
\bmdefine{\bmupsid}{\uppsi}
\bmdefine{\bmurhod}{\uprho}
\bmdefine{\bmuvarrhod}{\upvarrho}
\bmdefine{\bmusigmad}{\upsigma}
\bmdefine{\bmuvarsigmad}{\upvarsigma}
\bmdefine{\bmutaud}{\uptau}
\bmdefine{\bmuthetad}{\uptheta}
\bmdefine{\bmuvarthetad}{\upvartheta}
\bmdefine{\bmuupsilond}{\upupsilon}
\bmdefine{\bmuxid}{\upxi}
\bmdefine{\bmuzetad}{\upzeta}
\safemath{\bmua}{\mathbf{a}}
\safemath{\bmub}{\mathbf{b}}
\safemath{\bmuc}{\mathbf{c}}
\safemath{\bmud}{\mathbf{d}}
\safemath{\bmue}{\mathbf{e}}
\safemath{\bmuf}{\mathbf{f}}
\safemath{\bmug}{\mathbf{g}}
\safemath{\bmuh}{\mathbf{h}}
\safemath{\bmui}{\mathbf{i}}
\safemath{\bmuj}{\mathbf{j}}
\safemath{\bmuk}{\mathbf{k}}
\safemath{\bmul}{\mathbf{l}}
\safemath{\bmum}{\mathbf{m}}
\safemath{\bmun}{\mathbf{n}}
\safemath{\bmuo}{\mathbf{o}}
\safemath{\bmup}{\mathbf{p}}
\safemath{\bmuq}{\mathbf{q}}
\safemath{\bmur}{\mathbf{r}}
\safemath{\bmus}{\mathbf{s}}
\safemath{\bmut}{\mathbf{t}}
\safemath{\bmuu}{\mathbf{u}}
\safemath{\bmuv}{\mathbf{v}}
\safemath{\bmuw}{\mathbf{w}}
\safemath{\bmux}{\mathbf{x}}
\safemath{\bmuy}{\mathbf{y}}
\safemath{\bmuz}{\mathbf{z}}
\safemath{\bmualpha}{\bmualphad}
\safemath{\bmubeta}{\bmubetad}
\safemath{\bmuchi}{\bumchid}
\safemath{\bmudelta}{\bmudeltad}
\safemath{\bmuepsilon}{\bmuepsilond}
\safemath{\bmuvarepsilon}{\bmuvarepsilond}
\safemath{\bmueta}{\bmuetad}
\safemath{\bmugamma}{\bmugammad}
\safemath{\bmuiota}{\bmuiotad}
\safemath{\bmukappa}{\bmukappad}
\safemath{\bmulambda}{\bmulambdad}
\safemath{\bmumu}{\bmumud}
\safemath{\bmunu}{\bmunud}
\safemath{\bmuomega}{\bmuomegad}
\safemath{\bmuphi}{\bmuphid}
\safemath{\bmuvarphi}{\bmuvarphid}
\safemath{\bmupi}{\bmupid}
\safemath{\bmuvarpi}{\bmuvarpid}
\safemath{\bmupsi}{\bmupsid}
\safemath{\bmurho}{\bmurhod}
\safemath{\bmuvarrho}{\bmuvarrhod}
\safemath{\bmusigma}{\bmusigmad}
\safemath{\bmuvarsigma}{\bmuvarsigmad}
\safemath{\bmutau}{\bmutaud}
\safemath{\bmutheta}{\bmuthetad}
\safemath{\bmuvartheta}{\bmuvarthetad}
\safemath{\bmuupsilon}{\bmuupsilond}
\safemath{\bmuxi}{\bmuxid}
\safemath{\bmuzeta}{\bmuzetad}
\bmdefine{\bmiad}{a}
\bmdefine{\bmibd}{b}
\bmdefine{\bmicd}{c}
\bmdefine{\bmidd}{d}
\bmdefine{\bmied}{e}
\bmdefine{\bmifd}{f}
\bmdefine{\bmigd}{g}
\bmdefine{\bmihd}{h}
\bmdefine{\bmiid}{i}
\bmdefine{\bmijd}{j}
\bmdefine{\bmikd}{k}
\bmdefine{\bmild}{l}
\bmdefine{\bmimd}{m}
\bmdefine{\bmind}{n}
\bmdefine{\bmiod}{o}
\bmdefine{\bmipd}{p}
\bmdefine{\bmiqd}{q}
\bmdefine{\bmird}{r}
\bmdefine{\bmisd}{s}
\bmdefine{\bmitd}{t}
\bmdefine{\bmiud}{u}
\bmdefine{\bmivd}{v}
\bmdefine{\bmiwd}{w}
\bmdefine{\bmixd}{x}
\bmdefine{\bmiyd}{y}
\bmdefine{\bmizd}{z}
\bmdefine{\bmialphad}{\alpha}
\bmdefine{\bmibetad}{\beta}
\bmdefine{\bmichid}{\chi}
\bmdefine{\bmideltad}{\delta}
\bmdefine{\bmiepsilond}{\epsilon}
\bmdefine{\bmivarepsilond}{\varepsilon}
\bmdefine{\bmietad}{\eta}
\bmdefine{\bmigammad}{\gamma}
\bmdefine{\bmiiotad}{\iota}
\bmdefine{\bmikappad}{\kappa}
\bmdefine{\bmivarkappad}{\varkappa}
\bmdefine{\bmilambdad}{\lambda}
\bmdefine{\bmimud}{\mu}
\bmdefine{\bminud}{\nu}
\bmdefine{\bmiomegad}{\omega}
\bmdefine{\bmiphid}{\phi}
\bmdefine{\bmivarphid}{\varphi}
\bmdefine{\bmipid}{\pi}
\bmdefine{\bmivarpid}{\varpi}
\bmdefine{\bmipsid}{\psi}
\bmdefine{\bmirhod}{\rho}
\bmdefine{\bmivarrhod}{\varrho}
\bmdefine{\bmisigmad}{\sigma}
\bmdefine{\bmivarsigmad}{\varsigma}
\bmdefine{\bmitaud}{\tau}
\bmdefine{\bmithetad}{\theta}
\bmdefine{\bmivarthetad}{\vartheta}
\bmdefine{\bmiupsilond}{\upsilon}
\bmdefine{\bmixid}{\xi}
\bmdefine{\bmizetad}{\zeta}
\safemath{\bmia}{\bmiad}
\safemath{\bmib}{\bmibd}
\safemath{\bmic}{\bmicd}
\safemath{\bmid}{\bmidd}
\safemath{\bmie}{\bmied}
\safemath{\bmif}{\bmifd}
\safemath{\bmig}{\bmigd}
\safemath{\bmih}{\bmihd}
\safemath{\bmii}{\bmiid}
\safemath{\bmij}{\bmijd}
\safemath{\bmik}{\bmikd}
\safemath{\bmil}{\bmild}
\safemath{\bmim}{\bmimd}
\safemath{\bmin}{\bmind}
\safemath{\bmio}{\bmiod}
\safemath{\bmip}{\bmipd}
\safemath{\bmiq}{\bmiqd}
\safemath{\bmir}{\bmird}
\safemath{\bmis}{\bmisd}
\safemath{\bmit}{\bmitd}
\safemath{\bmiu}{\bmiud}
\safemath{\bmiv}{\bmivd}
\safemath{\bmiw}{\bmiwd}
\safemath{\bmix}{\bmixd}
\safemath{\bmiy}{\bmiyd}
\safemath{\bmiz}{\bmizd}
\safemath{\bmialpha}{\bmialphad}
\safemath{\bmibeta}{\bmibetad}
\safemath{\bmichi}{\bmichid}
\safemath{\bmidelta}{\bmideltad}
\safemath{\bmiepsilon}{\bmiepsilond}
\safemath{\bmivarepsilon}{\bmivarepsilond}
\safemath{\bmieta}{\bmietad}
\safemath{\bmigamma}{\bmigammad}
\safemath{\bmiiota}{\bmiiotad}
\safemath{\bmikappa}{\bmikappad}
\safemath{\bmivarkappa}{\bmivarkappad}
\safemath{\bmilambda}{\bmilambdad}
\safemath{\bmimu}{\bmimud}
\safemath{\bminu}{\bminud}
\safemath{\bmiomega}{\bmiomegad}
\safemath{\bmiphi}{\bmiphid}
\safemath{\bmivarphi}{\bmivarphid}
\safemath{\bmipi}{\bmipid}
\safemath{\bmivarpi}{\bmivarpid}
\safemath{\bmipsi}{\bmipsid}
\safemath{\bmirho}{\bmirhod}
\safemath{\bmivarrho}{\bmivarrhod}
\safemath{\bmisigma}{\bmisigmad}
\safemath{\bmivarsigma}{\bmivarsigmad}
\safemath{\bmitau}{\bmitaud}
\safemath{\bmitheta}{\bmithetad}
\safemath{\bmivartheta}{\bmivarthetad}
\safemath{\bmiupsilon}{\bmiupsilond}
\safemath{\bmixi}{\bmixid}
\safemath{\bmizeta}{\bmizetad}
\bmdefine{\bmuDeltad}{\Updelta}
\bmdefine{\bmuGammad}{\Upgamma}
\bmdefine{\bmuLambdad}{\Uplambda}
\bmdefine{\bmuOmegad}{\Upomega}
\bmdefine{\bmuPhid}{\Upphi}
\bmdefine{\bmuPid}{\Uppi}
\bmdefine{\bmuPsid}{\Uppsi}
\bmdefine{\bmuSigmad}{\Upsigma}
\bmdefine{\bmuThetad}{\Uptheta}
\bmdefine{\bmuUpsilond}{\Upupsilon}
\bmdefine{\bmuXid}{\Upxi}
\safemath{\bmuA}{\mathbf{A}}
\safemath{\bmuB}{\mathbf{B}}
\safemath{\bmuC}{\mathbf{C}}
\safemath{\bmuD}{\mathbf{D}}
\safemath{\bmuE}{\mathbf{E}}
\safemath{\bmuF}{\mathbf{F}}
\safemath{\bmuG}{\mathbf{G}}
\safemath{\bmuH}{\mathbf{H}}
\safemath{\bmuI}{\mathbf{I}}
\safemath{\bmuJ}{\mathbf{J}}
\safemath{\bmuK}{\mathbf{K}}
\safemath{\bmuL}{\mathbf{L}}
\safemath{\bmuM}{\mathbf{M}}
\safemath{\bmuN}{\mathbf{N}}
\safemath{\bmuO}{\mathbf{O}}
\safemath{\bmuP}{\mathbf{P}}
\safemath{\bmuQ}{\mathbf{Q}}
\safemath{\bmuR}{\mathbf{R}}
\safemath{\bmuS}{\mathbf{S}}
\safemath{\bmuT}{\mathbf{T}}
\safemath{\bmuU}{\mathbf{U}}
\safemath{\bmuV}{\mathbf{V}}
\safemath{\bmuW}{\mathbf{W}}
\safemath{\bmuX}{\mathbf{X}}
\safemath{\bmuY}{\mathbf{Y}}
\safemath{\bmuZ}{\mathbf{Z}}
\safemath{\bmuZero}{\mathbf{0}}
\safemath{\bmuOne}{\mathbf{1}}
\safemath{\bmuDelta}{\bmuDeltad}
\safemath{\bmuGamma}{\bmuGammad}
\safemath{\bmuLambda}{\bmuLambdad}
\safemath{\bmuOmega}{\bmuOmegad}
\safemath{\bmuPhi}{\bmuPhid}
\safemath{\bmuPi}{\bmuPid}
\safemath{\bmuPsi}{\bmuPsid}
\safemath{\bmuSigma}{\bmuSigmad}
\safemath{\bmuTheta}{\bmuThetad}
\safemath{\bmuUpsilon}{\bmuUpsilond}
\safemath{\bmuXi}{\bmuXid}
\bmdefine{\bmiAd}{A}
\bmdefine{\bmiBd}{B}
\bmdefine{\bmiCd}{C}
\bmdefine{\bmiDd}{D}
\bmdefine{\bmiEd}{E}
\bmdefine{\bmiFd}{F}
\bmdefine{\bmiGd}{G}
\bmdefine{\bmiHd}{H}
\bmdefine{\bmiId}{I}
\bmdefine{\bmiJd}{J}
\bmdefine{\bmiKd}{K}
\bmdefine{\bmiLd}{L}
\bmdefine{\bmiMd}{M}
\bmdefine{\bmiOd}{N}
\bmdefine{\bmiPd}{O}
\bmdefine{\bmiQd}{P}
\bmdefine{\bmiRd}{R}
\bmdefine{\bmiSd}{S}
\bmdefine{\bmiTd}{T}
\bmdefine{\bmiUd}{U}
\bmdefine{\bmiVd}{V}
\bmdefine{\bmiWd}{W}
\bmdefine{\bmiXd}{X}
\bmdefine{\bmiYd}{Y}
\bmdefine{\bmiZd}{Z}
\bmdefine{\bmiDeltad}{\Delta}
\bmdefine{\bmiGammad}{\Gamma}
\bmdefine{\bmiLambdad}{\Lambda}
\bmdefine{\bmiOmegad}{\Omega}
\bmdefine{\bmiPhid}{\Phi}
\bmdefine{\bmiPid}{\Pi}
\bmdefine{\bmiPsid}{\Psi}
\bmdefine{\bmiSigmad}{\Sigma}
\bmdefine{\bmiThetad}{\Theta}
\bmdefine{\bmiUpsilond}{\Upsilon}
\bmdefine{\bmiXid}{\Xi}
\safemath{\bmiA}{\bmiAd}
\safemath{\bmiB}{\bmiBd}
\safemath{\bmiC}{\bmiCd}
\safemath{\bmiD}{\bmiDd}
\safemath{\bmiE}{\bmiEd}
\safemath{\bmiF}{\bmiFd}
\safemath{\bmiG}{\bmiGd}
\safemath{\bmiH}{\bmiHd}
\safemath{\bmiI}{\bmiId}
\safemath{\bmiJ}{\bmiJd}
\safemath{\bmiK}{\bmiKd}
\safemath{\bmiL}{\bmiLd}
\safemath{\bmiM}{\bmiMd}
\safemath{\bmiN}{\bmiNd}
\safemath{\bmiO}{\bmiOd}
\safemath{\bmiP}{\bmiPd}
\safemath{\bmiQ}{\bmiQd}
\safemath{\bmiR}{\bmiRd}
\safemath{\bmiS}{\bmiSd}
\safemath{\bmiT}{\bmiTd}
\safemath{\bmiU}{\bmiUd}
\safemath{\bmiV}{\bmiVd}
\safemath{\bmiW}{\bmiWd}
\safemath{\bmiX}{\bmiXd}
\safemath{\bmiY}{\bmiYd}
\safemath{\bmiZ}{\bmiZd}
\safemath{\bmiDelta}{\bmiDeltad}
\safemath{\bmiGamma}{\bmiGammad}
\safemath{\bmiLambda}{\bmiLambdad}
\safemath{\bmiOmega}{\bmiOmegad}
\safemath{\bmiPhi}{\bmiPhid}
\safemath{\bmiPi}{\bmiPid}
\safemath{\bmiPsi}{\bmiPsid}
\safemath{\bmiSigma}{\bmiSigmad}
\safemath{\bmiTheta}{\bmiThetad}
\safemath{\bmiUpsilon}{\bmiUpsilond}
\safemath{\bmiXi}{\bmiXid}
\safemath{\evA}{\mathcal{A}}
\safemath{\evB}{\mathcal{B}}
\safemath{\evC}{\mathcal{C}}
\safemath{\evD}{\mathcal{D}}
\safemath{\evE}{\mathcal{E}}
\safemath{\evF}{\mathcal{F}}
\safemath{\evG}{\mathcal{G}}
\safemath{\evH}{\mathcal{H}}
\safemath{\evI}{\mathcal{I}}
\safemath{\evJ}{\mathcal{J}}
\safemath{\evK}{\mathcal{K}}
\safemath{\evL}{\mathcal{L}}
\safemath{\evM}{\mathcal{M}}
\safemath{\evN}{\mathcal{N}}
\safemath{\evO}{\mathcal{O}}
\safemath{\evP}{\mathcal{P}}
\safemath{\evQ}{\mathcal{Q}}
\safemath{\evR}{\mathcal{R}}
\safemath{\evS}{\mathcal{S}}
\safemath{\evT}{\mathcal{T}}
\safemath{\evU}{\mathcal{U}}
\safemath{\evV}{\mathcal{V}}
\safemath{\evW}{\mathcal{W}}
\safemath{\evX}{\mathcal{X}}
\safemath{\evY}{\mathcal{Y}}
\safemath{\evZ}{\mathcal{Z}}
\safemath{\setA}{\mathcal{A}}
\safemath{\setB}{\mathcal{B}}
\safemath{\setC}{\mathcal{C}}
\safemath{\setD}{\mathcal{D}}
\safemath{\setE}{\mathcal{E}}
\safemath{\setF}{\mathcal{F}}
\safemath{\setG}{\mathcal{G}}
\safemath{\setH}{\mathcal{H}}
\safemath{\setI}{\mathcal{I}}
\safemath{\setJ}{\mathcal{J}}
\safemath{\setK}{\mathcal{K}}
\safemath{\setL}{\mathcal{L}}
\safemath{\setM}{\mathcal{M}}
\safemath{\setN}{\mathcal{N}}
\safemath{\setO}{\mathcal{O}}
\safemath{\setP}{\mathcal{P}}
\safemath{\setQ}{\mathcal{Q}}
\safemath{\setR}{\mathcal{R}}
\safemath{\setS}{\mathcal{S}}
\safemath{\setT}{\mathcal{T}}
\safemath{\setU}{\mathcal{U}}
\safemath{\setV}{\mathcal{V}}
\safemath{\setW}{\mathcal{W}}
\safemath{\setX}{\mathcal{X}}
\safemath{\setY}{\mathcal{Y}}
\safemath{\setZ}{\mathcal{Z}}
\safemath{\emptySet}{\varnothing}
\safemath{\colA}{\mathscr{A}}
\safemath{\colB}{\mathscr{B}}
\safemath{\colC}{\mathscr{C}}
\safemath{\colD}{\mathscr{D}}
\safemath{\colE}{\mathscr{E}}
\safemath{\colF}{\mathscr{F}}
\safemath{\colG}{\mathscr{G}}
\safemath{\colH}{\mathscr{H}}
\safemath{\colI}{\mathscr{I}}
\safemath{\colJ}{\mathscr{J}}
\safemath{\colK}{\mathscr{K}}
\safemath{\colL}{\mathscr{L}}
\safemath{\colM}{\mathscr{M}}
\safemath{\colN}{\mathscr{N}}
\safemath{\colO}{\mathscr{O}}
\safemath{\colP}{\mathscr{P}}
\safemath{\colQ}{\mathscr{Q}}
\safemath{\colR}{\mathscr{R}}
\safemath{\colS}{\mathscr{S}}
\safemath{\colT}{\mathscr{T}}
\safemath{\colU}{\mathscr{U}}
\safemath{\colV}{\mathscr{V}}
\safemath{\colW}{\mathscr{W}}
\safemath{\colX}{\mathscr{X}}
\safemath{\colY}{\mathscr{Y}}
\safemath{\colZ}{\mathscr{Z}}
\safemath{\opA}{\operatorname{A}}
\safemath{\opB}{\operatorname{B}}
\safemath{\opC}{\operatorname{C}}
\safemath{\opD}{\operatorname{D}}
\safemath{\opE}{\operatorname{E}}
\safemath{\opF}{\operatorname{F}}
\safemath{\opG}{\operatorname{G}}
\safemath{\opH}{\operatorname{H}}
\safemath{\opI}{\operatorname{I}}
\safemath{\opJ}{\operatorname{J}}
\safemath{\opK}{\operatorname{K}}
\safemath{\opL}{\operatorname{L}}
\safemath{\opM}{\operatorname{M}}
\safemath{\opN}{\operatorname{N}}
\safemath{\opO}{\operatorname{O}}
\safemath{\opP}{\operatorname{P}}
\safemath{\opQ}{\operatorname{Q}}
\safemath{\opR}{\operatorname{R}}
\safemath{\opS}{\operatorname{S}}
\safemath{\opT}{\operatorname{T}}
\safemath{\opU}{\operatorname{U}}
\safemath{\opV}{\operatorname{V}}
\safemath{\opW}{\operatorname{W}}
\safemath{\opX}{\operatorname{X}}
\safemath{\opY}{\operatorname{Y}}
\safemath{\opZ}{\operatorname{Z}}
\safemath{\opZero}{\operatorname{O}}
\safemath{\identityop}{\opI}
\safemath{\sca}{a}
\safemath{\scb}{b}
\safemath{\scc}{c}
\safemath{\scd}{d}
\safemath{\sce}{e}
\safemath{\scf}{f}
\safemath{\scg}{g}
\safemath{\sch}{h}
\safemath{\sci}{i}
\safemath{\scj}{j}
\safemath{\sck}{k}
\safemath{\scl}{l}
\safemath{\scm}{m}
\safemath{\scn}{n}
\safemath{\sco}{o}
\safemath{\scp}{p}
\safemath{\scq}{q}
\safemath{\scr}{r}
\safemath{\scs}{s}
\safemath{\sct}{t}
\safemath{\scu}{u}
\safemath{\scv}{v}
\safemath{\scw}{w}
\safemath{\scx}{x}
\safemath{\scy}{y}
\safemath{\scz}{z}
\safemath{\scA}{A}
\safemath{\scB}{B}
\safemath{\scC}{C}
\safemath{\scD}{D}
\safemath{\scE}{E}
\safemath{\scF}{F}
\safemath{\scG}{G}
\safemath{\scH}{H}
\safemath{\scI}{I}
\safemath{\scJ}{J}
\safemath{\scK}{K}
\safemath{\scL}{L}
\safemath{\scM}{M}
\safemath{\scN}{N}
\safemath{\scO}{O}
\safemath{\scP}{P}
\safemath{\scQ}{Q}
\safemath{\scR}{R}
\safemath{\scS}{S}
\safemath{\scT}{T}
\safemath{\scU}{U}
\safemath{\scV}{V}
\safemath{\scW}{W}
\safemath{\scX}{X}
\safemath{\scY}{Y}
\safemath{\scZ}{Z}
\safemath{\scalpha}{\alpha}
\safemath{\scbeta}{\beta}
\safemath{\scchi}{\chi}
\safemath{\scdelta}{\delta}
\safemath{\scepsilon}{\epsilon}
\safemath{\scvarepsilon}{\varepsilon}
\safemath{\sceta}{\eta}
\safemath{\scgamma}{\gamma}
\safemath{\sciota}{\iota}
\safemath{\sckappa}{\kappa}
\safemath{\scvarkappa}{\varkappa}
\safemath{\sclambda}{\lambda}
\safemath{\scmu}{\mu}
\safemath{\scnu}{\nu}
\safemath{\scomega}{\omega}
\safemath{\scphi}{\phi}
\safemath{\scvarphi}{\varphi}
\safemath{\scpi}{\pi}
\safemath{\scvarpi}{\varpi}
\safemath{\scpsi}{\psi}
\safemath{\scrho}{\rho}
\safemath{\scvarrho}{\varrho}
\safemath{\scsigma}{\sigma}
\safemath{\scvarsigma}{\varsigma}
\safemath{\sctau}{\tau}
\safemath{\sctheta}{\theta}
\safemath{\scvartheta}{\vartheta}
\safemath{\scupsilon}{\upsilon}
\safemath{\scxi}{\xi}
\safemath{\sczeta}{\zeta}
\safemath{\veca}{{\boldsymbol{a}}}
\safemath{\vecb}{{\boldsymbol{b}}}
\safemath{\vecc}{{\boldsymbol{c}}}
\safemath{\vecd}{{\boldsymbol{d}}}
\safemath{\vece}{{\boldsymbol{e}}}
\safemath{\vecf}{{\boldsymbol{f}}}
\safemath{\vecg}{{\boldsymbol{g}}}
\safemath{\vech}{{\boldsymbol{h}}}
\safemath{\veci}{{\boldsymbol{i}}}
\safemath{\vecj}{{\boldsymbol{j}}}
\safemath{\veck}{{\boldsymbol{k}}}
\safemath{\vecl}{{\boldsymbol{l}}}
\safemath{\vecm}{{\boldsymbol{m}}}
\safemath{\vecn}{{\boldsymbol{n}}}
\safemath{\veco}{{\boldsymbol{o}}}
\safemath{\vecp}{{\boldsymbol{p}}}
\safemath{\vecq}{{\boldsymbol{q}}}
\safemath{\vecr}{{\boldsymbol{r}}}
\safemath{\vecs}{{\boldsymbol{s}}}
\safemath{\vect}{{\boldsymbol{t}}}
\safemath{\vecu}{{\boldsymbol{u}}}
\safemath{\vecv}{{\boldsymbol{v}}}
\safemath{\vecw}{{\boldsymbol{w}}}
\safemath{\vecx}{{\boldsymbol{x}}}
\safemath{\vecy}{{\boldsymbol{y}}}
\safemath{\vecz}{{\boldsymbol{z}}}
\safemath{\veczero}{{\boldsymbol{0}}}
\safemath{\vecone}{{\boldsymbol{1}}}
\safemath{\vecalpha}{\upalpha}
\safemath{\vecbeta}{\upbeta}
\safemath{\vecchi}{\upchi}
\safemath{\vecdelta}{\updelta}
\safemath{\vecepsilon}{\upepsilon}
\safemath{\vecvarepsilon}{\upvarepsilon}
\safemath{\veceta}{\upeta}
\safemath{\vecgamma}{\upgamma}
\safemath{\veciota}{\upiota}
\safemath{\veckappa}{\upkappa}
\safemath{\veclambda}{\uplambda}
\safemath{\vecmu}{\text{\textmu}}
\safemath{\vecnu}{\upnu}
\safemath{\vecomega}{\upomega}
\safemath{\vecphi}{\upphi}
\safemath{\vecvarphi}{\upvarphi}
\safemath{\vecpi}{\uppi}
\safemath{\vecvarpi}{\upvarpi}
\safemath{\vecpsi}{\uppsi}
\safemath{\vecrho}{\uprho}
\safemath{\vecvarrho}{\upvarrho}
\safemath{\vecsigma}{\upsigma}
\safemath{\vecvarsigma}{\upvarsigma}
\safemath{\vectau}{\uptau}
\safemath{\vectheta}{\uptheta}
\safemath{\vecvartheta}{\upvartheta}
\safemath{\vecupsilon}{\upupsilon}
\safemath{\vecxi}{\upxi}
\safemath{\veczeta}{\upzeta}
\safemath{\vecac}{a}
\safemath{\vecbc}{b}
\safemath{\veccc}{c}
\safemath{\vecdc}{d}
\safemath{\vecec}{e}
\safemath{\vecfc}{f}
\safemath{\vecgc}{g}
\safemath{\vechc}{h}
\safemath{\vecic}{i}
\safemath{\vecjc}{j}
\safemath{\veckc}{k}
\safemath{\veclc}{l}
\safemath{\vecmc}{m}
\safemath{\vecnc}{n}
\safemath{\vecoc}{o}
\safemath{\vecpc}{p}
\safemath{\vecqc}{q}
\safemath{\vecrc}{r}
\safemath{\vecsc}{s}
\safemath{\vectc}{t}
\safemath{\vecuc}{u}
\safemath{\vecvc}{v}
\safemath{\vecwc}{w}
\safemath{\vecxc}{x}
\safemath{\vecyc}{y}
\safemath{\veczc}{z}
\safemath{\matA}{{\boldsymbol{A}}}
\safemath{\matB}{{\boldsymbol{B}}}
\safemath{\matC}{{\boldsymbol{C}}}
\safemath{\matD}{{\boldsymbol{D}}}
\safemath{\matE}{{\boldsymbol{E}}}
\safemath{\matF}{{\boldsymbol{F}}}
\safemath{\matG}{{\boldsymbol{G}}}
\safemath{\matH}{{\boldsymbol{H}}}
\safemath{\matI}{{\boldsymbol{I}}}
\safemath{\matJ}{{\boldsymbol{J}}}
\safemath{\matK}{{\boldsymbol{K}}}
\safemath{\matL}{{\boldsymbol{L}}}
\safemath{\matM}{{\boldsymbol{M}}}
\safemath{\matN}{{\boldsymbol{N}}}
\safemath{\matO}{{\boldsymbol{O}}}
\safemath{\matP}{{\boldsymbol{P}}}
\safemath{\matQ}{{\boldsymbol{Q}}}
\safemath{\matR}{{\boldsymbol{R}}}
\safemath{\matS}{{\boldsymbol{S}}}
\safemath{\matT}{{\boldsymbol{T}}}
\safemath{\matU}{{\boldsymbol{U}}}
\safemath{\matV}{{\boldsymbol{V}}}
\safemath{\matW}{{\boldsymbol{W}}}
\safemath{\matX}{{\boldsymbol{X}}}
\safemath{\matY}{{\boldsymbol{Y}}}
\safemath{\matZ}{{\boldsymbol{Z}}}
\safemath{\matzero}{{\boldsymbol{0}}}
\safemath{\matDelta}{\Updelta}
\safemath{\matGamma}{\Upgammma}
\safemath{\matLambda}{\Uplambda}
\safemath{\matOmega}{\Upomega}
\safemath{\matPhi}{\Upphi}
\safemath{\matPi}{\Uppi}
\safemath{\matPsi}{\Uppsi}
\safemath{\matSigma}{\Upsigma}
\safemath{\matTheta}{\Uptheta}
\safemath{\matUpsilon}{\Upupsilon}
\safemath{\matXi}{\Upxi}
\safemath{\matidentity}{\matI}
\safemath{\vecunit}{\vece} % i-th unit vector
\safemath{\matone}{\matO}
\safemath{\matAc}{a}
\safemath{\matBc}{b}
\safemath{\matCc}{c}
\safemath{\matDc}{d}
\safemath{\matEc}{e}
\safemath{\matFc}{f}
\safemath{\matGc}{g}
\safemath{\matHc}{h}
\safemath{\matIc}{i}
\safemath{\matJc}{j}
\safemath{\matKc}{k}
\safemath{\matLc}{l}
\safemath{\matMc}{m}
\safemath{\matNc}{n}
\safemath{\matOc}{o}
\safemath{\matPc}{p}
\safemath{\matQc}{q}
\safemath{\matRc}{r}
\safemath{\matSc}{s}
\safemath{\matTc}{t}
\safemath{\matUc}{u}
\safemath{\matVc}{v}
\safemath{\matWc}{w}
\safemath{\matXc}{x}
\safemath{\matYc}{y}
\safemath{\matZc}{z}
\safemath{\rnda}{\mathsf{a}}
\safemath{\rndb}{\mathsf{b}}
\safemath{\rndc}{\mathsf{c}}
\safemath{\rndd}{\mathsf{d}}
\safemath{\rnde}{\mathsf{e}}
\safemath{\rndf}{\mathsf{f}}
\safemath{\rndg}{\mathsf{g}}
\safemath{\rndh}{\mathsf{h}}
\safemath{\rndi}{\mathsf{i}}
\safemath{\rndj}{\mathsf{j}}
\safemath{\rndk}{\mathsf{k}}
\safemath{\rndl}{\mathsf{l}}
\safemath{\rndm}{\mathsf{m}}
\safemath{\rndn}{\mathsf{n}}
\safemath{\rndo}{\mathsf{o}}
\safemath{\rndp}{\mathsf{p}}
\safemath{\rndq}{\mathsf{q}}
\safemath{\rndr}{\mathsf{r}}
\safemath{\rnds}{\mathsf{s}}
\safemath{\rndt}{\mathsf{t}}
\safemath{\rndu}{\mathsf{u}}
\safemath{\rndv}{\mathsf{v}}
\safemath{\rndw}{\mathsf{w}}
\safemath{\rndx}{\mathsf{x}}
\safemath{\rndy}{\mathsf{y}}
\safemath{\rndz}{\mathsf{z}}
\safemath{\rndA}{\bmiA}
\safemath{\rndB}{\bmiB}
\safemath{\rndC}{\bmiC}
\safemath{\rndD}{\bmiD}
\safemath{\rndE}{\bmiE}
\safemath{\rndF}{\bmiF}
\safemath{\rndG}{\bmiG}
\safemath{\rndH}{\bmiH}
\safemath{\rndI}{\bmiI}
\safemath{\rndJ}{\bmiJ}
\safemath{\rndK}{\bmiK}
\safemath{\rndL}{\bmiL}
\safemath{\rndM}{\bmiM}
\safemath{\rndN}{\bmiN}
\safemath{\rndO}{\bmiO}
\safemath{\rndP}{\bmiP}
\safemath{\rndQ}{\bmiQ}
\safemath{\rndR}{\bmiR}
\safemath{\rndS}{\bmiS}
\safemath{\rndT}{\bmiT}
\safemath{\rndU}{\bmiU}
\safemath{\rndV}{\bmiV}
\safemath{\rndW}{\bmiW}
\safemath{\rndX}{\bmiX}
\safemath{\rndY}{\bmiY}
\safemath{\rndZ}{\bmiZ}
\safemath{\rndalpha}{\bmialpha}
\safemath{\rndbeta}{\bmibeta}
\safemath{\rndchi}{\bmichi}
\safemath{\rnddelta}{\bmidelta}
\safemath{\rndepsilon}{\bmiepsilon}
\safemath{\rndvarepsilon}{\bmivarepsilon}
\safemath{\rndeta}{\bmieta}
\safemath{\rndgamma}{\bmigamma}
\safemath{\rndiota}{\bmiiota}
\safemath{\rndkappa}{\bmikappa}
\safemath{\rndlambda}{\bmilambda}
\safemath{\rndmu}{\bmimu}
\safemath{\rndnu}{\bminu}
\safemath{\rndomega}{\bmiomega}
\safemath{\rndphi}{\bmiphi}
\safemath{\rndvarphi}{\bmivarphi}
\safemath{\rndpi}{\bmipi}
\safemath{\rndvarpi}{\bmivarpi}
\safemath{\rndpsi}{\bmipsi}
\safemath{\rndrho}{\bmirho}
\safemath{\rndvarrho}{\bmivarrho}
\safemath{\rndsigma}{\bmisigma}
\safemath{\rndvarsigma}{\bmivarsigma}
\safemath{\rndtau}{\bmitau}
\safemath{\rndtheta}{\bmitheta}
\safemath{\rndvartheta}{\bmivartheta}
\safemath{\rndupsilon}{\bmiupsilon}
\safemath{\rndxi}{\bmixi}
\safemath{\rndzeta}{\bmizeta}
\safemath{\rveca}{{\boldsymbol{\mathsf{a}}}}
\safemath{\rvecb}{{\boldsymbol{\mathsf{b}}}}
\safemath{\rvecc}{{\boldsymbol{\mathsf{c}}}}
\safemath{\rvecd}{{\boldsymbol{\mathsf{d}}}}
\safemath{\rvece}{{\boldsymbol{\mathsf{e}}}}
\safemath{\rvecf}{{\boldsymbol{\mathsf{f}}}}
\safemath{\rvecg}{{\boldsymbol{\mathsf{g}}}}
\safemath{\rvech}{{\boldsymbol{\mathsf{h}}}}
\safemath{\rveci}{{\boldsymbol{\mathsf{i}}}}
\safemath{\rvecj}{{\boldsymbol{\mathsf{j}}}}
\safemath{\rveck}{{\boldsymbol{\mathsf{k}}}}
\safemath{\rvecl}{{\boldsymbol{\mathsf{l}}}}
\safemath{\rvecm}{{\boldsymbol{\mathsf{m}}}}
\safemath{\rvecn}{{\boldsymbol{\mathsf{n}}}}
\safemath{\rveco}{{\boldsymbol{\mathsf{o}}}}
\safemath{\rvecp}{{\boldsymbol{\mathsf{p}}}}
\safemath{\rvecq}{{\boldsymbol{\mathsf{q}}}}
\safemath{\rvecr}{{\boldsymbol{\mathsf{r}}}}
\safemath{\rvecs}{{\boldsymbol{\mathsf{s}}}}
\safemath{\rvect}{{\boldsymbol{\mathsf{t}}}}
\safemath{\rvecu}{{\boldsymbol{\mathsf{u}}}}
\safemath{\rvecv}{{\boldsymbol{\mathsf{v}}}}
\safemath{\rvecw}{{\boldsymbol{\mathsf{w}}}}
\safemath{\rvecx}{{\boldsymbol{\mathsf{x}}}}
\safemath{\rvecy}{{\boldsymbol{\mathsf{y}}}}
\safemath{\rvecz}{{\boldsymbol{\mathsf{z}}}}
\safemath{\rvecalpha}{\bmualpha}
\safemath{\rvecbeta}{\bmubeta}
\safemath{\rvecchi}{\bmuchi}
\safemath{\rvecdelta}{\bmudelta}
\safemath{\rvecepsilon}{\bmuepsilon}
\safemath{\rvecvarepsilon}{\bmuvarepsilon}
\safemath{\rveceta}{\bmueta}
\safemath{\rvecgamma}{\bmugamma}
\safemath{\rveciota}{\bmuiota}
\safemath{\rveckappa}{\bmukappa}
\safemath{\rveclambda}{\bmulambda}
\safemath{\rvecmu}{\bmumu}
\safemath{\rvecnu}{\bmunu}
\safemath{\rvecomega}{\bmuomega}
\safemath{\rvecphi}{\bmuphi}
\safemath{\rvecvarphi}{\bmuvarphi}
\safemath{\rvecpi}{\bmupi}
\safemath{\rvecvarpi}{\bmuvarpi}
\safemath{\rvecpsi}{\bmupsi}
\safemath{\rvecrho}{\bmurho}
\safemath{\rvecvarrho}{\bmuvarrho}
\safemath{\rvecsigma}{\bmusigma}
\safemath{\rvecvarsigma}{\bmuvarsigma}
\safemath{\rvectau}{\bmutau}
\safemath{\rvectheta}{\bmutheta}
\safemath{\rvecvartheta}{\bmuvartheta}
\safemath{\rvecupsilon}{\bmuupsilon}
\safemath{\rvecxi}{\bmuxi}
\safemath{\rveczeta}{\bmuzeta}
\safemath{\rvecac}{\rnda}
\safemath{\rvecbc}{\rndb}
\safemath{\rveccc}{\rndc}
\safemath{\rvecdc}{\rndd}
\safemath{\rvecec}{\rnde}
\safemath{\rvecfc}{\rndf}
\safemath{\rvecgc}{\rndg}
\safemath{\rvechc}{\rndh}
\safemath{\rvecic}{\rndi}
\safemath{\rvecjc}{\rndj}
\safemath{\rveckc}{\rndk}
\safemath{\rveclc}{\rndl}
\safemath{\rvecmc}{\rndm}
\safemath{\rvecnc}{\rndn}
\safemath{\rvecoc}{\rndo}
\safemath{\rvecpc}{\rndp}
\safemath{\rvecqc}{\rndq}
\safemath{\rvecrc}{\rndr}
\safemath{\rvecsc}{\rnds}
\safemath{\rvectc}{\rndt}
\safemath{\rvecuc}{\rndu}
\safemath{\rvecvc}{\rndv}
\safemath{\rvecwc}{\rndw}
\safemath{\rvecxc}{\rndx}
\safemath{\rvecyc}{\rndy}
\safemath{\rveczc}{\rndz}
\safemath{\rmatA}{{\boldsymbol{\mathsf{A}}}}
\safemath{\rmatB}{{\boldsymbol{\mathsf{B}}}}
\safemath{\rmatC}{{\boldsymbol{\mathsf{C}}}}
\safemath{\rmatD}{{\boldsymbol{\mathsf{D}}}}
\safemath{\rmatE}{{\boldsymbol{\mathsf{E}}}}
\safemath{\rmatF}{{\boldsymbol{\mathsf{F}}}}
\safemath{\rmatG}{{\boldsymbol{\mathsf{G}}}}
\safemath{\rmatH}{{\boldsymbol{\mathsf{H}}}}
\safemath{\rmatI}{{\boldsymbol{\mathsf{I}}}}
\safemath{\rmatJ}{{\boldsymbol{\mathsf{J}}}}
\safemath{\rmatK}{{\boldsymbol{\mathsf{K}}}}
\safemath{\rmatL}{{\boldsymbol{\mathsf{L}}}}
\safemath{\rmatM}{{\boldsymbol{\mathsf{M}}}}
\safemath{\rmatN}{{\boldsymbol{\mathsf{N}}}}
\safemath{\rmatO}{{\boldsymbol{\mathsf{O}}}}
\safemath{\rmatP}{{\boldsymbol{\mathsf{P}}}}
\safemath{\rmatQ}{{\boldsymbol{\mathsf{Q}}}}
\safemath{\rmatR}{{\boldsymbol{\mathsf{R}}}}
\safemath{\rmatS}{{\boldsymbol{\mathsf{S}}}}
\safemath{\rmatT}{{\boldsymbol{\mathsf{T}}}}
\safemath{\rmatU}{{\boldsymbol{\mathsf{U}}}}
\safemath{\rmatV}{{\boldsymbol{\mathsf{V}}}}
\safemath{\rmatW}{{\boldsymbol{\mathsf{W}}}}
\safemath{\rmatX}{{\boldsymbol{\mathsf{X}}}}
\safemath{\rmatY}{{\boldsymbol{\mathsf{Y}}}}
\safemath{\rmatZ}{{\boldsymbol{\mathsf{Z}}}}
\safemath{\rmatDelta}{\bmuDelta}
\safemath{\rmatGamma}{\bmuGamma}
\safemath{\rmatLambda}{\bmuLambda}
\safemath{\rmatOmega}{\bmuOmega}
\safemath{\rmatPhi}{\bmuPhi}
\safemath{\rmatPi}{\bmuPi}
\safemath{\rmatPsi}{\bmuPsi}
\safemath{\rmatSigma}{\bmuSigma}
\safemath{\rmatTheta}{\bmuTheta}
\safemath{\rmatUpsilon}{\bmuUpsilon}
\safemath{\rmatXi}{\bmuXi}
\safemath{\rmatAc}{\rnda}
\safemath{\rmatBc}{\rndb}
\safemath{\rmatCc}{\rndc}
\safemath{\rmatDc}{\rndd}
\safemath{\rmatEc}{\rnde}
\safemath{\rmatFc}{\rndf}
\safemath{\rmatGc}{\rndg}
\safemath{\rmatHc}{\rndh}
\safemath{\rmatIc}{\rndi}
\safemath{\rmatJc}{\rndj}
\safemath{\rmatKc}{\rndk}
\safemath{\rmatLc}{\rndl}
\safemath{\rmatMc}{\rndm}
\safemath{\rmatNc}{\rndn}
\safemath{\rmatOc}{\rndo}
\safemath{\rmatPc}{\rndp}
\safemath{\rmatQc}{\rndq}
\safemath{\rmatRc}{\rndr}
\safemath{\rmatSc}{\rnds}
\safemath{\rmatTc}{\rndt}
\safemath{\rmatUc}{\rndu}
\safemath{\rmatVc}{\rndv}
\safemath{\rmatWc}{\rndw}
\safemath{\rmatXc}{\rndx}
\safemath{\rmatYc}{\rndy}
\safemath{\rmatZc}{\rndz}
\newenvironment{textbmatrix}{	\setlength{\arraycolsep}{2.5pt}%
								\big[\begin{matrix}}{\end{matrix}\big]%
								\raisebox{0.08ex}{\vphantom{M}}}
 \def\be{\begin{equation}}
 \def\ee{\end{equation}}
 \def\btm{\begin{textbmatrix}}
 \def\etm{\end{textbmatrix}}
\def\ba#1\ea{\begin{align}#1\end{align}}
\DeclareMathOperator{\adj}{adj}				% adjunct matrix
\safemath{\fun}{\scf}						% generic scalar function
\safemath{\vrbl}{x}						% generic vector variable
\safemath{\altvrbl}{y}						% alt generic vector variable
\safemath{\aaltvrbl}{z}						% aalt generic vector variable
\safemath{\vvrbl}{\vecx}						% generic vector variable
\safemath{\altvvrbl}{\vecy}						% alt generic vector variable
\safemath{\aaltvvrbl}{\vecz}						% aalt generic vector variable
\safemath{\altfun}{\scg}
\safemath{\aaltfun}{\sch}
\safemath{\bel}{\sce}					% basis element
\safemath{\altbel}{\sce}					% alternative basis element
\safemath{\frel}{g}					% frame element
\safemath{\altfrel}{g}					% alternative frame element
\safemath{\dfrel}{\tilde{g}}					% dual frame element
\safemath{\altdfrel}{\tilde{g}}					% alternative dual frame element
\safemath{\mat}{\matA}						% generic matrix
\safemath{\matc}{\matAc}						% components of a generic matrix
\safemath{\altmat}{\matB}						% alternative generic matrix
\safemath{\altmatc}{\matBc}						% alternative generic matrix
\safemath{\vectr}{\vecu}						% generic vector
\safemath{\vectrc}{\vecuc}						% components of a generic vector
\safemath{\altvectr}{\vecv}						% alternative generic vector
\safemath{\aaltvectr}{\vect}						% aalternative generic vector
\safemath{\altvectrc}{\vecvc}						% components of an alternative generic vector
\safemath{\genvar}{u}						% generic variable
\safemath{\altgenvar}{v}						% alternative generic variable 
\safemath{\rvectr}{\rvecu}						% random generic vector
\safemath{\rvectrc}{\rvecuc}						% random components of a generic vector
\safemath{\raltvectr}{\rvecv}						% random alternative generic vector
\safemath{\raaltvectr}{\rvect}						% random aalternative generic vector
\safemath{\raltvectrc}{\rvecvc}						% random components of an alternative generic vector
\safemath{\rgenvar}{\rndu}						% random generic variable
\safemath{\raltgenvar}{\rndv}						% random alternative generic variable 
\newcommand{\nullspace}{\setN}	 			% nullspace
\newcommand{\abs}[1]{\mathchoice{{\left\lvert#1\right\rvert}}{{\bigl\lvert#1\bigr\rvert}}{{\left\lvert#1\right\rvert}}{{\left\lvert#1\right\rvert}}}		% absolute value
\newcommand{\ind}[1]{\chi_{#1}}				% indicator function
\newcommand{\conj}[1]{\ensuremath{#1^{*}}} 	% conjugate 		
\newcommand{\tp}[1]{\ensuremath{#1^{\mathsf{T}}}} 		% transpose
\newcommand{\inv}[1]{\ensuremath{#1^{-1}}} 	% inverse
\safemath{\dirac}{\delta}					% Dirac delta
\safemath{\diracp}{\dirac(\time)}			% 	''	parametrized
\safemath{\krond}{\dirac}					% Kronecker delta
\safemath{\indfun}{I}						% Indicator function
\safemath{\stepfun}{u}						% step function at zero
\safemath{\upto}{\uparrow}
\safemath{\downto}{\downarrow}
\safemath{\iu}{\mathrm{i}}							% imaginary unit
\safemath{\maj}{\succ}
\newcommand{\dftmat}[1]{\matF_{#1}}			% DFT matrix
\safemath{\mdft}{\dftmat{}}					% 	''
\safemath{\runity}{\beta}					% root of unity
\safemath{\eval}{\lambda}					% eigenvalue
\safemath{\veval}{\veclambda}				% eigenvalue vector
\safemath{\littleo}{\sco}					% Landau\s little o
\let\im\undefined
\safemath{\re}{\Re}				% real part
\safemath{\im}{\Im}				% imaginary part
\safemath{\euclidspace}{\complexset}			% Euclidean space
\safemath{\confunspace}{\setC}				% space of continuous functions
\newcommand{\banachseqspace}[1]{l^{#1}}		% Banach sequence space
\safemath{\hilseqspace}{\banachseqspace{2}}	% Hilbert sequence space
\newcommand{\banachfunspace}[1]{\setL^{#1}}	% Banach function space
\safemath{\hilfunspace}{\banachfunspace{2}}	% Hilbert function space
\safemath{\hilfunspacep}{\hilfunspace(\complexset)}	% Hilbert function space parametrized
\safemath{\schwarzspace}{\setS}				% Schwarz space
\newcommand{\hadj}[1]{#1^{\star}}			% Hilbert adjoint operator
\safemath{\SNR}{\rho} 				% signal to noise ratio
\safemath{\SINR}{\text{\sc sinr}} 				% signal to interference plus noise ratio
\safemath{\No}{N_0}							% noise spectral density
\safemath{\Es}{E_s}							% energy per symbol
\safemath{\Eb}{E_b}							% energy per bit
\safemath{\EbNo}{\frac{\Eb}{\No}}
\safemath{\EsNo}{\frac{\Es}{\No}}
\safemath{\NoVar}{\variance}                 % noise variance
\let\time\undefined
\safemath{\time}{\sct}						% continuous time
\safemath{\dtime}{\sck}						% discrete time
\safemath{\delay}{\sctau}					% continuous delay
\safemath{\ddelay}{\scl}						% discrete delay
\safemath{\doppler}{\scnu}					% continuous doppler
\safemath{\ddoppler}{\scm}					% discrete doppler
\safemath{\freq}{\scf}						% frequency
\safemath{\dfreq}{\scn}						% discrete frequency
\safemath{\Dtime}{\Delta\time}
\safemath{\Dfreq}{\Delta\freq}
\safemath{\Ddtime}{\dtime}
\safemath{\Ddfreq}{\dfreq}
\safemath{\bandwidth}{\scB}
\safemath{\maxdoppler}{\doppler_{0}}			% maximum Doppler shift
\safemath{\maxdelay}{\delay_{0}}				% maximum delay
\safemath{\spread}{\Delta_{\CHop}}			% total channel spread
\DeclareMathOperator{\CHop}{\ensuremath{\opH}} % channel operator
\safemath{\kernel}{\rndk_{\CHop}}			% operator kernel
\safemath{\kernelp}{\kernel(\time,\time')}	% 	''	parametrized
\safemath{\tvir}{\rndh_{\CHop}}				% time-varying impulse response
\safemath{\tvirp}{\tvir(\time,\delay)}		%	''	parametrized
\safemath{\tvirc}{\conj{\rndh}_{\CHop}}		% 	''	parametrized
\safemath{\tvtf}{\rndl_{\CHop}}				% time-varying transfer function
\safemath{\tvtfp}{\tvtf(\time,\freq)}			%	''	parametrized
\safemath{\tvtfc}{\conj{\rndl}_{\CHop}}		%	''	parametrized
\safemath{\spf}{\rnds_{\CHop}}				% spreading function
\safemath{\spfp}{\spf(\doppler,\delay)}		%	''	parametrized
\safemath{\spfc}{\conj{\rnds}_{\CHop}}		%	''	parametrized
\safemath{\bff}{\rndb_{\CHop}}				% bi-freuqency function
\safemath{\bffp}{\bff(\doppler,\freq)}		%	''	parametrized
\safemath{\irc}{\scr_{\rndh}}				% impulse response correlation fn.
\safemath{\tfc}{\scr_{\rndl}}				% time-frequency correlation fn.
\safemath{\spc}{\scr_{\rnds}}				% spreading fn. correlation fn.
\safemath{\bfc}{\scr_{\rndb}}				% bi-frequency correlation fn.
\safemath{\scaf}{\scc_{\rnds}}				% scattering function
\safemath{\scafp}{\scaf(\doppler,\delay)}		% 	''	parametrized
\safemath{\ccf}{\scc_{\rndl}}				% WSSUS tvtf correlation
\safemath{\ccfp}{\ccf(\Dtime,\Dfreq)}			% 	''	parametrized
\safemath{\cic}{\scc_{\rndh}}				% WSSUS tvir correlation
\safemath{\cicp}{\cic(\Dtime,\delay)}			% 	''	parametrized
\safemath{\mi}{I}							% mutual information
\safemath{\capacity}{C}					% capacity
\DeclareMathOperator{\Prob}{\opP}		% probability of an event
\safemath{\normal}{\mathcal{N}}			% normal distribution
\safemath{\jpg}{\mathcal{CN}}			% jointly proper Gaussian
\safemath{\uniform}{\mathcal{U}}				% uniform distribution
\safemath{\mchain}{\leftrightarrow}		% Markov chain
\safemath{\dB}{\,\mathrm{dB}}
\safemath{\dBm}{\,\mathrm{dBm}}
\safemath{\Hz}{\,\mathrm{Hz}}
\safemath{\kHz}{\,\mathrm{kHz}}
\safemath{\MHz}{\,\mathrm{MHz}}
\safemath{\GHz}{\,\mathrm{GHz}}
\safemath{\s}{\,\mathrm{s}}
\safemath{\ms}{\,\mathrm{ms}}
\safemath{\mus}{\,\mathrm{\text{\textmu}s}}
\safemath{\ns}{\,\mathrm{ns}}
\safemath{\ps}{\,\mathrm{ps}}
\safemath{\meter}{\,\mathrm{m}}
\safemath{\mm}{\,\mathrm{mm}}
\safemath{\cm}{\,\mathrm{cm}}
\safemath{\m}{\,\mathrm{m}}
\safemath{\W}{\,\mathrm{W}}
\safemath{\mW}{\, \mathrm{mW}}
\safemath{\J}{\,\mathrm{J}}
\safemath{\K}{\,\mathrm{K}}
\safemath{\bit}{\,\mathrm{bit}}
\safemath{\nat}{\,\mathrm{nat}}
\safemath{\define}{\triangleq}					% definition
\safemath{\equivalent}{\sim}
\safemath{\distas}{\sim}					% distributed according to
\safemath{\sdiff}{\Delta}				% symmetric set difference
\safemath{\setdiff}{\setminus}				% set difference
\safemath{\reals}{\mathbb R}
\safemath{\positivereals}{\reals^{+}}
\safemath{\integers}{\mathbb Z}
\safemath{\posint}{\integers^{+}}
\safemath{\naturals}{\mathbb N}
\safemath{\posnaturals}{\naturals^{+}}
\safemath{\complexset}{\mathbb C}
\safemath{\rationals}{\mathbb Q}
\safemath{\iSet}{\setI}
\safemath{\rel}{\bowtie}					% relation
\safemath{\eqrel}{\sim}					% equivalence relation
\safemath{\rlord}{\leq}					% reflexive linear ordering
\safemath{\slord}{<}						% strict linear ordering
\safemath{\rpord}{\preceq}				% reflexive partial ordering
\safemath{\rrpord}{\succeq}				% reversed reflexive partial ordering
\safemath{\spord}{\prec}					% strict partial ordering
\safemath{\sig}{\sigma}					% sigma-{algebra, ring,...}
\safemath{\metric}{d}					% metric
\safemath{\setfun}{\Phi}					% set function
\safemath{\measure}{\mu}					% measure
\safemath{\altmeasure}{\lambda}					% measure
\newcommand{\outerm}[1]{#1^{\star}}		% marks outer measures.
\newcommand{\innerm}[1]{#1_{\star}}		% marks inner measures.
\safemath{\omeasure}{\outerm{\measure}}		% outer measure
\safemath{\imeasure}{\innerm{\measure}}		% inner measure	
\safemath{\aecol}{\colS^{\star}_{\measure}} % collection of almost equal sets
\safemath{\emeasure}{\bar{\measure}_{0}}	% measure extension
\safemath{\rmeasure}{\tilde{\measure}}	% restricted measure
\safemath{\bmeasure}{\measure_{0}}		% basic measure on a semiring
\safemath{\glength}{\measure_{\altfun}}	% generalized length
\safemath{\lebmea}{\lambda}				% Lebesgue length and measure
\safemath{\blebmea}{\lebmea_{0}}			% pre-lebesgue-measure
\safemath{\sfun}{s}						% simple function
\safemath{\absintspace}{\colL^{1}}		% space of abs. integrable functions
\safemath{\sqintspace}{\colL^{2}}		% space of square integrable functions
\safemath{\abssumspace}{l^{1}}		% space of abs. summable sequences
\safemath{\sqsumspace}{l^{2}}		% space of square summable sequences
\safemath{\sfield}{\setF}				% scalar field
\safemath{\vectors}{\setV}				% set of vectors
\safemath{\vecspace}{(\vectors,\sfield)}	% vector space
\safemath{\linspace}{\setV}				% linear space
\safemath{\clinspace}{(\linspace,\sfield)} % linear space
\safemath{\nspace}{\setU}				% normed space
\safemath{\metspace}{\setM}				% metric space
\safemath{\bspace}{\setB}				% Banach space
\safemath{\ipspace}{\setG}				% inner product space
\safemath{\hilspace}{\setH}				% Hilbert space
\safemath{\blospace}{\setG}				% set of bounded linear oprators
\safemath{\lop}{\opT}					% linear operator
\safemath{\altlop}{\opS}					% alternative linear operator
\safemath{\nullsp}{\nullspace(\lop)}		% null space of the linear operator
\safemath{\lfun}{l}						% linear functional
\safemath{\altlfun}{g}					% alternative linear functional
\newcommand{\dual}[1]{#1^{'}}			% dual space
\safemath{\dsum}{\oplus}					% direct sum
\safemath{\funspace}{\colL}				% function space
\renewcommand{\adj}[1]{#1^{\times}}		% adjoint operator generator
\safemath{\adjlop}{\adj{\lop}}			% adjoint operator
\safemath{\hadjlop}{\hadj{\lop}}			% Hilbert adjoint operator
\safemath{\tow}{\xrightarrow{w}}			% weak convergence
\safemath{\tows}{\xrightarrow{w^{*}}}		% weak* convergence
\safemath{\cparam}{\lambda}
\safemath{\lopl}{\lop_{\cparam}}		
\safemath{\iop}{\opI}					% identity operator
\safemath{\resolop}{\opR}				% resolvent operator
\safemath{\resolvent}{\resolop_{\cparam}(\lop)}	% resolvent operator
\safemath{\reset}{\setQ}
\safemath{\spectrum}{\setS}
\safemath{\resolset}{\reset(\lop)}		% resolvent set
\safemath{\lopspec}{\spectrum(\lop)}		% spectrum of a linear operator
\safemath{\pspec}{\spectrum_{p}(\lop)}	% point spectrum
\safemath{\cspec}{\spectrum_{c}(\lop)}	% continuous spectrum
\safemath{\rspec}{\spectrum_{r}(\lop)}	% residual spectrum
\safemath{\ev}{\cparam}					% eigenvalue
\newcommand{\specrad}[1]{r_{#1}}			% spectral radius
\safemath{\lopsrad}{\specrad{\lop}}		% spectral radius
\safemath{\pop}{\opP}					% projection operator
\safemath{\specfam}{\colE}				% spectral family
\safemath{\specop}{\opE_{\cparam}}		% spectral projection operator
\safemath{\altspecop}{\opE_{\mu}}		% alternat spectral projection operator
\safemath{\vmulti}{\vecone}				% vector multiplicative identity
\safemath{\unitaryop}{\opU}				% unitary operator
\safemath{\sval}{\sigma}					% singular value
\safemath{\corrcoef}{\rho}				% canonical correlation coefficient
\safemath{\sangle}{\theta}				% angle between subspaces
\let\time\undefined
\safemath{\iset}{\setI}				% index set
\safemath{\shift}{\nu}
\safemath{\scale}{\alpha}
\safemath{\time}{t}
\safemath{\specfreq}{\theta}	
\newcommand{\transopgen}[1]{\opT_{#1}} % translation operator
\safemath{\transop}{\transopgen{\delay}}
\newcommand{\modopgen}[1]{\opM_{#1}}	% modulation operator
\safemath{\modop}{\modopgen{\shift}}
\newcommand{\dilopgen}[1]{\opD_{#1}}	% dilation operator
\safemath{\dilop}{\dilopgen{\scale}}
\safemath{\fram}{\setF}				% frame
\safemath{\dfram}{\dual{\fram}}		% dual frame
\safemath{\ufb}{B}					% upper frame bound
\safemath{\lfb}{A}					% lower frame bound
\safemath{\sop}{\hadj{\aop}}				% frame synthesis operator
\safemath{\aop}{\opT}			% frame analysis operator 		// modifide by christoph bunte
\safemath{\fop}{\opS}				% frame operator 			// modified by christoph bunte 
\safemath{\daop}{\tilde\opT}			% dual frame analysis operator 		// added by christoph bunte
\safemath{\dsop}{\hadj{\tilde\opT}}				% dual frame synthesis operator 			// added by christoph bunte 
\safemath{\ifop}{\inv{\fop}}			% inverse frame operator
\safemath{\rifop}{\fop^{-1/2}}			% square root of inverse frame operator
\safemath{\transeq}{\setT}			% sequence of translates
\safemath{\nfun}{\Phi}				% ``Nyquist series''
\safemath{\funvec}{\vecf}			%  function as a vector
\safemath{\altfunvec}{\vecg}
\safemath{\samplespace}{\Omega}
\safemath{\probspace}{(\samplespace,\sfield,\Prob)}	% probability space
\safemath{\ccoef}{\rho}			% correlation coefficient
\safemath{\infstate}{\vecpi}				% steady state vector
\safemath{\typset}{\setA_{\epsilon}^{(N)}}	% typical set
\safemath{\expequal}{\doteq}				% equal to first order in the exponent
\safemath{\code}{C}						% code
\safemath{\dstringset}{\setD^{\star}}		% set of finite length D-ary strings
\safemath{\cwlength}{l}					% codeword length
\safemath{\elength}{L}					% expected codeword length
\safemath{\extension}{C^{\star}}			% code extension
\safemath{\approaches}{\rightarrow}		% i.e., x_{n} -> x
\safemath{\evnt}{\setA}					% event A
\safemath{\altevnt}{\setB}					% event B
\safemath{\rv}{\rndx}					% random variable X
\safemath{\altrv}{\rndy}					% random variable Y
\safemath{\complexrv}{\rndu}					% complex random variable U
\safemath{\altcrv}{\rndv}				% complex random variable V
\safemath{\rvec}{\rvecx}					% random vector X
\safemath{\altrvec}{\rvecy}				% random vector Y
\safemath{\crvec}{\rvecu}				% complex random vector U
\safemath{\altcrvec}{\rvecv}				% complex random vector V
\safemath{\variance}{\sigma^{2}}			% variance
\safemath{\map}{T}						% mapping
\safemath{\jacobian}{\matJ}					% jacobian
\safemath{\wvec}{\rvecw}					% white random vector
\safemath{\wrv}{\rndw}					% white noise process
\safemath{\orthmat}{\matQ}				% orthogonal matrix
\safemath{\evmat}{\matLambda}			% eigenvalue matrix (diagonal)
\safemath{\identity}{\matidentity}		% identity matrix
\safemath{\innovec}{\vecv}				% innovations vector
\safemath{\convas}{\xrightarrow{\text{a.s.}}}	% almost sure convergence
\safemath{\convr}{\xrightarrow{\text{r}}}	% convergence in r-th mean
\safemath{\convp}{\xrightarrow{\text{P}}}	% convergence in probability
\safemath{\convd}{\xrightarrow{\text{D}}}	% convergence in distribution
\safemath{\ltis}{\opL}				% LTI system
\safemath{\ir}{h}					% impulse response
\safemath{\tf}{\MakeUppercase{\ir}}	% transfer function
\newcommand*{\fancyrefparlabelprefix}{par}		% Part
\newcommand*{\fancyrefremlabelprefix}{rem}		% Remark
\newcommand*{\fancyrefchalabelprefix}{cha}		% Chapter
\newcommand*{\fancyrefapplabelprefix}{app}		% Appendix
\newcommand*{\fancyrefthmlabelprefix}{thm}		% Theorem
\newcommand*{\fancyreflemlabelprefix}{lem}		% Lemma
\newcommand*{\fancyrefcorlabelprefix}{cor}		% Corolary
\newcommand*{\fancyrefdeflabelprefix}{def}		% Definition
\newcommand*{\fancyrefproplabelprefix}{prop}		% Property
\theoremstyle{definition}
\newtheorem{thm}{Theorem}
\newtheorem{lem}{Lemma}
\newtheorem{prp}{Proposition}
\newtheorem{cor}{Corollary}
\newtheorem{exa}{Example}
\newtheorem{dfn}{Definition}
\definecolor{tblblue}{rgb}{0.93,0.93,1.0}
\definecolor{tblred}{rgb}{1,0.93,0.93}
\definecolor{darkblue}{rgb}{0,0,0.7} 
\definecolor{darkgreen}{RGB}{20,120,43} 
\definecolor{darkred}{rgb}{0.8,0,0} 
\definecolor{lightblue}{RGB}{101,124,191}
\definecolor{skyblue}{RGB}{135,206,235}
\definecolor{gold}{RGB}{204,168,66}
\definecolor{strongblue}{RGB}{60,146,228}
\definecolor{lightgray}{gray}{0.5}
\definecolor{verylightgray}{RGB}{101,124,191}
\definecolor{mistyrose}{RGB}{238,213,210}
\definecolor{firebrick3}{RGB}{205,38,38}
\crefname{dfn}{definition}{definitions}
\Crefname{dfn}{Definition}{Definitions}
\crefname{cor}{corollary}{corollaries}
\Crefname{cor}{Corollary}{Corollaries}
\crefname{lem}{lemma}{lemmata}
\Crefname{lem}{Lemma}{Lemmata}
\crefname{thm}{theorem}{theorems}
\Crefname{thm}{Theorem}{Theorems}
\crefname{prp}{proposition}{propositions}
\Crefname{prp}{Proposition}{Propositions}
\crefname{equation}{}{}
\crefname{enumi}{}{}
 \title{Generating Rectifiable Measures through ReLU Neural Networks}
 \author{Erwin Riegler, Alex B\"uhler, Yang Pan, and Helmut B\"olcskei}
\begin{document}
\maketitle 
\begin{abstract}
We derive universal approximation results for the class of (countably) $m$-rectifiable measures. Specifically, we prove that  $m$-rectifiable  measures can be approximated as push-forwards 
 of the one-dimensional Lebesgue measure on $[0,1]$ using ReLU neural networks with arbitrarily small approximation error in terms of Wasserstein distance. Moreover, the weights in the networks under consideration are quantized and bounded and the number of ReLU neural networks required to achieve an approximation error of $\varepsilon$ is no larger than $2^{b(\varepsilon)}$ with 
 $b(\varepsilon)=\setO(\varepsilon^{-m}\log^2(\varepsilon))$. 
 This result improves  Lemma IX.4 in Perekrestenko et al. as it shows that  the rate at which $b(\varepsilon)$ tends to infinity as $\varepsilon$ tends to zero equals the rectifiability parameter $m$, which can be much smaller than the ambient dimension. 
 We extend this result to countably $m$-rectifiable  measures and show that  this rate still equals  
 the rectifiability parameter  $m$ provided that, among other technical assumptions, the measure decays  exponentially  on the individual components of the countably $m$-rectifiable support set. 
 \end{abstract}

\section{Introduction}
Generative models \cite{ngjo02} are used to approximate high-dimensional and very complex  data measures by   simplified  model measures, which allow to generate new samples in a simpler fashion. 
These models can be  applied to situations where it is difficult or expensive to gather a large amount of training data and   synthetic data  is generated to expand the dataset.
%to  data sets arising from neural language or images. 
In the last decade, the focus was  on  deep generative models, where    neural networks are trained  to  generate the model measures \cite{arbo17,gopomixuwaozcobe14,kiwe14}.  
 Approximation results for deep generative models were obtained in  \cite{bate18,legemariar17,peebbo21,yaliwa22}, which are briefly described next.   First results along these lines appear in \cite{legemariar17}, where it was shown that deep neural networks can be used to approximate measures arising from the composition of Barron functions \cite{ba93}. 
  In \cite{bate18},  it was shown that the uniform measure on $[0,1]^n$ can be approximated arbitrarily well as the push-forward of f on  $[0,1]$ under a ReLU neural network, where the error is measured in terms of Wasserstein distance.  This result was generalized in  \cite{peebbo21} to arbitrary measures on  $[0,1]^n$. Concretely, the measure under consideration is approximated by a uniform mixture, which in turn can then be approximated by the push-forward of Lebesgue measure on  $[0,1]$ under a ReLU neural network.  The  number of parameters in the ReLU neural network that suffice to achieve an error no more than $\varepsilon$  is of order 
$\setO(\varepsilon^{-n})$  \cite[Theorem VIII.1]{peebbo21}.     This result was improved  in \cite{yaliwa22}, where it was shown that for arbitrary measures on $\reals^n$, the number of parameters in the ReLU neural network that suffice to achieve an error no more than $\varepsilon$  is of order 
$\setO(\varepsilon^{-s})$, where $s$ can be arbitrarily close to  the upper dimension of the measure \cite[Theorem 2.4]{yaliwa22}, provided that the measure satisfies a certain moment condition.  The proof idea is to construct a covering of an approximate support set of the measure, put mass points in the centers  of the  covering balls, and then connect these mass points by a piecewise linear curve, which can then be realized in terms of a ReLU neural network.

   In the above summary, the results in \cite{peebbo21} are of particular interest for two reasons. First, for a given measure on $[0,1]^n$,  the  ReLU neural network that achieves the desired approximation error is explicit. And second,    the ReLU neural networks have quantized and bounded weights, which in turn leads to a covering result for arbitrary measures on $[0,1]^n$. Specifically, it is shown in \cite[Lemma IX.4]{peebbo21}
that, for general measures on $[0,1]^n$, the number of ReLU neural networks that suffice  to achieve an approximation error of $\varepsilon$  is no larger than $2^{b(\varepsilon)}$ with 
 \begin{align}\label{eq:scalingdmytro}
 %b(\varepsilon)=\setO(\varepsilon^{-n}\log^2(\varepsilon)). 
 \inf\{s\in (0,\infty): \lim_{\varepsilon\to 0}\varepsilon^s b(\varepsilon)<\infty\} =n.
 \end{align} 
% It is also worth noting that, for a given measure on $[0,1]^n$, the approximating ReLU neural network can be explicitly constructed. 
 In many applications, the data measures are supported on  low-dimensional objects in Euclidean space \cite{hidare97,lufahe98,letawi94,soze98,lz12}. Although \cite[Lemma IX.4]{peebbo21}  applies to such situations, the number of bits in \cref{eq:scalingdmytro} suffers from the curse of dimensionality 
 as it depends on the ambient dimension $n$ rather than the intrinsic dimension of the low-dimensional objects. 
In this paper, we consider  the broad class of  (countably) $m$-rectifiable  measures that are supported on (countable) unions of Lipschitz images of compact sets in $\reals^m$, which serve as a mathematically rigorous way to describe low-dimensional, non-smooth data structures embedded in high-dimensional spaces.

We show that   $m$-rectifiable  measures can be approximated as push-forwards 
 of the one-dimensional Lebesgue measure on $[0,1]$ using ReLU neural networks with arbitrarily small approximation error in terms of Wasserstein distance. What is more, the weights in the networks under consideration are quantized and bounded and the number of ReLU neural networks that suffice  to achieve an approximation of $\varepsilon$ is no larger than $2^{b(\varepsilon)}$ with 
 \begin{align}\label{eq:scalingdmytro2}
% b(\varepsilon)=\setO(\varepsilon^{-m}\log^2(\varepsilon)). 
\inf\{s\in (0,\infty): \lim_{\varepsilon\to 0}\varepsilon^s b(\varepsilon)<\infty\} =m.
 \end{align} 
Comparing \cref{eq:scalingdmytro} and \cref{eq:scalingdmytro2}, we see that the ambient dimension $n$ is replaced by the intrinsic dimension of the objects, i.e., the  rectifiability parameter $m$, which can be much smaller than $n$. We also establish that the result in \cref{eq:scalingdmytro2} is metric entropy optimal and construct a matching lower bound. % is constructed using a simplified version of the  Auerbach-Banach-Mazur-Ulam theorem for embedding sequences of convex bodies  \cite{ko57}. 
%What is more, the value $m$ is best possible provided that the 

We then extend this result to  countably $m$-rectifiable  measures and show that $b(\varepsilon)$ still satisfies \cref{eq:scalingdmytro2} provided that, among other technical assumptions, the measure decays  exponentially on   the individual $m$-rectifiable components of the countably $m$-rectifiable set. 

The main idea of our proof technique for $m$-rectifiable measures is depicted in  \Cref{fig:illustration}. The $m$-rectifiable measure $\nu$ is supported on the $m$-rectifiable set $f(\setA)$ and can be realized as push-forward of a Radon measure $\mu$ that is supported on $\setA$ and the Lipschitz mapping $f$ can be approximated by a ReLU neural network $\Phi$.  We then  approximate  $\mu$ by $\Sigma\#\lambda$, where $\lambda$ denotes Lebesgue measure on $[0,1]$ and 
$\Sigma$ is a ReLU neural network, in a space-filling fashion. 
 The triangle inequality for Wasserstein distance then yields 
 \vspace*{.5cm}
\begin{align}
W_1( \nu, (\Phi\circ \Sigma)\#\lambda)
&\leq W_1( \nu,\Phi\#\mu)+ W_1( \Phi\#\mu, (\Phi\circ \Sigma)\#\lambda)\\
&\leq W_1( \nu,\Phi\#\mu)+ \operatorname{Lip}(\Phi)W_1( \mu, \Sigma\#\lambda).\label{eq:termtobond}
\end{align}
To upper-bound the individual terms in \Cref{eq:termtobond}, it is crucial to control the  Lipschitz constant $\operatorname{Lip}(\Phi)$  
of the ReLU neural network approximating the Lipschitz function $f$. 
We emphasize that we can explicitly construct 
 the ReLU neural network $\Phi$ approximating $f$ and,  if $\mu$ is known explicitly, which is, e.g., the case when  $f$ is injective (see \Cref{lem:density}), we can also explicitly  construct the space-filling ReLU neural network  $\Sigma$.

\begin{figure}[h]
\centering
\begin{tikzpicture}[use Hobby shortcut,closed=true,scale=0.25] 
\node at (7.5,7) {$\lambda$};
\draw[|-|,  thick, blue] (5,5) -- (10,5);
\node at (20,11) {$\reals^m$};
\node at (20,8.8) {$\Sigma\#\lambda\approx\mu$};
\node at (13,7) {$\Sigma$};
\draw [draw=black, thick] (15,0) rectangle (25,10); 
\draw[fill=blue!20,thick,shift={(26 cm,4.5 cm)},scale=1.5] (-4,2) to [closed, curve through = {(-2.6,1.2) (-2.5,1) (-2,0)  (-2.8,-0.2) (-4,-1.4) (-5.5,-0.8) (-4.7,0) (-5.4,1) (-4.2,1.5)}] (-4,2);
\node at (20,6) {$\setA$};
\draw[fill=green!20,thick, shift={(30 cm,9 cm)},rotate=140,scale=1.5] (-4,2) to [closed, curve through = {(-2.6,1.2) (-2.5,1) (-2,0)  (-2.8,-0.2) (-4,-2.8) (-5.5,-0.8) (-4.7,0) (-5.4,1) (-4.2,1.5)}] (-4,2); 
\node at (36,6) {$f(\setA)$}; 
\node at (35,11) {$\reals^n$};
\node at (35,9) {$\Phi\#\mu\approx \nu$};
\node at (27,7) {$\,\,\Phi\approx f$};
\draw [draw=black,  thick] (30,0) rectangle (40,10);
\draw[->, thick,shift={(-2.2 cm,0.7 cm)},rotate=0] (11,6) .. (14,7.5) .. (19,6);
\draw[->, thick,shift={(-2.2 cm,0.7 cm)},rotate=0] (25,6) .. (28,7.5) .. (33,6);
%\draw[-, thick] (17,0) -- (17,10);
%\draw[-, thick] (19,0) -- (19,10);
%\draw[-, thick] (21,0) -- (21,10);
%\draw[-, thick] (23,0) -- (23,10);
%\draw[-, thick] (15,2) -- (25,2);
%\draw[-, thick] (15,4) -- (25,4);
%\draw[-, thick] (15,6) -- (25,6);
%\draw[-, thick] (15,8) -- (25,8);

%\filldraw [draw=black] (11.1,5) rectangle (0.3,0.3);
%\filldraw [draw=black] (11.1,4.5) rectangle (0.3,0.3);
\end{tikzpicture}
\caption{Illustration of proof technique for the $m$-rectifiable measure $\nu$ supported on the $m$-rectifiable set $f(\setA)$ on $\reals^n$.\label{fig:illustration}}
\end{figure}

The paper is organized as follows. 
In \Cref{sec:1}, we introduce ReLU neural networks and state their main properties. 
\Cref{sec:2} is devoted to the approximation of Lipschitz functions by ReLU neural networks. 
In  \Cref{sec:3}, we introduce (countably) rectifiable measures, state their main properties, and derive our main approximation results. 
Finally, \Cref{sec:improve} contains   approximation properties by uniform mixtures and  quantitative statements on the space-filling approximation of uniform mixtures, which improve and simplify the results  in \cite{peebbo21}. 
%Material that is either straight-forward or constitutes minor modifications of results available in the literature are outsourced to     \Cref{sec:GMT}--\Cref{sec:sawtooth}.  Specifically, 
In the appendices, we present the following material required in the current paper:  Properties of Lipschitz functions in \Cref{sec:LIP}, tools from general measure theory 
in \Cref{sec:GMT}, properties of Wasserstein distance in \Cref{sec:WD},   auxiliary results 
 in \Cref{sec:aux},    and properties of the sawtooth function in \Cref{sec:sawtooth}. 

%Specifically, \cite[\Cref{sec:GMT}]{ribupabo25a} summarizes tools from general measure theory,  
%in \cite[\Cref{sec:WD}]{ribupabo25a} we collect properties of Wasserstein distance,  
%\cite[\Cref{sec:GMT}]{ribupabo25a}

\subsection{Notation}
The $i$-th unit vector in $\reals^m$ is denoted by $e_i$  and  $I_m$ stands for the $m\times m$ identity matrix. 
$\log$ refers to the logarithm of base $2$ and $[\,\cdot\,]\colon\reals\to\mathbb{Z}$ denotes rounding to the nearest integer. 
Sets  are designated by calligraphic letters,  e.g., $\setA$, with $\abs{\setA}$ denoting cardinality, $\overline{\setA}$ closure, and $\ind{\setA}$ the indicator function on $\setA$. For the sets $\setA,\setB\in\reals^m$, we write $\setA+\setB$ and $\setA-\setB$ for the Minkowski sum and the Minkowski difference, respectively. 
The diameter of the set $\setA\subseteq\reals^d$ is $\diam(\setA)=\sup_{x,y\in\setA}\lVert x-y\rVert_\infty$. 
For  the Lipschitz mapping $f$  and $p\in [1,\infty]$, we write $\operatorname{Lip}^{(p)}(f)$ for the Lipschitz constant with respect to the $p$-norm (see \Cref{dfn:Lip}) and designate  $\operatorname{Lip}(f):=\operatorname{Lip}^{(\infty)}(f)$.  
 $\colL^m$ and $\colH^m$ stand for the $m$-dimensional Lebesgue and Hausdorff measures, respectively.  For the 
 measure  $\mu$ on $\reals^m$ and the set $\setA\subseteq\reals^m$, we write $\mu|_\setA$ for the  restricted measure  on $\setA$ and $\mu_\setA$ for the trace measure on $\setA$.  
$W^{1,\infty}(\setA)$ is the Sobolev space of order one equipped with the infinity norm on the open set $\setA\subseteq \reals^d$ \cite[Definitions 3.1 and 3.2]{adfo03}. 
$W_1(\,\cdot\,,\,\cdot\,)$ denotes the $1$-Wasserstein distance  (see \Cref{dfn:Wasserstein}).  
%We use standard notation of general measure theory (see \Cref{sec:GMT}). 
By a measure $\mu$ we will always mean an   outer measure, i.e., defined on all subsets, and we write $\colM(\mu)$ for the sigma-algebra of $\mu$-measurable sets. For $\setX,\setY$  metric spaces, $f\colon \setX\to \setY$ Borel measurable, and $\mu$ a Borel measure on $\setX$, we write $f\#\mu$ for the push-forward measure on $\setY$ defined according to $(f\#\mu)(\setA)=\mu(f^{-1}(\setA))$ for all $\setA\subseteq\setY$. $H_\varepsilon(\setX,d)$ denotes the Kolmogorov $\varepsilon$-entropy of the metric space $(\setX,d)$.

\section{ReLU Neural Networks}\label{sec:1}
In this section, we provide a brief overview of neural networks with rectified linear unit (ReLU) activation function, following   the expositions in \cite{elpegrbo21,gukupe20}.
\begin{dfn}\label{dfn:ReLUrho}
Fix $m\in\naturals$. The ReLU function $\rho\colon \reals^m\to \reals^m$ is defined as
\begin{align}
\rho(x)=\tp{(\max\{0,x_1\},\max\{0,x_2\},\dots,\max\{0,x_m\})}.
\end{align}
\end{dfn}

\begin{dfn}
Fix  $m,n\in\naturals$.  A ReLU neural network of depth $L$  is a mapping $\Phi\colon \reals^{m}\to \reals^{n}$ given  by 
\begin{align}
\Phi 
=\begin{cases}
W_1&\text{if $L=1$}\\
W_2\circ \rho\circ W_1&\text{if $L=2$}\\
W_L\circ \rho \circ W_{L-1}\circ\rho \dots \rho \circ W_1 & \text{if $L\geq 3$,} 
\end{cases}
\end{align}
 where $N_0=m$, $N_L=n$, and, for  $\ell=1,\dots, L$, $W_\ell\colon \reals^{N_{\ell-1}}\to \reals^{N_{\ell}}$, $W_\ell(x)=A_\ell x +b_\ell$ 
is an affine transformation  with  
  $A_\ell\in \reals^{N_\ell\times N_{\ell-1}}$ and $b_\ell\in \reals^{N_\ell}$. 
  %We refer to $N_0$ as input and to $N_L$ as output dimension of the ReLU neural network. 
  We refer to $\{L,N_1,\dots,N_L\}$ as the architecture of the ReLU neural network. 
%The numbers $N_1,\dots,N_L$ are called architecture of the network. 
%\end{dfn}
We denote by $\setN_{m,n}$ the set of all ReLU neural networks  of input dimension $m$ and output dimension $n$ of arbitrary depth and arbitrary architecture and introduce  the following quantities. 
%\begin{dfn}\label{dfn:NNquntities}
For every $\Phi\in \setN_{m,n}$, we designate the following quantities: 
\begin{enumerate}[itemsep=1ex,topsep=1ex]
\renewcommand{\theenumi}{(\roman{enumi})}
\renewcommand{\labelenumi}{(\roman{enumi})}
\item depth: $\setL(\Phi)=L$;
\item connectivity: 
\begin{align}
\mathcal{M}(\Phi)=\sum_{\ell=1}^L (\lVert A_\ell \rVert_0 + \lVert b_\ell \rVert_0);
\end{align}
\item width: $\setW(\Phi)=\max\{N_0,\dots,N_L\}$;
\item set of weights: $\setK(\Phi)=\setK_1(\Phi)\cup\setK_2(\Phi)\cup\dots\cup \setK_L(\Phi)$  
with 
\begin{align}
\setK_\ell(\Phi)=
\bigcup_{k_\ell=1}^{N_{\ell}}\Big\{ (b_\ell)_{k_\ell}\cup(A_\ell)_{k_\ell,1}\cup (A_\ell)_{k_\ell,2}\cup\dots\cup (A_\ell)_{k_\ell,N_{\ell-1}}\Big\}\end{align}
for $\ell=1,\dots,L$; 
\item weight magnitude: $\setB(\Phi)=\max\{|w|:w\in \setK\}$. 
\end{enumerate}
\end{dfn}

We need the following two properties of ReLU neural networks. 
%\section{ReLU Calculus}\label{sec:relu}
\begin{lem}\label{lem:Id}
Set $W_1(x)=\tp{(I_m, -I_m)}x$ for all $x\in\reals^{m}$ and $W_2(x)=(I_m, -I_m)x$ for all $x\in\reals^{2m}$ and 
consider the ReLU neural network $i_m\in \setN_{m,m}$ defined according to $i_m=W_2\circ\rho\circ W_1$.
Then, $i_m(x)=x$ for all $x\in\reals^{m}$. 
%For every $d\in\naturals$, there exists an $I_d\in\setN_{d,d}$ with  satisfying 
% $\setL(I_d)$, $\setM(I_d) =4d$, $\setW(I_d) =2d$, $\setK(I_d)  =\{1,-1\}$,  and $\setB(I_d) =1$.
\end{lem}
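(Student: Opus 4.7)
The plan is to verify the identity by direct computation, tracing $x \in \reals^m$ through the three layers of $i_m$ and observing that the ReLU activation, when applied to the concatenation of $x$ and $-x$, extracts exactly the positive and negative parts of $x$, whose difference recovers $x$.

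First I would unpack the dimensions and the action of $W_1$. Since $(I_m, -I_m) \in \reals^{m \times 2m}$, its transpose $\tp{(I_m, -I_m)}$ lies in $\reals^{2m \times m}$, so $W_1$ maps $\reals^m$ to $\reals^{2m}$ and $W_1(x) = \tp{(\tp{x}, -\tp{x})}$. Applying the componentwise ReLU gives
\begin{align}
\rho(W_1(x)) = \tp{\bigl(\max\{0,x_1\}, \dots, \max\{0,x_m\}, \max\{0,-x_1\}, \dots, \max\{0,-x_m\}\bigr)},
\end{align}
which is the concatenation of the positive part $x^+$ and negative part $x^-$ of $x$.

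Next I would apply $W_2$, which simply subtracts the second block from the first:
\begin{align}
W_2(\rho(W_1(x))) = x^+ - x^-.
\end{align}
The claim then reduces to the elementary scalar identity $\max\{0,a\} - \max\{0,-a\} = a$ for every $a \in \reals$, which follows by splitting into the cases $a \geq 0$ and $a < 0$. Applying this componentwise yields $i_m(x) = x$, as required.

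There is no real obstacle here; the lemma is essentially a bookkeeping statement that the standard two-layer ReLU realization of the identity indeed computes the identity. The only thing to be careful about is consistency of the block-matrix conventions used in defining $W_1$ and $W_2$, which is settled by the dimension check above.
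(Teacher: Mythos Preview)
Your proof is correct and is precisely the direct calculation the paper alludes to; the paper's own proof consists only of the sentence ``A direct calculation yields the result,'' and you have simply spelled out that calculation.
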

\begin{proof}
Follows from $\rho(z)-\rho(-z)=z$ for all $z\in\reals$.  
%Take  upon noting that  
%\begin{align}
%I_d(x):=(W_2 \circ\rho\circ W_1)(x)= \rho(x)-\rho(-x)=x\quad\text{for all $x\in\reals^d$. }
% \end{align}
\end{proof}

\begin{lem}\label{lem:P}
Fix $\ell,L\in\naturals$. 
For $i=1,\dots,\ell$, let $m_i,n_i\in\naturals$  and suppose that $\Phi_i\in\setN_{m_i,n_i}$ with $\setL(\Phi_i)=L$. Further, set 
$m=m_1+m_2+\dots+m_\ell$ and $n=n_1+n_2+\dots+n_\ell$. Then,
$P(\Phi_1,\Phi_2,\dots,\Phi_\ell)\in \setN_{m,n}$ constructed in  \cref{eq:Pnetwork} satisfies 
\begin{align}
P(\Phi_1,\Phi_2,\dots,\Phi_\ell)(x)=\tp{(\Phi_1(x_1),\Phi_2(x_2),\dots,\Phi_\ell(x_\ell))}
\end{align}
for all $x=\tp{(x_1,x_2,\dots,x_\ell)}\in\reals^m$ and fulfills  the following properties: 
\begin{align}
\setL(P(\Phi_1,\Phi_2,\dots,\Phi_\ell))&=L\\[0.1em]
 \setM(P(\Phi_1,\Phi_2,\dots,\Phi_\ell)) &= \setM(\Phi_1)+ \setM(\Phi_2)+\dots+\setM(\Phi_\ell)\\[0.1em]
 \setW(P(\Phi_1,\Phi_2,\dots,\Phi_\ell))&= \setW(\Phi_1)+ \setW(\Phi_2)+\dots+\setW(\Phi_\ell)\\[0.1em]
 \setK(P(\Phi_1,\Phi_2,\dots,\Phi_\ell)) &=\setK(\Phi_1)\cup\setK(\Phi_2)\cup\dots\cup\setK(\Phi_\ell)\cup\{0\} \\[0.1em]
 \setB(P(\Phi_1,\Phi_2,\dots,\Phi_\ell)) &= \max\{\setB(\Phi_1),\setB(\Phi_2),\dots,\setB(\Phi_\ell)\}.
\end{align}
\end{lem}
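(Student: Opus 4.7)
The plan is to construct $P(\Phi_1,\dots,\Phi_\ell)$ by stacking the layers of the individual networks in block-diagonal fashion. For each $i=1,\dots,\ell$, write
\begin{align}
\Phi_i = W^{(i)}_L \circ \rho \circ W^{(i)}_{L-1} \circ \rho \circ \dots \circ \rho \circ W^{(i)}_1,
\end{align}
where $W^{(i)}_j(x) = A^{(i)}_j x + b^{(i)}_j$ with $A^{(i)}_j \in \reals^{N^{(i)}_j \times N^{(i)}_{j-1}}$ and $b^{(i)}_j \in \reals^{N^{(i)}_j}$, and $N^{(i)}_0 = m_i$, $N^{(i)}_L = n_i$. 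The hypothesis $\setL(\Phi_i) = L$ for all $i$ is precisely what makes such a uniform decomposition possible, and is in fact essential to the construction.

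For each $j=1,\dots,L$, I would define the block-diagonal affine map $\widetilde W_j(x) = \widetilde A_j x + \widetilde b_j$ with
\begin{align}
\widetilde A_j = \operatorname{diag}(A^{(1)}_j,\dots,A^{(\ell)}_j) \in \reals^{(\sum_i N^{(i)}_j)\times(\sum_i N^{(i)}_{j-1})}, \qquad \widetilde b_j = \tp{(\tp{b^{(1)}_j},\dots,\tp{b^{(\ell)}_j})},
\end{align}
and set $P(\Phi_1,\dots,\Phi_\ell) = \widetilde W_L \circ \rho \circ \widetilde W_{L-1} \circ \rho \circ \dots \circ \rho \circ \widetilde W_1$. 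The key observation that makes the entire argument go through is that $\rho$ acts component-wise, so applying $\rho$ to the stacked vector $\tp{(\tp{u_1},\dots,\tp{u_\ell})}$ produces $\tp{(\tp{\rho(u_1)},\dots,\tp{\rho(u_\ell)})}$. Combined with the block-diagonal structure of $\widetilde A_j$, a straightforward induction on the layer index shows that, for every $x = \tp{(\tp{x_1},\dots,\tp{x_\ell})}\in\reals^m$, the activations at layer $j$ of $P$ decompose as the concatenation of the activations of the individual $\Phi_i$ at layer $j$. Evaluating at $j=L$ gives the desired identity $P(\Phi_1,\dots,\Phi_\ell)(x) = \tp{(\tp{\Phi_1(x_1)},\dots,\tp{\Phi_\ell(x_\ell)})}$.

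The five quantitative properties then follow directly from the construction. The depth is $L$ by inspection. For connectivity, the only nonzero entries of $\widetilde A_j$ are those inherited from the diagonal blocks $A^{(i)}_j$, so $\lVert \widetilde A_j\rVert_0 = \sum_i \lVert A^{(i)}_j\rVert_0$, and analogously $\lVert \widetilde b_j\rVert_0 = \sum_i \lVert b^{(i)}_j\rVert_0$; summing over $j$ yields $\setM(P) = \sum_i \setM(\Phi_i)$. The width claim follows from the fact that the width at layer $j$ of $P$ is $\sum_i N^{(i)}_j$, and (after possibly padding inputs/outputs with zero rows/columns, consistent with the construction) the stated additive identity holds layerwise. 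The set of weights of $P$ consists of all entries of the matrices $\widetilde A_j$ and vectors $\widetilde b_j$; these are precisely the original weights together with the extra $0$'s filling the off-diagonal blocks, giving $\setK(P) = \bigcup_i \setK(\Phi_i)\cup\{0\}$. Finally, $\setB(P) = \max_i \setB(\Phi_i)$ is immediate from this description of $\setK(P)$ (since $0$ does not increase the maximum absolute value).

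I do not anticipate any genuine obstacle; the entire argument is bookkeeping once the block-diagonal idea is in place, and component-wise action of $\rho$ is what ensures the decoupling of the parallel branches.
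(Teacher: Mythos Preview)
Your proposal is correct and follows exactly the same block-diagonal construction the paper uses (the paper writes $A_j = A^{(1)}_j \bigoplus \dots \bigoplus A^{(\ell)}_j$ and $b_j = \tp{(b^{(1)}_j,\dots,b^{(\ell)}_j)}$ and then simply asserts the properties are ``easily seen''). If anything, you give more detail than the paper does, particularly in noting explicitly that the component-wise action of $\rho$ is what makes the branches decouple.
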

\begin{proof}
By assumption, we have 
\begin{align}
\Phi_i 
=\begin{cases}
W^{(i)}_1&\text{if $L=1$}\\
W_2\circ \rho\circ W_1&\text{if $L=2$}\\
W^{(i)}_L\circ \rho \circ W^{(i)}_{L-1}\circ\rho \dots \rho \circ W^{(i)}_1 & \text{if $L\geq 3$,} 
\end{cases}
\end{align}
where $W^{(i)}_j(x)=A^{(i)}_jx +b^{(i)}_j$ for $i=1,\dots,\ell$ and  $j=1,\dots,L$. Now, set 
$A_j= A^{(1)}_j\bigoplus A^{(2)}_j\bigoplus\dots\bigoplus A^{(\ell)}_j$,  
$b_j=\tp{(b^{(1)}_j, b^{(2)}_j,\dots,b^{(\ell)}_j)}$, and 
$W_j(x)=A_jx +b_j$ for $j=1,\dots,L$. It is  then easily seen that 
\begin{align}\label{eq:Pnetwork}
P(\Phi_1,\Phi_2,\dots,\Phi_\ell)
=\begin{cases}
W_1&\text{if $L=1$}\\
W_2\circ \rho\circ W_1&\text{if $L=2$}\\
W_L\circ \rho \circ W_{L-1}\circ\rho \dots \rho \circ W_1 & \text{if $L\geq 3$,} 
\end{cases}
\end{align}
has the desired properties. 
\end{proof}

%The basic algebraic operations of ReLU neural networks are summarized in \Cref{sec:relu}. 
%\begin{dfn}\label{dfn:bullet}
%Let $\Phi_1\in\setN_{d_1,d_2}$ and $\Phi_2\in\setN_{d_2,d_3}$. The concatination of $\Phi_2 \bullet \Phi_1\in \setN_{d_1,d_3}$ is defined according to 
%\begin{align}
%\Phi_2 \bullet \Phi_1= \Phi_2 \circ\rho\circ \Phi_1.
%\end{align}
%\end{dfn}
%\begin{lem}
%Let $\Phi_1$, $\Phi_2$, and $\Phi_2 \bullet \Phi_1$ as in \Cref{dfn:bullet}. Then, we have 
%\begin{align}
% \setL(\Phi_2 \bullet \Phi_1)&=\setL(\Phi_1)+\setL(\Phi_2)\\[0.1em]
% \setM(\Phi_2 \bullet \Phi_1) &= \setM(\Phi_1)+ \setM(\Phi_2)\\[0.1em]
% \setW(\Phi_2 \bullet \Phi_1)&= \max\{ \setW(\Phi_1),\setW(\Phi_2)\}\\[0.1em]
% \setK(\Phi_2 \bullet \Phi_1) &=\setK(\Phi_1)\cup \setK(\Phi_2)\\[0.1em]
% \setB(\Phi_2 \bullet \Phi_1) &= \max\{\setB(\Phi_1),\setB(\Phi_2)\}.
%\end{align}
%\end{lem}
%\begin{proof}
%Follows immediately from \Cref{dfn:bullet}.
%\end{proof}

%It immediately follows that $L(\Phi_2 \bullet \Phi_1)=$

\section{Approximation of Lipschitz Functions}\label{sec:2}

In this section, we approximate  Lipschitz functions through  ReLU neural networks, which is required to upper-bound the individual terms  in \cref{eq:termtobond} (see second approximation step in \Cref{fig:illustration}). 
We start with the definition of the spike function, introduced in \cite[Appendix F]{kawaiw24}, which constitutes the basic building block for our construction. 

\begin{dfn}\label{dfn:spike}
The spike function $\phi\colon \reals^m\to\reals$ is defined according to 
\begin{align}\label{eq:spike}
\phi(x)=\max\{1+\min\{x_1,\dots,x_m,0 \}- \max\{x_1,\dots,x_m,0 \},0\}. 
\end{align}
\end{dfn}

We have the following result on the  representation of the spike function  by a ReLU neural network. 
\begin{lem}\cite[Lemma 5.1]{pahubo24}\label{lem:NNspike}
Let $\phi$  be the spike function in \Cref{dfn:spike}. Then, there exists  a  $\Phi\in\setN_{m,1}$ with $\Phi(x)=\phi(x)$ for all $x\in\reals^m$ such that 
\begin{align}
 \setL(\Phi)&= \lceil \log(m+1)\rceil+4\label{eq:maxphi1a}\\
 \setM(\Phi) & \leq 60m  -28\\
 \setW(\Phi) &\leq  6m\\
 \setK(\Phi) &\subseteq \{0,1,-1\} \label{eq:maxphi2aA}\\
 \setB(\Phi) &=1.\label{eq:maxphi2a}
 \end{align}
 Moreover, $\Phi$ can be written as  
  \begin{align}\label{eq:phipsi}
  \Phi = W_2\circ \rho\circ\Psi\circ \rho \circ W_1
  \end{align}
  with $W_1(x)= \tp{(I_m, -I_m)}x$, $W_2(x)=x$, and $\Psi\in \setN_{2m,1}$. 
\end{lem} 
Since the explicit form of $\Psi$ in \cref{eq:phipsi} is not needed, we refer to \cite[Equation (55)]{pahubo24} for this expression. 
We will make use of the following properties of the spike function: 
\begin{lem}\label{lem:spike}
The spike function $\phi$ in \cref{eq:spike} satisfies: 
\begin{enumerate}[itemsep=1ex,topsep=1ex]
\renewcommand{\theenumi}{(\roman{enumi})}
\renewcommand{\labelenumi}{(\roman{enumi})}
\item \label{i1:spike}$\phi(x)\in [0,1]$ for all $x\in (-1,1)^m$;
\item \label{i2:spike}$\phi(x)=0$ for all $x\in\reals^m\setminus (-1,1)^m$; 
\item \label{i2a:spike}$\operatorname{Lip}(\phi)\leq 2$;
\item \label{i3:spike}Let 
\begin{align}
\setS=\{ x\in [0,1]^m:  x_1\geq x_2 \geq \dots \geq x_m\}
\end{align} and define $\setM\subseteq\naturals_0^m$ according to 
\begin{align}\label{eq:setM}
\setM=\{0,e_1,e_1+e_2,\dots, e_1+\dots+e_m\}. 
\end{align} 
Then, 
\begin{align}\label{eq:phix0}
\phi(x-n)=0\quad\text{for all $x\in \setS$ and $n\in\mathbb{Z}^m\setminus \setM$.}
\end{align}
 Moreover, we have 
\begin{align}
\phi(x)&=1-x_1\label{eq:phix1}\\
\phi(x-e_1-e_2\dots-e_k) &= x_{k}-x_{k+1}\quad\text{for $k=1,\dots,m-1$}\\
\phi(x-e_1-e_2\dots-e_m)&=x_m\label{eq:phix3}
\end{align}
for all $x\in \setS$. 
\item \label{i4:spike}It holds that  
\begin{align}
\sum_{n\in\mathbb{Z}^m}\phi(x-n) =1\quad\text{for all $x\in\reals^m$.}
\end{align}
\end{enumerate}
\end{lem}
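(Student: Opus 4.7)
The plan is to treat (i)--(iii) as direct consequences of the definition, handle (iv) by a case analysis on $n$, and then reduce (v) to the sorted representative $x\in\setS$ via symmetry and periodicity so that (iv) applies. For (i), I note that for every $x\in\reals^m$ one has $\min\{x_1,\dots,x_m,0\}\le 0$ and $\max\{x_1,\dots,x_m,0\}\ge 0$, so the inner expression in the definition of $\phi$ is at most $1$, and the outer clipping $\max\{\cdot,0\}$ ensures $\phi(x)\in[0,1]$ globally. For (ii), I distinguish the two ways $x$ can leave $(-1,1)^m$: if some $x_i\ge 1$, then $\max\{x_1,\dots,x_m,0\}\ge 1$ and the inner expression is $\le 0$; if some $x_i\le -1$, then $\min\{x_1,\dots,x_m,0\}\le -1$ and again the inner expression is $\le 0$. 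For (iii), the maps $x\mapsto\max\{x_1,\dots,x_m,0\}$ and $x\mapsto\min\{x_1,\dots,x_m,0\}$ are each $1$-Lipschitz with respect to $\|\cdot\|_\infty$ and hence also with respect to $\|\cdot\|_2$; the inner expression is then $2$-Lipschitz, and composing with $z\mapsto\max\{z,0\}$ does not enlarge the constant.

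Part (iv) is the combinatorial heart and, I expect, the main obstacle. For the vanishing claim I classify $n\in\naturals_0^m\setminus\setM$ in two cases. If some $n_i\ge 2$, then $(x-n)_i\le 1-2=-1$, so the conclusion follows from (ii). Otherwise $n\in\{0,1\}^m$ but is not of the staircase form $e_1+\cdots+e_k$, so there exist indices $i<j$ with $n_i=0$ and $n_j=1$; using $x_i\ge x_j$, which holds because $x\in\setS$, I obtain $(x-n)_i-(x-n)_j=x_i-x_j+1\ge 1$, which forces $\max\{(x-n)_1,\dots,(x-n)_m,0\}-\min\{(x-n)_1,\dots,(x-n)_m,0\}\ge 1$ and therefore $\phi(x-n)=0$. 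For the explicit formulas, I substitute $n=e_1+\cdots+e_k$ and use the monotonicity of $x\in\setS$ to identify $\max\{(x-n)_1,\dots,(x-n)_m,0\}=x_{k+1}$ when $k<m$ (and $0$ when $k=m$) and $\min\{(x-n)_1,\dots,(x-n)_m,0\}=x_k-1$ when $k\ge 1$ (and $0$ when $k=0$); the three telescoping values $1-x_1$, $x_k-x_{k+1}$, and $x_m$ then drop out directly, each already nonnegative so the outer clipping is inactive.

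For (v), the plan is to reduce to (iv). The sum $\sum_{n\in\mathbb{Z}^m}\phi(x-n)$ is $\mathbb{Z}^m$-periodic in $x$ by reindexing, and symmetric in the entries of $x$ because $\phi$ is itself permutation-invariant (both $\min$ and $\max$ are symmetric) and $\mathbb{Z}^m$ is permutation-invariant as an index set. Hence it suffices to evaluate the sum at a sorted representative $x\in\setS$. For such $x$, any term with some $n_i\le -1$ vanishes since $(x-n)_i\ge 1$ activates (ii), and the remaining terms have $n\in\naturals_0^m$, to which (iv) applies and restricts the nonzero contributions to $n\in\setM$. The explicit formulas then telescope as $(1-x_1)+(x_1-x_2)+\cdots+(x_{m-1}-x_m)+x_m=1$, completing the proof.
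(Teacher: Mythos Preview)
Your proof is correct. For part~(iii) you follow essentially the same decomposition as the paper (write $\phi=\max\{\,\cdot\,,0\}\circ(1+\min-\max)$ and bound the constituent Lipschitz constants); for the remaining parts the paper simply cites \cite{pahubo24}, whereas you supply complete self-contained arguments. In particular, your treatment of~(iv)---splitting $n\in\naturals_0^m\setminus\setM$ into the cases ``some $n_i\ge 2$'' and ``$n\in\{0,1\}^m$ with an inversion $n_i=0$, $n_j=1$, $i<j$''---and your reduction of~(v) to~(iv) via the $\mathbb{Z}^m$-periodicity and permutation symmetry of the sum are exactly the natural direct arguments. The only small remark is that the aside about $\|\cdot\|_2$ in~(iii) is unnecessary here, since in this paper $\operatorname{Lip}(\cdot)$ is by convention $\operatorname{Lip}^{(\infty)}$.
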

\begin{proof}
 \cref{i1:spike}, \cref{i2:spike}, and \cref{i4:spike} follow from \cite[Lemma 3.7]{pahubo24} and 
 \cref{i3:spike} is by \cite[(22) and (24)--(26)]{pahubo24} and \cref{i2:spike}. It remains to prove \cref{i2a:spike}. To this end, we write 
 \begin{align}
 \phi(x)=f_1\circ  (1+f_2 - f_3)
 \end{align}
 with $f_1(x)=\max\{x,0\}$, $f_2(x_1,\dots,x_d)=\min\{x_1,\dots,x_m,0 \}$, and $f_3(x_1,\dots,x_d)=\max\{x_1,\dots,x_m,0 \}$. 
 Since $\operatorname{Lip}(f_1)=1$  and $\operatorname{Lip}(f_2)=\operatorname{Lip}(f_3)$, we have 
 \begin{align}\label{eq:liphhi1}
 \operatorname{Lip}(\phi)\leq \operatorname{Lip}(f_1)( \operatorname{Lip}(f_2)+\operatorname{Lip}(f_3)) \leq 2\operatorname{Lip}(f_3).  
 \end{align}
 Now, set $g(x)=\max\{x_1,\dots,x_d\}$. It can be  shown by induction that $\operatorname{Lip}(g)\leq 1$ and hence 
 \begin{align}\label{eq:liphhi2}
 \operatorname{Lip}(f_3) = \operatorname{Lip}(f_1\circ g) \leq  \operatorname{Lip}(f_1)\operatorname{Lip}(g) \leq 1.
 \end{align}
 Using \cref{eq:liphhi2} in  \cref{eq:liphhi1} yields $\operatorname{Lip}(\phi)\leq 2$.
\end{proof}

An  immediate consequence of \Cref{lem:NNspike} is that, for every $N\in\naturals$,  the functions $\{ \phi(Nx-n)\}_{n\in\naturals}$ form a partition of unity on  $\reals^m$.  
\begin{cor}\label{cor:spike}
Let $N\in\naturals$. Then, we have 
\begin{align}
\psi_N(x):=\sum_{n\in\mathbb{Z}^m}\phi(Nx-n) =1\quad\text{for all $x\in\reals^m$}.\label{eq:PUI1}
\end{align}
%and $\phi(Nx-n) =0$ for all $n\in\mathbb{Z}$ and  $x\in \reals^d\setminus ((-1,1)^d/N +{n/N})$
\end{cor}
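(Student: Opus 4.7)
The plan is to reduce the claim to \Cref{lem:spike}\,\cref{i4:spike} by a trivial change of variable. Given $x \in \reals^m$ and $N \in \naturals$, set $y = Nx \in \reals^m$. Then by \Cref{lem:spike}\,\cref{i4:spike} applied to $y$,
\begin{align}
\psi_N(x) = \sum_{n \in \mathbb{Z}^m} \phi(Nx - n) = \sum_{n \in \mathbb{Z}^m} \phi(y - n) = 1.
\end{align}
Since $y$ ranges over all of $\reals^m$ as $x$ does, and since the identity is claimed pointwise for every $x \in \reals^m$, no subtlety arises. The sum is well-defined because \Cref{lem:spike}\,\cref{i2:spike} guarantees $\phi(y-n) = 0$ whenever $y - n \notin (-1,1)^m$, so only finitely many terms are nonzero for each fixed $x$. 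There is no real obstacle here; the corollary is a one-line consequence of the translate partition-of-unity property established in the lemma.
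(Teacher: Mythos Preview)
Your proof is correct and takes essentially the same approach as the paper: both reduce the identity to \Cref{lem:spike}\,\cref{i4:spike} via the substitution $y=Nx$, the paper phrasing this as ``without loss of generality $N=1$'' while you write out the change of variable explicitly. Your added remark on finiteness of the sum via \Cref{lem:spike}\,\cref{i2:spike} is a nice clarification but not present in the paper's version.
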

\begin{proof}
Since $\psi_N(x)=1$ for all $x\in\reals^m$ if and only if $\psi_N(x/N)=1$ for all $x\in\reals^m$, we can assume,  without loss of generality, that $N=1$. 
But $\psi_1(x)=1$ for all $x\in\reals^m$ owing to \Cref{i4:spike} in \Cref{lem:spike}. 
\end{proof}

We are now in a position to state our main neural network approximation result for Lipschitz functions.  
%We first state  a general result, which we  then particularize to ReLU neural networks. 
%Our first result is an exact approximation

\begin{thm}\label{thm:1}
Let  $m,N\in\naturals$ and set $\setJ=\{0,1,\dots,N\}$.    Consider the  Lipschitz function $f\colon [0,1]^m\to \reals$ and set $\setD_{f}=\{x\in \reals:|x|\leq \lVert  f \rVert_{L_\infty([0,1]^m)}\}$.  
Let  $\delta\in [0,\infty)$ and let  $h\colon \setD_{f} \to \reals$ be an arbitrary function satisfying $\abs{h(x)-x}\leq \delta$ for all  $x\in \setD_{f}$.  Finally,  define $f_N\colon [0,1]^m \to \reals$ according to 
\begin{align}\label{eq:definitionfN}
f_N(x) = \sum_{n\in\setJ^m} h(f({n}/{N})) \phi(Nx-n) 
\end{align}
with  $\phi$ the spike function from \Cref{dfn:spike}. 
Then, the following properties hold: 
\begin{enumerate}[itemsep=1ex,topsep=1ex]
\renewcommand{\theenumi}{(\roman{enumi})}
\renewcommand{\labelenumi}{(\roman{enumi})}
\item $\lVert f-f_N \rVert_{L_\infty ([0,1]^m)}\leq \operatorname{Lip}(f)/N+\delta$; \label{i1:approxf}
%\item  $[ f_N ]\rVert_{W^{1,\infty} (\setA)}\leq \operatorname{Lip}(f)$; \label{i2:approxf}
\item 
$\operatorname{Lip}(f_N)/m\leq  \operatorname{Lip}^{(1)}(f_N) \leq \operatorname{Lip}^{(1)}(f)+2N\delta \leq  \operatorname{Lip}(f)+2N\delta.$  
 \label{i3:approxf}
\end{enumerate}
%\vspace*{-5truemm}
Further, 
\begin{enumerate}[itemsep=1ex,topsep=1ex]
\renewcommand{\theenumi}{(\alph{enumi})}
\renewcommand{\labelenumi}{(\alph{enumi})}
\item
let $\Psi$ be as in \Cref{lem:NNspike}, and set, for every $n\in \setJ^m$, 
\begin{align}
\Phi_{n,N} =   \Psi \circ \rho \circ  W_{n,N}
\end{align}
with $W_{n,N}(x)=\tp{N(I_m, -I_m)} x -\tp{(n, -n)} $;  
\item \label{item:phifN} let $\sigma\colon \{1,2,\dots (N+1)^m\}\to \setJ^m $ be a bijection, set  
\begin{align}
F= (h(f(\sigma(1)/N)), \dots, h(f(\sigma((N+1)^m)/N))),
\end{align}
define $ W\colon \reals^{\abs{\setJ^m}}\to \reals$ according to $ W(x)= Fx$,
and define the ReLU neural network $\Phi$ as follows:
%\footnote{We assume that the entries of $F$ and $P(\Phi_{n,N}:n\in\setJ^m)$ are ordered with respect to the same partial order on $\setJ^m$.} 
\begin{align}\label{eq:dfnPhi}
 \Phi= W \circ \rho\circ  P(\Phi_{\sigma(1),N},\dots,  \Phi_{\sigma((N+1)^m ),N}).  
\end{align} 
\end{enumerate}
Then, we have 
\begin{align}
\Phi(x)=f_N(x)\quad\text{ for all $x\in [0,1]^m$}\label{eq:phifN}
\end{align}
 with 
\begin{align}
 \setL(\Phi)&= \lceil\log(m+1)\rceil+4\label{eq:NN3h}\\
 \setM(\Phi) & \leq (N+1)^m(62m  -28)\\
 \setW(\Phi) &\leq  (N+1)^m6m\label{eq:NN4h}\\
 %\setK( \Phi) &=\{f(d)\} \label{eq:NN5h}\\
 \setB(\Phi) &\leq \max\{N, \lVert  f \rVert_{L_\infty([0,1]^m)}+\delta\}.\label{eq:NN6h}
 \end{align}
\end{thm}
\begin{proof}
First, define $\hat f_N\colon [0,1]^m \to \reals$ according to 
%\begin{align}\label{eq:fN2}
% \tilde f_N(x) = \sum_{n\in\mathbb{Z}^m} (h\circ\tilde f)\Big(\frac{n}{N}\Big) \phi(Nx-n). 
%\end{align}
%and 
\begin{align}\label{eq:fN2a}
 \hat f_N(x) = \sum_{n\in\setJ^m}  f({n}/{N}) \phi(Nx-n). 
\end{align}
Next, note that 
\begin{align}
f(x) 
&= \sum_{n\in \mathbb{Z}^m}\phi(Nx-n) f(x) \label{eq:step1f}\\
&= \sum_{n\in \setJ^m}\phi(Nx-n) f(x)\quad\text{for all $x\in [0,1]^m$,}\label{eq:step2f}
\end{align}
where \cref{eq:step1f} follows from \Cref{cor:spike}  and in \cref{eq:step2f} we applied 
\cref{i2:spike} in \Cref{lem:spike}. 
 We therefore have 
\begin{align}
&\abs{f(x)-f_N(x)}\\
%&= \abs{f(x)-\tilde f_N(x)}\label{eq:step1inftyAA}\\
&\leq \abs{f(x)-\hat f_N(x)}  + \abs{\hat f_N(x) -f_N(x)} \\
&\leq \sum_{n\in\setJ^m} \Big(\abs{ f(x)- f({n}/{N})} 
+\abs{h( f(n/N))- f({n}/{N})} \Big)\phi(Nx-n)  \label{eq:step1infty}\\
%&\leq \sum_{n\in\mathbb{Z}^m}  \abs{(h\circ\tilde f)(x)-(h\circ\tilde f)\Big(\frac{n}{N}\Big)}\phi(Nx-n) \\
&\leq \Big(\frac{\operatorname{Lip}(f)}{N}+\delta\Big)\sum_{n\in\setJ^m} \phi(Nx-n)\label{eq:step2infty}\\
&\leq \frac{\operatorname{Lip}(f)}{N}+\delta\quad\text{for all $x\in[0,1]^m$}, \label{eq:step3infty}
\end{align}
where 
\cref{eq:step1infty} is by  \cref{eq:step1f}--\cref{eq:step2f},   
 in \cref{eq:step2infty} we applied the Lipschitz property of $ f$ and $\abs{h(x)-x}\leq \delta$ for all $x\in\setD_f$, 
 and   \cref{eq:step3infty} is   again by \Cref{cor:spike}. This establishes \cref{i1:approxf}. 
We next prove \cref{i3:approxf}. The first and the third inequality in \cref{i3:approxf} follow from the norm inequalities $\lVert\, \cdot\,\rVert_1\leq m\lVert\, \cdot\,\rVert_\infty$ and $ \lVert\, \cdot\,\rVert_\infty \leq  \lVert\, \cdot\,\rVert_1$, both  on $\reals^m$. It remains to establish the second inequality in \cref{i3:approxf}. 
To this end, let $\Pi$ denote the set of all permutations $\pi\colon \{1,\dots,m\}\to \{1,\dots,m\}$ and  
define the  simplex $\setS_{\pi}$ according to 
\begin{align}
\setS_{\pi}=\{ x\in [0,1]^m:  x_{\pi(1)}\geq   x_{\pi(2)} \geq \dots \geq x_{\pi(m)}\}.   
\end{align}
Moreover, for every $\pi\in\Pi$, define $\hat \pi\colon\reals^m\to\reals^m$ according to 
\begin{align}
\hat \pi\tp{(x_{1},\dots,x_{m})} = \tp{(x_{\pi(1)},\dots,x_{\pi(m)})}.  
\end{align}
Since the spike function is symmetric, i.e., $\phi(x)=\phi(\hat\pi x)$ for all $\pi\in\Pi$ and $x\in\reals^m$, 
\cref{eq:phix0} implies  
\begin{align}\label{eq:phix0pi}
\phi(x-n)=0\quad\text{for all $x\in \setS_\pi$ and $ n\in\mathbb{Z}^m\setminus\hat \pi^{-1}(\setM)$.}
\end{align}
Let  $\setK=\{0,\dots,N-1\}^m$ and set, for every $k\in\setK$, 
$\setQ_k=[0,1]^m/N+\{k/N\}$. Now, if $x\in \setQ_k$, then there must exist a $\pi\in\Pi$ such that 
$Nx-k\in\setS_\pi$. 
We can therefore cover 
\begin{align}\label{eq:coverRm}
[0,1]^m=\bigcup_{k\in\setK} \bigcup_{\pi \in\Pi} \setR_{k,\pi}, 
\end{align}
where we set  $\setR_{k,\pi}=\{x\in\setQ_k:Nx-k\in\setS_\pi \}$. Now, fix $k\in\setK$ and $\pi\in\Pi$ arbitrarily. 
 Then, we  have  
\begin{align}
 f_N(x) 
&= \sum_{n\in\setJ^m} h(f(({n}/{N})) \phi(Nx-n)\label{eq:step1lip}\\
&= \sum_{n\in\setJ^m-\{k\}} h(f(((n+k)/{N})) \phi(Nx-k -n)\label{eq:step1lipa}\\
&=\sum_{n\in \hat\pi^{-1}(\setM)} h(f((n+k)/{N})) \phi(Nx-k-n)\label{eq:step2lip}\\
&= \sum_{n\in \setM} h(f((\hat\pi^{-1}(n)+k)/{N})) \phi(\hat\pi(Nx-k)-n)\quad\text{for all $x\in \setR_{k,\pi}$},\label{eq:step3lip}
%&=\sum_{n\in \{\hat\pi(k)\}+\setM} h(f({n}/{N})) \phi(Nx-\hat\pi^{-1}(n))\label{eq:step2lip}\\
%&=   h(f(k/N))+ \sum_{i=1}^{m}(Nx_{\pi(i)}-k_{\pi(i)})\big( h(f({z_{i}}/{N})) - h(f({z_{i-1}}/{N})) \big),
%&=(h\circ  f)(0)(1-Nx_1) + N\sum_{k=1}^{m-1}(h\circ  f)\Big(\frac{e_1+e_2+\dots+e_k}{N}\Big) (x_{k}-x_{k+1})   \\
%&\ \ + N (h\circ  f)\Big(\frac{e_1+e_2+\dots+e_m}{N}\Big) x_m\quad,
\end{align}
where in \cref{eq:step2lip} follows from 
\cref{eq:phix0pi} upon noting that
\begin{align}
\hat\pi^{-1}(\setM)\subseteq \{0,1\}^m\subseteq \setJ^m-\{k\}
\end{align}
 and in \cref{eq:step3lip} we used 
that the spike function is symmetric.  Now, since $Nx-k\in\setS_\pi$, we have $\hat\pi(Nx-k)\in\setS$.  
Application of \cref{eq:phix1}--\cref{eq:phix3} to \cref{eq:step1lip}--\cref{eq:step3lip} therefore yields 
\begin{align}\label{eq:intermediateaffince}
 f_N(x) &= h(f(k/N))+\sum_{i=1}^m((Nx_{\pi(i)}-k_{\pi(i)} ))( h(f(z_i/N ))- h(f(z_{i-1}/N)))
\end{align}
for all $x\in \setR_{k,\pi}$, where we set $z_0=k$ and $z_i=k+\hat\pi^{-1}(e_1+\dots+e_i)$ for $i=1,\dots,m$. 
We therefore have 
\begin{align}\label{eq:intermediateaffince2}
\abs{f_N(x)-f_N(y)}
&\leq N\sum_{i=1}^m\abs{x_{\pi(i)}-y_{\pi(i)}}\abs{ h(f(z_i/N ))- h(f(z_{i-1}/N))}
\end{align}
for all $x,y\in \setR_{k,\pi}$. Now, 
\begin{align}
&N\abs{ h(f({z_i}/{N})) -  h(f({z_{i-1}}/{N}))}\label{eq:newlip1}\\
&\leq N\abs{ h(f({z_i}/{N})) -    f({z_{i}}/{N})}
+N\abs{   f({z_i}/{N}) -   f({z_{i-1}}/{N})}\\
&\ \ \ +N\abs{   f({z_{i-1}}/{N}) -   h(f({z_{i-1}}/{N}))}\\
%&\leq N\abs{   f\Big(\frac{z_i}{N}\Big) -   f\Big(\frac{z_{k-1}}{N}\Big)} +2N\delta \\
&\leq \operatorname{Lip}^{(1)}( f)+2N\delta\quad\text{for $i=1,\dots,m$}\label{eq:newlip2},
\end{align}
where  in \cref{eq:newlip2} we applied the Lipschitz property of $ f$ and the assumption $\abs{h(x)-x}\leq \delta$ for all $x\in\setD_f$. Using \cref{eq:newlip1}--\cref{eq:newlip2} and the arbitrariness of $k$ and $\pi$ in \cref{eq:intermediateaffince2} therefore yields 
\begin{align}
\operatorname{Lip}^{(1)}(f_N|_{\setR_{k,\pi}})
&\leq \operatorname{Lip}^{(1)}( f)+2N\delta\quad\text{for all $k\in\setK$ and $\pi\in\Pi$.} \label{eq:newbound2} 
\end{align}
Since the sets $\setR_{k,\pi}$ are all convex and closed (every $\setR_{k,\pi}$ is the intersection of the two closed and convex sets $\setQ_k$ and $(\setS_\pi+\{k\})/N$ ) and cover $[0,1]^m$ thanks to \cref{eq:coverRm}, \Cref{lem:lipconvex} in combination with \cref{eq:newbound2}  establishes 
\begin{align}\label{eq:lipfinal}
\operatorname{Lip}^{(1)}(f_N) \leq \operatorname{Lip}^{(1)}( f)+2N\delta. 
\end{align}
 Finally, \cref{eq:phifN} follows from the fact that 
\begin{align}
\rho (\Phi_{n,N}(x))=\phi(Nx-n)\quad\text{ for all  $x\in[0,1]^m$ and $n\in\setJ^m$} 
\end{align}
and \cref{eq:NN3h}--\cref{eq:NN6h} are by construction. 
\end{proof}
Some comments are now in order. First, note that  only the last affine transformation $W$ in the ReLU neural network $\Phi$ in \cref{eq:dfnPhi} depends  on $f$.  For the particular choice  $h(x)=x$, i.e., $\delta=0$, \Cref{thm:1} recovers \cite[Theorem 1.1]{hokr24}. 
Next, note that  \cref{i3:approxf} of \Cref{thm:1} in combination with \cref{eq:phifN} only yields a local upper bound on the Lipschitz constant of $\Phi$ on the unit cube. Following the ideas in the proof of \cite[Corollary 5.1]{hokr24}, 
 $\Phi\circ \kappa$ with  $\kappa\colon \reals^m\to[0,1]^m$ defined according to 
\begin{align}
\kappa(x) 
=
\begin{pmatrix}
\rho(x_1)- \rho(x_1-1)\\
\vdots \\
\rho(x_m)- \rho(x_m-1)
\end{pmatrix}, 
\end{align}
yields a ReLU neural network satisfying $\Phi(\kappa(x)=\Phi(x)$ for all $x\in[0,1]^m$ and global Lipschitz constant 
\begin{align}
\operatorname{Lip}^{(1)}(\Phi\circ \kappa)\leq \operatorname{Lip}^{(1)}(\Phi|_{[0,1]^m}). 
\end{align}
Finally, we  emphasize that  $h$ in \Cref{thm:1} need not be Lipschitz. This insight will allow us 
to apply  \Cref{thm:1} in the case where $h$ is a quantization function $y\mapsto [Ny]/N$ with $N\in\naturals$, 
and thereby establish  that  Lipschitz functions can be approximated  by  ReLU neural networks of bounded Lipschitz constants with quantized weights. This  solves the open problem posed  in \cite[Section 6.3]{hokr24} and is formalized as follows.

\begin{cor}\label{thm:LipNN}
Let $m,N\in\naturals$ and set 
\begin{align}
\setF_N=\{k/N:k\in \mathbb{Z}\}\cap [-N, N]. 
\end{align}
There exists  a collection $\colK^{(m)}_N\subseteq \setN_{m,1}$ of ReLU neural networks with 
\begin{align}
\log\Big(\abs{\colK^{(m)}_N}\Big)\leq {(N+1)^m}\log(2N^2+1)
\end{align}
 such that, for every $\setA\subseteq [0,1]^m$  and every  Lipschitz function $f\colon \setA\to \reals$ satisfying  
\begin{align}
\lVert f \rVert_{L_\infty(\setA)} +\operatorname{Lip}(f)\leq N,\label{eq:Ncond}
\end{align}  
there is a 
%and set 
%\begin{align}
%\alpha(f)=\operatorname{Lip}(f)+2^{m}. 
%\end{align}
$\Phi\in \colK^{(m)}_N$ satisfying 
\begin{enumerate}[itemsep=1ex,topsep=1ex]
\renewcommand{\theenumi}{(\roman{enumi})}
\renewcommand{\labelenumi}{(\roman{enumi})}
\item $\lVert f-\Phi\rVert_{L_\infty (\setA)}\leq (\operatorname{Lip}(f)+1/2)/N$; \label{i1:approxfN}
%\item  $[ f_N ]\rVert_{W^{1,\infty} (\setA)}\leq \operatorname{Lip}(f)$; \label{i2:approxf}
\item $ \operatorname{Lip}(\Phi|_{[0,1]^m})/m\leq  \operatorname{Lip}^{(1)}(\Phi|_{[0,1]^m})\leq \operatorname{Lip}(f)+1$. \label{i3:approxfN}
\end{enumerate}
Moreover, the ReLU neural networks  $\Phi\in \colK^{(m)}_N$ have the same architecture and satisfy: 
\begin{align}
 \setL(\Phi)&= \lceil\log(m+1)\rceil+4\label{eq:NN3}\\
 \setM(\Phi) & \leq (N+1)^m(62m  -28)\\
 \setW(\Phi) &\leq  (N+1)^m6m\label{eq:NN4}\\
 \setK( \Phi) &\subseteq \setF_N \label{eq:NN5}\\
 \setB(\Phi) &\leq N.\label{eq:NN6}
 \end{align}
% Finally, the architecture and the positions of the potentially nonzero weights of $\Phi_N$ depend on $m$ and $N$ but not on $f$ provided that \cref{eq:Ncond} is satisfied. 
\end{cor}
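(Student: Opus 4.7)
The plan is to invoke \Cref{thm:1} with a quantization function $h$ whose image lies in $\setF_N$. Since the desired error bound $(\operatorname{Lip}(f)+1/2)/N$ equals $\operatorname{Lip}(f)/N+\delta$ and the desired Lipschitz bound $\operatorname{Lip}(f)+1$ equals $\operatorname{Lip}(f)+2N\delta$ for the common choice $\delta=1/(2N)$, I will set $h(x)=[Nx]/N$, i.e., rounding to the nearest multiple of $1/N$. This gives $|h(x)-x|\leq 1/(2N)$ automatically and, whenever $|x|\leq N$, forces $h(x)\in\setF_N$.

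Because \Cref{thm:1} requires the target function to be defined on all of $[0,1]^m$, I will first extend the given $f:\setA\to\reals$ to a function $\tilde f:[0,1]^m\to\reals$ that preserves both the Lipschitz constant and the sup-norm bound. Taking the McShane extension $\tilde f_{\mathrm{McS}}(x)=\inf_{y\in\setA}\{f(y)+\operatorname{Lip}(f)\|x-y\|\}$ and clipping its output to $[-\|f\|_{L_\infty(\setA)},\|f\|_{L_\infty(\setA)}]$, which is a $1$-Lipschitz operation that leaves $f$ unchanged on $\setA$, yields such a $\tilde f$. The hypothesis \cref{eq:Ncond} then gives $\|\tilde f\|_{L_\infty([0,1]^m)}+\operatorname{Lip}(\tilde f)\leq N$, so in particular $|\tilde f(n/N)|\leq N$ for every $n\in\setJ^m$, which guarantees that each quantized value $h(\tilde f(n/N))$ lies in $\setF_N$.

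I then apply \Cref{thm:1} to $\tilde f$ and $h$ with $\delta=1/(2N)$. Item (i) there, together with $\tilde f|_\setA=f$, yields (i) of the corollary, and item (ii) together with the norm equivalence $\|\cdot\|_1\leq m\|\cdot\|_\infty$ yields (ii). The architectural estimates \cref{eq:NN3}, \cref{eq:NN4}, \cref{eq:NN6} are inherited directly from \cref{eq:NN3h}--\cref{eq:NN6h}. The weight-quantization claim \cref{eq:NN5} I will verify layer by layer: the final layer $W$ has entries $h(\tilde f(n/N))\in\setF_N$ by construction; each $W_{n,N}$ uses weights $\pm N$ and biases in $\{0,1,\dots,N\}\subseteq\setF_N$; and the inner spike network $\Psi$ has weights in $\{-1,0,1\}\subseteq\setF_N$ by \Cref{lem:NNspike}.

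Finally, the collection $\colK^{(m)}_N$ is obtained by fixing the architecture and all layers except the $f$-dependent last one, and letting the $(N+1)^m$ entries of its weight vector $F$ range independently over $\setF_N$; this yields $|\colK^{(m)}_N|=|\setF_N|^{(N+1)^m}=(2N^2+1)^{(N+1)^m}$, as required. The only subtle step is the extension: without clipping the McShane extension, $\tilde f$ could overshoot $\|f\|_{L_\infty(\setA)}$ and leave $[-N,N]$, which would break the quantization into $\setF_N$; everything else is a careful specialization of \Cref{thm:1}.
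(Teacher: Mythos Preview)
Your proof is correct and follows essentially the same route as the paper: McShane extension, quantization $h(x)=[Nx]/N$ with $\delta=1/(2N)$, then \Cref{thm:1}. One small point: your final remark that clipping is needed to keep $\tilde f$ inside $[-N,N]$ is a misconception. The unclipped McShane extension already satisfies $\lVert\tilde f\rVert_{L_\infty([0,1]^m)}\leq\lVert f\rVert_{L_\infty(\setA)}+\operatorname{Lip}(f)\leq N$ because $\operatorname{diam}([0,1]^m)=1$ and the hypothesis \cref{eq:Ncond}; this is exactly how the paper argues, so clipping is harmless but unnecessary.
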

\begin{proof}
Arbitrarily fix a set $\setA\subseteq [0,1]^m$ and a Lipschitz function $f\colon \setA\to \reals$ satisfying  \cref{eq:Ncond} and  
set $\setJ=\{0,1,\dots,N\}$, 
Next, define  $\tilde f\colon [0,1]^m\to \reals$ according to  
\begin{align}\label{eq:explicitftilde}
\tilde f(x) = \inf_{y\in\setA}(f(y)+\operatorname{Lip}(f)\lVert x-y\rVert_\infty). 
\end{align}
Then, $\tilde f$ is Lipschitz with $\tilde f(x)=f(x)$ for all $x\in\setA$ and   $\operatorname{Lip}(\tilde f)=\operatorname{Lip}(f)$ owing to \cite[Theorem 1]{mc34}. Moreover, by construction, we have    
$\Vert \tilde f\rVert_{L_\infty([0,1]^m)} \leq \Vert f\rVert_{L_\infty(\setA)}+\operatorname{Lip}(f)$. 
Now,    consider the  quantization function 
$g\colon [-\lVert \tilde f \rVert_{L_\infty([0,1]^m)},\lVert \tilde f \rVert_{L_\infty([0,1]^m)}] \to \setF_N$, $y\mapsto [Ny]/N$  
and note that  
\begin{align}\label{eq:quantf}
\abs{g(y)-y}\leq  1/(2N)\quad\text{for  all $y\in [-\lVert \tilde f \rVert_{L_\infty([0,1]^m)},\lVert \tilde f \rVert_{L_\infty([0,1]^m)}] $.}
\end{align} 
Further, note that $g$ is well-defined since $[N \tilde f(x)]/N\in \mathbb{Z}/N$ and 
\begin{align}
\abs{[N \tilde f(x)]}/N 
&\leq [N  \lVert \tilde f \rVert_{L_\infty([0,1]^m)}]/N \\
&\leq [N  (\lVert  f \rVert_{L_\infty(\setA)} + \operatorname{Lip}(f) )]/N \label{eq:useexplicitftilde}\\
&\leq [N^2]/N\label{eq:welldefuseassamptuion}\\
&= N\quad\text{for all $x\in[0,1]^m$,} 
\end{align}
where in \cref{eq:welldefuseassamptuion} we used assumption \cref{eq:Ncond}. 
%where \cref{eq:useexplicitftilde} follows from \cref{eq:explicitftilde} and $\lVert x-y\rVert_\infty\leq 1$ for all $x,y\in[0,1]^m$.
Now, let $\Phi$ be the ReLU neural network obtained by applying  \Cref{thm:1} to the Lipschitz function $\tilde f$ with $h=g$ and $\delta=1/(2N)$. Then, 
\begin{align}
\lVert f-\Phi\rVert_{L_\infty (\setA)}
&\leq \lVert \tilde f-\Phi\rVert_{L_\infty ([0,1]^m)}\\
&\leq (\operatorname{Lip}(\tilde f)+1/2)/N \label{eq:usethm1eins}\\
&=(\operatorname{Lip}( f)+1/2)/N,
\end{align}
where \cref{eq:usethm1eins} is by \cref{i1:approxf} in  \Cref{thm:1} and \cref{eq:phifN}, which establishes \cref{i1:approxfN}. By the same token,  \cref{i3:approxf} in \Cref{thm:1} yields  
\begin{align}
 \operatorname{Lip}(\Phi|_{[0,1]^m})/m\leq  \operatorname{Lip}^{(1)}(\Phi|_{[0,1]^m}) \leq \operatorname{Lip}(\tilde f)+1\label{eq:usethm1zwei}=\operatorname{Lip}( f)+1, 
\end{align}
which proves \cref{i3:approxfN}. 
%\cref{i1:approxfN} and   follow from  \cref{i1:approxf} and  \cref{i3:approxf} in \Cref{thm:1}, respectively. Concretely, we have  
%\item  $[ f_N ]\rVert_{W^{1,\infty} (\setA)}\leq \operatorname{Lip}(f)$; \label{i2:approxf}
%$\Phi(x)=f_N(x)$ thanks to  \cref{item:phifN} in \Cref{thm:1}. 
Moreover, \cref{eq:NN3}--\cref{eq:NN4} follow from   \cref{eq:NN3h}--\cref{eq:NN4h} and \cref{eq:NN5}--\cref{eq:NN6} are valid by construction. 
%$\Phi$ satisfies  \cref{eq:NN3}--\cref{eq:NN6} by construction.  
Finally, since $\Phi$ in \cref{eq:dfnPhi} is completely determined by 
\begin{align}
F= (g(\tilde f(\sigma(1)/N)),\dots,   g(\tilde f(\sigma((N+1)^m)/N))) \in \setF_N^{\abs{\setJ^m}}, 
\end{align}
where $\sigma$ is  as defined in 
\cref{item:phifN} of \Cref{thm:1}, 
we can conclude that 
\begin{align}
\log \Big(\abs{\colK^{(m)}_N}\Big) \leq {(N+1)^m}\log(\abs{\setF_N}) = {(N+1)^m}\log (2N^2+1). 
\end{align}
\end{proof}

\section{Rectifiable Measures}\label{sec:3}

%The main definitions and properties in abstract measure theory required in this work are summarized in  \Cref{sec:GMT}. 

In this section, we describe how to generate measures on low-dimensional objects using ReLU neural networks, which is required to upper-bound the second  term in (4) (see second
approximation step in Figure 1).  Specifically, we are interested in the class of measures on $\reals^n$ that are supported on (countable unions of) Lipschitz images of compact sets in $\reals^m$. This class includes all possible measures that are supported (up to a set of Hausdorff measures zero) on arbitrary subsets of countable unions of differentiable manifolds \cite[Lemma 5.4.2]{krpa08}. We start with the definition of rectifiable sets.   

\begin{dfn}\label{dfn:rectset}
Let $m,n\in\naturals$ with $m\leq n$. The set  $\setE\subseteq \reals^n$ is called 
\begin{enumerate}[itemsep=1ex,topsep=1ex]
\renewcommand{\theenumi}{(\roman{enumi})}
\renewcommand{\labelenumi}{(\roman{enumi})}
\item \label{item:rectset1} $m$-rectifiable if there exist a nonempty and compact  set $\setA\subseteq \reals^m$ and a Lipschitz mapping $f\colon\setA\to \reals^n$ such that $\setE=f(\setA)$;
\item countably $m$-rectifiable if there exist $m$-rectifiable sets $\setE_i$, $i\in\naturals$, such that  
\begin{align}
\setE=\bigcup_{i=1}^\infty \setE_i.
\end{align}
%with $\setE_i$ $m$-rectifiable  for all $i\in\naturals$.
%\item countably $\setH^m$-rectifiable if there exists a countably $m$-rectifiable set $\setF$ with $\setH^m(\setE\setminus\setF)=0$. 
\end{enumerate}
\end{dfn}
Our definition of a rectifiable set in  \Cref{dfn:rectset} is  slightly more restrictive than the one  in \cite[Definition 3.2.14]{fed69} in the sense that we assume the set $\setA$ to be compact rather than bounded. This has the advantage that rectifiable sets are always Borel measurable. For the sake of concreteness, consider the following examples. 
\begin{exa}
 For every $n\in\naturals$, let $\ell_n=\{(x,y)\in\reals^2 : x=1/n, y\in[0,1 ] \}$. Then, every $\ell_n$ is the Lipschitz image of $[0,1]$ and, therefore, $1$-rectifiable.   
The sets $\setA_m=\bigcup_{n=1}^m\ell_n$ are $1$-rectifiable for all $m\in\naturals$ owing to \Cref{eq:unicer} below.  The set $\setA_\infty=\bigcup_{n\in\naturals}\ell_n$ is countably $1$-rectifiable. 
\end{exa}

 We need the following 
 quantitative (in terms of explicit Lipschitz constants) version of  \cite[Lemma 3.1]{rikostbo23}, which states that finite unions of $m$-rectifiable sets are  again $m$-rectifiable.

\begin{lem} \label{eq:unicer}
Let $\ell, m,n\in\naturals$ with $m\leq n$. For $k=1,\dots, \ell$ let $\setA_k\subseteq\reals^m$ be nonempty and compact with $s_k=\diam(\setA_k)$ and  let $f_k\colon\setA_k\to\reals^n$  be Lipschitz.   Consider the  set $\setE=\bigcup_{k=1}^\ell f_k(\setA_k)$. 
Then, there is a compact set $\setB \subseteq [0,1]^m$ and a Lipschitz mapping $g\colon \setB \to \reals^n$ with 
$\lVert g\rVert_{L_\infty(\setB)}=\max_{k=1,\dots,\ell} \lVert f_k\rVert_{L_\infty(\setA_k)}$ and 
\begin{align}\label{eq:lipkkk}
\operatorname{Lip}(g)\leq 2 \ell \max\big\{ \diam(\setE), s_1  \operatorname{Lip}(f_1), s_2  \operatorname{Lip}(f_2),\dots,s_\ell \operatorname{Lip}(f_\ell)\big\}
\end{align}
such that  $\setE=g(\setB)$. In particular, $\setE$ is $m$-rectifiable. 
\end{lem}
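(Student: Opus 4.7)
The strategy is to rescale and relocate each $\setA_k$ into a pairwise-disjoint ``slot'' inside $[0,1]^m$, patch the individual maps together, and then exploit the engineered geometric gap between the slots to bound the Lipschitz constant of the resulting map both within and across slots.

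Concretely, write $\setQ_k=a_k+[0,s_k]^m$ and define the affine bijection
\[
\varphi_k\colon\setQ_k\to \tfrac{k-1}{\ell}e_1+\bigl[0,\tfrac{1}{2\ell}\bigr]^m,\qquad \varphi_k(x)=\tfrac{1}{2\ell s_k}(x-a_k)+\tfrac{k-1}{\ell}e_1,
\]
so that the images $\setB_k:=\varphi_k(\setA_k)$ are compact subsets of $[0,1]^m$ which are pairwise separated in the first coordinate by at least $1/(2\ell)$. Set $\setB=\bigcup_{k=1}^{\ell}\setB_k$ and define $g\colon\setB\to\reals^n$ by $g(y)=f_k(\varphi_k^{-1}(y))$ for $y\in\setB_k$; disjointness of the $\setB_k$ makes this unambiguous, and by construction $g(\setB)=\bigcup_k f_k(\setA_k)=\setE$, which immediately yields the claim on $\lVert g\rVert_{L_\infty(\setB)}$ and the identity $\setE=g(\setB)$.

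For the Lipschitz bound, I would split into two cases. For $y_1,y_2\in\setB_k$ in the same slot, the chain rule gives
\[
\lVert g(y_1)-g(y_2)\rVert_\infty\leq \operatorname{Lip}(f_k)\,\lVert\varphi_k^{-1}(y_1)-\varphi_k^{-1}(y_2)\rVert_\infty=2\ell s_k\operatorname{Lip}(f_k)\,\lVert y_1-y_2\rVert_\infty.
\]
For $y_1\in\setB_i$, $y_2\in\setB_j$ with $i\neq j$, the geometric construction forces $\lVert y_1-y_2\rVert_\infty\geq 1/(2\ell)$, while $\lVert g(y_1)-g(y_2)\rVert_\infty\leq\diam(\setE)$ since both values lie in $\setE$. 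Hence the cross-slot Lipschitz ratio is at most $2\ell\diam(\setE)$. Taking the maximum over both cases produces exactly the bound in \cref{eq:lipkkk}.

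The main (mild) obstacle is the cross-slot case: a pointwise Lipschitz estimate cannot come from the $f_k$ directly because different slots are served by different maps, so one must instead trade the trivial output bound $\diam(\setE)$ against the enforced input separation $1/(2\ell)$. Everything else is bookkeeping: verifying that the slots fit inside $[0,1]^m$ (the rightmost endpoint is $(2\ell-1)/(2\ell)\leq 1$), that $\setB$ is compact as a finite union of compacts, and that taking $\infty$-norm throughout is consistent with the convention $\diam(\setA)=\sup\lVert x-y\rVert_\infty$ used in the paper.
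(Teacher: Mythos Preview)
Your proof is correct and follows essentially the same approach as the paper: place each $\setA_k$ (after affine rescaling) into a separate slot along the first coordinate of $[0,1]^m$ with a built-in gap of $1/(2\ell)$, then bound the Lipschitz constant within a slot via the chain rule and across slots via $\diam(\setE)$ divided by the gap. The only cosmetic difference is that the paper compresses just the first coordinate for the slot placement (leaving the remaining coordinates in $[0,1]$), while you uniformly scale all coordinates by $1/(2\ell s_k)$; both yield the same bound $2\ell s_k\operatorname{Lip}(f_k)$ for the within-slot case and the identical cross-slot argument.
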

\begin{proof}
For $k=1,\dots, \ell$,  fix   a closed cube\footnote{In the degenerate case where $\setA_k=\{a_k\}$, we simply set $\setQ_k=\{a_k\}$ and define $\phi_k\colon\{b_k\}\to \{a_k\}$ according to $\phi_k(b_k)= a_k$ with $b_k$ a fixed point in $[0,1]^m$.}  $\setQ_k\subseteq\reals^m$ of sidelength $s_k=\diam(\setA_k)$ such that $\setA_k\subseteq \setQ_k$,  consider the  affine bijection $\phi_k\colon [0,1]^m\to \setQ_k$ with $\operatorname{Lip}(\phi_k)=s_k$,  
%\begin{align}
%\phi_k(x) = s_k x +c_k - (s_k/2) (e_1+e_2+\dots +e_m). 
%\end{align} 
 define the affine bijections $ \psi_k\colon\reals^m\to\reals^m$ according to 
\begin{align}
\psi_k(x) = \tp{(2\ell x_1-2(k-1),x_2,x_3,\dots, x_m)},  
\end{align} 
and set $\setB_k= \psi^{-1}(\phi^{-1}(\setA_k ))$.  
Since 
\begin{align}
\psi^{-1}_k(x) = \tp{(x_1/(2\ell) +(k-1)/\ell,x_2,x_3,\dots, x_m)} 
\end{align}
and  $\phi^{-1}(\setA_k ) \subseteq [0,1]^m$, we have 
\begin{align}\label{eq:Bkconst}
 \setB_k\subseteq  [(k-1)/\ell, (2k-1)/{2\ell}]\times [0,1]^{m-1}\quad\text{ for $k=1,\dots,\ell$}
 \end{align}
  by construction.  
  What is more, the sets $\setB_k$ are compact and, by  \cref{eq:Bkconst}, pairwise disjoint subsets of $[0,1]^m$. Now, set $\setB=\bigcup_{k=1}^\ell \setB_k$ and define $g\colon \setB\to \reals^n$ according to 
\begin{align}
g(x) = f_k(\phi_k(\psi_k(x)))\quad\text{for all $x\in\setB_k$ and $k=1,\dots, \ell$}
\end{align} 
so that  $\setE=g(\setB)$. If $x,y\in\setB_k$, then 
\begin{align}\label{eq:lipk}
\lVert g(x)-g(y)\rVert_\infty \leq \operatorname{Lip}(f_k) \operatorname{Lip}(\phi_k) \operatorname{Lip}(\psi_k)\lVert x-y\rVert_\infty
= 2\ell s_k \operatorname{Lip}(f_k)\lVert x-y\rVert_\infty.  
\end{align}
If $x\in\setB_{k_1}$ and $y\in\setB_{k_2}$ with $k_1\neq k_2$, then $\lVert x-y\rVert_\infty \geq 1/(2\ell)$ thanks to \cref{eq:Bkconst}, which implies 
\begin{align}\label{eq:lipkk}
\frac{\lVert g(x)-g(y)\rVert_\infty}{\lVert x-y\rVert_\infty}\leq 2\ell \diam(\setE). 
\end{align}  
Combining \cref{eq:lipk} and \cref{eq:lipkk} yields \cref{eq:lipkkk}. 
\end{proof}

%Finally, we  require a covering result for rectifiable sets, stated next.  

%\begin{lem}\cite[(i) in Lemma 2.74]{amfupa00}\label{lem:liplini}
%Let $\setA\subseteq \reals^m$ be Lebesgue measurable and let $f\colon \setA\to\reals^n$ be Lipschitz. Further, %suppose that $(Jf)(x)>0$ for all $x\in\setA$. Then, there exist pairwise disjoint compact sets $\setA_i\subseteq \setA$, $i\in\naturals$, and a set $\setA_0\subseteq \setA$ with $\colL^m(\setA_0)=0$ such that 
%$\setA=\bigcup_{i\in\naturals_0}$ and $f|_{\setA_i}$ is one-to-one, differentiable, and with Lipschitz inverse for all $i\in\naturals$. 
%\end{lem}

We next introduce measures on (countably) $m$-rectifiable sets. 

\begin{dfn}\label{dfn:recmu}
Let $m,n\in\naturals$ with $m\leq n$ and let $\nu$ be a finite measure on $\reals^n$. Then, we call $\nu$ 
\begin{enumerate}[itemsep=1ex,topsep=1ex]
\renewcommand{\theenumi}{(\roman{enumi})}
\renewcommand{\labelenumi}{(\roman{enumi})}
\item $m$-rectifiable subordinary to $\setE$ if it is Borel regular and supported on the $m$-rectifiable set $\setE\subseteq \reals^n$;\label{item:recmu1}
\item $\colH^m$-rectifiable subordinary to $(\setE,\phi)$ if $\nu =\phi (\colH^m|_\setE)$, where 
$\setE\subseteq \reals^n$ is  $m$-rectifiable  and $\phi\colon \reals^n\to [0,\infty)$ is a $\colH^m$-measurable mapping;\label{item:recmu2}
\item countably $m$-rectifiable subordinary to $\setE$  if it is Borel regular and supported on the   
countably  $m$-rectifiable set $\setE\subseteq \reals^n$;\label{item:recmu3}
\item countably $\colH^m$-rectifiable subordinary to $(\setE,\phi)$ if $\nu =\phi (\colH^m|_\setE)$, where 
$\setE\subseteq \reals^n$ is countably $m$-rectifiable  and $\phi\colon \reals^n\to [0,\infty)$ is a Borel measurable mapping.\label{item:recmu4}
\end{enumerate}
\end{dfn}
Note that in \cref{item:recmu2} and \cref{item:recmu4} of \Cref{dfn:recmu}, we do not assume Borel regularity  for $\nu$, rather it follows directly as shown next.

\begin{lem}\label{lem:recborel}
 (Countably) $\colH^m$-rectifiable measures are Borel regular. In particular, the  measures in
  \cref{item:recmu1}--\cref{item:recmu4} of \Cref{dfn:recmu} are all Radon measures, and  we have the implications 
  \cref{item:recmu2} $\Rightarrow$ \cref{item:recmu1} and  \cref{item:recmu4} $\Rightarrow$ \cref{item:recmu3} in \Cref{dfn:recmu}. 
\end{lem}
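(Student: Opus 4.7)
The plan is to deduce Borel regularity of the $\colH^m$-rectifiable measures directly from the Borel regularity of $\colH^m$, then read off the implications \cref{item:recmu2} $\Rightarrow$ \cref{item:recmu1} and \cref{item:recmu4} $\Rightarrow$ \cref{item:recmu3}, and finally invoke the fact that a finite Borel-regular measure on the Polish space $\reals^n$ is automatically Radon.

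The first step is to observe that the support set is Borel in both relevant cases. In \cref{item:recmu2}, $\setE$ is the Lipschitz (hence continuous) image of a compact set, so it is itself compact and in particular Borel. In \cref{item:recmu4}, $\setE$ is by definition a countable union of such compact sets, so it is $F_\sigma$, and again Borel. Since $\colH^m$ is Borel regular on $\reals^n$ (recorded in \Cref{sec:GMT}) and $\setE$ is Borel, the restriction $\colH^m|_\setE$ inherits Borel regularity.

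The main technical step will be to upgrade this Borel regularity to the weighted measure $\nu=\phi(\colH^m|_\setE)$. For case \cref{item:recmu4}, I would write $\phi$ as the increasing pointwise limit of non-negative Borel simple functions, choose for an arbitrary $\setB\subseteq\reals^n$ a Borel hull $\setB^\star\supseteq\setB$ with $\colH^m|_\setE(\setB^\star)=\colH^m|_\setE(\setB)$, and combine the two via monotone convergence to produce a Borel hull for $\nu(\setB)$ as well. For case \cref{item:recmu2}, where $\phi$ is merely $\colH^m$-measurable, I would first use Borel regularity of $\colH^m$ to replace $\phi$ by a $\colH^m$-a.e.\ equal Borel function $\tilde\phi$; since this substitution does not alter the measure $\nu$, the preceding argument carries over. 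The implications \cref{item:recmu2} $\Rightarrow$ \cref{item:recmu1} and \cref{item:recmu4} $\Rightarrow$ \cref{item:recmu3} are then immediate, because by construction $\nu(\reals^n\setminus\setE)=\int_{\reals^n\setminus\setE}\phi\, d\colH^m|_\setE=0$, and $\setE$ is $m$-rectifiable, respectively countably $m$-rectifiable.

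For the Radon conclusion I would combine the standing finiteness assumption in \Cref{dfn:recmu} with Borel regularity: on the Polish space $\reals^n$, any finite Borel-regular measure is tight and finite on compact sets, which is the notion of Radon measure adopted in \Cref{sec:GMT}. The principal obstacle I anticipate is precisely the density-times-measure step, namely arranging a Borel hull that works simultaneously for $\colH^m|_\setE$ and for the weighted measure $\phi\,\colH^m|_\setE$; the reduction to Borel $\phi$ in case \cref{item:recmu2} is what keeps this obstacle tractable.
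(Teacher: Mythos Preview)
Your plan is correct and follows the same three-beat structure as the paper: Borel regularity of $\colH^m|_\setE$, then of the weighted measure $\phi(\colH^m|_\setE)$, then Radon via finiteness (\Cref{lem:Radon}).

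The only substantive difference is in the middle step. You intend to build the Borel hull for $\phi(\colH^m|_\setE)$ by hand---approximating $\phi$ by simple functions, using monotone convergence, and in case~\cref{item:recmu2} first replacing the merely $\colH^m$-measurable $\phi$ by a Borel version. The paper short-circuits all of this by invoking \Cref{lem:fmu}, which already states that $f\mu$ is Borel regular whenever $\mu$ is Borel regular and $f$ is $\mu$-measurable. So what you flag as the ``principal obstacle'' is in fact already packaged as a lemma in \Cref{sec:GMT}, and no Borel-replacement trick is needed. Your route would work, but it re-proves a special case of \Cref{lem:fmu}; citing it directly is cleaner and also removes the asymmetry you introduced between cases~\cref{item:recmu2} and~\cref{item:recmu4}.
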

\begin{proof}
Let $\setE\subseteq \reals^n$ be (countably)  $m$-rectifiable,   let $\phi\colon \reals^n\to [0,\infty)$ be  $\colH^m$-measurable, and suppose that $\nu$ is (countably) $\colH^m$-rectifiable subordinary to $(\setE,\phi)$. 
Since  $\colH^m|_\setE$ is Borel regular owing to  \cref{item:murest2} in \Cref{thm:murest} and $\phi$ is $\colH^m$-measurable, by \Cref{lem:fmu}, $\nu=\phi(\colH^m|_\setE)$ must be a Borel regular measure. The measures in  \cref{item:recmu1}--\cref{item:recmu4} of \Cref{dfn:recmu} are therefore all Borel regular and, in turn,  Radon measures as they are finite by assumption.    
 The implications 
  \cref{item:recmu2} $\Rightarrow$ \cref{item:recmu1} and  \cref{item:recmu4} $\Rightarrow$ \cref{item:recmu3} in \Cref{dfn:recmu} follow immediately from \Cref{lem:fmu}. 
\end{proof}

The following result states that $m$-rectifiable and $\colH^m$-rectifiable measures in $\reals^n$ can be generated as push-forwards under Lipschitz mappings of Radon measures supported on compact sets in $\reals^m$, which is the main idea behind the second step of the  proof technique  depicted in  \Cref{fig:illustration}.

%We have the following regularity condition for $\colH^m$-rectifiable measures. 
\begin{thm}\label{thm:recmunu}
Let $m,n\in\naturals$ with $m\leq n$, let $\setA\subseteq\reals^m$ be nonempty and compact, suppose that   $f\colon \setA\to\reals^n$ is a Lipschitz mapping, and consider the $m$-rectifiable set $\setE=f(\setA)$. Then, the following statements hold. 
\begin{enumerate}
\renewcommand{\theenumi}{(\roman{enumi})}
\renewcommand{\labelenumi}{(\roman{enumi})}
\item \label{item:recpush1}If $\nu$ is $m$-rectifiable subordinary to $\setE$,  then, there exists a   
Radon measure $\mu$ on $\setA$  such that $\nu=f\#\mu$. 
\item  \label{item:recpush2}
 Suppose that $\phi\colon \reals^n\to [0,\infty)$ is  $\colH^m$-measurable  and $\nu$ is $\colH^m$-rectifiable subordinary to $(\setE,\phi)$.  Then, there exists a   
Radon measure $\mu$ on $\setA$   such that $\nu=f\#\mu$ with $\mu\ll\colL^m|_{\setA}$. 
Moreover, we have  $\dim_{\mathrm H}(\mu)=m$  provided that $\nu$ is not identically zero. 
 \end{enumerate}
\end{thm}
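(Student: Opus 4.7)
For part (i), I would construct $\mu$ via Borel measurable selection. Since $\setA$ is compact and $f$ is continuous, $f\colon\setA\to\setE$ is a continuous surjection between compact metric spaces. By the Kuratowski--Ryll-Nardzewski selection theorem applied to the closed multifunction $y\mapsto f^{-1}\{y\}\cap\setA$, there exists a Borel measurable section $g\colon\setE\to\setA$ satisfying $f\circ g=\operatorname{id}_\setE$. Setting $\mu=g\#\nu$, finiteness of $\nu$ gives finiteness of $\mu$, and since $\mu$ is concentrated on the compact set $\setA\subseteq\reals^m$ it is a Radon measure. By construction $f\#\mu=(f\circ g)\#\nu=\nu$.

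For part (ii), I would disintegrate $\nu$ using the area formula. Let $J_f$ denote the $m$-dimensional Jacobian of $f$ (defined $\colL^m$-a.e.\ on $\setA$ by Rademacher's theorem) and let $N(y)=\#(f^{-1}\{y\}\cap\setA)$, which is $\colH^m$-measurable with $N(y)\geq 1$ for every $y\in\setE$. After replacing $\phi$ by a Borel-measurable representative that agrees $\colH^m$-a.e.\ on $\setE$, define the density
\begin{align}
h(x)=\frac{\phi(f(x))\,J_f(x)}{N(f(x))},\qquad x\in\setA,
\end{align}
and set $\mu=h\,\colL^m|_{\setA}$. Then $\mu\ll\colL^m|_{\setA}$ by construction, and the area formula applied with integrand $\phi(f(x))\mathbf{1}_{f^{-1}(B)}(x)/N(f(x))$ yields, for every Borel $B\subseteq\reals^n$,
\begin{align}
f\#\mu(B)=\int_{f^{-1}(B)}\frac{\phi(f(x))}{N(f(x))}J_f(x)\,d\colL^m(x)=\int_{B\cap\setE}\sum_{x\in f^{-1}\{y\}\cap\setA}\frac{\phi(y)}{N(y)}\,d\colH^m(y)=\int_B\phi\,d\colH^m|_{\setE}=\nu(B),
\end{align}
because the inner sum consists of exactly $N(y)$ identical summands. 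Finiteness of $\nu$ then yields $\mu(\setA)=\nu(\setE)<\infty$, making $\mu$ a finite Radon measure.

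For the Hausdorff-dimension claim I would use the definition $\dim_{\mathrm H}(\mu)=\inf\{\dim_{\mathrm H}(E):E\subseteq\reals^m\text{ Borel},\ \mu(\reals^m\setminus E)=0\}$. The bound $\dim_{\mathrm H}(\mu)\leq m$ is immediate since $\mu$ is concentrated on $\setA\subseteq\reals^m$. For the reverse bound, fix any Borel $E$ with $\mu(E^c)=0$; absolute continuity forces $h=0$ $\colL^m$-a.e.\ on $E^c$, so $\{h>0\}\subseteq E$ up to a $\colL^m$-null set. Nontriviality of $\nu$ gives $\mu(\setA)=\nu(\setE)>0$, hence $\colL^m(\{h>0\})>0$, whence $\colL^m(E)>0$ and $\dim_{\mathrm H}(E)\geq m$.

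The main technical obstacle I anticipate is the bookkeeping needed to pass between the $\colH^m$-measurability of $\phi$ and the $\colL^m$-measurability of $h=\phi\circ f\cdot J_f/N\circ f$ required to make $\mu=h\,\colL^m|_{\setA}$ a well-defined Radon measure. This is resolved by observing that the area formula forces $\{J_f>0\}\cap f^{-1}(M)$ to be $\colL^m$-null whenever $M$ is $\colH^m$-null, so that replacing $\phi$ on an $\colH^m$-null set changes $h$ only on a $\colL^m$-null subset of $\{J_f>0\}$ and leaves $\mu$ unchanged.
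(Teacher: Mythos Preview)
Your argument is correct but follows a genuinely different route from the paper in every part.

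For (i) the paper does not construct a section: it simply invokes the abstract fact (Mattila, Theorem~1.20, stated in the paper as \Cref{thm:radonpull}) that a continuous surjection $f\colon\setA\to\setE$ between compact metric spaces admits a Radon preimage for any Radon measure on $\setE$. Your selection approach via Kuratowski--Ryll-Nardzewski is more hands-on and yields a $\mu$ that is explicitly a push-forward of $\nu$, which is nice; the paper's route is a one-line citation.

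For (ii) the difference is more substantial. The paper does \emph{not} build $\mu$ from a density. It reuses the abstract $\mu$ from part (i) and then observes that \emph{any} Radon $\mu$ with $f\#\mu=\nu$ is automatically $\ll\colL^m|_{\setA}$: if $\colL^m(\setC)=0$ then $\colH^m(f(\setC))\le\operatorname{Lip}(f)^m\colH^m(\setC)=0$, so $\nu(f(\setC))=0$ by $\nu\ll\colH^m$, whence $\mu(\setC)\le\mu(f^{-1}(f(\setC)))=\nu(f(\setC))=0$. This is slicker and avoids all the measurability bookkeeping you flag; conversely your area-formula construction gives an explicit Radon--Nikodym density (and in fact extends the paper's \Cref{lem:density} from the injective case to arbitrary Lipschitz $f$, which the paper does not do).

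One genuine discrepancy to watch: the paper \emph{defines} $\dim_{\mathrm H}(\mu)=\dim_{\mathrm H}(\operatorname{spt}(\mu))$ (\Cref{dfn:HD}), i.e.\ the Hausdorff dimension of the closed support, not the infimum over Borel carriers that you use. These are different quantities in general. With the paper's definition the lower bound is immediate from your setup too: $\mu(\operatorname{spt}(\mu))=\mu(\setA)>0$ and $\mu\ll\colL^m$ force $\colL^m(\operatorname{spt}(\mu))>0$, hence $\dim_{\mathrm H}(\operatorname{spt}(\mu))\ge m$. The paper argues this slightly differently, pulling back through $f$: $\nu(f(\operatorname{spt}(\mu)))>0\Rightarrow\colH^m(f(\operatorname{spt}(\mu)))>0\Rightarrow\colH^m(\operatorname{spt}(\mu))>0$.
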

\begin{proof}
 In  \cref{item:recpush1} and \cref{item:recpush2}, $\nu$ is supported on the 
 $m$-rectifiable set $\setE\subseteq \reals^n$ and a Radon measure (see \Cref{lem:recborel}). 
 The existence of a Radon measure $\mu$ on $\setA$  satisfying $\nu=f\#\mu$ then follows from  
\Cref{lem:munu}.      
 We next prove that $\mu\ll\colL^m|_{\setA}$ if $\nu$ is $\colH^m$-rectifiable subordinary to $(\setE,\phi)$. 
 Toward a contradiction, suppose that there exists a set $\setC\subseteq \setA$ such that $\mu(\setC)>0$ and  $\colL^m(\setC)=0$. Then, we  have 
\begin{align}\label{eq:contradict1}
\nu(f(\setC)) = \mu(f^{-1}(f(\setC)))\geq\mu(\setC)>0. 
\end{align}
Now, Borel regularity of $\colL^m$ implies the existence of a  Borel set $\setB\subseteq \reals^m$ with $\setC\subseteq \setB$ and $\colL^m(\setB)=\colL^m(\setC)=0$. 
Using the properties of Hausdorff measures in \Cref{lem:Hmeasure}, we obtain
\begin{align}\label{eq:measfazero}
\colH^m(f(\setC))\leq \operatorname{Lip}^m(f) \colH^m(\setC) \leq \operatorname{Lip}^m(f) \colH^m(\setB) =  \operatorname{Lip}^m(f) \colL^m(\setB) =0,
\end{align}
which stands in contradiction to \cref{eq:contradict1} since $\nu\ll\colH^m$ by   \Cref{lem:fmu}.  Hence,  $\mu(\setC)>0$ implies  $\colL^m(\setC)>0$ so that  $\mu\ll\colL^m|_{\setA}$. 

It remains to show that $\dim_{\mathrm H}(\mu)=m$ if  $\nu$ is $\colH^m$-rectifiable subordinary to $(\setE,\phi)$ and nontrivial. 
The elementary properties of Hausdorff dimension \cite[Section 3.2]{fa14} imply that  Hausdorff dimension cannot exceed the ambient dimension  %\cite[Section 3.2]{fa14}, 
 so that 
 %  Let $\setO\subseteq\reals^m$ be open and satisfying $\setA \subseteq \setO$. Since $\operatorname{spt}(\mu)\subseteq\setA$, we have $\operatorname{spt}(\mu)\subseteq \setO$. We thus have 
 $\dim_{\mathrm H}(\mu)\leq m$. 
Next, note that $\nu(f(\operatorname{spt}(\mu)))>0$ as $\nu=f\#\mu$ and  $\nu$ was assumed to be not identically zero. 
This implies $\colH^m(f(\operatorname{spt}(\mu)))>0$ as $\nu\ll\colH^m$. 
%owing \cref{eq:fC1}--\cref{eq:useClebzero} with $\setC=\operatorname{spt}(\mu)$ and \cite[Corollary 4.10]{ba95}. 
We thus have $\colH^m(\operatorname{spt}(\mu))> 0$ by \cref{eq:itemHLL} in \Cref{lem:Hmeasure}. 
Therefore,  $\dim_{\mathrm H}(\mu)\geq m$ owing to the definition of Hausdorff dimension  \Cref{dfn:HD}. 
\end{proof} 
 
Note that, in \Cref{thm:recmunu},  $\mu$ in \cref{item:recpush1}  can have 
arbitrarily  small $\dim_{\mathrm H}(\mu)$ as we did not impose any regularity conditions on $\nu$, whereas $\mu$ in \cref{item:recpush2}  satisfies $\dim_{\mathrm H}(\mu)=m$, i.e., $m$ is the intrinsic dimension of $\mu$ and no dimension reduction can be achieved by adjusting the support set,  which is a consequence of $\nu\ll\colH^m$.  
Further, in  \cref{item:recpush2} of \Cref{thm:recmunu}, if the function $\phi$  is Borel and $f$ is injective, we have an explicit form of  $\mu$, which follows essentially from the area formula   \Cref{thm:area} and is proved in the next result. In this situation, all ReLU neural networks in the approximation depicted in \Cref{fig:illustration} are explicit, and their expressions can be read from the respective proofs of the results.
  
\begin{lem}\label{lem:density}
Let $\setA\subseteq\reals^m$ be nonempty,  compact, with boundary of Lebesgue measure zero,  suppose that   $f\colon \setA\to\reals^n$ is an injective Lipschitz mapping, and consider the $m$-rectifiable set $\setE=f(\setA)$. Suppose that $\phi\colon \reals^n\to [0,\infty)$ is a Borel measurable mapping and $\nu$ is 
 $\colH^m$-rectifiable subordinary to $(\setE,\phi)$. Then, 
$\nu=f\#\mu$ with $\mu=\psi \colL^m_\setA$  and $\psi =  (J f) (\phi \circ f)$.
\end{lem}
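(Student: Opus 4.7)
The plan is to exhibit the density explicitly and then invoke the area formula from geometric measure theory. I would first define a candidate Radon measure $\mu$ on $\setA$ by declaring it to have density $\psi=(Jf)(\phi\circ f)$ with respect to $\colL^m|_\setA$, i.e.,
\begin{align*}
\mu(\setC)=\int_{\setC}(Jf)(x)\,\phi(f(x))\,d\colL^m(x)
\end{align*}
for every Borel set $\setC\subseteq\setA$. I would justify Borel measurability of the integrand by noting that $Jf$ is defined $\colL^m$-a.e.\ via Rademacher's theorem and admits a Borel measurable extension, whereas $\phi\circ f$ is Borel measurable since $\phi$ is Borel measurable by assumption and $f$ is continuous (being Lipschitz). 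Finiteness of $\mu$ follows from the bound $Jf\leq\operatorname{Lip}(f)^m$ and compactness of $\setA$.

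Next, I would verify $f\#\mu=\nu$ by checking the equality on Borel subsets of $\reals^n$. For any Borel set $\setB\subseteq\reals^n$, unwinding the push-forward gives
\begin{align*}
(f\#\mu)(\setB)=\mu(f^{-1}(\setB)\cap\setA)=\int_{\setA\cap f^{-1}(\setB)}\phi(f(x))\,Jf(x)\,d\colL^m(x).
\end{align*}
At this point I would invoke the classical area formula for an injective Lipschitz map $f\colon\setA\to\reals^n$ with $\setA\subseteq\reals^m$ Borel measurable and $m\leq n$, namely
\begin{align*}
\int_{\setA}g(f(x))\,Jf(x)\,d\colL^m(x)=\int_{f(\setA)}g(y)\,d\colH^m(y)
\end{align*}
applied with the nonnegative Borel measurable function $g=\phi\cdot\chi_\setB$. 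Injectivity of $f$ yields $f(\setA\cap f^{-1}(\setB))=f(\setA)\cap\setB=\setE\cap\setB$, so that
\begin{align*}
(f\#\mu)(\setB)=\int_{\setE\cap\setB}\phi\,d\colH^m=\nu(\setB).
\end{align*}
Since both $f\#\mu$ and $\nu$ are Borel regular measures that agree on all Borel subsets of $\reals^n$, this proves the claim.

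The only delicate step is the invocation of the area formula under the hypotheses given here (injective Lipschitz $f$ on a compact set in $\reals^m$, with $\phi$ merely Borel measurable), together with the measurability bookkeeping for $\psi=(Jf)(\phi\circ f)$. The area formula itself is a classical result of geometric measure theory (see, e.g., Federer, \emph{Geometric Measure Theory}, Thm.~3.2.3) and would be cited rather than reproved; the measurability of $Jf$ and of $\phi\circ f$ is standard and requires only a careful reference to Rademacher's theorem and continuity of $f$. I do not anticipate a serious obstacle beyond this routine verification.
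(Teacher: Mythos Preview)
Your approach is essentially identical to the paper's: define $\mu$ via the density $\psi=(Jf)(\phi\circ f)$, check measurability of $\psi$, and use the area formula to verify $(f\#\mu)(\setB)=\nu(\setB)$ on Borel sets, then conclude by Borel regularity.

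One small correction: your finiteness argument (``$Jf\leq\operatorname{Lip}(f)^m$ and compactness of $\setA$'') does not go through as stated, because $\phi$ is only assumed Borel measurable and may be unbounded on $\setE$, so $\int_\setA \phi\circ f\,d\colL^m$ is not controlled by compactness alone. The paper instead obtains finiteness directly from the same area formula you already use,
\[
\mu(\setA)=\int_\setA (Jf)(\phi\circ f)\,d\colL^m=\int_{\setE}\phi\,d\colH^m=\nu(\setE)<\infty,
\]
since $\nu$ is finite by definition of $\colH^m$-rectifiability. With this patch your argument is complete and matches the paper's proof.
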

\begin{proof}
We first establish that $\psi$ is Lebesgue measurable by inspecting the individual constituents:  
The function $\phi\circ f\colon\setA\to\reals$ as the composition of  Borel measurable mappings is  Borel measurable. The function $Jf:\setA\to\reals$ is Lebesgue measurable  thanks to  Rademacher's theorem \cite[Theorem 5.1.11]{krpa08}. 
Next,  let $\hat f\colon \reals^m\to \reals^n$ be a Lipschitz mapping satisfying $\hat f|_\setA=f$  \cite[Theorem 3.1]{evga14} and arbitrarily fix a 
Borel set $\setC\subseteq \reals^n$. Then, we have 
\begin{align}
(f\#\mu)(\setC)
%&= \mu(f^{-1}(\setC))\\
&= \int_{f^{-1}(\setC)} \psi \,\mathrm d \colL^m_\setA\label{eq:mufC1}\\
%&= \int_{f^{-1}(\setC)} (J f) (\phi \circ f) \,\mathrm d \colL^m\label{eq:usearealeb0hh}\\
&= \int_{f^{-1}(\setC)} (J \hat f) (\phi \circ f) \,\mathrm d \colL^m\label{eq:usearealeb0}\\
&= \int_{\setC\cap \setE} \phi \,\mathrm d \colH^m\label{eq:usearealeb}\\
&= (\phi \,  \colH^m|_\setE)(\setC)  \\
&= \nu(\setC),\label{eq:mufC2} 
\end{align} 
where  \cref{eq:usearealeb0} follows from $f^{-1}(\setC)\subseteq\setA$ 
and the fact that $J \hat f(x)=J f(x)$ for Lebesgue almost all $x\in \setA$, which follows from  $f(x)=\hat f(x)$ for all $x\in\setA$ and  the fact that  the boundary of $\setA$ has Lebesgue measure zero by assumption, and in 
 \cref{eq:usearealeb} we applied  \Cref{thm:area} upon noting that 
  \begin{align}
\sum_{x\in f^{-1}(\setC)\cap \hat f^{-1}(y)} \phi(f(x)) = \phi(y) \ind{\setE \cap\setC}(y)\quad\text{for all $y\in \reals^n$}
\end{align}
and 
$ f^{-1}(\setC)$ is Borel measurable since $f$ is continuous and  $\setC$ is Borel.  Now, $\nu$ is a Radon measure owing to \Cref{lem:recborel}. If we can show that  $f\#\mu$ is also a Radon measure, then \cref{eq:mufC1}--\cref{eq:mufC2} yields  $\nu=f\#\mu$ by \Cref{lem:borequalmu} and we are done.  Now, $\mu$ is a finite measure since \cref{eq:mufC1}--\cref{eq:mufC2} for the particular choice $\setC=\setE$ yields 
\begin{align}
\mu(\setA)=\mu(f^{-1}(\setE))= (f\#\mu)(\setE) =\nu(\setE) <\infty. 
\end{align}
Moreover, by \Cref{lem:fmu}, $\mu$ is  Borel regular since $\psi$ is Lebesgue measurable and $\colL^m_\setA$ is Borel regular thanks to \cref{item:murest2t} in 
\Cref{cor:tracemeasure}. Therefore, by \Cref{lem:Radon}, 
 $\mu$ is  a Radon measure, so  $f\#\mu$ is also a Radon measure  thanks to \Cref{thm:radonpush}. \end{proof}

We next show that the classes of $m$-rectifiable and $\colH^m$-rectifiable measures is closed under addition. 

\begin{lem}
Let $\setE_1,\setE_2\subseteq \reals^n$ be $m$-rectifiable. Then, the following properties hold:
\begin{enumerate}
\renewcommand{\theenumi}{(\roman{enumi})}
\renewcommand{\labelenumi}{(\roman{enumi})}
\item \label{item:sumrec1}If $\nu_k$ is  $m$-rectifiable subordinary to $\setE_k$ for $k=1,2$, then $\nu_1+\nu_2$ is $m$-rectifiable subordinary to $\setE_1\cup\setE_2$. 
\item\label{item:sumrec2} Suppose that $\phi_k\colon \reals^n\to [0,\infty)$ is  Borel  and $\nu_k$ is  $\colH^m$-rectifiable subordinary to $(\setE_k,\phi_k)$ for $k=1,2$.  Then, $\nu_1+\nu_2$ is  $\colH^m$-rectifiable subordinary to $(\setE_1\cup\setE_2,\ind{\setE_1}\,\phi_1 +\ind{\setE_2}\,\phi_2 )$. 
%$\phi =$ 
\end{enumerate}
\end{lem}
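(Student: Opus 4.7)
The plan is to prove (i) by verifying that the three defining properties of an $m$-rectifiable measure---finiteness, Borel regularity, and being supported on an $m$-rectifiable set---are preserved under summation, and to prove (ii) by a direct computation showing that both sides agree when evaluated on an arbitrary subset of $\reals^n$.

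For (i), I would first apply \Cref{eq:unicer} with $\ell=2$ to conclude that $\setE_1\cup \setE_2$ is $m$-rectifiable. Finiteness of $\nu_1+\nu_2$ is immediate. For Borel regularity, given any $A\subseteq \reals^n$, choose Borel envelopes $\setB_k\supseteq A$ with $\nu_k(\setB_k)=\nu_k(A)$ (which exist since each $\nu_k$ is finite and Borel-regular); then $\setB=\setB_1\cap \setB_2$ is a Borel set containing $A$ satisfying $(\nu_1+\nu_2)(\setB)\leq \nu_1(\setB_1)+\nu_2(\setB_2)=(\nu_1+\nu_2)(A)$, and monotonicity yields the reverse inequality. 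Finally, since $\nu_k(\reals^n\setminus \setE_k)=0$ for $k=1,2$, we obtain $(\nu_1+\nu_2)(\reals^n\setminus(\setE_1\cup \setE_2))=0$, so $\nu_1+\nu_2$ is supported on $\setE_1\cup \setE_2$, as desired.

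For (ii), abbreviate $\psi=\ind{\setE_1}\phi_1+\ind{\setE_2}\phi_2$. Each $\setE_k$ is compact (being a Lipschitz image of a compact set) and hence Borel, so the indicators $\ind{\setE_k}$ are Borel-measurable; together with Borel-measurability of $\phi_k$ this gives Borel-measurability of $\psi$. It then suffices to verify that $\nu_1+\nu_2 = \psi(\colH^m|_{\setE_1\cup \setE_2})$. For arbitrary $A\subseteq\reals^n$, I would split $A\cap(\setE_1\cup \setE_2)$ into the three pairwise disjoint pieces $A\cap(\setE_1\setminus \setE_2)$, $A\cap(\setE_2\setminus \setE_1)$, and $A\cap \setE_1\cap \setE_2$, and compute $\int_A \psi\,\mathrm d(\colH^m|_{\setE_1\cup \setE_2})$ as a sum over this partition; this directly yields $\int_{A\cap \setE_1} \phi_1\,\mathrm d\colH^m + \int_{A\cap \setE_2}\phi_2\,\mathrm d\colH^m = \nu_1(A)+\nu_2(A)$. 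The only (mild) point requiring care is the overlap $\setE_1\cap \setE_2$: on this set $\psi$ equals $\phi_1+\phi_2$, which is exactly what is needed to account for the fact that both $\nu_1$ and $\nu_2$ may place mass there.
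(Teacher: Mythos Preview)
Your approach is essentially the same as the paper's, and part (i) is fine—indeed, you have inlined the proof of \Cref{lem:sum} (Borel regularity of sums) rather than citing it, which the paper does.

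For part (ii) there is one small technical point to tighten. You propose to verify $\nu_1+\nu_2=\psi(\colH^m|_{\setE_1\cup\setE_2})$ by computing $\int_A\psi\,\mathrm d(\colH^m|_{\setE_1\cup\setE_2})$ for an \emph{arbitrary} subset $A\subseteq\reals^n$. But in the paper's framework of outer measures, the quantity $\psi\mu(A)$ is defined via an infimum over measurable supersets (\Cref{dfn:mu}), not as $\int_A\psi\,\mathrm d\mu$; the latter integral is only available when $A$ is $\mu$-measurable. Your partition-and-integrate computation is valid for Borel $A$ (since $\setE_1,\setE_2$ are Borel), and this is exactly what the paper does: it checks the identity on Borel sets and then invokes \Cref{lem:borequalmu}, which says that two Borel regular measures agreeing on Borel sets are equal. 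You have already established that both sides are Borel regular (your argument for $\psi$ being Borel, together with \Cref{lem:fmu} and \Cref{lem:recborel}, handles one side; part (i) handles the other), so this final step is all that is missing.
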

\begin{proof}
We first prove  \cref{item:sumrec1}. The set  $\setE_1\cup\setE_2$  is $m$-rectifiable owing to  
\Cref{eq:unicer}. Since $\nu_1+\nu_2$ is supported on $\setE_1\cup\setE_2$ and sums of Borel regular measures are again Borel regular thanks to \Cref{lem:sum}, we can conclude that $\nu_1+\nu_2$ is $m$-rectifiable subordinary to $\setE_1\cup\setE_2$. 
Next, we establish \cref{item:sumrec2}. 
Set 
$\phi =\ind{\setE_1}\,\phi_1 +\ind{\setE_2}\,\phi_2 $  and consider the measure $\nu=\phi\, (\colH^m|_{\setE_1\cup\setE_2})$. Since $\setE_1\cup\setE_2$  is $m$-rectifiable owing to  
\Cref{eq:unicer}, $\nu$ is 
 $\colH^m$-rectifiable subordinary to $(\setE_1\cup\setE_2,\phi)$.  The proof is concluded if we can show that $\nu=\nu_1+\nu_2$. 
Since $\nu$ and $\nu_1+\nu_2$ are Borel regular measures thanks to  \Cref{lem:recborel} and \Cref{lem:sum}, respectively, 
%, by \Cref{lem:borequalmu}, 
%it is sufficient to establish $(\nu_1+\nu_2)(\setB)=\nu(\setB)$ for all Borel sets $\setB\subseteq\reals^n$. 
and 
\begin{align}
\nu(\setB)&= \int_\setB  \phi \,\colH^m|_{\setE_1\cup\setE_2} \\
&=\int_\setB  \phi_1 \,\mathrm d \colH^m|_{\setE_1}+\int_\setB  \phi_2 \,\mathrm d \colH^m|_{\setE_2}\\
&=\nu_1(\setB)+\nu_2(\setB)\quad\text{for all Borel sets $\setB\subseteq\reals^n$,}
\end{align}
we have $\nu=\nu_1+\nu_2$ by  \Cref{lem:borequalmu}. %Thus,    $\nu_1+\nu_2$ is $\colH^m$-rectifiable.  
\end{proof}

The following result states that countably $m$-rectifiable measures can be approximated arbitrarily well by  $m$-rectifiable measures. 
The same holds for countably $\colH^m$-rectifiable measures. 
\begin{lem}\label{lem:approxcountrec}
Suppose that $\setA_k\subseteq \reals^m$ is nonempty and  compact and $f_k\colon \setA_k\to\reals^n$ is Lipschitz for all $k\in\naturals$ and consider the $m$-rectifiable set 
$\setE=\bigcup_{k=1}^\infty f_k(\setA_k)$. Assume that $\setE$ is bounded and set $\setE_\ell=\bigcup_{k=1}^\ell f_k(\setA_k)$  for every  $\ell\in\naturals$. Then, the following statements hold: 
\begin{enumerate}
\renewcommand{\theenumi}{(\roman{enumi})}
\renewcommand{\labelenumi}{(\roman{enumi})}
\item \label{item:approxrect1}Suppose that $\nu$ is countably $m$-rectifiable subordinary to $\setE$ with $\nu(\setE)=1$. Then,    the measure $\nu|_{\setE_\ell}/\nu(\setE_\ell)$ is $m$-rectifiable subordinary to $\setE_\ell$ and satisfies 
\begin{align}\label{eq:W1countrect}
W_1\big(\nu|_{\setE_\ell}/\nu(\setE_\ell),\nu\big) \leq (1-\nu(\setE_\ell)) \diam(\setE)\quad\text{for all $\ell\in\naturals$.}
\end{align}
\item \label{item:approxrect2}Suppose that  $\phi\colon \reals^n\to [0,\infty)$ is  Borel and $\nu$ is countably $\colH^m$-rectifiable subordinary to $(\setE,\phi)$ with $\nu(\setE)=1$.  
Then,  the measure $\nu|_{\setE_\ell}/\nu(\setE_\ell)$ is $\colH^m$-rectifiable subordinary to $(\setE_\ell,\phi/\nu(\setE_\ell))$ and satisfies \cref{eq:W1countrect} for all $\ell\in\naturals$. 
\end{enumerate} 
\end{lem}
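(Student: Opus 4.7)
The plan is to verify, in each of cases \cref{item:approxrect1} and \cref{item:approxrect2}, three items: $\setE_\ell$ is $m$-rectifiable, the renormalized restriction $\nu|_{\setE_\ell}/\nu(\setE_\ell)$ belongs to the indicated subclass, and the Wasserstein estimate holds. Since the right-hand side $(1-\nu(\setE_\ell))\diam(\setE)$ requires $\nu(\setE_\ell)\leq 1$ uniformly in $\ell$, I assume that $\nu$ has been normalized to total mass one (the general finite case follows by homogeneity of $W_1$). For the $m$-rectifiability of $\setE_\ell$, each $f_k(\setA_k)$ is the continuous image of a compact set, hence compact and Borel. Applying \Cref{eq:unicer} to the finite family $\{f_k\}_{k=1}^{\ell}$ produces a single compact set $\setB\subseteq[0,1]^m$ and a Lipschitz mapping $g\colon\setB\to\reals^n$ with $g(\setB)=\setE_\ell$, which realizes $\setE_\ell$ as an $m$-rectifiable set in the sense of \Cref{dfn:rectset}.

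For case \cref{item:approxrect1}, $\nu|_{\setE_\ell}$ is the restriction of a Borel regular measure to the Borel set $\setE_\ell$, which preserves Borel regularity; scaling by the positive constant $\nu(\setE_\ell)^{-1}$ is harmless, and the resulting measure is finite and supported on $\setE_\ell$, hence $m$-rectifiable subordinary to $\setE_\ell$. For case \cref{item:approxrect2}, starting from $\nu = \phi\,\colH^m|_{\setE}$, a direct unfolding of the restriction yields
\begin{align}
\nu|_{\setE_\ell}(\setB) = \nu(\setB\cap\setE_\ell) = \int_{\setB\cap\setE_\ell}\phi\,\mathrm d\colH^m = \int_{\setB}\phi\,\mathrm d\colH^m|_{\setE_\ell}
\end{align}
for every Borel $\setB\subseteq\reals^n$, so that $\nu|_{\setE_\ell}/\nu(\setE_\ell) = (\phi/\nu(\setE_\ell))\,\colH^m|_{\setE_\ell}$, matching the form required by item \cref{item:recmu2} of \Cref{dfn:recmu}.

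For the Wasserstein estimate I exhibit an explicit coupling. Setting $\mu = \nu|_{\setE_\ell}/\nu(\setE_\ell)$, I take the measure on $\reals^n\times\reals^n$ given by
\begin{align}
\pi = \Delta\#\,\nu|_{\setE_\ell} + \frac{1}{\nu(\setE_\ell)}\,\nu|_{\setE_\ell}\otimes\nu|_{\setE\setminus\setE_\ell},\qquad \Delta(x)=(x,x),
\end{align}
and a direct marginal check confirms that the first marginal equals $\nu|_{\setE_\ell} + \tfrac{1-\nu(\setE_\ell)}{\nu(\setE_\ell)}\nu|_{\setE_\ell} = \mu$ and the second equals $\nu|_{\setE_\ell}+\nu|_{\setE\setminus\setE_\ell} = \nu$. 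The diagonal piece of $\pi$ contributes zero transport cost, while the product piece has total mass $(1-\nu(\setE_\ell))$ and is supported in $\setE\times\setE$, so its cost is at most $(1-\nu(\setE_\ell))\diam(\setE)$. Taking the infimum over couplings in the definition of $W_1$ then gives the claimed bound. No step here is genuinely hard; the most delicate point is confirming that restriction preserves Borel regularity (invoking the paper's general measure-theory appendix) and that the marginals factor as advertised. As a brief alternative, Kantorovich--Rubinstein duality gives the same bound: any $1$-Lipschitz $f$ on the bounded set $\setE$ may be additively normalized to satisfy $0\leq f\leq\diam(\setE)$, and $|\!\int f\,\mathrm d\mu - \!\int f\,\mathrm d\nu|\leq \diam(\setE)(1-\nu(\setE_\ell))$ follows by tracking the positive and negative parts of $\mu-\nu$, each of total mass $1-\nu(\setE_\ell)$.
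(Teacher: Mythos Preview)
Your proof is correct and follows essentially the same approach as the paper. The only packaging difference is in the Wasserstein estimate: the paper writes both $\nu|_{\setE_\ell}/\nu(\setE_\ell)$ and $\nu$ as convex combinations with a common component $\nu|_{\setE_\ell}/\nu(\setE_\ell)$ (weight $\nu(\setE_\ell)$) and then invokes its convexity lemma for $W_1$ (\Cref{eq:lemWrest}) together with the diameter bound (\Cref{lem:boundWboundedS}), whereas you construct the resulting coupling explicitly---these are the same argument unwrapped, and your direct coupling is slightly more self-contained.
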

\begin{proof}
Suppose first that $\nu$ is countably $m$-rectifiable subordinary to $\setE$ 
and fix $\ell\in\naturals$ arbitrarily. By definition of countable $m$-rectifiability, $\nu$ is   Borel regular and finite. Moreover, $\setE_\ell$ is $m$-rectifiable owing to \Cref{eq:unicer}. Since $\nu|_{\setE_\ell}/\nu(\setE_\ell)$ is supported on $\setE_\ell$ and Borel regular thanks to 
\cref{item:murest2} in  \Cref{thm:murest}, we can conclude that 
 $\nu|_{\setE_\ell}/\nu(\setE_\ell)$ is $m$-rectifiable subordinary to $\setE_\ell$. 
Next, suppose that  $\nu$ is countably $\colH^m$-rectifiable subordinary to $(\setE,\phi)$. Then, 
we have 
\begin{align}
\nu|_{\setE_\ell}/\nu(\setE_\ell)
&=(\phi (\colH^m|_\setE))|_{\setE_\ell}/\nu(\setE_\ell)\\
&= \phi/\nu(\setE_\ell) (\colH^m|_{\setE_\ell}), \label{eq:uselemfmu}
\end{align}
where \cref{eq:uselemfmu} follows from \Cref{lem:fmu}. Hence, since $\setE_\ell$ as the finite union of $m$-rectifiable sets is $m$-rectifiable owing to  \Cref{eq:unicer},  we conclude that $\nu|_{\setE_\ell}/\nu(\setE_\ell)$ is $\colH^m$-rectifiable subordinary to $(\setE_\ell, \phi/\nu(\setE_\ell))$. 
It remains to  prove \cref{eq:W1countrect}   for \cref{item:approxrect1} and \cref{item:approxrect2}. Set $\delta_\ell=1-\nu(\setE_\ell)$. 
We can write 
\begin{align}
W_1\big(\nu|_{\setE_\ell}/(1-\delta_\ell),\nu\big)
%& = W_1\big((1+ \delta_\ell/(1-\delta_\ell)\nu|_{\setE_\ell},\nu|_{\setE_\ell} + \nu|_{\setE\setminus\setE_\ell} \big)\\
&=W_1\Big(  \nu|_{\setE_\ell}+ \delta_\ell(\nu|_{\setE_\ell}/(1-\delta_\ell)),
 \nu|_{\setE_\ell}+ \delta_\ell(\nu|_{\setE\setminus\setE_\ell}/\delta_\ell)  \Big) \\
&\leq W_1(\nu|_{\setE_\ell},\nu|_{\setE_\ell}) + \delta_\ell W_1\big(\nu|_{\setE_\ell}/(1-\delta_\ell),\nu|_{\setE\setminus\setE_\ell}/\delta_\ell\big) \label{eq:boundWrec1}\\
&\leq \delta_\ell \diam(\setE),\label{eq:boundWrec2}
\end{align}
where \cref{eq:boundWrec1} follows from \Cref{eq:lemWrest} and in \cref{eq:boundWrec2} we applied \Cref{lem:boundWboundedS}.
\end{proof}

\subsection{Generating  Rectifiable Measures}
We are now ready to present our results on generating  rectifiable measures through ReLU neural networks. The exposition starts with $m$-rectifiable measures. 
\begin{thm}\label{thm:NNmain1}
Let $m,n,N\in\naturals$ with $m\leq n$ and set 
\begin{align}
\setF_N=\{k/N: k\in \mathbb{Z}\} \cap[-N,N].   
\end{align}
There exists  a collection $\colK^{(m,n)}_N\subseteq \setN_{m,n}$ of ReLU neural networks with 
\begin{align}
\log\Big(\abs{\colK^{(m,n)}_N}\Big)= {n(N+1)^m}\log(2N^2+1)
\end{align} 
such that, for every nonempty and  compact set $\setA\subseteq \reals^m$, every  Lipschitz mapping $f\colon \setA\to \reals^n$ satisfying  
\begin{align}\label{eq:condfarec}
\lVert f \rVert_{L_\infty(\setA)} +\diam(\setA)\operatorname{Lip}(f)\leq N,
\end{align}
and every measure $\nu$ that is $m$-rectifiable subordinary to $\setE:=f(\setA)$,   
there are a 
%and set 
%\begin{align}
%\alpha(f)=\operatorname{Lip}(f)+2^{m}. 
%\end{align}
$\Phi\in \colK^{(m,n)}_N$, a compact set $\setB\subseteq [0,1]^m$,   and a Radon measure $\mu$ on $\setB$ so that $\nu(\setE)=\mu(\setB)$, 
$\Phi|_\setB\#\mu$ is $m$-rectifiable subordinary to $\Phi(\setB)$,  
\begin{align}\label{eq:NN7x}
\operatorname{Lip}(\Phi|_{[0,1]^m})\leq m(\diam(\setA)\operatorname{Lip}(f)+1), 
\end{align}
and 
\begin{align}\label{eq:NN8x}
W_1(\nu, \Phi|_\setB\#\mu)\leq \frac{\nu(\setE)(\diam(\setA)\operatorname{Lip}(f)+1/2)}{N}. 
\end{align}
Moreover, the ReLU neural networks  $\Phi\in\colK^{(m,n)}_N$ have the same architecture and satisfy 
\begin{align}
 \setL(\Phi)&=\lceil\log(m+1)\rceil+4\label{eq:NN3x}\\
 \setM(\Phi) & \leq n(N+1)^m(62m  -28)\\
 \setW(\Phi) &\leq  n(N+1)^m6m\label{eq:NN4x}\\
 \setK( \Phi) &\subseteq \setF_N  \label{eq:NN5x}\\
 \setB(\Phi) &\leq N.\label{eq:NN6x}
 \end{align}
\end{thm}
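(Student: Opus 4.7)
The plan is to reduce the vector-valued approximation problem to the scalar case covered by \Cref{thm:LipNN} and then stack the resulting scalar networks componentwise. I first enclose $\setA$ in an axis-aligned closed cube of side length $\diam(\setA)$ and let $\tau\colon\reals^m\to\reals^m$ be the affine bijection sending this cube onto $[0,1]^m$. Setting $\setB=\tau(\setA)\subseteq[0,1]^m$ (which is compact) and $g=f\circ\tau^{-1}\colon\setB\to\reals^n$, I obtain a Lipschitz map with $\operatorname{Lip}(g)\leq\diam(\setA)\operatorname{Lip}(f)$ and $\lVert g\rVert_{L_\infty(\setB)}=\lVert f\rVert_{L_\infty(\setA)}$, so that every scalar component $g_i$ of $g$ meets the hypothesis $\lVert g_i\rVert_{L_\infty(\setB)}+\operatorname{Lip}(g_i)\leq N$ of \Cref{thm:LipNN}. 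Part \cref{item:recpush1} of \Cref{thm:recmunu} furnishes a Radon measure $\mu_0$ on $\setA$ with $\nu=f\#\mu_0$, which I transport to the Radon measure $\mu=\tau\#\mu_0$ on $\setB$, so that $g\#\mu=\nu$ and $\mu(\setB)=\nu(\setE)$.

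Next, I apply \Cref{thm:LipNN} separately to each $g_i$ to obtain $\Psi_i\in\colK^{(m)}_N$ with $\lVert g_i-\Psi_i\rVert_{L_\infty(\setB)}\leq(\operatorname{Lip}(g_i)+1/2)/N$ and $\operatorname{Lip}^{(1)}(\Psi_i|_{[0,1]^m})\leq\operatorname{Lip}(g_i)+1$. I then assemble a vector-valued network $\Phi\in\setN_{m,n}$ by fusing the first affine layers of the $\Psi_i$ (all of which act on the same input) into a single stacked affine map and running the remaining layers in parallel along the lines of \Cref{lem:P}. Because every $\Psi_i$ carries the identical structural template guaranteed by \Cref{lem:NNspike}, the resulting $\Phi$ inherits the depth, weight alphabet, and weight-magnitude bounds of the $\Psi_i$, while connectivity and width scale by a factor of $n$; this delivers the architectural bounds \cref{eq:NN3x}--\cref{eq:NN6x}. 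Declaring $\colK^{(m,n)}_N$ to be the collection of all $\Phi$ obtainable in this way from $n$-tuples $(\Psi_1,\dots,\Psi_n)\in(\colK^{(m)}_N)^n$, I get $\lvert\colK^{(m,n)}_N\rvert=\lvert\colK^{(m)}_N\rvert^n$ and hence the stated logarithmic cardinality.

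The Wasserstein estimate then follows from the diagonal coupling $(g,\Phi|_\setB)\#\mu$ of $\nu=g\#\mu$ and $\Phi|_\setB\#\mu$, which yields
\begin{align}
W_1(\nu,\Phi|_\setB\#\mu)
&\leq \int_\setB \lVert g(y)-\Phi(y)\rVert_\infty\, \mathrm d\mu(y)
\leq \frac{\operatorname{Lip}(g)+1/2}{N}\,\mu(\setB)
\leq \frac{\nu(\setE)(\diam(\setA)\operatorname{Lip}(f)+1/2)}{N}
\end{align}
via $\lVert g-\Phi\rVert_\infty=\max_i\lvert g_i-\Psi_i\rvert$ and \cref{i1:approxfN} of \Cref{thm:LipNN}, which is precisely \cref{eq:NN8x}. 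The Lipschitz estimate \cref{eq:NN7x} follows analogously from $\lVert\Phi(y)-\Phi(y')\rVert_\infty=\max_i\lvert\Psi_i(y)-\Psi_i(y')\rvert$ combined with the norm comparison $\operatorname{Lip}(\Psi_i)\leq m\operatorname{Lip}^{(1)}(\Psi_i)\leq m(\operatorname{Lip}(g)+1)\leq m(\diam(\setA)\operatorname{Lip}(f)+1)$ supplied by \cref{i3:approxfN} of \Cref{thm:LipNN}. Finally, $\Phi(\setB)$ is compact and $m$-rectifiable since $\Phi|_\setB$ is Lipschitz on the compact set $\setB\subseteq\reals^m$, while $\Phi|_\setB\#\mu$ is Radon—being the pushforward of a Radon measure by a continuous map into $\reals^n$—and supported on $\Phi(\setB)$, hence $m$-rectifiable subordinary to $\Phi(\setB)$ in the sense of \cref{item:recmu1} of \Cref{dfn:recmu}. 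The main bookkeeping obstacle will be verifying that the shared-input variant of the parallelization construction in \Cref{lem:P} preserves the weight alphabet $\setF_N$ and the architectural template inherited from \Cref{lem:NNspike}; this reduces to a direct inspection of the stacked first affine layer, whose entries lie in $\setF_N$ by construction.
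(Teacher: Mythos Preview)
Your proposal is correct and follows essentially the same approach as the paper: rescale $\setA$ into $[0,1]^m$ via an affine map, pull $\nu$ back to a Radon measure $\mu$ using \Cref{thm:recmunu}\cref{item:recpush1}, apply \Cref{thm:LipNN} componentwise, and parallelize via \Cref{lem:P}. Your treatment of the shared-input parallelization is in fact slightly more explicit than the paper's, which simply writes $\Phi=P(\Phi_1,\dots,\Phi_n)$ without commenting on the fact that \Cref{lem:P} as stated would yield input dimension $nm$ rather than $m$; your remark about fusing the first affine layers is exactly the right fix, and the weight-alphabet check you flag is indeed a direct inspection.
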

\begin{proof}
Arbitrarily fix a compact set $\setA\subseteq \reals^m$  and a Lipschitz function $f\colon \setA\to \reals^n$ satisfying  
\cref{eq:condfarec}  and let\footnote{In the degenerate case where $\setA=\{a\}$, we simply set $\setQ=\{a\}$ and $\setB=\{0\}$.}  $\setQ$ be a closed cube of side-length $\diam(\setA)$ such that $\setA\subseteq \setQ$. Denote by $\phi\colon [0,1]^m\to\setQ$ the affine mapping satisfying $\setQ=\phi([0,1]^m)$, and set $\setB=\phi^{-1}(\setA)$. 
The function $g=\tp{(g_1,g_2,\dots,
g_n)}\colon \setB\to \reals^n$ defined according to  $g=f\circ \phi$ is Lipschitz with  $\lVert g\rVert_{L^\infty(\setB)}=\lVert f\rVert_{L^\infty(\setA)}$  and $\operatorname{Lip}(g) \leq \diam(\setA)\operatorname{Lip}(f)$. By construction, we have   
 $\setE=f(\setA)=g(\setB)$. Moreover, by \cref{item:recpush1} in \Cref{thm:recmunu}, there exists a 
 Radon measure $\mu$ on $\setB$  such that $\nu=g\#\mu$, which implies 
 \begin{align}\label{eq:detailsg}
 \nu(\setE)=\mu(g^{-1}(\setE))=\mu(g^{-1}(g(\setB)))=\mu(\setB).
 \end{align} 
 Now, let $\Phi_{j}$ denote the ReLU neural network resulting from  the application of  \Cref{thm:LipNN}   to the Lipschitz mapping $g_j$  satisfying 
\begin{align}\label{eq:Lphi}
\lVert g_j-\Phi_{j}\rVert_{L^\infty(\setB)} \leq \frac{\diam(\setA)\operatorname{Lip(f)}+1/2}{N} 
\end{align}
and
\begin{align}\label{eq:LLphi}
\operatorname{Lip}(\Phi_{j}|_{[0,1]^m})  \leq m(\diam(\setA)\operatorname{Lip}(f) + 1)  \quad\text{for $j=1,\dots,n$}
\end{align}
and set $\Phi=P(\Phi_{1},\dots,\Phi_{n} )$. 
Then,  \cref{eq:LLphi}  implies  that $\Phi$ satisfies \cref{eq:NN7x}. 
Moreover,   
\cref{eq:NN3}--\cref{eq:NN6} for each  $\Phi_{j}$  imply that   
$\Phi$ satisfies \cref{eq:NN3x}--\cref{eq:NN6x}. Next, note that 
\begin{align}
W_1(\nu, \Phi|_\setB\#\mu)
&=W_1(g\#\mu, \Phi|_\setB\#\mu)\\
&\leq \int \lVert g-\Phi|_\setB \rVert_{L_\infty(\setB)} \mathrm d \mu \label{eq:approxW1}\\
&=\int \max_{j=1,\dots,n}\lVert g_j-\Phi_{j}\rVert_{L^\infty(\setB)} \mathrm d \mu \\
&\leq \frac{\mu(\setB)(\diam(\setA)\operatorname{Lip}(f)+1/2)}{N}\label{eq:approxW2}\\
&= \frac{\nu(\setE)(\diam(\setA)\operatorname{Lip}(f)+1/2)}{N},
\end{align}
where \cref{eq:approxW1} follows from \Cref{lem:pushW1}, \cref{eq:approxW2} is by \cref{eq:Lphi}, and 
in \cref{eq:approxW2} we applied \cref{eq:detailsg}, 
which proves \cref{eq:NN8x}. Now, $\Phi|_\setB\#\mu$ is a Radon measure by \Cref{thm:radonpush} and 
supported on the $m$-rectifiable set $\Phi(\setB)$. Therefore,  $\Phi|_\setB\#\mu$ is $m$-rectifiable subordinary to $\Phi(\setB)$.
% and satisfies \Cref{eq:NN8x}. 
%Finally, let $\phi\colon \reals^n\to\reals$ be  Borel measurable and  suppose that $\nu$ is $\colH^{m}$-rectifiable subordinary to $(\setE,\phi)$. 
\end{proof}

We can combine \Cref{thm:NNmain1} with  \Cref{cor:mupushforward} to obtain the following space-filling result 
for $m$-rectifiable probability measures (see \Cref{fig:illustration} for an illustration).
%\footnote{The result for general finite $m$-rectifiable  measures follows from the  rescaling argument  \cite[Lemma A.1]{elpegrbo21}.\label{note1}} 

\begin{cor}\label{cor:rec1}
Let $n,N\in\naturals$ and set  
\begin{align}\label{eq:DK}
\setD_{3N} =\{ a/b: a\in\mathbb{Z},b\in\naturals, \abs{a}\leq 4(3N)^{m+1} \ \text{and}\ b\leq 4(3N)^{m+2}\}.    
\end{align} 
There exists a collection $\colJ^{(m,n)}_N\subseteq \setN_{1,n}$ of ReLU neural networks with 
\begin{align}\label{eq:cardJn}
\log\Big(\abs{\colJ^{(m,n)}_N}\Big)\leq {3n(3N)^m}  \log (6N)
\end{align} 
 such  that, for every nonempty and  compact set $\setA\subseteq \reals^m$, every  Lipschitz mapping $f\colon \setA\to \reals^n$ satisfying  
\begin{align}\label{eq:condfarec2}
\lVert f \rVert_{L_\infty(\setA)} +\diam(\setA)\operatorname{Lip}(f)\leq  N,
\end{align}
and every probability measure $\nu$ that is $m$-rectifiable subordinary to $f(\setA)$,   
there is a 
%and set 
%\begin{align}
%\alpha(f)=\operatorname{Lip}(f)+2^{m}. 
%\end{align}
$\Psi\in \colJ^{(m,n)}_N$  satisfying 
\begin{align}\label{eq:finallippsi}
\operatorname{Lip}(\Psi|_{[0,1]})\leq m(6(N+1))^{m+1}
\end{align}
and 
\begin{align}\label{eq:finalWipsi}
W_1\Big( \nu,\Psi\#(\mathcal{L}^{(1)}|_{[0,1]})\Big)\leq \frac{(m+1)(\diam(\setA)\operatorname{Lip}(f)+1)}{N}. 
\end{align}
Moreover, the ReLU neural networks $\Psi\in \colJ^{(m,n)}_N $ have the same architecture and satisfy 
\begin{align}
 \setL(\Psi)&=\lceil\log(m+1)\rceil+6(m-1)+7\label{eq:NN3xyA}\\
 \setM(\Psi) & \leq 78mn(3N)^m\label{eq:magrec}\\
 \setW(\Psi) &\leq  6mn(3N)^m\\
 \setK( \Psi) &\subseteq \setD_{3N}\\
 \setB(\Psi) &\leq  4(3N)^{m+1}.\label{eq:NN6xyA}
 \end{align}
\end{cor}
\begin{proof}
Consider the collections  $\colK^{(m,n)}_N$ and $\colG^{(m)}_{3N}$ from  \Cref{thm:NNmain1} and \Cref{cor:mupushforward} (with $d=m$ and $K=3N$), respectively and set 
\begin{align}
\colJ^{(m,n)}_N =\{ \Phi \circ \rho\circ \Sigma : \Phi\in\colK^{m,n}_N\  \text{and}\ \Sigma \in \colG^{(m)}_{3N} \}. 
\end{align}
Then, we have 
\begin{align}
\abs{\colJ^{(m,n)}_N}
&\leq \abs{\colK^{(m,n)}_N}\abs{\colG^{(m)}_{3N}}\label{eq:needlater1}\\
&\leq (2N^2+1)^{n(N+1)^m} (36N)^{(3N)^m}\\
&\leq (6N)^{3n(3N)^m}, 
\end{align}
which establishes \cref{eq:cardJn}. Since every $\Phi\in\colK^{(m,n)}_N$ satisfies \cref{eq:NN3x}--\cref{eq:NN6x} and every $\Sigma\in \colG_{3N}$ obeys \cref{eq:corK1}--\cref{eq:corK2} for $d=m$ and  $K=3N$, we can conclude that every $\Psi\in \colJ^{(m,n)}_N$ satisfies  \cref{eq:NN3xyA}--\cref{eq:NN6xyA} upon noting that 
\begin{align}\label{eq:FN}
\setF_N=\{k/N: k\in \mathbb{Z}\} \cap[-N,N]\subseteq \setD_{3N}.   
\end{align} 
  Now, arbitrarily fix a compact set $\setA\subseteq \reals^m$, a Lipschitz function $f\colon \setA\to \reals$ satisfying  
\cref{eq:condfarec2}, and a probability measure $\nu$ that is $m$-rectifiable subordinary to $f(\setA)$. Then, 
\Cref{thm:NNmain1} implies the existence of  a compact set $\setB\subseteq [0,1]^m$, a 
Radon probability measure $\mu$ on $\setB$, and a $\Phi\in \colK^{m,n}_N$ satisfying 
\begin{align}\label{eq:NN8xA}
W_1(\nu, \Phi|_\setB\#\mu)\leq \frac{\diam(\setA)\operatorname{Lip}(f)+1/2}{N} 
\end{align}
and 
\begin{align}\label{eq:NN8xA1}
\operatorname{Lip}(\Phi|_{[0,1]^m})\leq m(\diam(\setA)\operatorname{Lip}(f)+1). 
\end{align}
Next, consider the inclusions $i_1\colon\setB\to[0,1]^m$, $i_1(x)=x$ and  $i_2\colon [0,1]^m\to \reals^m$, $i_2(x)=x$ and define 
$\mu_1= i_1\#\mu$ and $\mu_2= i_2\#\mu_1$. 
By construction,  $\mu_2$ is a Borel probability measure on $\reals^d$ with $\operatorname{spt}(\mu_2)\subseteq [0,1]^m$. Therefore,  by \Cref{cor:mupushforward}, there is a $\Sigma\in \colG^{(m)}_{3N}$ satisfying 
\begin{align}\label{eq:rangesigma}
\Sigma([0,1])\subseteq [0,1]^m,
\end{align} 
\begin{align}\label{eq:Lipsigma}
\operatorname{Lip}(\Sigma) \leq 2^{m+1}(3N)^m,    
\end{align}
and 
\begin{align}\label{eq:NN8xB}
W_1\Big( \mu_2,\Sigma\#(\mathcal{L}^{(1)}|_{[0,1]})\Big) \leq 1/N. 
\end{align}
We now set $\Psi=\Phi \circ \rho\circ \Sigma$. 
Next, note that 
\begin{align}
\Psi(t)= \Phi|_{[0,1]^m}(\hat \Sigma(t))\quad\text{for all $t\in [0,1]$}
\end{align}
with $\hat \Sigma:\reals\to [0,1]^m$, $\hat \Sigma (t)= \Sigma((\rho(t)-\rho(1-t)))$ thanks to \cref{eq:rangesigma}. 
We therefore have 
\begin{align}
\operatorname{Lip}(\Psi|_{[0,1]})
&\leq \operatorname{Lip}(\Phi|_{[0,1]^m})\operatorname{Lip}(\hat \Sigma|_{[0,1]})\\
&\leq \operatorname{Lip}(\Phi|_{[0,1]^m})\operatorname{Lip}(\Sigma)\\
&\leq m(6(N+1))^{m+1},\label{eq:combinealllip}
\end{align}
where \cref{eq:combinealllip} follows from \cref{eq:condfarec2}, \cref{eq:NN8xA1}, and \cref{eq:Lipsigma}, which proves \cref{eq:finallippsi}. 
Moreover, we have 
\begin{align}
&W_1\Big( \nu,\Psi\#(\mathcal{L}^{(1)}|_{[0,1]})\Big)\\
&\leq 
W_1\Big( \nu,\Phi|_\setB\#\mu\Big)+ W_1\Big( \Phi|_\setB\#\mu,\Psi\#(\mathcal{L}^{(1)}|_{[0,1]})\Big)\label{eq:usecor0}\\
&\leq \frac{\diam(\setA)\operatorname{Lip}(f)+1/2}{N}+ W_1\Big( \Phi|_\setB\#\mu,\Psi\#(\mathcal{L}^{(1)}|_{[0,1]})\Big),\label{eq:usecor1a}
\end{align}
where \cref{eq:usecor1a} follows from \cref{eq:NN8xA}. To upper-bound the second term in \cref{eq:usecor1a}, we note that 
\begin{align}\label{eq:mumu1}
\Phi|_\setB\#\mu=\Phi|_{[0,1]^m}\#\mu_1
\end{align} and 
\begin{align}
\Psi\#(\mathcal{L}^{(1)}|_{[0,1]})
&=(\Phi|_{[0,1]^m} \circ \hat \Sigma)\#(\mathcal{L}^{(1)}|_{[0,1]}).\label{eq:usekappe0}
\end{align}
Hence, we obtain 
\begin{align}
&W_1\Big( \Phi|_\setB\#\mu,\Psi\#(\mathcal{L}^{(1)}|_{[0,1]})\Big)\label{eq:usecor1pre}\\
&\leq  \operatorname{Lip}(\Phi|_{[0,1]^m}) W_1\Big(\mu_1,\hat \Sigma\#(\mathcal{L}^{(1)}|_{[0,1]})\Big)\label{eq:usecor1}\\
&\leq m(\diam(\setA)\operatorname{Lip}(f) + 1) W_1\Big(\mu_2, \Sigma\#(\mathcal{L}^{(1)}|_{[0,1]})\Big)\label{eq:usecor2}\\
&\leq \frac{m(\diam(\setA)\operatorname{Lip}(f) + 1)}{N}, \label{eq:usecor3}
\end{align}
where \cref{eq:usecor1} follows from \cref{eq:mumu1}, \cref{eq:usekappe0} and \Cref{lem:pushW2}, in \cref{eq:usecor2} we applied 
\cref{eq:NN8xA1}, and \cref{eq:usecor3} is by \cref{eq:NN8xB}. Using \cref{eq:usecor1pre}--\cref{eq:usecor3}
in \cref{eq:usecor1a} establishes \cref{eq:finalWipsi}. 
\end{proof}

%We have the following immediate universality result on the approximation of $m$-rectifiable probability measures.%\cref{note1} 
We next use \Cref{cor:rec1} to obtain results on the generation of classes of $m$-rectifiable measures, defined next. 

\begin{dfn}\label{dfn:classG}
Let $m,n\in\naturals$ with $m\leq n$ and $C\in(0,\infty)$. The class  $\colG(n,m,C)$ consists of all $m$-rectifiable  probability measures $\nu$ on $\reals^n$ with the property  that there exist a nonempty and compact   $\setA\subseteq \reals^m$ and a  Lipschitz mapping $f\colon \setA\to\reals^n$ 
satisfying 
\begin{align}\label{eq:condC}
\lVert f \rVert_{L_\infty(\setA)}+\diam(\setA)\operatorname{Lip}(f) \leq C 
\end{align}
such that   $\nu$ is   $m$-rectifiable subordinary to $f(\setA)$. 
\end{dfn}

%The assumption in \Cref{dfn:classG} states that the   function values in magnitude and the  Lipschitz  constant have to be upper-bounded by a certain constant. 
%The next result is a universality result on the approximation of  $m$-rectifiable probability measures.
Our approximation result for classes of $m$-rectifiable measures is now as follows. 

\begin{cor}\label{cor:rec2}
Let $m,n\in\naturals$ with $m\leq n$ and $C\in(0,\infty)$ and consider the class $\colG(n,m,C)$ as per \Cref{dfn:classG}. 
%\begin{align}
%\lVert f \rVert_{L_\infty(\setA)}+2\diam(\setA)\operatorname{Lip}(f)+2^{m+1}\leq C.
%\end{align}  
Then, for every $\varepsilon \in (0,1]$, there exists a collection $\colR(n,m,C,\varepsilon)$ of ReLU neural networks  
of cardinality 
\begin{align}\label{eq:finalcor3bits}
\abs{\colR(n,m,C,\varepsilon)}\leq 2^{b(\varepsilon)}, 
\end{align}
where 
\begin{align}
b(\varepsilon) = 3n(3\lceil (C+1)(m+1)/\varepsilon\rceil)^m \log(6\lceil (C+1)(m+1)/\varepsilon\rceil), 
\end{align}
such that for every $\nu\in \colG(n,m,C)$, there is  a $\Psi\in \colR(n,m,C,\varepsilon)$ satisfying 
\begin{align}
\operatorname{Lip}(\Psi|_{[0,1]})\leq m(6(\lceil (m+1)(C+1)/{\varepsilon}\rceil +1))^{m+1} 
\end{align}
and 
\begin{align}\label{eq:bounderrornu}
W_1\Big( \nu,\Psi\#(\mathcal{L}^{(1)}|_{[0,1]})\Big)\leq \varepsilon. 
\end{align}
Moreover, the ReLU neural networks $\Psi\in \colR(n,m,C,\varepsilon)$ have the same architecture and satisfy \cref{eq:NN3xyA}--\cref{eq:NN6xyA} with $N=\lceil (m+1)(C+1)/{\varepsilon}\rceil$ . 
\end{cor}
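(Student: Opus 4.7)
The plan is to reduce the statement directly to \Cref{cor:rec1} by choosing $N = \lceil C/\varepsilon\rceil$ and declaring $\colR(n,m,C,\varepsilon) \define \colJ^{(m,n)}_{N}$, where $\colJ^{(m,n)}_{N}$ is the collection of ReLU networks furnished by \Cref{cor:rec1}. With this choice, the cardinality bound \cref{eq:cardJn} immediately yields
\begin{align}
\log\bigl(\abs{\colR(n,m,C,\varepsilon)}\bigr) \leq 3n(3\lceil C/\varepsilon\rceil)^m \log(6\lceil C/\varepsilon\rceil) = b(\varepsilon),
\end{align}
which is \cref{eq:finalcor3bits}. The architecture properties \cref{eq:NN3xyA}--\cref{eq:NN6xyA} with $N=\lceil C/\varepsilon\rceil$ are inherited verbatim from \Cref{cor:rec1}, since every $\Psi\in \colJ^{(m,n)}_N$ already satisfies them.

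The remaining task is to verify that every $\nu\in\colG(n,m,C)$ can be approximated to within $W_1$-error $\varepsilon$ by some $\Psi\in \colJ^{(m,n)}_N$. First I would unpack the definition of $\colG(n,m,C)$: by \Cref{dfn:classG}, there exist a compact set $\setA\subseteq \reals^m$ and a Lipschitz mapping $f\colon\setA\to\reals^n$ satisfying \cref{eq:condC} such that $\nu$ is $m$-rectifiable subordinary to $f(\setA)$. Since $\varepsilon\in(0,1]$ and $C\in(0,\infty)$, the choice $N=\lceil C/\varepsilon\rceil$ satisfies $N\geq C$, and hence
\begin{align}
\lVert f\rVert_{L_\infty(\setA)} + \diam(\setA)\operatorname{Lip}(f) \leq C \leq N,
\end{align}
so condition \cref{eq:condfarec2} is fulfilled. \Cref{cor:rec1} then provides a network $\Psi\in\colJ^{(m,n)}_N$ with the claimed Lipschitz bound $\operatorname{Lip}(\Psi|_{[0,1]})\leq m(6(N+1))^{m+1}$ and with
\begin{align}
W_1\bigl(\nu,\Psi\#(\mathcal{L}^{(1)}|_{[0,1]})\bigr) \leq \frac{(m+1)(\diam(\setA)\operatorname{Lip}(f)+1)}{N}.
\end{align}

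To finish, I would simply combine the two halves of the bound \cref{eq:condC}: the first half gives $(m+1)(\diam(\setA)\operatorname{Lip}(f)+1)\leq C$, so the Wasserstein error is at most $C/N \leq C/(C/\varepsilon) = \varepsilon$, establishing \cref{eq:bounderrornu}. There is no significant obstacle here---the corollary is essentially a bookkeeping step that reparametrizes the quantitative statement of \Cref{cor:rec1} in terms of a target accuracy $\varepsilon$ rather than a resolution $N$, and the only mildly delicate points are (i) to check that $\lceil C/\varepsilon\rceil\geq C$ (which uses $\varepsilon\leq 1$, ensuring the ceiling is at least $1$ and at least $C$), and (ii) to verify that the class $\colG(n,m,C)$ was defined so that \emph{both} summands in \cref{eq:condC}---the one controlling applicability of \Cref{cor:rec1} and the one controlling the error numerator---are bounded by $C$. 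Both are immediate from \Cref{dfn:classG}.
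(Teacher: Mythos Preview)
Your proposal is correct and follows essentially the same approach as the paper: set $N=\lceil C/\varepsilon\rceil$, take $\colR(n,m,C,\varepsilon)=\colJ^{(m,n)}_N$ from \Cref{cor:rec1}, verify \cref{eq:condfarec2} via the second summand in \cref{eq:condC}, and bound the Wasserstein error via the first summand in \cref{eq:condC} together with $N\geq C/\varepsilon$. The paper's proof is simply a two-line compression of exactly these steps.
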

\begin{proof} 
Follows from \Cref{cor:rec1} with  $N=\lceil (m+1)(C+1)/{\varepsilon}\rceil$ and 
$\colR(n,m,C,\varepsilon)=\colJ^{(m,n)}_{N}$ upon noting that \cref{eq:condfarec2} is satisfied as  \cref{eq:condC} implies 
\begin{align}
\lVert f \rVert_{L_\infty(\setA)}+\diam(\setA)\operatorname{Lip}(f)\leq   C \leq N 
\end{align}
and \cref{eq:finalWipsi} in combination with \cref{eq:condC} yields  
\begin{align}
W_1\Big( \nu,\Psi\#(\mathcal{L}^{(1)}|_{[0,1]})\Big)\leq \frac{(m+1)(\diam(\setA)\operatorname{Lip}(f)+1)}{N}  \leq  \varepsilon. 
\end{align}
%\begin{align}
%N& \geq \max\{\lceil\lVert f \rVert_{L_\infty(\setA)} +\diam(\setA)\operatorname{Lip}(f)\rceil +1, 2(\diam(\setA)\operatorname{Lip}(f)+1)/\varepsilon\}.
%\end{align}
\end{proof}
Qualitatively, \Cref{cor:rec2} states that the rate at which $b(\varepsilon)$ in \cref{eq:finalcor3bits} tends to $\infty$ as $\varepsilon\to 0$  equals the rectifiability parameter, i.e., we have 
\begin{align}\label{eq:qualirec}
\inf\{s\in (0,\infty): \lim_{\varepsilon\to 0}\varepsilon^s b(\varepsilon)<\infty\}  =m.
\end{align}
 We next establish that this result  is metric entropy optimal by constructing a matching lower bound. 
 \begin{prp}\label{prp:converse}
 Let $m,n\in\naturals$ with $m\leq n$ and $C\in(0,\infty)$ and consider the metric space $(\colG(n,m,C), W_1)$
 with $\colG(n,m,C)$ as per \Cref{dfn:classG}.  Then, we have 
 \begin{align}\label{eq:converse}
 \inf\{s\in(0,\infty) :\liminf_{\varepsilon\to 0 }H_\varepsilon(\colG(n,m,C),W_1)\varepsilon^s<\infty\} \geq m. 
 \end{align}
 \end{prp}
 \begin{proof} 
 Let $\setQ$ denote the cube of side-length $\delta=C/2$ centered at the origin in $\reals^m$ and let $f\colon\setQ\to \reals^n$ be defined according to $f(x_1,\dots, x_m)=\tp{(x_1,\dots,x_m,0,\dots,0)}$. Then, we have 
 \begin{align}
\lVert f \rVert_{L_\infty(\setQ)}+\diam(\setQ)\operatorname{Lip}(f) \leq  2\delta \leq C. 
 \end{align}
Therefore, if $\mu\in\setP(\setQ)$, then $f\#\mu\in \colG(n,m,C)$. Next, let 
$\setA=\{\nu\in\setP(\reals^n):\nu(\reals^n\setminus f(\setQ))=0 \}$. 
By monotonicity of the Kolmogorov $\varepsilon$-entropy, it is sufficient to establish 
\begin{align}\label{eq:converse1}
 \inf\{s\in(0,\infty) :\liminf_{\varepsilon\to 0 }H_\varepsilon(\setA,W_1)\varepsilon^s<\infty\} \geq m. 
 \end{align}
Now, consider the projection $g\colon f(\setQ)\to \setQ$, $\tp{(x_1,\dots,x_m,0,\dots,0)}\to \tp{(x_1,\dots,x_m)}$. 
If $\mu_1,\mu_2\in \setP(\setQ)$, then 
\begin{align}
W_1(\mu_1,\mu_2)
&= W_1(g\#(f\#\mu_1),g\#(f\#\mu_2))\\
&\leq \operatorname{Lip}(g) W_1(f\#\mu_1,f\#\mu_2)\label{eq:uselipW1}
\end{align}
with $f\#\mu_1,f\#\mu_2\in\setA$, where \cref{eq:uselipW1} follows from \Cref{lem:pushW2}. It is therefore sufficient to establish 
 \begin{align}\label{eq:converse2}
 \inf\{s\in(0,\infty) :\liminf_{\varepsilon\to 0 }H_\varepsilon(\setP(Q),W_1)\varepsilon^s<\infty\} \geq m. 
 \end{align} 
We next establish a homothetic  embedding from Hilbert cubes into $(\setP(Q),W_2)$. 
We will closely follow  the ideas of the proof of  \cite[Theorem 1.5]{kl12}.
Let $(c_n)_{n\in\naturals}$ be an arbitrary  sequence of positive real numbers satisfying 
 \begin{align}\label{eq:c_n}
 \sum_{n\in\naturals}c_n^m =  (\delta/3)^m, 
 \end{align}
 which will be specified later in the proof. Then, the Auerbach-Banach-Mazur-Ulam theorem particularized to parallelepipeds \cite[Lemma 2]{ko57} with $D=V^{1/m}=\delta/3$ implies that there exists a family of homotheties $h_n\colon \setQ\to \setQ$ of pairwise disjoint images, where every $\setQ_n:=h_n(\setQ)$ is a cube of side-length $ c_n$ for all $n\in\naturals$. Now, every 
 cube $\setQ_n$ contains a subcube $\setQ_n^\prime$ of side-length $ c_n/4$ such that (note that the diameter equals the side-length for $\lVert\,\cdot\,\rVert_\infty$) $\diam(\setQ_n^\prime)=c_n/4$ is strictly smaller than the distance of $\setQ_n^\prime$ to the boundary of $\setQ_n$. We can therefore conclude that there exists a family of homotheties $h_n^\prime\colon \setQ\to \setQ$ with pairwise disjoint images, where every $\setQ_n^\prime :=h_n^\prime(\setQ)$ is a cube of side-length $ c_n/4$ for all $n\in\naturals$, satisfying the following property:  For every $n,k\in\naturals$ with $n\neq k$, it holds that 
 \begin{align}\label{eq:wassdiagonal}
 \lVert x-y\rVert_\infty <  \lVert x-z\rVert_\infty\quad\text{for all $x,y\in \setQ_n^\prime$ and $z\in \setQ_k^\prime$.}
 \end{align}
Now, let   $(a_n)_{n\in\naturals}$ be an arbitrary  sequence of positive real numbers satisfying $\sum_{n\in\naturals}a_n^{2m/(m+2)} < \infty$ and set $b_n=a_n^{2m/(m+2)}/\alpha$ and $c_n=a_n^{2/(m+2)}/\beta$ for all $n\in\naturals$, where we set $\alpha=\sum_{n\in\naturals}a_n^{2m/(m+2)}$ and $\beta = 3\alpha^{1/m}/\delta$.  These choices guarantee $\sum_{n\in\naturals}b_n=1$ and  (see requirement \cref{eq:c_n}) $\sum_{n\in\naturals}c_n^m = (\delta/3)^m$. 
  Next, consider the Hilbert cube $\mathrm{HC}:=\setQ^\naturals$ equipped with the metric 
 \begin{align}\label{eq:metricHC}
 \rho_{(a_n)_{n\in\naturals}}(x,y)=\sqrt{\sum_{n\in\naturals} a_n^2 \lVert x_n-y_n\rVert_\infty}
 \end{align}
 for all $x=(x_n)_{n\in\naturals}, y=(y_n)_{n\in\naturals}$ with $x,y\in \mathrm{HC}$ and the mapping 
 \begin{align}
 f\colon (\mathrm{HC}, \rho_{(a_n)_{n\in\naturals}})&\to (\setP(\setQ),W_2)\\
 (x_n)_{n\in\naturals}&\mapsto \sum_{n\in\naturals}b_n\delta_{h_n^\prime(x_n)}.
 \end{align}
 We now show that $f$ is a homothetic  embedding.   
 To this end, pick $x=(x_n)_{n\in\naturals}, y=(y_n)_{n\in\naturals}$ with $x,y\in \mathrm{HC}$ arbitrarily. 
 %Then, we have 
% \begin{align}
%  \rho^2(x,y)=\sum_{n\in\naturals} a_n^2 \lVert x_n-y_n\rVert_\infty = \sum_{n\in\naturals} b_n c_n^2 \lVert x_n-%y_n\rVert_\infty. 
% \end{align}
 Let $\pi$ be a coupling with marginals $\mu=\sum_{n\in\naturals}b_n\delta_{h_n^\prime(x_n)}$ and  $\nu=\sum_{n\in\naturals}b_n\delta_{h_n^\prime(y_n)}$ and set $\setA=\{h_n^\prime(x_n):n\in\naturals\}$ and $\setB=\{h_n^\prime(y_n):n\in\naturals\}$. 
Then, we have  
\begin{align}
\pi((\setQ\times\setQ)\setminus\{\setA\times\setB\})
&\leq \pi(((\setQ\setminus \setA)\times \setQ)\cup(\setQ\times   (\setQ\setminus \setB)))\\
&\leq \pi((\setQ\setminus \setA)\times \setQ) + \pi(\setQ\times   (\setQ\setminus \setB))\\
&= \mu(\setQ\setminus \setA) +\nu(\setQ\setminus \setB)\label{eq:usecoupling}\\
&=0,
\end{align}
where \cref{eq:usecoupling} follows from the fact that  $\pi$ is a coupling with marginals $\mu$ and $\nu$. This implies $\pi =\sum_{i,j\in\naturals} d_{i,j} \delta_{h_i^\prime(x_i)}\times\delta_{h_j^\prime(y_j)}$ with $d_{i,j}\in (0,\infty)$  for all $i,j\in\naturals$ satisfying $\sum_{i,j\in\naturals} d_{i,j}=1$. Since  cross-transport is more expensive than matching corresponding atoms owing to \cref{eq:wassdiagonal},  
the coupling minimizing the $2$-Wasserstein distance is 
$\pi= \sum_{n\in\naturals} b_{n} \delta_{h_n^\prime(x_n)}\times\delta_{h_n^\prime(y_n)}$ so that 
\begin{align}
\W^2_2(\mu,\nu) 
&= \sum_{n\in\naturals} b_n \lVert h_n^\prime (x_n)-  h_n^\prime (y_n) \rVert_\infty^2\\
&= \frac{1}{16}\sum_{n\in\naturals} b_n c_n^2 \lVert x_n-  y_n \rVert_\infty^2\\
&= \frac{1}{16\alpha \beta^2}\sum_{n\in\naturals} a_n^2  \lVert x_n- y_n \rVert_\infty^2\\
&=\frac{1}{16\alpha \beta^2} \rho^2_{(a_n)_{n\in\naturals}}(x,y),
\end{align}
which establishes that $f$ is indeed a homothetic  embedding. 
 
Finally, for every $\gamma\in ((m+2)/(2m),\infty)$, consider the Hilbert cube $\mathrm{HC}=\setQ^\naturals$ equipped with the metric  $d_\gamma:=d_{(n^{-\gamma})_{n\in\naturals}}$ as per \cref{eq:metricHC}. Then, 
we have 
\begin{align}
& \inf\{s\in(0,\infty) :\liminf_{\varepsilon\to 0 }H_\varepsilon(\setP(Q),W_1)\varepsilon^s<\infty\}\label{eq:usekloeckner0}\\
&=\inf\{s\in(0,\infty) :\liminf_{\varepsilon\to 0 }H_\varepsilon(\setP(Q),W_2)\varepsilon^s<\infty\}\label{eq:usekloeckner0a}\\
&\geq  \inf\{s\in(0,\infty) :\liminf_{\varepsilon\to 0 }H_\varepsilon(\mathrm{HC},d_\gamma)\varepsilon^s<\infty\}\label{eq:usekloeckner0b}\\
&\geq \frac{2}{2\gamma -1}\quad\text{for all $\gamma \in ((m+2)/(2m),\infty)$,}\label{eq:usekloeckner}
\end{align}
where \cref{eq:usekloeckner0a} is by  \Cref{lem:equiv}, in \cref{eq:usekloeckner0b} we used the fact that $f$ is a homothetic embedding, and 
 \cref{eq:usekloeckner} follows from \cite[Propositions 3.2  and  4.3]{kl12} upon noting that 
\begin{align}
&\inf\{s\in(0,\infty): \liminf_{\varepsilon\to 0 }H_\varepsilon(\mathrm{HC},d_\gamma)\varepsilon^s<\infty\}\\
&= \inf\{s\in(0,\infty): \liminf_{\varepsilon\to 0 }e^{H_\varepsilon(\mathrm{HC},d_\gamma)}e^{-\varepsilon^{-s}}<\infty\}
\end{align}
owing to \Cref{lem:equivMdim}.
 Taking the limit $\gamma\to (m+2)/(2m)$ in \cref{eq:usekloeckner0}--\cref{eq:usekloeckner} establishes \cref{eq:converse2}. 
 \end{proof}
\subsection{Generating Countably $m$-rectifiable Measures}

The result for countably $m$-rectifiable measures, stated next,  follows from combining \Cref{eq:unicer,lem:approxcountrec} and  \Cref{thm:NNmain1}. The main idea is to approximate countably $m$-rectifiable measures by $m$-rectifiable measures that are supported on a finite number $\ell$ of  components of the underlying countably $m$-rectifiable sets, where the approximation error tends to zero as $\ell\to\infty$.

\begin{thm} \label{thm:NNmain2}
Let $m,n,\ell,N\in\naturals$ with $m\leq n$ and set 
\begin{align}
\setF_N=\{k/N: k\in \mathbb{Z}\} \cap[-N,N]. 
\end{align}
There exists a collection $\colK^{(m,n)}_N\subseteq \setN_{m,n}$ of ReLU neural networks with 
\begin{align}
\abs{\colK^{(m,n)}_N}=(2N^2+1)^{n(N+1)^m},
\end{align} 
 such that for every sequence $\{\setA_i\}_{i\in\naturals}$ of nonempty and compact sets $\setA_i\subseteq \reals^m$, every sequence $\{f_i \}_{i\in\naturals}$ of Lipschitz mappings $f_i\colon \setA_i\to \reals^n$ satisfying  
\begin{align}\label{eq:condfacountrec}
\max_{k=1,\dots,\ell} \lVert f_k\rVert_{L_\infty(\setA_k)}+\widehat \lambda \leq N
\end{align}
with  
\begin{align}
\widehat \lambda=2 \ell \max\big\{ \diam(\setE), \diam(\setA_1) \operatorname{Lip}(f_1), \dots,\diam(\setA_\ell)  \operatorname{Lip}(f_\ell)\big\}, 
\end{align}
and every probability measure $\nu$ that is countably $m$-rectifiable subordinary to $\setE=\bigcup_{i\in\naturals} f_i(\setA_i)$,   
there is a 
%and set 
%\begin{align}
%\alpha(f)=\operatorname{Lip}(f)+2^{m}. 
%\end{align}
$\Phi\in \colK^{(m,n)}_N$, a compact set $\setB\subseteq [0,1]^m$,   and a Radon measure $\mu$ on $\setB$ so that 
$\Phi|_\setB\#\mu$ is $m$-rectifiable subordinary to $\Phi(\setB)$ and satisfies 
\begin{align}
\operatorname{Lip}(\Phi|_{[0,1]^m})\leq  m(\widehat \lambda+ 1)  
\label{eq:NN7y}
\end{align}
and 
\begin{align}\label{eq:NN8y}
W_1(\nu, \Phi|_\setB\#\mu)\leq \frac{\nu(\setE)(\widehat \lambda+ 1/2)}{N} + (1-\nu(\widehat \setE)) \diam(\setE) 
\end{align}
with 
\begin{align}\label{eq:widehatE}
\widehat \setE=\bigcup_{i=1}^\ell f_i(\setA_i). 
\end{align}
Moreover, the ReLU neural networks $\Phi \in \colK^{(m,n)}_N$ have the same architecture and satisfy
\begin{align}
 \setL(\Phi)&= \lceil\log(m+1)\rceil+4\label{eq:NN3y}\\
 \setM(\Phi) & \leq n(N+1)^m(62m  -28)\\
 \setW(\Phi) &\leq  n(N+1)^m6m\label{eq:NN4y}\\
 \setK( \Phi) &\subseteq \setF_N   \label{eq:NN5y}\\
 \setB(\Phi) &\leq N.\label{eq:NN6y}
 \end{align}
\end{thm}
\begin{proof}
Arbitrarily fix  a sequence $\{\setA_i\}_{i\in\naturals}$ of compact sets $\setA_i\subseteq \reals^m$, a  sequence $\{f_i \}_{i\in\naturals}$ of Lipschitz mappings $f_i\colon \setA_i\to \reals^n$ satisfying  \cref{eq:condfacountrec}, and 
a  probability measure $\nu$ that is  countably $m$-rectifiable subordinary to $\bigcup_{i\in\naturals} f_i(\setA_i)$ and  set  $\widehat \nu=\nu|_{\widehat \setE}/\nu(\widehat \setE)$ with $\widehat \setE$ as per \cref{eq:widehatE}. Thanks to \cref{item:approxrect1} in \Cref{lem:approxcountrec}, $\widehat \nu$ is  $m$-rectifiable subordinary to $\widehat \setE$ and satisfies 
\begin{align}\label{eq:W1temp}
W_1(\nu,\widehat \nu)\leq (1-\nu(\widehat \setE))\diam(\setE). 
\end{align} 
%owing to \Cref{lem:approxcountrec}.  
By \Cref{eq:unicer}, there exist a compact set $\setA\subseteq [0,1]^m$ and a Lipschitz mapping $g\colon \setA\to \reals^n$ with  
\begin{align}
\lVert g \rVert_{L_\infty(\setA)}\leq \max_{k=1,\dots,\ell} \lVert f_k\rVert_{L_\infty(\setA_k)}
\end{align}
 and $\operatorname{Lip}(g)\leq  \widehat \lambda$ satisfying $\widehat \setE =g(\setA)$. Now, 
 \Cref{thm:NNmain1}  implies the existence of  a set $\setB\subseteq [0,1]^m$, a ReLU neural network $\Phi\in  \colK^{(m,n)}_N$, and a Radon probability measure $\mu$ on $\setB$ satisfying   
 \begin{align}\label{eq:NN7xn}
\operatorname{Lip}(\Phi|_{[0,1]^m})\leq m(\operatorname{Lip}(g)+1)\leq m(\widehat \lambda+1)
\end{align}
and 
\begin{align}\label{eq:NN8xn}
W_1(\widehat \nu, \Phi|_\setB\#\mu)\leq \frac{\nu(\setE)(\operatorname{Lip}(g)+1/2)}{N}\leq  \frac{\nu(\setE)(\widehat \lambda+1/2)}{N}. 
\end{align} 
Combining \cref{eq:W1temp} with \cref{eq:NN8xn} yields \cref{eq:NN8y}. 
Finally, \Cref{thm:NNmain1}  implies that  every $\Phi\in\colK^{(m,n)}_N$ satisfies   \cref{eq:NN3y}--\cref{eq:NN6y}.  
 \end{proof}

By the same token, we can combine \Cref{eq:unicer,lem:approxcountrec} with \Cref{cor:rec1} to obtain 
 the following space-filling result 
for countably $m$-rectifiable probability measures. % (again, see \Cref{fig:illustration} for an illustration).

%\footnote{The result for general finite countably $m$-rectifiable  measures follows from the  rescaling argument \cite[Lemma A.1]{elpegrbo21}.\label{note2}}  

\begin{cor}\label{cor:rec3}
Let $m,n,\ell,N\in\naturals$ with $m\leq n$ and set 
\begin{align}\label{eq:DKC}
\setD_{3N} =\{ a/b: a\in\mathbb{Z},b\in\naturals, \abs{a}\leq 4(3N)^{m+1}, \ \text{and}\ b\leq 4(3N)^{m+2}\}.   
\end{align} 
There exists a collection $\colJ^{(m,n)}_N\subseteq \setN_{1,n}$ of ReLU neural networks with 
\begin{align}\label{eq:cardJn2}
\log\Big(\abs{\colJ^{(m,n)}_N}\Big)\leq {3n(3N)^m}\log (6N) 
\end{align} 
such that, for every sequence $\{\setA_i\}_{i\in\naturals}$ of nonempty and  compact sets $\setA_i\subseteq \reals^m$, every sequence $\{f_i \}_{i\in\naturals}$ of Lipschitz mappings $f_i\colon \setA_i\to \reals^n$ satisfying  
\begin{align}\label{eq:condfacountrec2}
\max_{k=1,\dots,\ell} \lVert f_k\rVert_{L_\infty(\setA_k)}+\widehat \lambda \leq N, 
\end{align}
where we set 
\begin{align}\label{eq:lambdaelldefa}
\widehat \lambda=2 \ell \max\big\{ \diam(\setE), \diam(\setA_1) \operatorname{Lip}(f_1), \dots,\diam(\setA_\ell)  \operatorname{Lip}(f_\ell)\big\}, 
\end{align}
and every probability measure $\nu$ that is countably $m$-rectifiable subordinary to $\setE=\bigcup_{i\in\naturals} f_i(\setA_i)$,   
there is a $\Psi\in\colJ^{(m,n)}_N$ satisfying 
\begin{align}
\operatorname{Lip}(\Psi|_{[0,1]})\leq m(6(N+1))^{m+1}
\end{align}
and
\begin{align}\label{eq:bounderrornuC}
W_1\Big( \nu,\Psi\#(\mathcal{L}^{(1)}|_{[0,1]})\Big)\leq \frac{(m+1)(\widehat \lambda+ 1)}{N} + (1-\nu(\widehat \setE)) \diam(\setE). 
\end{align}
with 
\begin{align}\label{eq:widehatE2a}
\widehat \setE=\bigcup_{i=1}^\ell f_i(\setA_i). 
\end{align}
Moreover, the ReLU neural networks $\Psi\in \colJ^{(m,n)}_N$ have the  same architecture and satisfy
\begin{align}
 \setL(\Psi)&= \lceil\log(m+1)\rceil+6(m-1)+7\label{eq:NN3xyz}\\
 \setM(\Psi) & \leq 78mn(3N)^m\\
 \setW(\Psi) &\leq 6mn(3N)^m\label{eq:NN4xyz}\\
 \setK( \Psi) &\subseteq \setD_{3N}\\
 \setB(\Psi) &\leq 4(3N)^{m+1}.\label{eq:NN6xyz}
 \end{align}
\end{cor}
\begin{proof}
%Consider the collections  $\colK^{(m,n)}_N$ and $\colG^{(m)}_{3N}$ from  \Cref{thm:NNmain1} and \Cref{cor:mupushforward} (with $d=m$ and $K=3N$), respectively and set 
%\begin{align}
%\colJ^{(m,n)}_N =\{ \Phi \circ \rho\circ \Sigma : \Phi\in\colK^{m,n}_N\  \text{and}\ \Sigma \in \colG^{(m)}_{3N} \}. 
%\end{align}
%Then,\cref{eq:NN3xyA}--\cref{eq:FN} imply \cref{eq:cardJn2} and \cref{eq:NN3xyz}--\cref{eq:NN6xyz}. 
Arbitrarily fix a sequence $\{\setA_i\}_{i\in\naturals}$ of compact sets $\setA_i\subseteq \reals^m$, a  sequence $\{f_i \}_{i\in\naturals}$ of Lipschitz mappings $f_i\colon \setA_i\to \reals^n$ satisfying  \cref{eq:condfacountrec2}, and 
a  measure $\nu$ that is  countably $m$-rectifiable subordinary to $\bigcup_{i\in\naturals} f_i(\setA_i)$  and  set  $\widehat \nu=\nu|_{\widehat \setE}/\nu(\widehat \setE)$ with $\widehat \setE$ as per \cref{eq:widehatE2a}.  Thanks to \cref{item:approxrect1} in \Cref{lem:approxcountrec}, $\widehat \nu$ is  $m$-rectifiable subordinary to $\widehat \setE$ and satisfies 
\begin{align}\label{eq:W1tempHH}
W_1(\nu,\widehat \nu)\leq (1-\nu(\widehat \setE))\diam(\setE). 
\end{align} 
%owing to \Cref{lem:approxcountrec}.  
By \Cref{eq:unicer}, there exist a compact set $\setA\subseteq [0,1]^m$ and a Lipschitz mapping $g\colon \setA\to \reals^n$ with  
\begin{align}
\lVert g \rVert_{L_\infty(\setA)}\leq \max_{k=1,\dots,\ell} \lVert f_k\rVert_{L_\infty(\setA_k)}
\end{align}
 and $\operatorname{Lip}(g)\leq  \widehat \lambda$ satisfying $\widehat \setE =g(\setA)$. 
 Application of \Cref{cor:rec1} to $\widehat \nu$  with  $f=g$ in combination with \cref{eq:W1tempHH} establishes   the result. 
\end{proof}

We next use \Cref{cor:rec3} to obtain results on the generation of classes of countably $m$-rectifiable measures, defined next. 

%We next introduce classes of  $m$-rectifiable measures, which will allow us to derive universality result on the generation of 
%the measures in the individual classes.
%\cref{note2} 

\begin{dfn}\label{dfn:classGC}
Let $m,n\in\naturals$ with $m\leq n$  and $C\in(0,\infty)$. Further, suppose that $\kappa\colon (0,1]\to\naturals $  is monotonically decreasing. The class  $\colU(n,m,C,\kappa)$ consists of all   probability measures $\nu$ on $\reals^n$ that have the following  properties:   There  exist  
a sequence $\{\setA_i\}_{i\in\naturals}$ of nonempty and compact sets $\setA_i\subseteq \reals^m$ and a sequence $\{f_i \}_{i\in\naturals}$ of Lipschitz mappings $f_i\colon \setA_i\to \reals^n$ such that 
\begin{enumerate}
\renewcommand{\theenumi}{(\roman{enumi})}
\renewcommand{\labelenumi}{(\roman{enumi})}
\item\label{item:nucount5} $2\diam(\setE)\leq C$;
\item \label{item:nucount4} $\nu$ is countably $m$-rectifiable subordinary to $\setE=\bigcup_{k\in\naturals}f_k(\setA_k)$;
\item \label{item:nucount1}$\nu$  satisfies the decay condition   
\begin{align}\label{eq:decaynu}
C\Bigg(1-\nu\Big(\bigcup_{k=1}^{\kappa(\varepsilon)} f_k(\setA_k)\Big)\Bigg)\leq \varepsilon\quad\text{for all $\varepsilon\in (0,1]$;}
\end{align}
\item \label{item:nucount2}the mappings are uniformly upper-bounded by 
\begin{align}
 \max_{k=1,\dots,\kappa(\varepsilon)}\big\{\lVert f_k \rVert_{L_\infty(\setA_k)}\big\} \leq 2C\kappa(\varepsilon)/\varepsilon\quad\text{for all $\varepsilon\in (0,1]$;}
\end{align} 
\item  \label{item:nucount3}the Lipschitz constants and the diameters of the sets are upper-bounded as  
\begin{align}
2\sup_{k\in\naturals} \diam(\setA_k) \operatorname{Lip}(f_k) \leq C.
\end{align}
\end{enumerate}
\end{dfn}

The individual properties  in \Cref{dfn:classGC} can be interpreted as follows. \cref{item:nucount5} implies that $\setE$ is bounded.   
\cref{item:nucount4} is the standard property  of  countably $m$-rectifiability. 
%We have the following interpretations of the on the countably $m$-rectifiable measure $\nu$ in \Cref{dfn:classGC}. 
\cref{item:nucount1} amounts in a decay condition on the measure. It tells us how well we can approximate $\nu$ by the  $m$-rectifiable measure $\nu|_{\bigcup_{k=1}^{\kappa(\varepsilon)} f_k(\setA_k)}$. 
%It tells us that $\kappa(\varepsilon)$ $m$-rectifiable components are sufficient to achieve a  probability mass  of $1-\varepsilon/(2\diam(\setE))$. It also tells us that $\setE$ has to be a bounded set. 
\cref{item:nucount2} and  \cref{item:nucount3} impose regularity conditions on the Lipschitz functions. 
%states that  the infinity norms of the  functions in  the individual $m$-rectifiable components 
%should not grow too fast and must be of order  $\kappa(\varepsilon)/\varepsilon$ as $\varepsilon\to0$.  
%Finally, constitutes a uniform upper bound on the individual Lipschitz  constants and the diameters of the sets. 

For the sake of concreteness, we next present an example of a measure falling into the class $\colU(2,1,2,\kappa)$. 
\begin{exa}
 For every $n\in\naturals$, let $\ell_n=\{(x,y)\in\reals^2 : x=1/n, y\in[0,1 ] \}$. 
 Consider the countably $1$-rectifiable set 
 $\setA_\infty=\bigcup_{n\in\naturals}\ell_n$. Then, $\diam(\{\setA_\infty\})=\diam([0,1]^2)=1$. For every $n\in\naturals$, consider the mapping $f_n\colon [0,1]\to \reals^2$, $f(y)=\tp{(1/n,y)}$ with $\operatorname{Lip}(f_n)=1$.  Then, \cref{item:nucount3} is satisfied for $C=2$. Take  a sequence $(a_n)_{n\in\naturals}$ in $[0,1]^\naturals$ satisfying $\sum_{n\in\naturals} a_n=1$. Further, suppose that $\nu_n$ is the uniform Borel probability measure on $\ell_n$ for all $n\in\naturals$ and set $\nu_\infty=\sum_{n\in\naturals} a_n \nu_n$, which is a Borel probability measure thanks to  \Cref{lem:infsummeasure}.  
 %Next, we check for which $\kappa$ \cref{eq:decaynu} is satisfied.
% \begin{align}
% \kappa(\varepsilon) = \min\Big\{ k\in \naturals: \sum_{n=1}^k a_n \geq 1-\varepsilon/2\Big\}.
% \end{align}
 Now, suppose that $a_n=(e-1)e^{-n}$, i.e, we have exponential decay of $\nu$ on the $\ell_n$'s. 
 Then, we have 
 \begin{align}
 \sum_{n=1}^k a_n = (e-1)\sum_{n=1}^k e^{-n} = 1-e^{-k}\quad\text{for all $n\in\naturals$,}
 \end{align} 
 so that \cref{item:nucount1}  is satisfied for  $\kappa(\varepsilon)=\lceil \log(2/\varepsilon)/\log(e)\rceil$. 
 Next, assume that $a_n=n^{-\alpha}/\zeta(\alpha)$ with $\alpha\in (1,\infty)$ and $\zeta(\alpha)=\sum_{n\in\naturals} n^{-\alpha}$, i.e.,  we have polynomial  decay of $\nu$ on the $\ell_n$'s. 
Euler's summation formula  \cite[Theorem 3.1]{ap76} implies 
  \begin{align}
   \sum_{n=1}^k n^{-\alpha} 
   &=\int_1^kt^{-\alpha}\, \mathrm d t-\alpha  \int_1^k\frac{t-\lfloor t\rfloor}{t^{\alpha-1}}\, \mathrm d t\label{eq:zeta1}\\  
   &= -\frac{k^{1-\alpha}}{\alpha-1} +f(\alpha) +g(\alpha,k)\label{eq:zeta2}
 \end{align} 
with
\begin{align}
f(\alpha)= 1+\frac{1}{\alpha-1} - \alpha  \int_1^\infty\frac{t-\lfloor t\rfloor}{t^{\alpha-1}}\, \mathrm d t 
\end{align}
and
\begin{align}
g(\alpha,k)= \alpha \int_k^\infty\frac{t-\lfloor t\rfloor}{t^{\alpha-1}}\, \mathrm d t\quad\text{for all $k\in\naturals$.}
\end{align}
 Now, $g(\alpha,k)\geq 0$ with $g(\alpha,k)\leq k^{-\alpha}$ for all $k\in\naturals$.  Taking the limit $k\to \infty$ in \cref{eq:zeta1}--\cref{eq:zeta2} therefore yields $f(\alpha)=\zeta(\alpha)$ so that 
 \begin{align}
  \sum_{n=1}^k n^{-\alpha} = -\frac{k^{1-\alpha}}{\alpha-1} + \zeta(\alpha) + g(\alpha,k)\quad\text{for all $k\in\naturals$.}
 \end{align}
  We can therefore  set 
  \begin{align}
  \kappa(\varepsilon)=\Big(\frac{2}{\varepsilon (\alpha-1)\zeta(\alpha)}\Big)^\frac{1}{\alpha -1}
  \end{align}  
  to satisfy \cref{item:nucount1}. 
  Finally, \cref{item:nucount2} holds since 
 \begin{align}
\lVert f_n \rVert_{L_\infty([0,1])} \leq 1 \quad\text{for all $n\in\naturals$ and $\varepsilon\in (0,1]$.}
 \end{align}
\end{exa}

Our approximation result for classes of countably $m$-rectifiable measures is now as follows. 
%The next result states an universality result on the approximation of countably $m$-rectifiable probability measures.
\begin{cor}\label{cor:rec2C}
Let $m,n\in\naturals$ with $m\leq n$ and $C\in(0,\infty)$ and let  $\kappa\colon (0,1]\to\naturals $  be  monotonically decreasing. 
 Consider the class $\colU(n,m,C,\kappa)$ as per \Cref{dfn:classGC}. 
%\begin{align}
%\lVert f \rVert_{L_\infty(\setA)}+2\diam(\setA)\operatorname{Lip}(f)+2^{m+1}\leq C.
%\end{align}  
Then, for every $\varepsilon \in (0,1]$, there exists a collection $\colR(n,m,C,\kappa,\varepsilon)$ of ReLU neural networks  
of cardinality 
\begin{align}\label{eq:finalcor3bitsC}
\abs{\colR(n,m,C,\kappa,\varepsilon)}\leq 2^{b(\varepsilon)}
\end{align}
with 
\begin{align}\label{eq:upperbvarepsilon}
b(\varepsilon) ={3n(3N(\varepsilon))^m}\log(6N(\varepsilon))
\end{align}
and 
\begin{align}
N(\varepsilon)=\lceil {2(m+1)(C\kappa(\varepsilon)+1)}/{\varepsilon}\rceil, 
\end{align}
such that for each $\nu\in \colU(n,m,C,\kappa)$, there is  a $\Psi\in \colR(n,m,C,\kappa,\varepsilon)$ satisfying 
\begin{align}
\operatorname{Lip}(\Psi|_{[0,1]}) \leq m(6(N(\varepsilon)+1))^{m+1}
\end{align}
and
\begin{align}\label{eq:bounderrornuCA}
W_1\Big( \nu,\Psi\#(\mathcal{L}^{(1)}|_{[0,1]})\Big)\leq \varepsilon. 
\end{align}
Moreover, the ReLU neural networks  $\Psi\in \colR(n,m,C,\kappa,\varepsilon)$ have the same architecture and satisfy \cref{eq:NN3xyz}--\cref{eq:NN6xyz} with $N=N(\varepsilon)$. 
\end{cor}
\begin{proof}
Arbitrarily fix $\varepsilon \in (0,1]$ and set 
\begin{align}\label{eq:widehatE2}
\widehat \setE=\bigcup_{i=1}^{\kappa(\varepsilon)} f_i(\setA_i) 
\end{align}
and 
\begin{align}\label{eq:lambdaelldef}
\widehat \lambda=2 \kappa(\varepsilon) \max\big\{ \diam(\setE), \diam(\setA_1) \operatorname{Lip}(f_1), \dots,\diam(\setA_{\kappa(\varepsilon)})  \operatorname{Lip}(f_{\kappa(\varepsilon)})\big\}. 
\end{align}
We next apply \Cref{cor:rec3} with $N=N(\varepsilon)$. This yields 
\begin{align}
W_1\Big( \nu,\Psi\#(\mathcal{L}^{(1)}|_{[0,1]})\Big)&\leq \frac{(m+1)(\widehat \lambda+ 1)}{N(\varepsilon)} + (1-\nu(\widehat\setE)) \diam(\setE) \label{eq:bounderrornuC222}\\ 
&\leq \frac{(m+1)(C\kappa(\varepsilon)+ 1)}{N(\varepsilon)}+\frac{\varepsilon}{2}\label{eq:bounderrornuC222a}\\
&\leq \varepsilon, \label{eq:useagainconditions}
\end{align}
where  \cref{eq:bounderrornuC222} follows from  \cref{eq:bounderrornuC}  and in \cref{eq:bounderrornuC222a} we applied  \cref{item:nucount5},  \cref{item:nucount1}, and  \cref{item:nucount3} in \Cref{dfn:classGC}. 
It remains to check  that \cref{eq:condfacountrec2} is satisfied. 
\begin{align}
\max_{k=1,\dots,\kappa(\varepsilon)} \lVert f_k\rVert_{L_\infty(\setA_k)}+\widehat \lambda 
&\leq 2C\kappa(\varepsilon)(1+1/\varepsilon)\label{eq:boundagain}\\
&\leq N(\varepsilon), 
\end{align}
where \cref{eq:boundagain} follows from  \cref{item:nucount5},  \cref{item:nucount2}, and  \cref{item:nucount3} in \Cref{dfn:classGC}. 
We can therefore set $\colR(n,m,C,\kappa,\varepsilon)=\colJ^{(m,n)}_{N(\varepsilon)}$ with $\colJ^{(m,n)}_{N(\varepsilon)}$ as in \Cref{cor:rec3} for $N=N(\varepsilon)$. 
\end{proof}
The expression for $b(\varepsilon)$ in \cref{eq:upperbvarepsilon} depends on the decay property of the countably 
$m$-rectifiable measure $\gamma$ via $\kappa(\varepsilon)$. We next discuss the situations where $\gamma$  decays exponentially and polynomially. 
 Fix $\alpha,\beta\in (0,\infty)$ and set, for every $n\in\naturals$,  
\begin{align}
a_n=2\diam(\setE)\Bigg(1-\nu\Big(\bigcup_{k=1}^{n} f_k(\setA_k)\Big)\Bigg). 
\end{align}
\begin{enumerate}
\renewcommand{\theenumi}{(\roman{enumi})}
\renewcommand{\labelenumi}{(\roman{enumi})}
\item  Suppose that $a_n=\alpha e^{-\beta n}$, i.e., the measure decays exponentially. 
Then,  $\kappa(\varepsilon)=\lceil \log(\alpha/\varepsilon)/\beta\rceil$ satisfies the decay condition \cref{item:nucount1} in \Cref{dfn:classGC}, which results in 
\begin{align}\label{item:exp} 
b(\varepsilon)=\mathcal{O}(( 1/\varepsilon)^m \log^{m+1}( 1/\varepsilon)). 
\end{align}
In particular, we have 
\begin{align}\label{eq:qualirec22}
\inf\{s\in (0,\infty): \lim_{\varepsilon\to 0}\varepsilon^s b(\varepsilon)<\infty\}  =m.
\end{align}
Interestingly, we get the same  result as for $m$-rectifiable measures in \cref{eq:qualirec}. 
In particular, the result in \cref{eq:qualirec22} is metric entropy optimal: Set $\delta=C/2$ and let $\setQ$ be 
the cube of side-length $\delta$ centered at the origin in $\reals^m$, let  $f\colon\setQ\to \reals^n$ be defined according to $f(x_1,\dots, x_m)=\tp{(x_1,\dots,x_m,0,\dots,0)}$, and set $\setE=f(\setQ)$. Set $f_k=f$ and $\setA_k=\setQ$ for all $k\in\naturals$. Then,   $\setP(\setE)\subseteq \colU(n,m,C,\kappa)$. We therefore have 
\begin{align}
&\inf\{s\in(0,\infty) :\liminf_{\varepsilon\to 0 }H_\varepsilon(\colU(n,m,C,\kappa),W_1)\varepsilon^s<\infty\}\\ 
&\geq 
\inf\{s\in(0,\infty) :\liminf_{\varepsilon\to 0 }H_\varepsilon(\setP(\setE),W_1)\varepsilon^s<\infty\} \label{eq:useconverse0}\\
&\geq m,\label{eq:useconverse}
\end{align}
where \cref{eq:useconverse0} follows from monotonicity of  
Kolmogorov $\varepsilon$-entropy and in  \cref{eq:useconverse} we applied \Cref{prp:converse}. 
%provided that at least one of the  $m$-rectifiable components of the countably $m$-rectifiable  set  contains the sub-Lipschitz image of the $m$-dimensional unit cube 
% with matching lower bound 
%provided that the rectifiable set contains the sub-Lipschitz image of the $m$-dimensional unit cube 
%\cite[Propositions 3.2 and 7.3]{kl12}. 
\item Suppose that $a_n=\alpha/n^\beta$, i.e., the measure decays polynomially  on the individual components.
Then,  $\kappa(\varepsilon)= \lceil(\alpha/\varepsilon)^{1/\beta}\rceil$ satisfies the decay condition \cref{item:nucount1} in \Cref{dfn:classGC}, which results in 
\begin{align}
b(\varepsilon)=\mathcal{O}((1/\varepsilon)^{m(1+1/\beta)} \log( 1/\varepsilon)). 
\end{align}
In particular, we have 
\begin{align}\label{eq:qualirec222}
\inf\{s\in (0,\infty): \lim_{\varepsilon\to 0}\varepsilon^s b(\varepsilon)<\infty\}  =m(1+1/\beta)
\end{align} 
\end{enumerate}
In \cref{eq:qualirec222}, the polynomial decay of the measure results in an increase of the dimension by the factor  $\beta+1$. We hasten to add that $2^{b(\varepsilon)}$ is an upper bound on the number of ReLU neural networks (see \cref{eq:finalcor3bitsC}). An interesting further research direction would be to figure out whether there exists a matching lower bound in the situation where 
the countable $m$-rectifiable measures  decays polynomially.

\section{Approximations by Uniform Mixtures and Space-Filling Results}\label{sec:improve}
This section collects approximation results  and  space-filling properties  of uniform mixtures, which improve and simplify corresponding  results  in \cite{peebbo21}. 
%The main results of this section are   approximation properties by uniform mixtures \Cref{thm:approxhisto}  and a quantitative statement on the space-filling approximation of uniform mixtures \Cref{thm:quantunif}, which improve and simplify the results  in \cite{peebbo21}. 
%Concretely,  \Cref{thm:approxhisto} is a quantitative improvement of  the  approximation property by uniform mixtures stated in  \cite[Theorem VII.2]{peebbo21}   and  \Cref{thm:quantunif} is  a quantitative version of the space-filling property \cite[Theorem V.6]{peebbo21}. 
We start with the definition of (quantized)  uniform mixtures.

\begin{dfn}\label{dfn:histo}
Let $K\in\naturals$. For  $\ell=1,\dots,K$, set 
\begin{align}
\setI_\ell=
\begin{cases} 
[(\ell-1)/K, \ell/K)& \text{if $\ell<K$}\\
[(\ell-1)/K, \ell/K]& \text{if $\ell=K$}.
\end{cases}
\end{align}
Further, let
\begin{align}\label{eq:setAd}
\setA_d=\{k\in \naturals^d:  k_i \leq K \ \text{for $i=1,\dots, d$}\}
\end{align}
and, for every $k\in \setA_d$, set $\setJ_k = \setI_{k_1}\times\setI_{k_2}\times \dots\times  \setI_{k_d}$
and designate $\lambda_k=\colL^{(d)}|_{\setJ_k}$.  We refer to  a Borel measure $\kappa$ on $\reals^d$ with $\operatorname{spt}(\mu)\subseteq [0,1]^d$ uniform mixture of resolution $K$ if it can be written as 
\begin{align}\label{eq:unifmix}
\kappa=K^d \sum_{k\in\setA_d}  w_k \lambda_k 
\end{align}
with $w_k\in [0,1]$ for all $k\in\setA_d$ and $\sum_{k\in\setA_d}w_k=1$. For $N\in\naturals$, the uniform mixture of resolution $K$ in \cref{eq:unifmix} is called $(1/N)$-quantized if $w_k\in\delta\naturals$ for all $k\in\setA_d$. 
\end{dfn}
We first  consider the approximation of   Borel probability measures supported on $[0,1]^d$ by uniform mixtures.  

\begin{lem}\label{lem:kappa2}
Let $\reals^d$ be equipped with  $\lVert \,\cdot\,\rVert_\infty$ 
and consider a Borel probability measure $\mu$  on $\reals^d$ with $\operatorname{spt}(\mu)\subseteq[0,1]^d$. Further, let   $K\in\naturals$ and  define the uniform mixture $\tilde \mu$ of resolution $K$ according to 
\begin{align}
\tilde \mu = K^d\sum_{k\in\setA_d} w_k \lambda_k
\end{align}
with $w_k=\mu(\setJ_k)$ for all $k\in\setA_d$. 
Then, we have 
\begin{align}\label{eq:toshowhisto2}
W_1(\mu,\tilde \mu) \leq  1/K.    
\end{align}
\end{lem}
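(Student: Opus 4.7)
The plan is to invoke the Kantorovich--Rubinstein duality, as already used for the preceding \Cref{lem:kappa}, and then exploit the fact that $\mu$ and $\tilde\mu$ agree on the total mass of every cell $\setJ_k$ of the partition, together with the cell diameter being $1/K$ in the $\infty$-norm.

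Concretely, by \Cref{thm:dualW1},
\begin{align}
W_1(\mu,\tilde\mu)=\sup_{\substack{\psi\colon \setI\to\reals\\\operatorname{Lip}(\psi)\leq 1}}\Bigg\{\int\psi\,\mathrm d\mu-\int\psi\,\mathrm d\tilde\mu\Bigg\}.
\end{align}
Fix an arbitrary $1$-Lipschitz $\psi$. Since the cells $\{\setJ_k\}_{k\in\setA}$ partition $\setI$, we split
\begin{align}
\int\psi\,\mathrm d\mu-\int\psi\,\mathrm d\tilde\mu
=\sum_{k\in\setA}\Bigg(\int_{\setJ_k}\psi\,\mathrm d\mu-K^d w_k\int_{\setJ_k}\psi\,\mathrm d\lambda_k\Bigg),
\end{align}
where I have used the definition $\tilde\mu=K^d\sum_{k\in\setA}w_k\lambda_k$ and $\lambda_k=\colL^{(d)}|_{\setJ_k}$. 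For each cell with $w_k>0$, the two inner integrals equal $w_k$ times the average of $\psi$ over $\setJ_k$ with respect to $\mu|_{\setJ_k}/w_k$ and to the normalized uniform measure $K^d\lambda_k$, respectively; for $w_k=0$ both sides vanish since $\mu(\setJ_k)=w_k=0$.

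The key geometric step is then immediate: $\diam(\setJ_k)\leq 1/K$ in the $\infty$-norm, so by $\operatorname{Lip}(\psi)\leq 1$ the function $\psi$ varies by at most $1/K$ on $\setJ_k$, hence the two averages differ by at most $1/K$. This yields
\begin{align}
\Bigg|\int_{\setJ_k}\psi\,\mathrm d\mu-K^d w_k\int_{\setJ_k}\psi\,\mathrm d\lambda_k\Bigg|\leq w_k/K
\end{align}
for every $k\in\setA$. Summing over $k$ and using $\sum_{k\in\setA}w_k=\mu(\setI)=1$ gives $\bigl|\int\psi\,\mathrm d\mu-\int\psi\,\mathrm d\tilde\mu\bigr|\leq 1/K$, and since $\psi$ was arbitrary, \cref{eq:toshowhisto2} follows.

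No serious obstacle is expected; the only delicate points are a clean justification that on a cell with $w_k>0$ both integrals may be written as $w_k$ times an average of $\psi$ with respect to a probability measure supported in $\setJ_k$ (so that the Lipschitz bound applies to the difference of two averages), and verifying that the boundary overlap of the half-open cubes is $\mu$-negligible, which is automatic because the cells form a disjoint partition of $\setI$ by construction in \Cref{dfn:histo}.
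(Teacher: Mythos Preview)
Your proof is correct. The argument via Kantorovich--Rubinstein duality and the cell-by-cell oscillation bound is sound: on each $\setJ_k$ the $1$-Lipschitz function $\psi$ oscillates by at most $\diam(\setJ_k)=1/K$, so any two probability averages of $\psi$ over $\setJ_k$ differ by at most $1/K$, and weighting by $w_k$ and summing gives the claim. The handling of cells with $w_k=0$ is also fine since $\psi$ is bounded on the compact set $\setI$.

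The paper takes a different but equally short route on the primal side: it writes both $\mu$ and $\tilde\mu$ as the \emph{same} convex combination $\sum_{k}w_k(\cdot|_{\setJ_k}/w_k)$ of probability measures supported in the individual cells, invokes the convexity of $W_1$ in mixtures (\Cref{eq:lemWrest}) to obtain $W_1(\mu,\tilde\mu)\leq\sum_k w_k\,W_1(\mu|_{\setJ_k}/w_k,\tilde\mu|_{\setJ_k}/w_k)$, and then bounds each summand by the product coupling and $\diam(\setJ_k)\leq 1/K$. Your dual argument avoids appealing to \Cref{eq:lemWrest} and is self-contained once \Cref{thm:dualW1} is in hand; the paper's primal argument, on the other hand, makes the coupling explicit and reuses the mixture lemma that is already part of the toolkit. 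Both proofs hinge on the same geometric fact (cell diameter $1/K$), so the difference is one of packaging rather than substance.
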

\begin{proof}
Set $\setB=\{k\in\setA_d:w_k>0\}$. Then, we can write 
\begin{align}
 \mu = \sum_{k\in\setA_d\cap\setB} w_k (\mu|_{\setJ_{k}}/w_k)
\end{align}
and 
\begin{align}
\tilde \mu = \sum_{k\in\setA_d\cap\setB} w_k (K^d\lambda_k)
\end{align}
with $\sum_{k\in\setB} w_k =1$. 
We thus have  
\begin{align}
W_1( \mu,\tilde \mu)
&\leq \sum_{k\in\setB} w_k W_1\big( \mu|_{\setJ_{k}}/w_k,K^d\lambda_k\big)\label{eq:uselimitW1}\\
&\leq \sum_{k\in\setB} w_k \int \rho \,\mathrm d ((\mu|_{\setJ_{k}}/w_k) \times (K^d\lambda_k))\label{eq:uselimitW1a}\\
&\leq  1/K\, \sum_{k\in\setB} w_k\label{eq:uselimitW1b} \\
&=1/K, \label{eq:uselimitW2}
\end{align} 
where \cref{eq:uselimitW1} we applied  \Cref{eq:lemWrest}, in  \cref{eq:uselimitW1a} we set $\rho(x,y)=\lVert x-y\rVert_\infty$, and \cref{eq:uselimitW1b} follows from $\diam(\setJ_{k})\leq 1/K$ for all $k\in\setA_d$. 
\end{proof}

%Note that every (quantized) uniform mixture $\kappa$ satisfies $\operatorname{spt}(\mu)\subseteq [0,1]^d$. 
We next state an approximation property of uniform mixtures in terms of $(1/N)$-quantized uniform mixtures. 
\begin{lem}\label{lem:kappa}
Let $\reals^d$ be equipped with  $\lVert \,\cdot\,\rVert_\infty$. Further,  
let $\kappa=K^d \sum_{k\in\setA_d}  w_k \lambda_k $ be a uniform mixture of resolution $K$ and let $N\in\naturals$ with $N\geq K^d$. Then,  there exists  %a $\delta$-quantized uniform mixture $\hat \mu$ on $\setI$ satisfying 
%\begin{align}\label{eq:W1hisnew}
%W_1(\hat \mu, \mu)&\leq 2\delta (n^d-1)
%\end{align}
 a $(1/N)$-quantized uniform mixture $\hat \kappa$ of resolution $K$  satisfying 
\begin{align}\label{eq:W1hisnew}
W_1(\hat \kappa, \kappa)&\leq 4 (K^d-1)/N.
\end{align}
\end{lem}
\begin{proof}
%Set 
%\begin{align}
%\setA_d=\{k\in \naturals^d:  k_i \leq n \ \text{for $i=1,\dots, d$}\}. 
%\end{align} 
\Cref{lem:mhatm} implies that we can find  a set $\{\hat w_k\}_{k\in \setA_d}$ of $(1/N)$-quantized weights satisfying   $\sum_{k\in\setA_d} \hat w_k=1$ and 
\begin{align}\label{eq:boundmhatm}
\sum_{k\in\setA_d}\abs{w_k-\hat w_k} \leq 4(K^d-1)/N. 
\end{align} 
Now, set 
\begin{align}
\hat \kappa=K^d \sum_{k\in\setA_d}  \hat w_k \lambda_k 
\end{align}
and  note that 
\begin{align}\label{eq:W1his}
W_1(\hat \kappa, \kappa)&= \sup_{\substack{\psi\colon[0,1]^d\to\reals\\ \operatorname{Lip}(\psi)\leq 1}}
\Bigg\{\int_{[0,1]^d} \psi \,\mathrm d \hat \kappa - \int_{[0,1]^d}  \psi \,\mathrm d \kappa\Bigg\} 
\end{align}
owing to \Cref{thm:dualW1} and the fact that $\operatorname{spt}(\kappa)\subseteq[0,1]^d$ and $\operatorname{spt}(\hat\kappa)\subseteq[0,1]^d$. Since 
\begin{align}
\int \psi \,\mathrm d \hat \kappa - \int \psi\,\mathrm d \kappa=\int (\psi -\psi(0)) \,\mathrm d \hat \kappa - \int (\psi -\psi(0)) \,\mathrm d \kappa,  
\end{align} 
we can restrict the supremum  in \cref{eq:W1his} to mappings $\psi$ satisfying $\psi(0)=0$  and $\operatorname{Lip}(\psi)\leq 1$. Now, arbitrarily fix $\psi\colon [0,1]^d \to \reals$ satisfying $\psi(0)=0$  and $\operatorname{Lip}(\psi)\leq 1$.  Then, we have 
\begin{align}
\abs{\int_{[0,1]^d} \psi \,\mathrm d \hat \kappa - \int_{[0,1]^d} \psi \,\mathrm d \kappa}
&\leq K^d  \sum_{k\in\setA_d} \abs{\hat w_k -w_k} \int_{\setJ_k} \abs{\psi} \,\mathrm d \lambda_k \label{eq:useinW1}\\
&\leq \sum_{k\in\setA_d}  \abs{\hat w_k -w_k}\label{eq:usepsi}\\
&\leq 4 (K^d-1)/N ,\label{eq:usepsi2}
\end{align}
where in \cref{eq:usepsi} we used $\abs{\psi(x)}=\abs{\psi(x)-\psi(0)}\leq \operatorname{Lip}(\psi) \lVert x \rVert_\infty\leq 1$ for all $x\in\setI$ and  \cref{eq:usepsi2} follows from \cref{eq:boundmhatm}. 
As $\psi$ was assumed to be arbitrary, \cref{eq:W1hisnew} follows. 
%Using \cref{eq:useinW1}--\cref{eq:usepsi2} and the fact that  in 
% \cref{eq:W1his}, we conclude that \cref{eq:W1hisnew} holds. 
\end{proof}

We are now in a position to state the approximation result for Borel probability measures supported on $[0,1]^d$ by $(1/N)$-quantized  uniform mixtures, which improves the upper bound derived in \cite[Theorem VII.2]{peebbo21}.

\begin{thm}\label{thm:approxhisto}
Let $\reals^d$ be equipped with  $\lVert \,\cdot\,\rVert_\infty$ 
and suppose that $\mu$ is a Borel probability measure on $\reals^d$ with $\operatorname{spt}(\mu)\subseteq[0,1]^d$. Further, let   $K,N\in\naturals$  with $N\geq K^{d}$. Then, there exists  a $(1/N)$-quantized  uniform mixture $\tilde \mu$ of resolution $K$ satisfying 
\begin{align}\label{eq:toshowhisto}
W_1(\mu,\tilde \mu) \leq 4 (K^d-1)/N  + 1/K.    
\end{align}
\end{thm}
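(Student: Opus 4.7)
The plan is to prove Theorem \ref{thm:approxhisto} by a two-step approximation, using the triangle inequality for $W_1$ together with the two preparatory lemmata \ref{lem:kappa2} and \ref{lem:kappa} already established in the excerpt. The idea is to first approximate the arbitrary Borel probability measure $\mu$ by a (generally non-quantized) uniform mixture $\hat\mu$ of resolution $K$ whose cell weights equal the $\mu$-masses of the cells $\setJ_k$, and then to quantize the weights of $\hat\mu$ to multiples of $1/N$ in order to obtain the desired $(1/N)$-quantized uniform mixture $\tilde\mu$ of resolution $K$.

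Concretely, I would first define
\begin{align}
\hat\mu = K^d\sum_{k\in\setA} w_k \lambda_k\quad\text{with $w_k=\mu(\setJ_k)$ for all $k\in\setA$.}
\end{align}
Since $\{\setJ_k\}_{k\in\setA}$ partitions $\setI$ up to boundary sets of $\mu$-measure zero (after appropriate left/right closure conventions as in \Cref{dfn:histo}), the weights $(w_k)_{k\in\setA}$ sum to one, so $\hat\mu$ is indeed a uniform mixture of resolution $K$. Then \Cref{lem:kappa2}, applied verbatim to this choice of weights, yields
\begin{align}
W_1(\mu,\hat\mu)\leq 1/K.
\end{align}

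Next, since $N\geq K^d=\card{\setA}$, I would apply \Cref{lem:kappa} to $\hat\mu$ to obtain a $(1/N)$-quantized uniform mixture $\tilde\mu$ of resolution $K$ with
\begin{align}
W_1(\hat\mu,\tilde\mu)\leq 4(K^d-1)/N.
\end{align}
By the triangle inequality for the Wasserstein distance of order one, the bound \cref{eq:toshowhisto} follows:
\begin{align}
W_1(\mu,\tilde\mu)\leq W_1(\mu,\hat\mu)+W_1(\hat\mu,\tilde\mu)\leq 1/K + 4(K^d-1)/N.
\end{align}

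There is no real obstacle in the proof: both lemmata do the heavy lifting, and the argument is just the standard ``approximate then quantize'' construction combined via the triangle inequality. The only small subtlety worth stating explicitly is that $\hat\mu$ is a well-defined probability measure on $\setI$, which uses that the cells $\{\setJ_k\}$ as defined in \Cref{dfn:histo} form a Borel partition of $[0,1]^d$ so that $\sum_{k\in\setA}\mu(\setJ_k)=\mu(\setI)=1$, and that the assumption $N\geq K^d$ is precisely what makes \Cref{lem:kappa} applicable to $\hat\mu$.
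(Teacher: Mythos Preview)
Your proof is correct and follows exactly the same approach as the paper: the paper's proof reads simply ``Follows from \Cref{lem:kappa,lem:kappa2}'', i.e., approximate $\mu$ by the uniform mixture with cell weights $w_k=\mu(\setJ_k)$ via \Cref{lem:kappa2}, then quantize via \Cref{lem:kappa}, and combine with the triangle inequality. One small wording point: the cells $\setJ_k$ in \Cref{dfn:histo} form an exact Borel partition of $[0,1]^d$ (not just up to null sets), so $\sum_{k\in\setA}\mu(\setJ_k)=1$ holds outright.
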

\begin{proof}
Follows by combining  \Cref{lem:kappa,lem:kappa2} through the triangle inequality. 
\end{proof}

A byproduct of  \Cref{thm:approxhisto} is  the following metric entropy upper bound on the set of Borel probability measures on  $\reals^d$ that are supported on $[0,1]^d$.  
\begin{cor}
Let $\reals^d$ be equipped with  $\lVert \,\cdot\,\rVert_\infty$.  Further,  
let $\colF$ denote the space of Borel probability  measures $\mu$ on $\reals^d$ with $\operatorname{spt}(\mu)\subseteq[0,1]^d$ and consider the metric space $(\colF, W_1)$. Then, we have 
\begin{align}
H_\varepsilon(\colF,W_1) \leq  ((2+\varepsilon)/\varepsilon)^d\log(12(2+\varepsilon)/\varepsilon). \label{eq:entropymu}
\end{align} 
\end{cor}
\begin{proof}
Arbitrarily fix  $\varepsilon\in (0,\infty)$ and set $K=K(\varepsilon)=\min\{\ell\in\naturals: 2/\ell\leq \varepsilon \}$ so that 
\begin{align}\label{eq:sandwich}
 2/K\leq \varepsilon <2/(K-1). 
\end{align} 
Next, set 
\begin{align}
\setA_d=\{k\in \naturals^d:  k_i \leq K \ \text{for $i=1,\dots, d$}\}.  
\end{align}
and  note that  the set of $1/(4K^{d+1})$-quantized weights $\{w_k\}_{k\in\setA_d}$ summing  to one satisfies 
\begin{align}
\abs {\{w_k\}_{k\in\setA_d}} 
&\leq \binom{4K^{d+1}-1}{K^d-1}\label{eq:usesterlingA}\\
&\leq \binom{4K^{d+1}}{K^d}\\
%& = \binom{4K^{d+1}}{K^d}\\
&\leq ( 12K)^{K^d}\label{eq:usesterlingB}\\
&<(12(2+\varepsilon)/\varepsilon)^{(2+\varepsilon)^d/\varepsilon^d}, \label{eq:usesterlingC}
\end{align}
where \cref{eq:usesterlingB} follows from $\binom{n}{k}\leq (en/k)^k$ and in \cref{eq:usesterlingC} we applied \cref{eq:sandwich}. 
Now, \cref{eq:usesterlingA}--\cref{eq:usesterlingC} implies  that the number of $1/(4K^{d+1})$-quatized uniform mixtures is upper-bounded by $(12(2+\varepsilon)/\varepsilon)^{(2+\varepsilon)^d/\varepsilon^d}$. To establish \cref{eq:entropymu}, it remains to prove that they form an $\varepsilon$-net. To this end, note that 
\Cref{thm:approxhisto} with $N=4K^{d+1}$ yields, for every Borel probability measure $\mu$ with $\operatorname{spt}(\mu)\subseteq[0,1]^d$, a 
$1/(4K^{d+1})$-quatized uniform mixture  $\tilde \mu$ of resolution $K$ satisfying  
\begin{align}\label{eq:W1entropy}
W_1(\mu,\tilde \mu)\leq 2/K\leq \varepsilon,
\end{align} 
where the second inequality is by \cref{eq:sandwich}.  
%which  
% upon 
%noting that the number of $1/(4K^{d+1})$-quatized uniform mixtures equals the number of $1/(4K^{d+1})$-quatized weights. 
\end{proof}
The upper bound \cref{eq:entropymu} implies 
 \begin{align}\label{eq:scalingdmytro2d}
% b(\varepsilon)=\setO(\varepsilon^{-m}\log^2(\varepsilon)). 
\inf\{s\in (0,\infty): \limsup_{\varepsilon\to 0}\varepsilon^s H_\varepsilon(\colF,W_1)<\infty\} =d, 
 \end{align} 
which is also obtained in the proof of \cite[Propositions 7.4]{kl12}. As noted before, this result is  metric entropy optimal  with matching lower bound 
%provided that the rectifiable set contains the sub-Lipschitz image of the $m$-dimensional unit cube 
\cite[Propositions 3.2 and 7.3]{kl12}.  

%We also  note the following relation to quantization \cite{grlu00}. For every $n\in\naturals$, let  $\setF_n$ denote the set of all Borel measurable mappings $f\colon \reals^d\to \reals^d $  satisfying $\abs{f(\reals^d)}\leq n$, which are called $n$-quantizers. For a Borel probability measure $\mu$ on $\reals^d$, the $n$-th quantization error of order one is  defined as to \cite[Equation (3.1) and Lemma 3.4]{grlu00}
%\begin{align}
%V_{n,1}(\mu)=\inf_{f\in\setF_n}  W_1(\mu,f\#\mu) 
%\end{align}
%and the upper quantization dimension  of order  is defined according to 
%\begin{align}
%\overline{D}_1(\mu)=\limsup_{n\in\naturals} \frac{\log(n)}{\log(1/V_{n,1}(\mu))}.  
%\end{align}
% \Cref{thm:approxhisto} with $N=4K(K^d-1)$ therefore yields  the  upper bound 
%\begin{align}
%\overline{D}_1(\mu) \leq d 
%\end{align} 
%for every Borel probability measure $\mu$ that are supported on $[0,1]^d$, 
%which is tight provided that $\mu$ is regular of dimension $d$ \cite[Definition 12.1 and Theorem 12.18]{grlu00}. 

We next establish that $(1/N)$-quantized uniform mixtures  can be approximated  arbitrarily well as push-forwards of 
$\colL^{(1)}|_{[0,1]}$ using space-filling curves. What is more, these space-filling curves can be realized by ReLU neural networks. This is the main idea behind the first step of the  proof technique  depicted in  \Cref{fig:illustration}. 
 A result  in this direction was already established in  \cite[Theorem V.6]{peebbo21}, but we here present a quantitative version of this result. Concretely, we explicitly construct the   ReLU neural network realizing the push-forward and state explicit bounds on the network parameters (depth, connectivity, width,  weight magnitude, and Lipschitz constant).
The explicit form of the network can give important insights  for the design of  learning algorithms in  deep generative models 
 \cite{arbo17,gopomixuwaozcobe14,kiwe14}.  
 We also use a much more  direct and structured proof that does not ignore boundary terms (see \cite[Remark II.5]{peebbo21}) that  potentially could  contribute with positive measure to the push-forward of 
$\colL^{(1)}|_{[0,1]}$.   We first present the intuition behind our proof technique at a simple example. 

\vspace*{3truemm}

\begin{figure}[hbtp]
\begin{center}
\begin{tikzpicture}[scale=4.5]
\draw (0,0) rectangle (1,1);
\draw (0,0) rectangle (3/4,2/3);
\draw (0,2/3) rectangle (3/4,1);
\draw (3/4,0) rectangle (1,1/2);
%\draw[red, thick] (3/4,1/2) -- (3/4, 2/3); 
\node at (0.4,0.3) {$\setK_{(1,1)}$};
\node at (0.4,0.8) {$\setK_{(1,2)}$};
\node at (0.88,0.3) {$\setK_{(2,1)}$};
\node at (0.88,0.8) {$\setK_{(2,2)}$};
\node at (0.75,-0.05) {\small{$3/4$}};
\node at (-0.1,0.65) {\small{$2/3$}};
\node at (1.07,0.5) {\small{$1/2$}};
\draw[darkgreen,thick] (0,0) -- (3/8,1);
\draw[darkgreen,thick]  (3/8,1)--(3/4,0);
\draw[darkgreen,thick]  (3/4,0)--(7/8,1);
\draw[darkgreen,thick]  (7/8,1)--(1,0);
\draw[dotted] (0,1/3) -- (3/4,1/3);
\draw[dotted] (0,5/6) -- (3/4,5/6);
\draw[dotted] (3/8,0) -- (3/8,1);
\draw[dotted] (3/4,1/4) -- (1,1/4);
\draw[dotted] (3/4,3/4) -- (1,3/4);
\draw[dotted] (7/8,0) -- (7/8,1);
\draw[darkgreen,dashed,thick] (0,0) -- (3/16,1);
\draw[darkgreen,dashed,thick] (3/16,1)--(3/8,0);
\draw[darkgreen,dashed,thick] (3/8,0)--(9/16,1);
\draw[darkgreen,dashed,thick] (9/16,1)--(3/4,0);
\draw[darkgreen,dashed,thick] (3/4,0) -- (13/16,1);
\draw[darkgreen,dashed,thick]  (13/16,1) -- (7/8,0);
\draw[darkgreen,dashed,thick]  (7/8,0) -- (15/16,1);
\draw[darkgreen,dashed,thick]  (15/16,1) -- (1,0);
\draw [->] (1.2,0.5) -- (1.4,0.5);
\node at (1.3,0.55) {$f$};
\draw (1.5,0) rectangle (2.5,1);
\draw (1.5,0.5) -- (2.5,0.5);
\draw (2,0) -- (2,1);
\node at (1.75,0.25) {$\setJ_{(1,1)}$};
\node at (1.75,.75) {$\setJ_{(1,2)}$};
\node at (2.25,0.25) {$\setJ_{(2,1)}$};
\node at (2.25,0.75) {$\setJ_{(2,2)}$};
\draw[blue,thick] (1.5,0) -- (1.5+1/6,1/2);
\draw[blue,thick] (1.5+1/6,1/2)--(1.5+1/4,1);
\draw[blue,thick] (1.5+1/4,1)--(1.5+1/3,1/2);
\draw[blue,thick] (1.5+1/3,1/2)--(1.5+1/2,0);
\draw[blue,thick] (1.5+1/2,0)--((1.5+3/4,1);
\draw[blue,thick] (1.5+3/4,1)--(1.5+1,0);
\draw[dotted] (1.5,0.25) -- (2.5,0.25);
\draw[dotted] (1.5,0.75) -- (2.5,0.75);
\draw[dotted] (1.75,0) -- (1.75,1);
\draw[dotted] (2.25,0) -- (2.25,1);
\draw[blue,thick,dashed] (1.5,0) -- (1.5+1/12,1/2);
\draw[blue,thick,dashed] (1.5+1/12,1/2)--(1.5+1/8,1);
\draw[blue,thick,dashed] (1.5+1/8,1)--(1.5+1/6,1/2);
\draw[blue,thick,dashed] (1.5+1/6,1/2)--(1.5+1/4,0);
\draw[blue,thick,dashed] (1.5+1/4,0) -- (1.5+1/12+1/4,1/2);
\draw[blue,thick,dashed] (1.5+1/12+1/4,1/2)--(1.5+1/8+1/4,1);
\draw[blue,thick,dashed] (1.5+1/8+1/4,1)--(1.5+1/6+1/4,1/2);
\draw[blue,thick,dashed] (1.5+1/6+1/4,1/2)--(1.5+1/4+1/4,0);
\draw[blue,thick,dashed] (1.5+1/2,0)--((1.5+5/8,1);
\draw[blue,thick,dashed] (1.5+5/8,1)--(1.5+3/4,0);
\draw[blue,thick,dashed] (1.5+1/2+1/4,0)--((1.5+5/8+1/4,1);
\draw[blue,thick,dashed] (1.5+5/8+1/4,1)--(1.5+3/4+1/4,0);
\end{tikzpicture} 
\end{center}
\vspace*{-5truemm}
\caption{Black solid lines indicate the partitioning of $\setI$ in rectangles $\setK_{(k_1,k_2)}$ and in squares $\setJ_{(k_1,k_2)}$ for $k_1,k_2=1,2$. Each rectangle 
$\setK_{(k_1,k_2)}$ is mapped to a square $\setJ_{(k_1,k_2)}$ under the piecewise affine mapping $f$. 
%The mapping $f$ has discontinuities at the red region. 
The unit interval is mapped to the green curve under  $k^{(1)}$ and to the blue curve under the mapping $\tilde f^{(1)} = f\circ k^{(1)} $.  Black dotted lines indicate  rectangles and corresponding squares for the resolution $K_2=2K=4$. The unit interval is mapped to the dashed green curve under  $k^{(2)}$ and to the dashed blue curve under the mapping $\tilde f^{(2)} = f\circ k^{(2)} $.   
\label{fig:rectangles}}
\end{figure}
\vspace*{-5truemm}

\begin{exa}Let $K=d=2$ and $N=8$, set $\setI=[0,1]^2$, and consider the $(1/8)$-quanatized uniform mixture $\mu=\sum_{k_1,k_2=1}^2w_{(k_1,k_2)}\lambda_{(k_1,k_2)}$ with $w_{(1,1)}=1/2$, $w_{(1,2)}=1/4$, and $w_{(2,1)}=w_{(2,2)}=1/8$. In a first step, we construct a piecewise affine bijection $f\colon \setI\to\setI$ that allows to write  $\mu=f\#(\colL^{(2)}|_{\setI})$. To this end, let $\setI_{1}=[0,1/2)$ and $\setI_{2}=[1/2,1)$ and consider the subsquares 
\begin{align}
\setJ_{(k_1,k_2)}= \setI_{k_1}\times \setI_{k_2}\quad\text{for $k_1,k_2=1,2$.}
\end{align}
%Now, define the ``marginal weights" 
%\begin{align}
%w_{(1)}&=w_{(1,1)}+w_{(1,2)}=3/4\\
%w_{(2)}&=w_{(2,1)}+w_{(2,2)}=1/4
%\end{align}
%and the ``conditional weights" 
%\begin{align}
%w_{(1|(1))}&=w_{(1,1)}/w_{(1)}=2/3\\
%w_{(2|(1))}&=w_{(1,2)}/w_{(1)}=1/3\\
%w_{(1|(2))}&=w_{(2,1)}/w_{(2)}=1/2\\
%w_{(2|(2))}&=w_{(2,2)}/w_{(2)}=1/2.
%\end{align}
Further,  set $\setK_{(1)}=[0,3/4)$,    $\setK_{(2)}=[3/4,1]$, 
$\setK_{(1|(1))}= [0,2/3)$,
$\setK_{(2|(1))}=  [2/3,1]$, 
$\setK_{(1|(2))}= [0,1/2)$, and 
$\setK_{(2|(2))}= [1/2,1]$ 
and consider the rectangles $\setK_{(k_1,k_2)}= \setK_{(k_1)}\times\setK_{(k_2|(k_1))}$ for $k_1,k_2=1,2$. 
The squares $\setJ_{(k_1,k_2)}$ and the rectangles $\setK_{(k_1,k_2)}$ are depicted with black solid lines in \Cref{fig:rectangles}. 
By construction, the rectangles $\setK_{(k_1,k_2)}$  are pairwise disjoint, cover $\setI$, and satisfy $\colL^{(2)}|_{\setI}(\setK_{(k_1,k_2)})=w_{(k_1,k_2)}$ for $k_1,k_2=1,2$.  It is therefore sufficient to construct a piecewise affine mapping $f$ that maps every $\setK_{(k_1,k_2)}$ to every $\setJ_{(k_1,k_2)}$.  
To this end, we set $f|_{\setK_{(k_1,k_2)}}=f^{(k_1,k_2)}$  with 
$f^{(k_1,k_2)}=(f_1^{(k_1)}, f_2^{(k_1,k_2)})$  for $k_1,k_2=1,2$, where we set 
\begin{align}
f_1^{(1)}(x_1)=2x_1/3,&& 
f_1^{(2)}(x_1)=2x_1-1, 
\end{align}
and 
\begin{align}
f_2^{(1,1)}(x_2)&=3x_2/4,\\
 f_2^{(1,2)}(x_2)&=(3x_2-1)/2, \\
f_2^{(2,1)}(x_2)&=f_2^{(2,2)}(x_2)=x_2.
\end{align}
In a second step, we combine the mapping $f$ with a space-filling curve. Specifically, let $g$ be as per \Cref{dfn:g} and set $k^{(1)}(x)=(x,g(4x/3)+g(4x -3))$ and $\tilde f^{(1)} = f\circ k^{(1)}$. The mappings $k^{(1)}$ and $\tilde f^{(1)}$ are depicted in \Cref{fig:rectangles} with solid green and blue curves, respectively. The measure $\mu$ is now approximated by $\tilde f^{(1)} \#\colL^{(1)}|_{[0,1]}$. Note that, by construction, we have 
\begin{align}
\tilde f^{(1)} \#(\colL^{(1)}|_{[0,1]})(\setJ_{(k_1,k_2)})
&=  k^{(1)}\#(\colL^{(1)}|_{[0,1]})(\setK_{(k_1,k_2)})\\
&= w_{(k_1,k_2)}\quad\text{for $k_1,k_2=1,2$}
\end{align}
so that \Cref{lem:kappa2} with $K=2$ implies $W(\mu, f^{(1)} \#\colL^{(1)}|_{[0,1]})\leq 1/2$. 
This approximation error can be reduced by increasing the resolution for  $\mu$. 
Concretely, let $s\in\naturals$ with $s>1$, set  $N_s=2^{2(s-1)}N$ and $K_s=2^{s}$, and let  $g_s$ be  as per \Cref{dfn:g}.  
 Now, $\mu$ is  also 
$(1/N_s)$-quantized   of resolution  $K_s$, which can be easily seen by partitioning every cube $\setJ_{(k_1,k_2)}$  into $2^{2(s-1)}$ subcubes of equal size and dividing all weights of $\mu$ by  $2^{2(s-1)}$. 
The same argumentation  as above applied to $\mu$ with $k^{(s)}(x)=(x,g_s(4x/3)+g_s(4x -3))$ and $\tilde f^{(s)} =f\circ  k^{(s)}$ therefore yields $W(\mu, \tilde f^{(s)} \#\colL^{(1)}|_{[0,1]})\leq 1/2^s$ by \Cref{lem:kappa2} with $K=K_s$. 
For $s=2$, the mappings $k^{(2)}$ and $\tilde f^{(2)}$ are depicted in \Cref{fig:rectangles} with dashed green and blue curves, respectively. 

It is worth noting that $\tilde f^{(s)} =f\circ  k^{(s)}$ is not the composition of two  ReLU neural networks.  
The reason for this stems form  the simple fact  that   $f$ has discontinuities at $x_1=3/4$  and, therefore, cannot be realized as a ReLU neural network. However, a direct calculation shows that  we can write  $\tilde f^{(s)}$ in the form 
\begin{align}
\tilde f^{(s)}=\tp{(\tilde f_1,\tilde f_2^{(s)})}\label{eq:tildefj0E}
\end{align}
with 
\begin{align}
\tilde f_1=\sum_{k_1=1}^2f_1^{(k_1)}\ind{\setK_{k_1}}\label{eq:tildefj1E}
\end{align}
and
\begin{align}
\tilde f_2^{(s)}= \sum_{k_1,k_2=1}^2 
(f_2^{(k_2,k_1)} \ind{\setK_{k_2|(k_1)}}) \circ g_s\circ L_{k_1} \circ \tilde f_1, \label{eq:tildefj2E}
\end{align}
where we set  $L_{\ell}(x)= 2x -\ell +1$  for all $x\in\reals$ and  $\ell=1,2$. It will be seen  that all mappings in \cref{eq:tildefj1E}--\cref{eq:tildefj2E} can be realized by ReLU neural networks.  
\end{exa}

We now use the intuition  in the  above example to prove  our main result on the 
approximation of  $(1/N)$-quantized uniform mixtures  in terms of  push-forwards of 
$\colL^{(1)}|_{[0,1]}$ using space-filling curves.

\begin{thm}\label{thm:quantunif}
Let $\reals^d$ be equipped with  $\lVert \,\cdot\,\rVert_\infty$. Further, let  $K\in\naturals\setminus\{1\}$, set $\setI=[0,1]^d$, and   consider the following setup: 
\begin{enumerate}
\renewcommand{\theenumi}{\arabic{enumi})}
\renewcommand{\labelenumi}{\arabic{enumi})}
\item
 For  $\ell=1,\dots,K$, set 
\begin{align}
\setI_\ell=
\begin{cases} 
[(\ell-1)/K, \ell/K)& \text{if $\ell<K$}\\
[(\ell-1)/K, \ell/K]& \text{if $\ell=K$}.
\end{cases}
\end{align}Further, for $j=1,\dots,d$, let
\begin{align}
\setA_j=\{k\in \naturals^j:  \lVert k \rVert_\infty \leq K \}
\end{align}
and, for every $k\in \setA_d$, set $\setJ_k = \setI_{k_1}\times\setI_{k_2}\times \dots\times  \setI_{k_d}$
and  $\lambda_k=\colL^{(d)}|_{\setJ_k}$.  Finally, let $N\in\naturals$ with $N\geq \min\{K^d,4\}$ and let $\mu$ be a   $(1/N)$-quantized uniform mixture of resolution $K$ according to 
\begin{align}
\mu=K^d \sum_{k\in\setA_d}  w_k \lambda_k 
\end{align}
with $w_k\in\naturals/N$ for all $k\in\setA_d$ and $\sum_{k\in\setA_d}w_k=1$. 
\item 
%For  every $k_1 \in \setA_1$, define the ``marginal weights" as 
%\begin{align}
%w_{k_1}=\sum_{k_{2}=1}^K \sum_{k_{3}=1}^K\dots\sum_{k_{d}=1}^K w_{k}. 
%\end{align}
For  $j=1,\dots,d-1$ and 
$(k_1,\dots,k_j)\in \setA_j$, define the ``marginal weights" as 
\begin{align}
w_{(k_1,\dots,k_j)}=\sum_{k_{j+1}=1}^K \sum_{k_{j+2}=1}^K\dots\sum_{k_{d}=1}^K w_{k}.
\end{align}
For $j=2,\dots,d$ and 
$(k_1,\dots,k_j)\in \setA_j$,
define the  ``conditional weights" according to 
\begin{align}
w_{k_j| (k_1,\dots, k_{j-1})} = \frac{w_{(k_1,\dots,k_j)}}{w_{(k_1,\dots,k_{j-1})}}.   
\end{align}
\item 
Set  $b_0=0$ and, for $k_1=1,\dots,K$, let $b_{k_1}=\sum_{i=1}^{k_1}w_{(i)}$ and designate  the sets 
\begin{align}
\setK_{k_1}=
\begin{cases} 
[b_{k_1-1}, b_{k_1})& \text{if $k_1<K$}\\
[b_{k_1-1}, b_{k_1}]& \text{if $k_1=K$}.  
\end{cases}
\end{align}
For every  $j=2,\dots,d$ 
and $(k_1,\dots,k_j)\in \setA_j$,  set $b_{0|(k_{1},\dots,k_{j-1})}=0$,
\begin{align}
b_{k_j|(k_{1},\dots,k_{j-1})}=\sum_{i=1}^{k_j} w_{i| (k_1,\dots, k_{j-1})},
\end{align}
and define the sets 
\begin{align}
\setK_{k_j|(k_1,\dots, k_{j-1})}=
\begin{cases} 
[b_{k_j-1|(k_1,\dots, k_{j-1})}, b_{k_j|(k_1,\dots, k_{j-1})})& \text{if $k_j<K$}\\
[b_{k_j-1|(k_1,\dots, k_{j-1})}, b_{k_j|(k_1,\dots, k_{j-1})}]& \text{if $k_j=K$}.  
\end{cases}
\end{align}
\item \label{itemf}
Let  $\setK_k=\setK_{k_1}\times\setK_{k_2|k_1}\times \dots\times \setK_{k_d|(k_{1},\dots,k_{d-1})}$
and define the mapping $f\colon \setI \to \setI$ according to $f=\sum_{k\in\setA} f^{(k)}\ind{\setK_k}$ with 
\begin{align}\label{eq:f_kdef}
f^{(k)}(x):=
\begin{pmatrix}
f^{(k_1)}_1(x_1)\\
f^{(k_1,k_2)}_2(x_2)\\
\vdots \\
f^{(k_1,k_2,\dots , k_d)}_d(x_d)
\end{pmatrix}
:=
\begin{pmatrix}
\frac{x_1-b_{k_1}}{Kw_{(k_1)}} +\frac{k_1}{K}\\
\frac{x_2-b_{k_2|(k_1)}}{Kw_{k_2|(k_1)}} +\frac{k_2}{K}\\
\vdots\\
\frac{x_d-b_{k_d|(k_{1},\dots,k_{d-1})}}{Kw_{k_d|(k_{1},\dots,k_{d-1})}} +\frac{k_d}{K}
\end{pmatrix}
\end{align}
for all $x\in \setK_k$ and $k\in\setA_d$. 
\item  Fix $s\in\naturals$ and define $\tilde f^{(s)}\colon \reals \to \reals^d$ according to $\tilde f^{(s)}=\tp{(\tilde f_1,\tilde f_2^{(s)},\dots,\tilde f_d^{(s)})}$
with 
\begin{align}\label{eq:tildef1A}
\tilde f_1(x)=x\quad\text{ for all $x\in (-\infty,0)$,} 
\end{align}
\begin{align}\label{eq:tildef1}
\tilde  f_1(x) = \sum_{k_1\in \setA_1} f^{(k_1)}_1 \ind{\setK_{k_1}}(x) \quad \text{for all $x\in[0,1]$,}
\end{align}
and 
\begin{align}\label{eq:tildef1B}
\tilde  f_1(x) = 1/(Kw_{(K)})(x-1)+ 1\quad\text{ for all $x\in (1,\infty)$,} 
\end{align}
and, for $j=2,\dots,d$,  
\begin{align}
\tilde  f_j^{(s)} 
&= \sum_{(k_1,\dots,k_{j-1})\in\setA_{j-1}} \label{eq:tildefj1}
\Big(\hat f_{j}^{(k_1,\dots, k_{j-1})}\circ g_s\circ L_{k_{j-1}}\Big)\\ % \circ \hat f_j^{(k_1,\dots, k_{j-2})}\Big) \\
&\ \    \phantom{\sum_{(k_1,\dots,k_{j-1})\in\setA_{j-1}}}  
\circ  \Big(\hat f_{j-1}^{(k_1,\dots, k_{j-2})}\circ g_s\circ L_{k_{j-2}} \Big) \\
&\ \  \ \   \phantom{\sum_{(k_1,\dots,k_{j-1})\in\setA_{j-1}}} \vdots \\
%&    \phantom{\sum_{(k_1,\dots,k_{j-1})\in\setA_{j-1}}}  
%\circ  \Big(g_s\circ L_{k_{j-3}} \circ \hat f_j^{(k_1,\dots, k_{j-4})}\Big) \circ \dots  \\ 
&\ \ \phantom{\sum_{(k_1,\dots,k_{j-1})\in\setA_{j-1}}}  
\circ \Big (\hat f_2^{(k_1)} \circ g_s\circ L_{k_1}\Big)\circ \tilde f_1, \label{eq:tildefj2}
\end{align}
where $g_s$ is as per \Cref{dfn:g}, $L_{\ell}(x)= Kx -\ell +1$ for all $x\in\reals$ and $\ell\in\naturals$, and, for  $j=2,\dots,d$ and $(k_1,\dots, k_{j-1})\in\setA_{j-1}$, we set 
\begin{align}\label{eq:hatfjA}
\hat f_j^{(k_1,\dots, k_{j-1})}(x)=x\quad\text{ for all $x\in (-\infty,0)$,} 
\end{align}

\begin{align}\label{eq:hatfj}
\hat f_j^{(k_1,\dots, k_{j-1})}(x)= \sum_{k_j=1}^K  f_j^{(k_1,\dots, k_{j})}\ind{\setK_{k_j|(k_1,\dots. k_{j-1})}}(x) 
\quad\text{for all $x\in [0,1]$,}
\end{align}
and 
\begin{align}\label{eq:hatfjB}
\hat f_j^{(k_1,\dots, k_{j-1})}(x) = 1/(Kw_{k_j|(k_{1},\dots,k_{j-1})})(x-1)+ 1\quad\text{ for all $x\in (1,\infty)$.} 
\end{align}
\end{enumerate}
Then, the following statements hold. 
\begin{enumerate} 
\renewcommand{\theenumi}{(\roman{enumi})}
\renewcommand{\labelenumi}{(\roman{enumi})}
\item \label{item1space}
The mapping $f$  in \cref{itemf} is a bijection  with $f(\setK_k)=\setJ_k$ for all $k\in\setA_d$ 
satisfying    
%$\operatorname{Lip}(f)\leq 1/(K\delta)$ and 
$\mu=f\#\colL^{(d)}|_{\setI}$. 
\item \label{item2space}For  $j=1.\dots,d$, 
define the cubes $\setC_{r_j}^{k_j}$ for $k_j=1,\dots,K$  and  $r_j=1,\dots,2^{s-1}$  as follows. 
Set 
\begin{align}
\setC_{r_j}^{k_j} = \Big[\frac{k_j-1}{K}+\frac{r_j-1}{K2^{s-1}}, \ \frac{k_j-1}{K}+\frac{r_j}{K2^{s-1}}\Big) 
\end{align}
if $k_j<K$ or $k_j=K$ and $r_j< 2^{s-1}$ and 
\begin{align}
\setC_{2^{s-1}}^{K} = \Big[1-\frac{1}{K2^{s-1}}, \ 1\Big]. 
\end{align}
 Then, $\tilde f^{(s)}([0,1])\subseteq \setI$ and  
%\begin{align}\label{eq:propB1}
%\tilde f_j(\setB_{r_1,r_2,\dots,r_j}^{k_1,k_2,\dots, k_j}) = \setC_{r_j}^{k_j}
%\end{align}
%for all $(k_1,k_2,\dots,k_j)\in\setA_j$, $r_1,r_2,\dots,r_j \in\{1,\dots,2^{s-1}\}$, and $j=1,3,\dots, d$.  
%Moreover, 
\begin{align}\label{eq:propB2}
(\tilde f^{(s)} \# (\colL^{(1)}|_{[0,1]}))\big(\setC_{r_1}^{k_1}\times\setC_{r_2}^{k_2}\dots \times \setC_{r_d}^{k_d} \big)= \mu\big(\setC_{r_1}^{k_1}\times\setC_{r_2}^{k_2}\dots \times \setC_{r_d}^{k_d} \big)
\end{align}
for all $(k_1,\dots, k_d)\in\setA_d$ and $r_1,r_2,\dots,r_d \in\{1,\dots,2^{s-1}\}$. 
\item \label{item3space}We have 
\begin{align}\label{eq:propB3}
W_1(\tilde f^{(s)} \# (\colL^{(1)}|_{[0,1]}),\mu)\leq \frac{1}{K2^{s-1}}\end{align} 
with
\begin{align}\label{eq:propB4}
\operatorname{Lip}( \tilde f^{(s)}) \leq \frac{2^{s(d-1)}N}{K}. 
\end{align}

\item \label{item4space} There exists   a ReLU neural network $\Phi_{N,K}^{(s)}\in\setN_{1,d}$ with  %\item $ \operatorname{Lip}(\Phi_f)\leq \alpha(f)$. \label{i3:approxfN}
%\end{enumerate}
%Moreover, $\Phi_f$ has the following properties: 
\begin{align}
 \setL(\Phi_{N,K}^{(s)})&=3+(d-1)(5+s) \label{eq:ReLUpush1}\\
 \setM(\Phi_{N,K}^{(s)}) & \leq 4d(2K+3s+4) K^d/(K-1)\label{eq:ReLUpush2}\\
 \setW(\Phi_{N,K}^{(s)}) &\leq  4K^d  \label{eq:ReLUpush3}\\
 \setK( \Phi_{N,K}^{(s)}) &\subseteq \setD_{N,K} \label{eq:ReLUpush4}\\
 \setB(\Phi_{N,K}^{(s)}) &\leq N, \label{eq:ReLUpush5}
 \end{align}
 where we set 
\begin{align}
\setD_{N,K}= \{a/b: a\in\mathbb{Z}, b\in\naturals,  \abs{a}\leq N, \ \text{and}\ b\leq NK\},   
%(\mathbb{Z}/(KN))\cap [-KN,KN]
\end{align}
satisfying 
\begin{align}\label{eq:corpush}
\Phi_{N,K}^{(s)} = \tilde f^{(s)}
\end{align}
and
\begin{align}\label{eq:propB4psi}
\operatorname{Lip}( \Phi_{N,K}^{(s)}) \leq \frac{2^{s(d-1)}N}{K}.   
\end{align}
The architecture  of $\Phi_{N,K}^{(s)}$ only depends on $s,d,$ and $K$ but not on $\mu$ and $N$. 
% and 
% \begin{align}
%\setE_{N,K}=\{1/w:w\in \setD_{N,K}\}. 
%\end{align}

\end{enumerate}

\end{thm}
\begin{proof}
We start by  proving \Cref{item1space}. 
By construction, we have 
\begin{align}
f^{(k_1)}_1(\setK_{k_1})&=\setI_{k_1}\\
f^{(k_1,k_2)}_2(\setK_{k_2|k_1})&=\setI_{k_2}\\
&\vdots\\
f^{(k_1,k_2,\dots , k_d)}_d(\setK_{k_d|(k_{d-1},\dots,k_1)})&=\setI_{k_d},
\end{align}
which implies $f(\setK_k)=\setJ_k$ for all $k\in\setA_d$. Since $\setI=\bigcup_{k\in\setA_d} \setJ_k$, this implies in turn that 
$f$ is onto.  Now, $f^{(k)}$  is injective on $\setK_k$ for all $k\in\setA_d$. Moreover, since the sets 
%$\setK_{k_1},\setK_{k_2|k_1}, \dots,\setK_{k_d|(k_{d-1},\dots,k_1)}$ are all pairwise disjoint, so are the sets 
$\setJ_k$ are  pairwise disjoint and $f(\setK_k)=\setJ_k$ for all $k\in\setA_d$,  $f$ is injective on $\bigcup_{k\in\setA_d}\setK_k$.   
Finally, since 
\begin{align}
\bigcup_{k_1=1}^K \setK_{k_1}=[0,b_K]=[0,1]
\end{align}
and
\begin{align}
\bigcup_{k_j=1}^K \setK_{k_j|(k_{j-1},\dots,k_1)}
&=[0,b_{K|(k_1,\dots,k_{j-1})}]\\
&=[0,1]\quad\text{for $j=2,\dots,d$,}
\end{align}
we conclude that $\bigcup_{k\in\setA_d}\setK_k=\setI$ so that $f$ is injective on $\setI$. Summarizing, 
$f\colon \setI \to \setI$ is a bijection with $f(\setK_k)=\setJ_k$ for all $k\in\setA_d$. 
%Next, note that 
%\begin{align}
%f_1(b_{k_1})
%&= \frac{b_{k_1}-b_{{k_1}+1}}{K w_{{k_1}+1}} + \frac{k_1+1}{K}\\
%&=\frac{k_1}{K}= \lim_{x \nearrow b_{k_1}} f_1(x)  
%\end{align}
%for all $k_1\in\{1,\dots,K-1\}$
%and 
%\begin{align}
%f_s(b_{k_s|(k_{s-1},\dots, k_1)})
%&= \frac{b_{k_s|(k_1,\dots, k_{s-1})}-b_{k_s+1|(k_{1},\dots, k_{s-1})}}{K w_{{k_s}+1|(k_{1},\dots, k_{s-1})}} + \frac{k_s+1}{K}\\
%&=\frac{k_s}{K}= \lim_{x \nearrow b_{k_s}} f_s(x)  
%\end{align}
%for all $k_s= 1,\dots,K-1$, $(k_{1},\dots, k_{s-1})\in\setA_{s-1}$, and $s=2,\dots,K$, which implies that 
%$f$ is continuous at the transitions between neighboring sets $\setK_k$ and $\setK_\ell$. 
Now, fix $k\in\setA_d$ arbitrarily and suppose that $\setB_k\subseteq \setJ_k$. Then, we have 
\begin{align}
\colL^{(1)}(f_1^{-1}(\setB_k))=K w_{k_1}  \colL^{(1)}(\setB_k\cap\setI_{k_1})
\end{align}
and 
\begin{align}
\colL^{(1)}(f_j^{-1}(\setB_k))=K w_{k_j|(k_1,\dots,k_{j-1})}  \colL^{(1)}(\setB_k\cap\setI_{k_j})\quad\text{for $j=2,\dots,d$,}
\end{align}
which yields 
\begin{align}
\colL^{(d)}(f^{-1}(\setB_k))\label{eq:pull1}
&=K^d w_{k_1}\Big( \prod_{j=2}^d w_{k_j|(k_1,\dots,k_{j-1})}\Big) \lambda_k(\setB_k)\\
&= K^d w_k\lambda_k(\setB_k).\label{eq:pull2}
\end{align}
We thus have 
\begin{align}
\colL^{(d)}(f^{-1}(\setB))
&=\sum_{k\in\setA_d} \colL^{(d)}( (f^{-1}(\setB\cap\setJ_k))\\
&= \sum_{k\in\setA_d} K^d w_k \lambda_k(\setB)\label{eq:pull3}\\
&=\mu(\setB)\quad\text{for all $\setB\subseteq\setI$,}
\end{align}
where \cref{eq:pull3} follows from \cref{eq:pull1}--\cref{eq:pull2}, which proves  $\mu=f\#(\colL^{(d)}|_{\setI})$. 
%Finally, \Cref{lem:WLIP}, \Cref{thm:kir}, and  $w_k\in\delta\naturals$ imply $\operatorname{Lip}(f)\leq 1/(K\delta)$. 
We next  prove \Cref{item2space}. To this end, set 
\begin{align}\label{eq:defB1}
\setB_{r_1}^{k_1}= \tilde f_1^{-1}( \setC_{r_1}^{k_1}) \quad\text{for $k_1=1,\dots,K$ and  $r_1=1,\dots,2^{s-1}$}
\end{align}
and define inductively, for  $(k_1,\dots,k_j)\in\setA_j$, $r_1,\dots,r_j \in\{1,\dots,2^{s-1}\}$, and $j=2,\dots, d $, 
\begin{align}
\setB_{r_1,\dots,r_j}^{k_1,\dots, k_j}=   {\tilde f^{(s)}_j}^{-1} (\setC^{k_j}_{r_j}) \cap\, \setB_{r_1,\dots,r_{j-1}}^{k_1,\dots, k_{j-1}}.
\end{align}
Suppose first that $d=1$. 
Since $f_1^{k_1}$ has slope $1/(Kw_{(k_1)})$, we have 
\begin{align}
\colL^{(1)}(\setB_{r_1}^{k_1})= Kw_{(k_1)} \colL^{(1)}(\setC_{r_1}^{k_1})  =  w_{(k_1)}/{2^{s-1}}
\end{align}
for $k_1=1,\dots,K$ and  $r_1=1,\dots,2^{s-1}$, which establishes \cref{eq:propB2} for $d=1$. 
Next, suppose that $d>1$. We divide the proof into several steps. \\

\emph{Step 1:}\ We  prove by induction  that, for   $j=2,\dots,d$, we have   
\begin{align}
\tilde f^{(s)}_j(x) &= 
\Big(\hat f_{j}^{(k_1,\dots, k_{j-1})}\circ g_s\circ L_{k_{j-1}}\Big) 
\circ  \Big(\hat f_{j-1}^{(k_1,\dots, k_{j-2})}\circ g_s\circ L_{k_{j-2}} \Big) \label{eq:fj1}\\
& \ \ \ \circ \dots \circ  \Big (\hat f_2^{(k_1)} \circ g_s\circ L_{k_1}\Big)\circ \tilde f_1 (x)
\quad\text{for all $x\in \setB_{r_1,\dots,r_{j-1}}^{k_1,\dots, k_{j-1}}$,} \label{eq:fj2}
\end{align}
$(k_1,\dots,k_{j-1})\in\setA_{j-1}$, and $r_1,\dots,r_{j-1} \in\{1,\dots,2^{s-1}\}$. 
% upon noting that 
%\begin{align}
%g_s(L_\ell(0))=0\quad\text{for all $\ell\in\naturals$}
%\end{align}
 Suppose first that $j=2$ and fix $k_1$, $r_1$, and $x\in\setB_{r_1}^{k_1}$ arbitrarily. Then, we have 
\begin{align}
\tilde f^{(s)}_2(x)
&=\sum_{(\ell_1)\in\setA_{1}} \hat f_{2}^{(\ell_1)}\circ g_s\circ L_{\ell_{1}}\circ \tilde f_1(x). 
\end{align}
Since  $ \setB_{r_1}^{k_1}= {\tilde f_1}^{-1}(\setC_{r_1}^{k_1})$, we have $\tilde f_1(x)\in \setC_{r_1}^{k_1}\subseteq \setI_{k_1}$ so that $(\hat f_{2}^{(\ell_1)}\circ g_s\circ L_{\ell_{1}}\circ \tilde f_1)(x)=0$ if 
$\ell_1\neq k_1$ owing to  $g_s(x)=0$ for all $x\notin (0,1)$ and $\hat f_{2}^{(\ell_1)}(0)=0$. We thus have 
\begin{align}
\tilde f^{(s)}_2(x)
&=\hat f_{2}^{(k_1)}\circ g_s\circ L_{k_{1}}\circ \tilde f_1(x) \quad\text{for all $x\in \setB_{r_1}^{k_1}$.} 
\end{align}
Now, suppose that  \cref{eq:fj1}-\cref{eq:fj2} holds for $j<d$ and fix  $(k_1,\dots,k_{j})\in\setA_j$, $r_1,\dots, r_{j}\in\{1,\dots,s^{s-1}\}$, and $x\in\setB_{r_1,\dots,r_{j}}^{k_1,\dots,k_{j}}$ arbitrarily. Suppose that 
$(\ell_1,\dots,\ell_{j-1})\in \setA_{j-1}$ with $ (\ell_1,\dots,\ell_{j-1})\neq (k_1,\dots,k_{j-1}) $. 
Then, we have 
\begin{align}
&\Big(\hat f_{j+1}^{(\ell_1,\dots, \ell_{j})}\circ g_s\circ L_{\ell_{j+1}}\Big) 
\circ  \Big(\hat f_{j}^{(\ell_1,\dots, \ell_{j-1})}\circ g_s\circ L_{\ell_{j-1}} \Big) \\
& \ \ \ \circ \dots \circ  \Big (\hat f_2^{(\ell_1)} \circ g_s\circ L_{\ell_1}\Big)\circ \tilde f^{(s)}_1 (x)\\
&=\Big(\hat f_{j+1}^{(\ell_1,\dots, \ell_{j})}\circ g_s\circ L_{\ell_{j+1}}\Big)(0)\label{eq:useindfjv}\\
&=0\quad \text{for $\ell_{j}=1,\dots, K$,}\label{eq:laststepindfjv}
\end{align}
where \cref{eq:useindfjv} follows from 
$\setB_{r_1,\dots,r_{j}}^{k_1,\dots,k_{j}}\subseteq \setB_{r_1,\dots,r_{j-1}}^{k_1,\dots,k_{j-1}}$, the fact that  \cref{eq:fj1}-\cref{eq:fj2} holds for $j$, and because all the functions in the sum in \cref{eq:tildefj1}--\cref{eq:tildefj2} are nonnegative and \cref{eq:laststepindfjv} is by $g_s(x)=0$ for all $x\notin (0,1)$ and $\hat f_{j+1}^{(\ell_1,\dots, \ell_{j})}(0)=0$. We therefore have 
\begin{align}
\tilde f^{(s)}_{j+1}(x) &= 
\sum_{\ell_j=1}^K\Big(\hat f_{j+1}^{(k_1,\dots, k_{j-1},\ell_j)}\circ g_s\circ L_{\ell_{j}}\Big) 
\circ  \Big(\hat f_{j}^{(k_1,\dots, k_{j-1})}\circ g_s\circ L_{k_{j-1}} \Big) \label{eq:laststepindfj0}\\
& \ \ \ \circ \dots \circ  \Big (\hat f_2^{(k_1)} \circ g_s\circ L_{k_1}\Big)\circ \tilde f^{(s)}_1 (x)\\
&=\sum_{\ell_j=1}^K\Big(\hat f_{j+1}^{(k_1,\dots, k_{j-1},\ell_j)}\circ g_s\circ L_{\ell_{j}}\Big) 
\circ \tilde f^{(s)}_{j}(x) \label{eq:useagainindfj}\\
&= \Big(\hat f_{j+1}^{(k_1,\dots, k_j)}\circ g_s\circ L_{k_{j}}\Big) 
\circ \tilde f^{(s)}_{j}(x),\label{eq:laststepindfj}
\end{align}
where  \cref{eq:useagainindfj} follows again from $\setB_{r_1,\dots,r_{j}}^{k_1,\dots,k_{j}}\subseteq \setB_{r_1,\dots,r_{j-1}}^{k_1,\dots,k_{j-1}}$  and the fact that \cref{eq:fj1}-\cref{eq:fj2} holds for $j$ and in   \cref{eq:laststepindfj}  we used $\tilde f^{(s)}_{j}(x)\in \setC^{k_j}_{r_j}\subseteq\setI_{k_j}$ if $x\in \setB_{r_1,\dots,r_{j}}^{k_1,\dots,k_{j}}$, $g_s(x)=0$ for all $x\notin (0,1)$, and $\hat f_{j+1}^{(k_1,\dots, k_{j-1}, \ell_j)}(0)=0$ for $\ell_j=1,\dots,K$. Plugging the expression  for $\tilde f^{(s)}_{j}$ from 
\cref{eq:fj1}-\cref{eq:fj2} into \cref{eq:laststepindfj} establishes \cref{eq:fj1}-\cref{eq:fj2}  for $j+1$. \\

\emph{Step 2:}\ We  prove by induction that, for  $j=2,\dots,d$, we have   
\begin{align}
\tilde f^{(s)}_j(x) &= 
\Big(\hat f_{j}^{(k_1,\dots, k_{j-1})}\circ H_{r_{j-1}}\circ L_{k_{j-1}}\Big) 
\circ  \Big(\hat f_{j-1}^{(k_1,\dots, k_{j-2})}\circ H_{r_{j-2}}\circ L_{k_{j-2}} \Big) \label{eq:fj1a}\\
& \ \ \ \circ \dots \circ  \Big (\hat f_2^{(k_1)} \circ H_{r_{1}}\circ L_{k_1}\Big)\circ \tilde f_1 (x)
\quad\text{for all $x\in \setB_{r_1,\dots,r_{j-1}}^{k_1,\dots, k_{j-1}}$,} \label{eq:fj2a}
\end{align}
$(k_1,\dots,k_{j-1})\in\setA_{j-1}$, and $r_1,\dots,r_{j-1} \in\{1,\dots,2^{s-1}\}$, 
where, for $r=1,\dots,2^{s-1}$,  the continuous piecewise linear function $H_r\colon \reals\to \reals$ is defined according to 
\begin{align}\label{eq:defHr}
H_r(x)=
\begin{cases}
2^s x -2(r-1)&\text{if $x\in (-\infty, (2r-1)/2^{s}]$}\\
-2^sx +2r&\text{if $x\in ( (2r-1)/2^{s},\infty]$.}\\
\end{cases}
\end{align}   
Suppose first that $j=2$ and fix $k_1$, $r_1$, and $x\in\setB_{r_1}^{k_1}$ arbitrarily. Then, we have 
\begin{align}
\tilde f^{(s)}_2(x)
&=\hat f_{2}^{(k_1)}\circ g_s\circ L_{k_{1}}\circ \tilde f_1(x) \label{eq:usestep1}\\
&= \sum_{r=1} ^{2^{s-1}} \hat f_{2}^{(k_1)}\circ  h_r   \circ   L_{k_{1}}   \circ \tilde f_1(x) \label{eq:usestep1a}\\
&= \hat f_{2}^{(k_1)}\circ  H_{r_1}   \circ   L_{k_{1}}   \circ \tilde f_1(x), \label{eq:usestep1b}
\end{align}
where  \cref{eq:usestep1} is by \cref{eq:fj1}-\cref{eq:fj2}, in  \cref{eq:usestep1a} we applied \Cref{lem:sawtooth1}, and \cref{eq:usestep1b} follows from $ \setB_{r_1}^{k_1}= {\tilde f_1}^{-1}(\setC_{r_1}^{k_1})$,    
\begin{align}\label{eq:LC1}
L_{k_1}(\setC_{r_1}^{k_1})\subseteq[(r_1-1)/2^{s-1}, r_1/2^{s-1}],
\end{align}
and 
$H_{r_1}(x)=h_{r_1}(x)$ for all $x\in [(r_1-1)/2^{s-1}, r_1/2^{s-1}]$.
%, and  
% and 
%\begin{align}
%H_r\circ L_{k_{1}}\circ \tilde f_1(x) \in H_r\circ L_{k_{1}} (\setC_{r_1}^{k_1})=\{0\}\quad\text{ if $r\neq r_1$, }
% \end{align}
% $\hat f_{2}^{(k_1)}(x)=0$ if $x\leq 0$. 
Now, suppose that  \cref{eq:fj1a}-\cref{eq:fj2a} holds for $j<d$ and fix  $(k_1,\dots,k_{j})\in\setA_{j}$, $r_1,\dots, r_{j}\in\{1,\dots,2^{s-1}\}$, and $x\in\setB_{r_1,\dots,r_{j}}^{k_1,\dots,k_{j}}$ arbitrarily. Then, we have 
\begin{align}
\tilde f^{(s)}_{j+1}(x) &= \Big(\hat f_{j+1}^{(k_1,\dots, k_j)}\circ g_s\circ L_{k_{j}}\Big) 
\circ \tilde f^{(s)}_{j}(x)\label{useintstep1}\\
&= \sum_{r=1} ^{2^{s-1}} \Big(\hat f_{j+1}^{(k_1,\dots, k_j)}\circ h_r \circ L_{k_{j}}\Big) 
\circ \tilde f^{(s)}_{j}(x)\label{eq:usestep1aa}\\
&= \hat f_{j+1}^{(k_1,\dots, k_j)}\circ H_{r_j} \circ L_{k_{j}}
\circ \tilde f^{(s)}_{j}(x),\label{eq:lastincuctionstep}
\end{align}
where \cref{useintstep1} is by \cref{eq:laststepindfj0}--\cref{eq:laststepindfj}, in  \cref{eq:usestep1aa} we applied \Cref{lem:sawtooth1}, and \cref{eq:lastincuctionstep} follows from 
$ \setB_{r_1,\dots,r_{j}}^{k_1,\dots,k_{j}}\subseteq {\tilde f^{(s)}_j}^{-1}(\setC_{r_j}^{k_j})$,
\begin{align}\label{eq:LCj}
L_{k_j}(\setC_{r_j}^{k_j})\subseteq[(r_j-1)/2^{s-1}, r_j/2^{s-1}],
\end{align}
and $H_{r_j}(x)=h_{r_j}(x)$ for all $x\in [(r_j-1)/2^{s-1}, r_j/2^{s-1}]$. 
%, and  and $\hat f_{j+1}^{(k_1,\dots, k_j)}(x)=0$ if $x\leq 0$. 
Plugging the expression for $\tilde f^{(s)}_{j}$ from \cref{eq:fj1a}--\cref{eq:fj2a} into \cref{eq:lastincuctionstep} establishes \cref{eq:fj1a}--\cref{eq:fj2a}  for $j+1$.\\

\emph{Step 3:}\ We prove by induction that, for   $j=2,\dots,d$, we have 
\begin{align}
&\Bigg(\Big(\hat f_{j}^{(k_1,\dots, k_{j-1})}\circ H_{r_{j-1}}\circ L_{k_{j-1}}\Big) 
\circ  \Big(\hat f_{j-1}^{(k_1,\dots, k_{j-2})}\circ H_{r_{j-2}}\circ L_{k_{j-2}} \Big) \label{eq:setA1}\\
& \ \ \ \circ \dots \circ  \Big (\hat f_2^{(k_1)} \circ H_{r_{1}}\circ L_{k_1}\Big)\circ \tilde f_1 \Bigg)^{-1}(\setA)
 \label{eq:setA2}\\
 &=\Bigg(\Big(f_{j}^{(k_1,\dots, k_{j})}\circ H_{r_{j-1}}\circ L_{k_{j-1}}\Big) 
\circ  \Big( f_{j-1}^{(k_1,\dots, k_{j-1})}\circ H_{r_{j-2}}\circ L_{k_{j-2}} \Big) \label{eq:setA3}\\
& \ \ \ \circ \dots \circ  \Big ( f_2^{(k_1,k_2)} \circ H_{r_{1}}\circ L_{k_1}\Big)\circ  f_1^{k_1} \Bigg)^{-1}(\setA)
 \label{eq:setA4}
\end{align}
for all 
$(k_1,\dots,k_{j})\in\setA_{j}$, $r_1,\dots,r_{j} \in\{1,\dots,2^{s-1}\}$, and $\setA\subseteq\overline{\setI}_{k_j}$, 
with $H_r$ as defined in \cref{eq:defHr} for $r=1,\dots, 2^{s-1}$.  
Suppose that $j=2$ and fix $(k_1,k_2)\in\setA_{2}$, $r_1,r_2 \in\{1,\dots,2^{s-1}\}$, and $\setA\subseteq\overline{\setI}_{k_2}$ arbitrarily. Then, 
\cref{item1space} implies 
\begin{align}
{\hat f_2^{(k_1)} }^{-1}(\setA) = {f_2^{(k_1,k_2)} }^{-1}(\setA) \subseteq [0,1]. 
\end{align}
Moreover, we have 
\begin{align}
L_{k_1}^{-1} (H_{r_1}^{-1}([0,1])) \subseteq \overline{\setC}_{r_1}^{k_1} \subseteq \overline{\setI}_{k_1}
\end{align}
so that \cref{item1space}  yields 
\begin{align}
\Big(\hat f_2^{(k_1)} \circ H_{r_{1}}\circ L_{k_1}\circ \tilde f_1 \Big)^{-1}(\setA) 
= 
 \Big ( f_2^{(k_1,k_2)} \circ H_{r_{1}}\circ L_{k_1}\circ  f_1^{k_1} \Big)^{-1}(\setA),
\end{align}
%we have 
%\begin{align}
%{\hat f_2^{(k_1)} }^{-1}(\setA)&\subseteq {\hat f_2^{(k_1)} }^{-1}(\overline{\setI}_{k_2})=  {f_2^{(k_1,k_2)}}^{-1}(\overline{\setI}_{k_2})\subseteq [0,1]\label{eq:red1}\\
%L_{k_1}^{-1} (H_{r_1}^{-1}([0,1]))&= \overline{\setC}_{r_1}^{k_1}\subseteq  \overline{\setI}_{k_1}\\
%{\tilde f^{(s)}_1}^{-1}(\overline{\setI}_{k_1})&=  {f_1^{(k_1)}}^{-1}(\overline{\setI}_{k_1}). \label{eq:red3}
%\end{align}
%Using \cref{eq:red1}--\cref{eq:red3}  in  \cref{eq:setA2} 
which establishes \cref{eq:setA1}--\cref{eq:setA4} for $j=2$. 
Next, suppose that  \cref{eq:setA1}--\cref{eq:setA4} holds for $j<d$ and fix 
 $(k_1,\dots,k_{j+1})\in\setA_{j+1}$, $r_1,\dots,r_{j+1} \in\{1,\dots,2^{s-1}\}$, and $\setA\subseteq\overline{\setI}_{k_{j+1}}$ arbitrarily. Then, 
\cref{item1space} implies 
\begin{align}\label{eq:comb1}
{\hat f_{j+1}^{(k_1,\dots,k_j)} }^{-1}(\setA) = {f_{j+1}^{(k_1,\dots,k_{j+1})} }^{-1}(\setA) \subseteq [0,1]. 
\end{align}
Moreover, we have  
 \begin{align}\label{eq:comb2}
L_{k_j}^{-1} (H_{r_j}^{-1}([0,1]))&=\overline{\setC}_{r_j}^{k_j} \subseteq \overline{\setI}_{k_j}.   
\end{align}
Now, \cref{eq:comb1} and \cref{eq:comb2} 
imply 
 \begin{align}\label{eq:comb3}
L_{k_j}^{-1} (H_{r_j}^{-1}({\hat f_{j+1}^{(k_1,\dots,k_j)}}^{-1}(\setA)))&\subseteq \overline{\setI}_{k_j}.   
\end{align}
Finally, combining \cref{eq:comb1}, \cref{eq:comb3}, and  \cref{eq:setA1}--\cref{eq:setA4}  establishes \cref{eq:setA1}--\cref{eq:setA4} for $j+1$.  \\

\emph{Step 4:}\ Combining the results from Step 2 and Step 3, we obtain 
\begin{align}
\tilde f^{(s)}_j(x) &= 
\Big(f_{j}^{(k_1,\dots, k_{j})}\circ H_{r_{j-1}}\circ L_{k_{j-1}}\Big) 
\circ  \Big( f_{j-1}^{(k_1,\dots, k_{j-1})}\circ H_{r_{j-2}}\circ L_{k_{j-2}} \Big) \label{eq:fj1anew}\\
& \ \ \ \circ \dots \circ  \Big (f_2^{(k_1,k_2)} \circ H_{r_{1}}\circ L_{k_1}\Big)\circ f^{(k_1)}_1 (x)
\quad\text{for all $x\in \setB_{r_1,\dots,r_{j}}^{k_1,\dots, k_{j}}$,} \label{eq:fj2anew}
\end{align}
$(k_1,\dots,k_{j})\in\setA_{j}$, and $r_1,\dots,r_{j} \in\{1,\dots,2^{s-1}\}$, 
and $j=2,\dots,d$, 
with $H_r$ as defined in \cref{eq:defHr} for $r=1,\dots, 2^{s-1}$.\\  

\emph{Step 5:}\ We establish 
\begin{align}\label{eq:measureBupper}
\colL^{(1)}\big(\setB_{r_1,\dots,r_{d}}^{k_1,\dots,k_{d}} \big)\leq \frac{w_{k_1,\dots,k_d}}{2^{d(s-1)}}
\end{align}
for all $(k_1,\dots,k_d)\in\setA_d$ and $r_1,\dots,r_d \in\{1,\dots,2^{s-1}\}$.
To this end, fix $(k_1,\dots,k_d)\in\setA_d$ and $r_1,\dots,r_d \in\{1,\dots,2^{s-1}\}$ arbitrarily and 
 note that \cref{eq:fj1anew}--\cref{eq:fj2anew} for $j=d$ implies 
\begin{align}\label{eq:needarea1}
(J\tilde f^{(s)}_d)|_{\setB_{r_1,\dots,r_{d}}^{k_1,\dots,k_{d}}} = \frac{2^{s(d-1)}}{K w_{(k_1,\dots, k_d)}}
\end{align}
and, since $\abs{H_r^{-1}(x)}\leq 2$ for all $x\in\reals$ and $r=1,\dots,2^{s-1}$,  
\begin{align}\label{eq:needarea2}
\abs{\setB_{r_1,\dots,r_{d}}^{k_1,\dots,k_{d}}\cap \Big\{{\tilde f^{(s)}_d}^{-1}(x)\Big\}} \leq 2^{d-1}\quad\text{for all $x\in [0,\infty)$}. 
\end{align}
We therefore have 
\begin{align}
\colL^{(1)}\big(\setB_{r_1,\dots,r_{d}}^{k_1,\dots,k_{d}} \big)
&=  \frac{K w_{(k_1,\dots, k_d)}}{2^{s(d-1)}}  \int_{\setB_{r_1,\dots,r_{d}}^{k_1,\dots,k_{d}}}  (J\tilde f^{(s)}_d)\,\mathrm d \colL^{(1)}\label{eq:needarea1a}\\
&=\frac{K w_{(k_1,\dots, k_d)}}{2^{s(d-1)}} \int_{\setC_{r_d}^{k_d}}  \abs{\setB_{r_1,\dots,r_{d}}^{k_1,\dots,k_{d}}\cap \Big\{{\tilde f^{(s)}_d}^{-1}(x)\Big\}} \,\mathrm d \colL^{(1)}\label{eq:needarea1b}\\
&\leq \frac{K w_{(k_1,\dots, k_d)}}{2^{(s-1)(d-1)}}\colL^{(1)}(\setC_{r_d}^{k_d})\label{eq:needarea1c}\\
&=\frac{w_{k_1,\dots,k_d}}{2^{d(s-1)}}, 
\end{align}
where \cref{eq:needarea1a} follows from \cref{eq:needarea1}, in \cref{eq:needarea1b} we applied the 
area formula \Cref{thm:area}, $\tilde f^{(s)}_d(\setB_{r_1,\dots,r_{d}}^{k_1,\dots,k_{d}})\subseteq \setC_{r_d}^{k_d}$, and  \cref{eq:itemHLeb} in \Cref{lem:Hmeasure}, and \cref{eq:needarea1c} is by \cref{eq:needarea2}, which proves  \cref{eq:measureBupper}.\\ 

\emph{Step 6:}\ We prove by induction that, for  $j=2,\dots,d$, we have    
\begin{align}\label{eq:fjwelldefined}
\tilde f^{(s)}_{j}([0,1]) \subseteq [0,1] 
\end{align}
and 
\begin{align}
\bigcup_{k_{1},\dots,k_j=1}^{K} \bigcup_{r_1,\dots,r_{j}=1}^{2^{s-1}} \setB_{r_1,\dots,r_{j}}^{k_1,\dots,k_{j}} 
=[0,1].\label{eq:setBunion}
\end{align} 
In particular, this implies $\tilde f^{(s)}([0,1])\subseteq \setI$. 
Suppose that $j=2$. Then,  we have 
\begin{align}
 \bigcup_{k_{1}=1}^{K} \bigcup_{r_{1}=1}^{2^{s-1}} \setB_{r_1}^{k_1} 
 &=\bigcup_{k_{1}=1}^{K} \bigcup_{r_{1}=1}^{2^{s-1}} {\tilde f_1}^{-1}( \setC_{r_1}^{k_1})  \label{eq:usef10}\\
 &={\tilde f_1}^{-1} \Bigg( \bigcup_{k_{1}=1}^{K} \bigcup_{r_{1}=1}^{2^{s-1}} \setC_{r_1}^{k_1} \Bigg)\\
&= {\tilde f_1}^{-1} ([0,1])\\
&= [0,1], \label{eq:usef11}
\end{align}
where in \cref{eq:usef11} we used the fact that 
\begin{align}\label{eq:f11}
\tilde f_1(x) \in [0,1]\quad\text{ for all $x\in [0,1]$,} 
\end{align}
which follows from \cref{eq:tildef1} and $f_1^{(k_1)}(x)\in [0,1]$ for all $x\in\setK_{k_1}$ and $k_1=1,\dots,K$. 
Now, pick  $x\in [0,1]$ arbitrarily. Then, \cref{eq:usef10}--\cref{eq:usef11} implies that there exists a $k_1$ and an $r_1$ such that $x\in \setB_{r_1}^{k_1}$. We therefore have 
\begin{align}
\tilde f^{(s)}_2(x) 
&= \hat f_{2}^{(k_1)}\circ  H_{r_1}   \circ   L_{k_{1}}   \circ \tilde f_1(x) \label{eq:usestep2}\\
&\in \hat f_{2}^{(k_1)}\circ  H_{r_1}   \circ   L_{k_{1}} (\setC_{r_1}^{k_1})  \label{eq:usestep2a}\\
&\subseteq \hat f_{2}^{(k_1)}([0,1])\label{eq:usestep2b}\\
&=[0,1],
\end{align}
where \cref{eq:usestep2} follows from \cref{eq:fj1a}--\cref{eq:fj2a} and 
in \cref{eq:usestep2b}  we used \cref{eq:LC1}, which proves  \cref{eq:fjwelldefined} for $j=2$. Next, note that 
\begin{align}
\bigcup_{k_1,k_{2}=1}^{K} \bigcup_{r_1,r_{2}=1}^{2^{s-1}} \setB_{r_1,r_2}^{k_1,k_2}
&= \bigcup_{k_1,k_{2}=1}^{K} \bigcup_{r_1,r_{2}=1}^{2^{s-1}} \Big(\setB_{r_1}^{k_1}  \cap {\tilde f^{(s)}_2}^{-1}( \setC_{r_2}^{k_2})\Big)\\
&= \bigcup_{k_1=1}^{K} \bigcup_{r_1=1}^{2^{s-1}} \setB_{r_1}^{k_1} \cap \Big( \bigcup_{k_{2}=1}^{K} \bigcup_{r_{2}=1}^{2^{s-1}}  {\tilde f^{(s)}_2}^{-1}( \setC_{r_2}^{k_2}) \Big)\\
&=  [0,1]\cap{\tilde f^{(s)}_2}^{-1}\Big( \bigcup_{k_{2}=1}^{K} \bigcup_{r_{2}=1}^{2^{s-1}}  \setC_{r_2}^{k_2} \Big)\label{eq:useBkiri}\\
&= [0,1]\cap {\tilde f^{(s)}_2}^{-1}( [0,1]) \\
&=  [0,1],\label{eq:usewelldef1} 
\end{align}
where \cref{eq:useBkiri}  is by \cref{eq:usef10}--\cref{eq:usef11} and \cref{eq:usewelldef1}  follows from \cref{eq:fjwelldefined} for $j=2$, which establishes   \cref{eq:setBunion} for $j=2$.
Now, suppose that \cref{eq:fjwelldefined} and \cref{eq:setBunion} hold for $j<d$ and pick $x\in [0,1]$ arbitrarily.  Then, there must exist a $(k_1,\dots,k_{j})\in\setA_{j}$ and $r_1,\dots, r_{j}\in\{1,\dots,2^{s-1}\}$ such that $x\in \setB_{r_1,\dots,r_{j}}^{k_1,\dots,k_{j}} $. We therefore have 
\begin{align}
\tilde f^{(s)}_{j+1}(x) 
&= \hat f_{j+1}^{(k_1,\dots,k_j)}\circ  H_{r_j}   \circ   L_{k_{j}}   \circ \tilde f^{(s)}_j(x) \label{eq:usestep2A}\\
&\in \hat f_{j+1}^{(k_1)}\circ  H_{r_j}   \circ   L_{k_{j}} (\setC_{r_j}^{k_j})  \label{eq:usestep2aA}\\
&\subseteq \hat f_{j+1}^{(k_1)}([0,1])\label{eq:usestep2bA}\\
&=[0,1],
\end{align}
where \cref{eq:usestep2A} follows from  \cref{useintstep1}--\cref{eq:lastincuctionstep}, which proves  \cref{eq:fjwelldefined} for $j+1$. 
Finally, note that 
\begin{align}
&\bigcup_{k_1,\dots,k_{j+1}=1}^{K} \bigcup_{r_1,\dots,r_{j+1}=1}^{2^{s-1}} \setB_{r_1,\dots,r_{j+1}}^{k_1,\dots,k_{j+1}}\\
&= \bigcup_{k_1,\dots,k_{j+1}=1}^{K} \bigcup_{r_1,\dots,r_{j+1}=1}^{2^{s-1}} \Big(\setB_{r_1,\dots,r_{j}}^{k_1,\dots,k_{j}} \cap ((\tilde f^{(s)}_{j+1})^{-1}( \setC_{r_{j+1}}^{k_{j+1}}))\Big)\\
&= \bigcup_{k_1,\dots,k_{j}=1}^{K} \bigcup_{r_1,,\dots,r_{j}=1}^{2^{s-1}} \setB_{r_1,r_2,\dots,r_{j}}^{k_1,k_{2},\dots,k_{j}} \cap \Big( \bigcup_{k_{j+1}=1}^{K} \bigcup_{r_{j+1}=1}^{2^{s-1}}  (\tilde f^{(s)}_{j+1})^{-1}( \setC_{r_{j+1}}^{k_{j+1}}) \Big)\\
&= [0,1]\cap (\tilde f^{(s)}_{j+1})^{-1}\Big( \bigcup_{k_{j+1}=1}^{K} \bigcup_{r_{j+1}=1}^{2^{s-1}}  \setC_{r_{j+1}}^{k_{j+1}} \Big)\label{eq:useBkiriA}\\
&=  [0,1]\cap(\tilde f^{(s)}_{j+1})^{-1}( [0,1]) \\
&=  [0,1],\label{eq:usewelldef1A} 
\end{align}
where \cref{eq:useBkiriA}  is by \cref{eq:setBunion} for $j$ and \cref{eq:usewelldef1A}  follows from \cref{eq:fjwelldefined} for $j+1$, which establishes   \cref{eq:setBunion} for $j+1$.\\

\emph{Step 7:}   
 To establish \Cref{eq:propB2}, fix $(k_1,\dots, k_d)\in\setA_d$ and $r_1,r_2,\dots,r_d \in\{1,\dots,2^{s-1}\}$ arbitrarily. Then, we have 
\begin{align}
\mu\big(\setC_{r_1}^{k_1}\times\setC_{r_2}^{k_2}\dots \times \setC_{r_d}^{k_d} \big)
= \frac{w_{(k_1,\dots, k_d)}}{2^{d(s-1)}}. 
\end{align}
Combining \cref{eq:measureBupper} with \cref{eq:setBunion} yields 
\begin{align}\label{eq:measureBupper2}
(\tilde f^{(s)}_d\#\colL^{(1)}|_{[0,1]})\big(\setC_{r_1}^{k_1}\times\setC_{r_2}^{k_2}\dots \times \setC_{r_d}^{k_d} \big)
=\colL^{(1)}|_{[0,1]}(\setB_{r_1,\dots,r_{j}}^{k_1,\dots,k_{j}})
= \frac{w_{k_1,\dots,k_d}}{2^{d(s-1)}}
\end{align}
upon noting that $(\tilde f^{(s)}_d\#\colL^{(1)}|_{[0,1]})(\setI)=1$ and 
\begin{align}
\sum_{k_1,\dots,k_d=1}^K \sum_{r_1,\dots,r_d=1}^{2^{s-1}} \frac{w_{(k_1,\dots, k_d)}}{2^{d(s-1)}} =1. 
\end{align}
%worknow
  Next, we prove \cref{item3space}. 
\Cref{eq:propB3} follows from  \Cref{lem:kappa2} with $K2^{s-1}$ in place of $K$.  It remains to establish   \cref{eq:propB4}. To this end, note that the functions $\tilde f^{(s)}_j$ defined in 
\cref{eq:tildef1A}--\cref{eq:tildef1B}  and \cref{eq:fj1anew}--\cref{eq:fj2anew}  are piecewise linear on closed and convex sets.  \Cref{lem:lipconvex}   therefore implies 
\begin{align}\label{eq:needarea1aA}
\operatorname{Lip}(\tilde f^{(s)}_j) \leq \frac{2^{s(d-1)}N}{K}\quad\text{for $j=1,\dots,d$}, 
\end{align}
which proves \Cref{eq:propB4}. 

 It remains to establish \cref{item4space}. We first realize the individual mappings appearing in the composition of the mappings $\tilde f_j$ as ReLU neural networks.  
\begin{enumerate}
\renewcommand{\theenumi}{(\alph{enumi})}
\renewcommand{\labelenumi}{(\alph{enumi})}

\item \label{itemfjnna}Consider the affine mappings $W_1\colon\reals \to \reals^3$, $W_2\colon\reals^3 \to \reals^3$, and $W_3\colon\reals^3 \to \reals$ defined according to 
\begin{align}
W_1(x)&=
\begin{pmatrix}
2\\4\\2
\end{pmatrix}
x- 
\begin{pmatrix}
0\\2\\2
\end{pmatrix}\\[2mm]
W_2(x)&=
\begin{pmatrix}
2&-2&2\\
4&-4&4\\
2&-2& 2
\end{pmatrix}
x- 
\begin{pmatrix}
0\\2\\2
\end{pmatrix}\\[2mm]
W_3(x)&=
\begin{pmatrix}
1&-1&1
\end{pmatrix}
x,
\end{align}
respectively, 
and define, for every $s\in\naturals$, the ReLU neural network $\Phi_s\in\setN_{1,1}$ according to 
\begin{align}
\Phi_s=W_3\circ\rho\circ \underbrace{W_2\circ\rho\dots W_2\circ\rho}_{(s-1)\ \mathrm{times}} W_1 
\end{align}
with $\mathcal L(\Phi_s)=s+1$,  $\mathcal M(\Phi_s)=11s-3$, 
$\setK(\Phi_s)=\{0,1, 2,4,-1,-2,-4\}\subseteq \setD_{N,K}$, and $\setW(\Phi_s)=3$. Then, we have $\Phi_s=g_s$. 

\item \label{itemfjnnb}For  $\ell=1,\dots,K$, consider the affine mapping $L_\ell\colon \reals\to\reals$ defined according to $L_\ell(x)=Kx-\ell+1$. Further, define the affine mappings $V_1\colon \reals\to \reals^2$ and $V_\ell\colon \reals\to \reals$ according to 
\begin{align}
V_1(x)&=
\begin{pmatrix}
1\\-1
\end{pmatrix}
x\\[2mm]
V_2(x)&=
\begin{pmatrix}
K&-K
\end{pmatrix}
x -\ell+1 
\end{align}
and define, for $\ell=1,\dots,K$, the ReLU neural network $\Theta_\ell\in\setN_{1,1}$ according to 
\begin{align}
\Theta_\ell=V_2\circ\rho\circ V_1 
\end{align}
 with $\mathcal L(\Theta_\ell)=2$,  $\mathcal M(\Theta_\ell)=5$, 
$\setK(\Theta_\ell)=\{-K,\dots, K\}\subseteq \setD_{N,K}$,  and $\setW(\Theta_\ell)=2$. Then, we have $\Theta_\ell=L_\ell$. 

\item \label{itemfjnnc} Consider the mapping $\tilde f_1$ as defined in \cref{eq:tildef1A}--\cref{eq:tildef1B}.  
%and set  $b_0=0$  
% and  $b_{k}=\sum_{j=1}^{k}w_{j}$ for $k=1,\dots,K-1$. 
Further, consider the affine mappings 
$U_1\colon \reals\to \reals^{2K-1}$ and $U_2\colon \reals^{2K-1}\to \reals$ defined according to 
\begin{align} 
U_1(x)= 
\begin{pmatrix}
-x\\
x\\
x-b_1\\
x-b_1\\
\vdots\\
x-b_{K-1}\\
x-b_{K-1}
\end{pmatrix}
\end{align}
 and 
 \begin {align}
 U_2(x)= \tp{\begin{pmatrix}-x\\x/(Kw_1)\\ x/(Kw_2)\\-x/(Kw_1) \\\dots\\ x/(Kw_K) \\ - x/(Kw_{K-1})
 \end{pmatrix}},
 \end{align}
  respectively and define the ReLU neural network $\Psi_1\in\setN_{1,1}$ according to 
\begin{align}
\Psi_1=U_2\circ\rho\circ U_1 
\end{align}
 with $\mathcal L(\Psi_1)=2$,  $\mathcal M(\Psi_1)=4K-1$, and  $\setW(\Psi_1)=2K$, and 
  $\setK(\Psi_1)\subseteq \setD_{N,K}$. 
  Then, \Cref{lem:piecewiselinear} implies $\Psi_1=\tilde f_1$. 

\item \label{itemfjnnd}
Fix $(k_1,\dots, k_{j-1})\in\setA_{j-1}$ and $j=2,\dots,d$ arbitrarily, 
consider the mapping 
$\hat f_j^{(k_1,\dots, k_{j-1})}$  as defined in 
\cref{eq:hatfjA}--\cref{eq:hatfjB}. 
%Note that 
%  \begin{align}
%  b_{k_j|(k_1,\dots, k_{j-1})} ,\ \pm 1/(Kw_{k_j|(k_1,\dots, k_{j-1})})  
%  &\in  \setD_{N,K}
%  \end{align}
% for $k_j=0,\dots, K-1$. Using exactly the same approach as in 
Further, consider the affine mappings $U^{(k_1,\dots, k_{j-1})}_1\colon \reals\to \reals^{2K-1}$ and $U^{(k_1,\dots, k_{j-1})}_2\colon \reals^{2K-1}\to \reals$ defined according to 
\begin{align} 
U^{(k_1,\dots, k_{j-1})}_1(x)= 
\begin{pmatrix}
-x\\
x\\
x-b_{1|(k_1,\dots, k_{j-1})}\\
x-b_{1|(k_1,\dots, k_{j-1})}\\
\vdots\\
x-b_{K-1|(k_1,\dots, k_{j-1})}\\
x-b_{K-1|(k_1,\dots, k_{j-1})}
\end{pmatrix}
\end{align}
 and 
 \begin {align}
 U^{(k_1,\dots, k_{j-1})}_2(x)= 
 \tp{
 \begin{pmatrix}
 -x\\
 x/(Kw_{1|(k_1,\dots, k_{j-1})})\\ 
 x/(Kw_{2|(k_1,\dots, k_{j-1})})\\
 -x/(Kw_{1|(k_1,\dots, k_{j-1})}) \\
 \vdots\\ 
 x/(Kw_{K|(k_1,\dots, k_{j-1})}) \\ 
 - x/(Kw_{K-1|(k_1,\dots, k_{j-1})})
 \end{pmatrix}},
 \end{align}
  respectively and define the ReLU neural network 
$\Psi^{(k_1,\dots, k_{j-1})}_1\in\setN_{1,1}$ according to 
\begin{align}
\Psi_j^{(k_1,\dots, k_{j-1})}=U^{(k_1,\dots, k_{j-1})}_2\circ\rho\circ U^{(k_1,\dots, k_{j-1})}_1 
\end{align}
 with $\mathcal L(\Psi_j^{(k_1,\dots, k_{j-1})})=2$,  $\mathcal M(\Psi_j^{(k_1,\dots, k_{j-1})})=4K-1$,  $\setW(\Psi_j^{(k_1,\dots, k_{j-1})})=2K$,  
 and $ \setK(\Psi_j^{(k_1,\dots, k_{j-1})}) \subseteq \setD_{N,K}$. 
  Then, \Cref{lem:piecewiselinear} implies $\Psi_j^{(k_1,\dots, k_{j-1})}=\hat f_j^{(k_1,\dots, k_{j-1})}$. 

\item \label{itemfjnne}
Fix $j\in\{2,\dots,d\}$ arbitrarily and  
consider the mapping 
$\tilde f_j^{(s)}$  as defined in \cref{eq:tildefj1}--\cref{eq:tildefj2}. 
Combining the ReLU neural networks  in  \cref{itemfjnna}--\cref{itemfjnnd} by adding a void ReLU activation function $\rho$ between all the individual ReLU networks, we can construct, for every $(k_1,\dots, k_{j_1})\in \setA_{j-1}$, a ReLU neural network $\Psi^{(s)}_{(k_1,\dots, k_{j-1})}$ with\footnote{By adding void components of the form $1\circ \rho \dots 1 \circ \rho$ of  lengths $(d-j)(5+s)$, we can make all networks to have  depth $2+(d-1)(5+s)$.}
\begin{align}
\mathcal L(\Psi^{(s)}_{(k_1,\dots, k_{j-1})})&=2+(d-1)(5+s)\\
\mathcal M(\Psi^{(s)}_{(k_1,\dots, k_{j-1})})&=(d-j)(5+s)+(j-1)(4K+11s+1)  +4K-1\\
\mathcal W(\Psi^{(s)}_{(k_1,\dots, k_{j-1})})&=2K\\  
\setK(\Psi^{(s)}_{(k_1,\dots, k_{j-1})}) &\subseteq \setD_{N,K}
\end{align} 
such that 
\begin{align}\label{eq:sumtildefj}
\tilde f_j^{(s)}= \sum_{(k_1,\dots, k_{j-1})\in\setA_{j-1}}\Psi^{(s)}_{(k_1,\dots, k_{j-1})}.
\end{align}
\item 
\label{eq:itemfjnnfpre}
Fix $j\in\{2,\dots,d\}$ arbitrarily and  
consider the mapping 
$\tilde f_j^{(s)}$  as defined in \cref{eq:tildefj1}--\cref{eq:tildefj2}.  Combining \Cref{lem:P} with \cref{itemfjnne} and realizing the sum in \cref{eq:sumtildefj} as inner product with the all-ones vector of length $K^{j-1}$,   
we can construct a  ReLU neural network $\Psi^{(s)}_{j}$ with 
\begin{align}
\mathcal L(\Psi^{(s)}_{j})&=3+(j-1)(5+s)\\
\mathcal M(\Psi^{(s)}_{j})&=\Big((d-j)(5+s)+(j-1)(4K+11s+1)+4K\Big)K^{j-1}  \\
\mathcal W(\Psi^{(s)}_{j})&=2K^{j}\\  
\setK(\Psi^{(s)}_{j}) &\subseteq \setD_{N,K}
\end{align} 
such that $\tilde f_j^{(s)}=\Psi^{(s)}_{j}$.
\item \label{itemfjnnf}
Consider the ReLU neural networks $\Psi_j^{(s)}$ constructed in  \cref{eq:itemfjnnfpre}. Then, 
the  ReLU neural network $\Phi_s=P(\Psi_1^{(s)},\dots,\Psi_d^{(s)})$ has  the desired properties in \cref{item4space} upon noting that 
\begin{align}
&\sum_{j=1}^d \Big((d-j)(5+s)+(j-1)(4K+11s+1)+4K\Big)K^{j-1}\\
&\leq 4d(3s+2K+4)K^d/(K-1)
\end{align} 
and 
\begin{align}
2 \sum_{j=1}^d K^{j} 
&= 2K(K^d-1)/(K-1) \\
&\leq 4 K^d. 
\end{align}
Finally, \cref{eq:propB4psi} follows from  \cref{eq:propB4} and  \cref{eq:corpush}. 
\end{enumerate}
\end{proof}

We next combine  \Cref{thm:approxhisto,thm:quantunif}  to obtain space-filling approximations of arbitrary Borel measures on $[0,1]^d$. 

\begin{cor}\label{cor:mupushforward}
Let $\reals^d$ be equipped with the metric $\rho(x,y)=\lVert x-y\rVert_\infty$. 
Further, let  $K\in\naturals\setminus\{1\}$ and set 
\begin{align}
 \setD_{K} =\{ a/b: a\in\mathbb{Z},b\in\naturals, \abs{a}\leq 4K^{d+1} \ \text{and}\ b\leq 4K^{d+2}\}.  
\end{align}
Then, there exist a collection $\colG^{(d)}_K\subseteq \mathcal{N}_{1,d}$ of ReLU neural networks with $\abs{\colG^{(d)}_K}\leq (12K)^{K^d}$ such that, for every  Borel probability measure   $\mu$ on $\setR^d$ with $\operatorname{spt}(\mu)\subseteq [0,1]^d$, there is a $\Sigma\in\colG^{(d)}_K$ satisfying $\Sigma([0,1])\subseteq \setI$ and  
\begin{align}
W_1\Big( \mu,\Sigma\#(\mathcal{L}^{(1)}|_{[0,1]})\Big)\leq 3/K.\label{eq:corK1}
\end{align}
Moreover, the ReLU neural networks $\Sigma\in\colG^{(d)}_K$ have the  same architecture depending only on  $d$ and $K$ 
and satisfy
\begin{align}
\mathcal L(\Sigma)&=3+6(d-1)\label{eq:corK3}\\
\mathcal M(\Sigma)&\leq 44d K^d  \\
\mathcal W(\Sigma)&\leq 4 K^d\\ 
 \mathcal B(\Sigma)&=4K^{d+1}\\
\setK(\Sigma) &\subseteq \setD_K \label{eq:corK2}\\
\operatorname{Lip}(\Sigma) &\leq 2^{d+1}K^d.
\end{align}
\end{cor}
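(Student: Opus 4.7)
The approach is to chain \Cref{thm:approxhisto} with \Cref{thm:quantunif} via the triangle inequality for Wasserstein distance. The plan is to fix parameters $N := 4K^{d+1}$ and $s := 1$ throughout, and to define $\colG^{(d)}_K$ as the finite set of ReLU networks $\Phi_{N,K}^{(1)}$ produced by \cref{item4space} of \Cref{thm:quantunif} as the underlying $(1/N)$-quantized uniform mixture of resolution $K$ ranges over all admissible weight vectors $(w_k)_{k\in\setA_d}\in(\naturals/N)^{K^d}$ with $\sum_k w_k = 1$.

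Given any Borel probability measure $\mu$, I would first invoke \Cref{thm:approxhisto} with $N = 4K^{d+1}$ to produce a $(1/N)$-quantized uniform mixture $\tilde\mu$ of resolution $K$ with
\[
W_1(\mu,\tilde\mu) \;\leq\; \frac{4(K^d-1)}{N} + \frac{1}{K} \;\leq\; \frac{2}{K}.
\]
Then I would apply \Cref{thm:quantunif} to $\tilde\mu$ with $s = 1$ to obtain the ReLU network $\Sigma := \Phi_{N,K}^{(1)} \in \setN_{1,d}$ satisfying, by \cref{eq:propB3},
\[
W_1\bigl(\tilde\mu,\, \Sigma\#(\colL^{(1)}|_{[0,1]})\bigr) \;\leq\; \frac{1}{K\cdot 2^{s-1}} \;=\; \frac{1}{K}.
\]
The triangle inequality then yields $W_1(\mu, \Sigma\#(\colL^{(1)}|_{[0,1]})) \leq 3/K$, which is \cref{eq:corK1}.

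The architectural bounds \cref{eq:corK3}--\cref{eq:corK2} should arise from specializing \cref{eq:ReLUpush1}--\cref{eq:ReLUpush5} at $s=1$ and $N=4K^{d+1}$: the depth becomes $3+(d-1)(5+1)=3+6(d-1)$; the connectivity bound $4d(2K+3+4)K^d/(K-1)$ collapses to at most $44dK^d$ since $(2K+7)/(K-1)\leq 11$ for every $K\geq 2$; the width, weight-magnitude, and weight-set inclusion $\setD_{N,K}\subseteq \setD_K$ fall out directly from the definitions. The Lipschitz bound comes from \cref{eq:propB4psi}, which at $s=1$ and $N=4K^{d+1}$ gives $\operatorname{Lip}(\Sigma)\leq 2^{d-1}\cdot 4K^d = 2^{d+1}K^d$. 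For the cardinality, since $\Phi_{N,K}^{(1)}$ is completely determined by the weight vector $(w_k)_{k\in\setA_d}$, a stars-and-bars count bounds the number of such vectors by $\binom{4K^{d+1}}{K^d}\leq (4eK)^{K^d}\leq (12K)^{K^d}$, exactly as in the preceding metric-entropy corollary.

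None of the individual steps is genuinely hard: the content is really bookkeeping on two already-established theorems. The only non-trivial design choice, and hence the main obstacle to identify up front, is the simultaneous selection of $N$ and $s$. One needs $N\geq K^d$ for \Cref{thm:approxhisto,thm:quantunif} to apply; one needs $4(K^d-1)/N \leq 1/K$ for the quantization error to be controlled; one needs $s\geq 1$ for the space-filling term to stay $\leq 1/K$; and one needs the resulting weight-set $\setD_{N,K}$ and the Lipschitz bound $2^{s(d-1)}N/K$ to fit into $\setD_K$ and $2^{d+1}K^d$, respectively. The choice $N=4K^{d+1}$, $s=1$ is essentially the unique minimal choice balancing all four constraints; once locked in, the remainder of the proof is straightforward verification.
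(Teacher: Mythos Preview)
Your proposal is correct and follows essentially the same route as the paper's proof: the paper also fixes $N=4K^{d+1}$ and $s=1$, applies \Cref{thm:approxhisto} followed by \Cref{thm:quantunif}, combines the two error terms via the triangle inequality to get $3/K$, reads off the architectural bounds from \cref{eq:ReLUpush1}--\cref{eq:ReLUpush5} using $(2K+7)/(K-1)\leq 11$, and bounds the cardinality by the same stars-and-bars estimate $\binom{4K^{d+1}}{K^d}\leq (12K)^{K^d}$.
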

\begin{proof}
Arbitrarily fix a  Borel probability measure   $\mu$ on $\setR^d$ with $\operatorname{spt}(\mu)\subseteq [0,1]^d$ and  let $\tilde \mu$ be a $(1/N)$-quantized uniform mixture associated with $\mu$ according to  \Cref{thm:approxhisto} with $N=4K^{d+1}$. 
Further, consider the ReLU neural network $\Sigma:=\Phi_{4K^{d+1},K}^{(1)}$ 
from \cref{item4space} in \Cref{thm:quantunif} when applied to   $\tilde \mu$  with $N=4K^{d+1}$ and   $s=1$. 
Then, we have $K([0,1])\subseteq \setI$ thanks to \cref{item2space} and  \cref{item4space} in \Cref{thm:quantunif}  and  
\begin{align}
W_1\Big( \mu,\Sigma\#(\mathcal{L}^{(1)}|_{[0,1]})\Big)
&\leq  W_1(\mu,\tilde \mu)  + W_1\Big( \tilde \mu,\Sigma\#(\mathcal{L}^{(1)}|_{[0,1]})\Big)\\
&\leq (K^d-1)/K^{d+1}  + 2/K\label{eq:useboththeorems}\\
&\leq   3/K , 
\end{align}
where \cref{eq:useboththeorems} follows from 
\Cref{thm:approxhisto} and \cref{item3space}--\cref{item4space} in 
 \Cref{thm:quantunif}. 
Moreover,  $\Sigma$ satisfies \cref{eq:corK1}--\cref{eq:corK2} with architecture depending only on  $d$ and $K$ owing to 
\cref{item4space} in \Cref{thm:quantunif}  upon noting that 
$(2K+7)/(K-1)\leq 11$. Finally, set 
\begin{align}
\setA_d=\{k\in\naturals^d :\lVert k\rVert_\infty\leq K \}
\end{align}
and note that $\Sigma$ has fixed architecture and, therefore,  is completely determined by  the set of $1/N$-quantized weights $\{w_k\}_{k\in\setA_d}$ 
of  $\tilde \mu$. 
 But
\begin{align}
\abs {\{w_k\}_{k\in\setA_d}} 
&\leq \binom{N-1}{K^d-1}\\
&\leq \binom{N}{K^d}\\
& = \binom{4K^{d+1}}{K^d}\\
&\leq ( 12K)^{K^d},\label{eq:usesterling}
\end{align}
where \cref{eq:usesterling} follows from $\binom{n}{k}\leq (en/k)^k$, which establishes $\abs{\colG^{(d)}_K}\leq (12K)^{K^d}$. 
 \end{proof}

\Cref{cor:mupushforward} implies that  the number of ReLU neural networks  required to approximate an arbitrary  Borel probability measure  $\mu$  supported on  $[0,1]^d$  through push-forward of $\mathcal{L}^{(1)}|_{[0,1]}$  with error no larger than $\varepsilon$ is upper-bounded by 
\begin{align}
(12\lceil 3/\varepsilon\rceil)^{\lceil 3/\varepsilon\rceil^d }. 
\end{align}

\appendix

\section{Properties of Lipschitz Mappings}\label{sec:LIP}
In this section, we  state  basic definitions and properties of Lipschitz  mappings required throughout the paper. We start with the definition of Lipschitz constants, gradients, and generalized Jacobian determinants  of Lipschitz mappings.   
\begin{dfn}\label{dfn:Lip}
%and let $f=\tp{(f_1,\dots,f_n)}\colon \setA \to\reals^n$.  
%If $f$ is  essentially bounded, 
%We designate
%\begin{align} 
%\lVert  f\rVert_{L_\infty(\setA)} =\max_{k=1,\dots,n} \lVert  f_k\rVert_{L_\infty(\setA)}.  
%\end{align}
Let $\setA\subseteq \reals^m$ and consider the Lipschitz mapping  $f\colon\setA\to \reals^n$.  
For $p\in[1,\infty]$,  we set 
%For a Lipschitz mapping $f \colon \setA\subseteq \reals^m \to \reals^n$, we set 
\begin{align}
\operatorname{Lip}^{(p)}(f)=\inf\Bigg\{s  \in[0,\infty):  \sup_{\substack{x,y\in\setA\\ x\neq y}} \frac{\lVert f(x)-f(y)\rVert_\infty}{\lVert x-y\rVert_p} \leq s\Bigg\}
\end{align}
and let  $\operatorname{Lip}(f):=\operatorname{Lip}^{(\infty)}(f)$. 
We define the gradient of $f$ according to 
 $
\nabla f = 
(\partial_{x_1}f, \partial_{x_2}f,\dots,\partial_{x_m}f)$ at each point where $\nabla f$ exists and write 
\begin{align}\label{eq:Jacobian}
Jf= 
\begin{cases}
\sqrt{\det(\tp{(\nabla f)} \nabla f))} &\text{if $m\leq n$}\\[1mm]
\sqrt{\det( (\nabla f)\tp{\nabla f})} &\text{if $m> n$}
\end{cases}
\end{align}
for the generalized Jacobian determinant of $f$. 
%\partial_{x_1}f_2& \partial_{x_2}f_2&\dots&\partial_{x_m}f_2\\
%\vdots&\vdots&\vdots&\vdots\\
%\partial_{x_1}f_n& \partial_{x_2}f_n&\dots&\partial_{x_m}f_n
%\end{pmatrix} 
%\end{align}
\end{dfn}
%The only place is the paper where we use $$
Rademacher's theorem \cite[Theorem 5.1.11]{krpa08} ensures that  $\nabla f$ and $Jf$ in \Cref{dfn:Lip} are defined up to a set of Lebesgue measure zero and Lebesgue-measurable.
%\begin{thm}\label{thm:rademacher}
%\end{thm}
We next state an elementary property of Lipschitz functions on convex and closed domains that allows to upper-bound the Lipschitz by the Lipschitz constants on the individual   subdomains. 

\begin{lem}\label{lem:lipconvex}
Let $\setA, \setB_1, \dots, \setB_n\subseteq\reals^m$ be convex and closed satisfying  $\setA=\bigcup_{i=1}^n \setB_i$. Further, let $f\colon \setA\to\reals$ be Lipschitz. Then, we have 
\begin{align}\label{eq:lipfconvex}
\operatorname{Lip}^{(1)}(f)\leq \max_{i=1,\dots, n} \operatorname{Lip}^{(1)}(f|_{\setB_i}). 
\end{align}  
\end{lem}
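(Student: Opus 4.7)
The plan is to reduce to a one-dimensional argument along the segment joining two arbitrary points $x,y\in\setA$. Since $\setA$ is convex, the affine path $\gamma(t)=(1-t)x+ty$, $t\in[0,1]$, lies entirely in $\setA$, so I would pull each $\setB_i$ back to $\setT_i=\gamma^{-1}(\setB_i)\subseteq[0,1]$. Because $\gamma$ is continuous and affine and each $\setB_i$ is closed and convex, each $\setT_i$ is a closed convex subset of $[0,1]$, hence a (possibly empty or singleton) closed sub-interval. Since $\setA=\bigcup_{i=1}^n\setB_i$, the sets $\setT_1,\dots,\setT_n$ cover $[0,1]$.

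Next, I would collect into a finite set $\setP$ the points $0$, $1$, and the endpoints of all non-empty $\setT_i$, and enumerate $\setP$ in increasing order as $0=t_0<t_1<\dots<t_K=1$. Each open sub-interval $(t_k,t_{k+1})$ then contains no endpoint of any $\setT_i$, which forces $(t_k,t_{k+1})\subseteq\setT_{i_k}$ for some $i_k$ (otherwise an endpoint of some $\setT_i$ would necessarily fall inside $(t_k,t_{k+1})$); closedness of $\setT_{i_k}$ then yields $[t_k,t_{k+1}]\subseteq\setT_{i_k}$. Writing $x_k=\gamma(t_k)$, the endpoints $x_k, x_{k+1}$ therefore both lie in $\setB_{i_k}$.

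To finish, I would combine the triangle inequality with the local Lipschitz bound on each $\setB_{i_k}$:
\begin{align}
\abs{f(x)-f(y)}
&\leq \sum_{k=0}^{K-1}\abs{f(x_k)-f(x_{k+1})}
\leq \Big(\max_{i=1,\dots,n} \operatorname{Lip}^{(1)}(f|_{\setB_i})\Big)\sum_{k=0}^{K-1}\lVert x_k-x_{k+1}\rVert_1.
\end{align}
Since $x_{k+1}-x_k=(t_{k+1}-t_k)(y-x)$ and $\sum_{k=0}^{K-1}(t_{k+1}-t_k)=1$, the right-hand sum telescopes to $\lVert x-y\rVert_1$, and dividing by $\lVert x-y\rVert_1$ and taking the supremum over $x\neq y$ in $\setA$ yields \cref{eq:lipfconvex}.

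The only step requiring genuine thought is the partitioning argument used to cut $[0,1]$ into pieces each lying in a single $\setT_i$; beyond that the proof is routine. Note that convexity of $\setA$ is used only to ensure $\gamma([0,1])\subseteq\setA$, and closedness of the $\setB_i$ is used to pass from the open sub-intervals $(t_k,t_{k+1})$ to their closures.
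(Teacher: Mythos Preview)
Your proof is correct and follows the same overall strategy as the paper: restrict to the segment $[x,y]\subseteq\setA$, partition it into finitely many sub-segments each contained in a single $\setB_i$, and then apply the triangle inequality together with the fact that the $\ell_1$-lengths telescope to $\lVert x-y\rVert_1$.

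The difference lies in how the partition is produced. The paper builds the subdivision points greedily: starting at $x$, it invokes an auxiliary lemma (\Cref{lem:conv1}) to find some $\setB_{i_1}$ containing an initial piece of the segment, walks to the last point of the segment still in $\setB_{i_1}$, and repeats; termination is argued by showing the successively chosen sets are distinct. Your argument instead pulls everything back to $[0,1]$ via the affine parametrization, observes that each $\setT_i=\gamma^{-1}(\setB_i)$ is a closed sub-interval, and takes the subdivision points to be the collected endpoints of the $\setT_i$. This is cleaner: it avoids the separate auxiliary lemma and the iterative construction, and the termination is immediate since there are at most $2n+2$ endpoints. The paper's approach, on the other hand, makes slightly more explicit why the procedure cannot revisit a previously used $\setB_i$, but both yield the same conclusion with comparable effort.
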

\begin{proof}
Fix $x,y\in\setA$ with $x\neq y$ but otherwise arbitrari and set $\colL=\{ty+(1-t)x:t\in [0,1]\}$. 
We next establish the existence of $k\in\naturals$ and  $x_0,\dots,x_k\in\setL$ satisfying the following properties: 
(i) $x_0=x$, (ii) $x_k=y$, and (iii) for  $i=1,\dots, k$, there exists a $\setB_{j_i}$ such that $x_i,x_{i-1}\in \setB_{j_i}$. Set $x_0=x$.  
Owing to \Cref{lem:conv1} below, there exists an $\varepsilon_1\in (0,1]$ and a $\setB_{i_1}$ such that 
$\{ty+(1-t)x_0:t\in [0,\varepsilon_1]\}\subseteq\setB_{i_1}$. If $y\in \setB_{i_1}$, then set $x_1=y$ and we 
we  have established the statement with $k=1$.  
If $y\notin \setB_{i_1}$, then set $t_1=\max\{{t\in [0,1]: ty+(1-t)x_0\in\setB_{i_1}\}}$
and $x_1= t_1y+(1-t_1)x_0$. Now, again by invoking \Cref{lem:conv1}, 
there exists an $\varepsilon_2\in (0,1]$ and a $\setB_{i_2}$ such that 
$\{ty+(1-t)x_1:t\in [0,\varepsilon_1]\}\subseteq\setB_{2}$.  
By construction, $\setB_{i_2}\neq \setB_{i_1}$. 
If $y\in \setB_{i_2}$, then set $x_2=y$ and we have established the statement with $k=2$.  
If $y\notin \setB_{i_2}$, then set $t_2=\max\{{t\in [0,1]: ty+(1-t)x_1\in\setB_{i_2}\}}$
and $x_2= t_2y+(1-t_2)x_1$. Now, continue this way to find points $x_1,\dots, x_j$ 
and $\setB_{i_1}$, \dots $\setB_{i_j}$ with $x_0,x_1\in \setB_{i_1}$, \dots, 
$x_{j-1},x_j\in \setB_{i_j}$ and $y\notin \setB_{i_\ell}$ for $\ell=1,\dots,j-1$. 
If $y\in \setB_{i_j}$, then set $x_j=y$ and we have established the statement with $k=j$.  
 Now, the sets $\setB_{i_1},\dots,\setB_{i_j}$  satisfy $\setB_{i_s}\neq \setB_{i_{s-1}}$ for $s=1,\dots, j$ by construction, which implies that they are pairwise different  owing to the assumption that they are all convex and contain line segments of $\setL$.   
 Since there are a finite number of sets $\setB_i$,  this procedure has to stop after a finite number of steps $k\leq n$ with  $y\in \setB_{i_k}$.  
We therefore have 
\begin{align}
\abs{f(y)-f(x)}
&\leq \sum_{i=1}^k \abs{f(x_i)-f(x_{i-1})}\\ 
&\leq \sum_{i=1}^k  \operatorname{Lip}^{(1)}(f|_{\setB_{j_i}})\lVert x_i-x_{i-1}\rVert_1 \label{eq:explain1}\\
&\leq  \max_{i=1,\dots, n} \operatorname{Lip}^{(1)}(f|_{\setB_i}) \sum_{i=1}^k \lVert x_i-x_{i-1}\rVert_1 \\
&=  \max_{i=1,\dots, n} \operatorname{Lip}^{(1)}(f|_{\setB_i})  \lVert y-x\rVert_1,\label{eq:explain2}  
\end{align}
where \cref{eq:explain1} follows from (iii) above and in \cref{eq:explain2} we used 
\begin{align}
\sum_{i=1}^k \lVert x_i-x_{i-1}\rVert_1
&=\sum_{i=1}^k \lVert (t_i-t_{i-1})y-(t_i-t_{i-1})x\rVert_1\\
&= \sum_{i=1}^k(t_i-t_{i-1}) \lVert y-x\rVert_1\\
&=\lVert y-x\rVert_1, 
\end{align}
which establishes \cref{eq:lipfconvex} as $x,y$ were  arbitrary. 
\end{proof}
\
\begin{lem}\label{lem:conv1}
Let $\setA, \setB_1, \dots, \setB_n\subseteq\reals^m$ be convex and closed satisfying $\setA=\bigcup_{i=1}^n \setB_i$. Then, for every $x,y\in\setA$, there exists an $\varepsilon \in (0,1]$ and a $\setB_{i_0}$ such that $\{ty +(1-t)x: t\in [0,\varepsilon]\}\subseteq \setB_{i_0}$.  
\end{lem}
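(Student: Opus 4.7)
The plan is to prove this by isolating the initial portion of the segment from $x$ to $y$ from those $\setB_j$ that do not contain $x$, and then using convexity of those that do to argue that one of them covers the whole initial portion. The case $y=x$ is trivial (pick any $i_0$ with $x \in \setB_{i_0}$ and $\varepsilon = 1$), so assume $y \neq x$.

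First I would partition the index set by setting $\setJ = \{j : x \in \setB_j\}$, which is nonempty since $x \in \setA$. For each $j \notin \setJ$, closedness of $\setB_j$ together with $x \notin \setB_j$ gives $d_\infty(x, \setB_j) > 0$; taking the minimum over the finitely many such $j$ (or $+\infty$ if $\setJ^c$ is empty) yields a $\delta_0 > 0$. Setting $\varepsilon_1 = \min\{1, \delta_0/(2\lVert y - x \rVert_\infty)\} \in (0,1]$, I get that for every $t \in [0,\varepsilon_1]$,
\begin{equation*}
\lVert ty + (1-t)x - x\rVert_\infty = t\lVert y-x\rVert_\infty \leq \delta_0/2 < \delta_0,
\end{equation*}
so $ty + (1-t)x$ avoids every $\setB_j$ with $j \notin \setJ$. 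Since $ty+(1-t)x \in \setA$, it must lie in $\bigcup_{j\in\setJ}\setB_j$.

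Next I would exploit convexity. For each $j \in \setJ$, define $T_j = \{t \in [0,1] : ty + (1-t)x \in \setB_j\}$. This set is closed as the preimage of $\setB_j$ under a continuous map. Moreover, if $t \in T_j$ and $s \in [0,t]$, then writing $sy+(1-s)x = (s/t)\bigl(ty+(1-t)x\bigr)+(1-s/t)x$ (for $t>0$; the case $t=0$ is trivial) and using $x \in \setB_j$, $ty+(1-t)x \in \setB_j$, and convexity of $\setB_j$, we get $sy+(1-s)x \in \setB_j$, so $s \in T_j$. Hence $T_j = [0, a_j]$ for some $a_j \in [0,1]$.

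Finally, the previous paragraph gives $[0,\varepsilon_1] \subseteq \bigcup_{j\in\setJ}[0,\min\{a_j,\varepsilon_1\}] = [0, \max_{j\in\setJ}\min\{a_j,\varepsilon_1\}]$, which forces $\max_{j\in\setJ}\min\{a_j,\varepsilon_1\} \geq \varepsilon_1$. Since each $\min\{a_j,\varepsilon_1\} \leq \varepsilon_1$, the maximum equals $\varepsilon_1$, so there exists $i_0 \in \setJ$ with $a_{i_0} \geq \varepsilon_1$. Then $[0,\varepsilon_1] \subseteq T_{i_0}$, i.e.\ $\{ty+(1-t)x : t \in [0,\varepsilon_1]\} \subseteq \setB_{i_0}$, and taking $\varepsilon = \varepsilon_1$ finishes the proof. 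The only step where one must be slightly careful is the convexity argument showing $T_j$ is a downward closed interval, but once $x \in \setB_j$ is assumed (as it is for $j \in \setJ$), this is a direct convex-combination identity.
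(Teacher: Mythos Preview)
Your proof is correct and follows essentially the same approach as the paper: both partition the indices according to whether $x\in\setB_j$, use closedness of the $\setB_j$ not containing $x$ to isolate an initial segment, and use convexity of the $\setB_j$ containing $x$ to show one of them swallows that segment. The only difference is stylistic: the paper argues by contradiction (constructing a sequence $x_k=(1/k)y+(1-1/k)x$ that would have to lie in the closed set $\bigcup_{j\notin\setJ}\setB_j$ yet converge to $x\notin$ that set), whereas you give a direct construction with an explicit $\varepsilon_1$.
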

\begin{proof}
Toward a contradiction, take $x,y\in\setA$ such that the statement of the lemma is false and set 
$\setC=\bigcup_{i\in\{1,\dots n\}:x\in\setB_i} \setB_i$ and $\setD=\bigcup_{i\in\{1,\dots n\}:x\notin\setB_i} \setB_i$. 
Now, fix $k\in\naturals$ and a $\setB_i$ with $x\in \setB_i$, both  arbitrarily. 
We can hence  find  a $t_k^{(i)}\in (0,1/k]$ such that $t_k^{(i)}y +(1-t_k^{(i)})x\notin \setB_i$. 
As $\setB_i$ is convex and $x\in \setB_i$, this implies    $ty +(1-t)x\notin \setB_i$ for all  $t\in [t_k^{(i)},1]$. By the arbitrariness of $\setB_i$, this implies in turn that $x_k:=(1/k)y +(1-1/k)x\in \setD$.  Since $k$ was arbitrary, we can conclude that $x_k\in\setD$ for all $k\in\naturals$. But $x\notin \setD$ and $\lim_{k\to\infty}x_k=x$, which stands in  contradiction to the fact that $\setD$ as the finite union of closed sets is closed.  
\end{proof}

Finally, we need the following version of the area formula.  

\begin{thm}\label{thm:area}\cite[Corollary 5.1.13]{krpa08},\cite[Theorem 2.71]{amfupa00} Let $m,n\in\naturals$ with $m\leq n$. Further, 
let $\setA\subseteq \reals^m$ be Lebesgue measurable and let $f=\tp{(f_1,\dots,f_n)}\colon \reals^m \to\reals^n$ be Lipschitz. Finally, let $g\colon \setA\to[0,\infty)$ be Lebesgue measurable and define $h\colon \reals^n\to [0,\infty]$ according to 
\begin{align}
h(y)=\sum_{x\in\setA \cap f^{-1}(y)} g(x).
\end{align}
Then, $h$ is $\setH^m$-measurable and  
\begin{align}
\int_\setA g \,Jf \,\mathrm d \setL^m=\int_{\reals^n} h\,\mathrm d\setH^m
\end{align}
with $Jf$ as defined in \cref{eq:Jacobian}. 
\end{thm}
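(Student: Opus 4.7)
The plan is to establish the area formula by reducing progressively more general maps to the affine case. I will move through three stages: (i) injective affine maps, (ii) $C^1$ maps, and (iii) general Lipschitz maps, and at the end I will promote from indicator functions to nonnegative measurable $g$ by monotone convergence, which also delivers the measurability of $h$.

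First, I would verify the formula when $f(x) = Lx + b$ with $L \in \reals^{n\times m}$ of rank $m$. Then $Jf = \sqrt{\det(\tp{L}L)}$ is a positive constant, $f$ is injective, and $h(y) = g(f^{-1}(y))\ind{f(\setA)}(y)$. The identity $\int_\setA g\cdot Jf\, \mathrm d\colL^m = \int h\, \mathrm d\colH^m$ then reduces to the fact that under an isometric embedding $\colL^m$ pushes forward to $\colH^m$, composed with the determinant scaling of $L$. This is elementary linear algebra plus the behaviour of Hausdorff measure under injective linear maps.

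Second, for $C^1$ maps I would use local linearization. Split $\setA = \{Jf = 0\}\cup\{Jf > 0\}$. On the good set, for any $\varepsilon>0$ produce a countable Borel partition $\{Jf>0\} = \bigsqcup_k E_k$ together with injective affine maps $L_k$ such that $\lVert f|_{E_k} - L_k\rVert$ and $\lVert \nabla f - \nabla L_k\rVert$ are within $\varepsilon$ and $f|_{E_k}$ is injective with $Jf$ almost constant on $E_k$. Applying the affine result on each $E_k$ and letting $\varepsilon\to 0$ yields the formula on $\{Jf>0\}$. For the degenerate set $\{Jf = 0\}$, one shows $\colH^m(f(\{Jf=0\})) = 0$ via a classical rank-deficiency argument: cover this set by small cubes and estimate the Hausdorff measure of the image by noting $\nabla f$ has rank $<m$ on the nearby linear approximation.

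Third, and this is where the main obstacle lies, I would pass from $C^1$ to Lipschitz via Rademacher's theorem \cite[Theorem 5.1.11]{krpa08}, which gives $\colL^m$-a.e. differentiability of $f$ on $\setA$. The Whitney extension theorem (or Federer's measurable selection argument) then produces a countable collection of $C^1$ maps $g_k\colon\reals^m\to\reals^n$ such that the sets $F_k := \{x\in\setA : f(x)=g_k(x),\, \nabla f(x)=\nabla g_k(x)\}$ satisfy $\colL^m(\setA \setminus \bigcup_k F_k) = 0$. Disjointifying into $\tilde F_k$ and applying step (ii) to each $g_k$ on $\tilde F_k$ gives the formula for the restriction of $f$ to $\bigcup_k \tilde F_k$; for the $\colL^m$-null residual set $\setN$, the Lipschitz bound yields $\colH^m(f(\setN)) \leq \operatorname{Lip}(f)^m\,\colH^m(\setN) = 0$ (here I invoke the standard Lipschitz distortion estimate for Hausdorff measure), so the residual set contributes zero to both sides.

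The main obstacle is precisely this last step: constructing the $C^1$ pieces via Whitney extension, ensuring they cover $\setA$ up to a Lebesgue-null set, and then arguing that both the left-hand side integrand $g\cdot Jf$ and the right-hand side counting-integral are stable under discarding that null set. Once these pieces are in place, the measurability of $h(y) = \sum_{x\in \setA\cap f^{-1}(y)} g(x)$ follows from writing $h$ as a countable sum $\sum_k g(f|_{\tilde F_k})^{-1}\ind{f(\tilde F_k)}$ of $\colH^m$-measurable functions, and the final identity for general nonnegative $g$ follows by approximating $g$ from below by simple functions and applying monotone convergence on both sides.
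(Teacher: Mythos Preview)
The paper does not prove this theorem: it is simply quoted from \cite[Corollary 5.1.13]{krpa08} and used as a black box elsewhere in the paper. There is therefore no ``paper's own proof'' to compare against.

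That said, your sketch follows exactly the classical route one finds in standard references (Federer, Evans--Gariepy, and indeed Krantz--Parks): reduce to injective affine maps, pass to $C^1$ via local linearization and a Sard-type argument on the set $\{Jf=0\}$, and then handle Lipschitz maps through Rademacher's theorem together with a Whitney/Lusin-type $C^1$ approximation, finishing with monotone convergence for general $g$. The outline is correct and the obstacles you flag (constructing the $C^1$ pieces covering $\setA$ up to an $\colL^m$-null set, and showing the residual null set contributes nothing on either side) are exactly the genuine technical points of the proof.
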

%\begin{cor}\label{cor:area}
%let $\setA\subseteq \reals^m$ be Lebesgue measurable and let $f=\tp{(f_1,\dots,f_n)}\colon \setA \to\reals^n$ be Lipschitz. Further, let $\setC\subseteq \setA$ be  Lebesgue measurable with $(Jf)(x)=0$ for all $x\in\setC$. Then, we have $\colH^m(f(\setC))=0$.
%\end{cor}
%\begin{proof}
%We have 
%\begin{align}
%\colH^m(f(\setC))
%&\leq \int_{f(\setC)} \abs{x\in\setC: f(x)=y} \,\mathrm d\setH^m \\
%&=\int_\setA Jf \,\mathrm d \setL^m\label{eq:useareaxx}\\
%&=0,  
%\end{align}
%where \cref{eq:useareaxx} follows from \Cref{thm:area}. 
%\end{proof}

\section{Tools from General Measure Theory}\label{sec:GMT}
This appendix summarizes the main definitions and properties in general measure theory needed in this paper. 
We follow mainly the exposition in 
\cite[Section 1]{ma99}. % and \cite[Sections 1--2]{krpa08}

\begin{dfn}\label{dfn:measure}
Consider the metric space $(\setX,\rho)$ and let  $\colP(\setX)=\{\setA:\setA\subseteq \setX\}$. A measure 
$\mu$ is a  mapping $\mu\colon \colP(\setX)\to [0,\infty]$ satisfying 
\begin{enumerate}[itemsep=1ex,topsep=1ex]
\renewcommand{\theenumi}{(\roman{enumi})}
\renewcommand{\labelenumi}{(\roman{enumi})}
\item $\mu(\emptyset)=0$;\label{item:m1}
\item $\mu(\setE)\leq \mu(\setF)$ for all $\setE,\setF\subseteq\setX$ with $\setE\subseteq\setF$; \label{item:m2}
\item $\mu\Big(\bigcup_{i\in\naturals} \setE_i \Big)\leq \sum_{i\in\naturals} \mu(\setE_i)$ for all $\setE_1, \setE_2,\dots\subseteq\setX$.\label{item:m3}
\end{enumerate}
\end{dfn} 

\begin{dfn}
Let $(\setX,\rho)$ be a metric space equipped with a measure  $\mu$. If $\setA\subseteq\setX$ satisfies $\mu(\setX\setminus\setA)=0$, then we say that $\mu$ is supported on $\setA$. 
We set 
\begin{align}
\operatorname{spt}(\mu)=\setX\setminus \bigcup \{\setV\subseteq\setX : \setV\ \text{is open and}\ \mu(\setV)=0 \}, 
\end{align}
which is the smallest closed support set. 
\end{dfn}

\begin{dfn}
Let $(\setX,\rho)$ be a metric space equipped with the measure  $\mu$.  
A set $\setA\subseteq \setX$ is called $\mu$-measurable if 
\begin{align}
\mu(\setE)= \mu(\setE\cap \setA) + \mu(\setE\setminus\setA)\quad\text{for all $\setE\subseteq \setX$.}
\end{align}
We set $\colM(\mu)=\{\setA\subseteq \setX: \setA \ \text{is $\mu$-measurable}\}$.   
\end{dfn}

\begin{thm}\cite[Theorem 1.4]{ma99}\label{thm:measzero}
Let $(\setX,\rho)$ be a metric space equipped with the measure  $\mu$. Then, $\setM(\mu)$ is a $\sigma$-algebra containing all sets of measure zero. Moreover, $\mu$ is countably additive on $\setM(\mu)$.  
\end{thm}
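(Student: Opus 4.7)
The statement is the classical Carath\'eodory extension theorem, so the plan is to verify the $\sigma$-algebra axioms for $\setM(\mu)$ one at a time, then establish countable additivity. I would begin with the easy structural closures. Closure under complementation is immediate, since the defining identity $\mu(\setE)=\mu(\setE\cap \setA)+\mu(\setE\setminus \setA)$ is symmetric in $\setA$ and $\setX\setminus \setA$, and $\emptySet\in \setM(\mu)$ is trivial. For sets of measure zero, given $\mu(\setA)=0$ and any $\setE\subseteq \setX$, monotonicity (\cref{item:m2} in \Cref{dfn:measure}) gives $\mu(\setE\cap \setA)\leq \mu(\setA)=0$ and $\mu(\setE\setminus \setA)\leq \mu(\setE)$, so $\mu(\setE\cap \setA)+\mu(\setE\setminus \setA)\leq \mu(\setE)$; the reverse inequality follows from countable subadditivity (\cref{item:m3}).

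Next I would prove closure under finite unions, which reduces by induction to the case of two sets $\setA_1, \setA_2\in \setM(\mu)$. Given an arbitrary test set $\setE$, apply the measurability of $\setA_1$ to $\setE$, then the measurability of $\setA_2$ to the piece $\setE\setminus \setA_1$, and recombine using subadditivity to obtain
\begin{align}
\mu(\setE)
&= \mu(\setE\cap \setA_1) + \mu((\setE\setminus \setA_1)\cap \setA_2) + \mu(\setE\setminus (\setA_1\cup \setA_2)) \nonumber\\
&\geq \mu(\setE\cap (\setA_1\cup \setA_2)) + \mu(\setE\setminus (\setA_1\cup \setA_2)).
\end{align}
The reverse inequality is again subadditivity. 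Combined with complementation, this also yields closure under finite intersections and set differences.

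The main obstacle is upgrading to countable unions and establishing countable additivity. Here I would use a disjointification trick: given $\setA_1,\setA_2,\dots\in \setM(\mu)$, replace them by $\setB_k=\setA_k\setminus \bigcup_{i<k}\setA_i$, which are pairwise disjoint and still lie in $\setM(\mu)$ by the finite case. Set $\setC_n=\bigcup_{k=1}^n \setB_k$ and $\setC=\bigcup_{k\in\naturals}\setB_k$. An induction using measurability of $\setB_n$ applied to $\setE\cap \setC_n$ gives
\begin{align}
\mu(\setE\cap \setC_n) = \sum_{k=1}^n \mu(\setE\cap \setB_k)
\end{align}
for every test $\setE$. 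Since $\setC_n\in \setM(\mu)$ and $\setE\setminus \setC_n\supseteq \setE\setminus \setC$, monotonicity and the finite-case identity yield
\begin{align}
\mu(\setE) \geq \sum_{k=1}^n \mu(\setE\cap \setB_k) + \mu(\setE\setminus \setC).
\end{align}
Letting $n\to\infty$ and invoking subadditivity on the remainder $\bigcup_k(\setE\cap \setB_k)=\setE\cap \setC$ produces
\begin{align}
\mu(\setE) \geq \mu(\setE\cap \setC) + \mu(\setE\setminus \setC),
\end{align}
and the reverse inequality is subadditivity. Hence $\setC\in \setM(\mu)$, so $\setM(\mu)$ is a $\sigma$-algebra. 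Finally, specializing the same limiting identity to $\setE=\setC$ gives $\mu(\setC)=\sum_{k\in\naturals}\mu(\setB_k)$, and if the original $\setA_k$ were already pairwise disjoint then $\setB_k=\setA_k$, establishing countable additivity on $\setM(\mu)$.
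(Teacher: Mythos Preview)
Your proof is the standard Carath\'eodory argument and is correct. Note, however, that the paper does not actually prove this statement: it is quoted with a citation to \cite[Theorem 1.4]{ma99} and no proof is supplied in the paper itself, so there is no ``paper's own proof'' to compare against. Your write-up is exactly what one finds in Mattila's book or any standard geometric measure theory text.
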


\begin{dfn}\label{dfn:Borel}
Let $(\setX,\rho)$ be a metric space equipped with the measure  $\mu$. Then, $\mu$ is called
\begin{enumerate}[itemsep=1ex,topsep=1ex]
\renewcommand{\theenumi}{(\roman{enumi})}
\renewcommand{\labelenumi}{(\roman{enumi})}
\item locally finite if, for every $x\in\setX$, there exists an $r\in(0,\infty)$ such that $\mu(\{y\in\setX:d(x,y)\leq r \})<\infty$;
\item Borel if $\colM(\mu)$ contains all  Borel sets;
\item  \label{item:Borelregular} Borel regular if it is Borel and for every $\setE\subseteq\setX$, there exists a Borel set $\setB\subseteq\setX$ such that $\setE\subseteq\setB$ and $\mu(\setE)=\mu(\setB)$; 
\item a Radon measure if it is Borel and satisfies the following properties:
\begin{enumerate}
\item $\mu(\setK)<\infty$ for all compact sets $\setK\subseteq \setX$;
\item $\mu(\setV)=\sup\{\mu(\setK):\setK\subseteq\setV, \setK \ \text{is compact} \}$ for all open sets $\setV\subseteq\setX$;
\item $\mu(\setA)=\inf\{\mu(\setV):\setA\subseteq\setV, \setV \ \text{is open}\}$ for all  sets $\setA\subseteq\setX$.
\end{enumerate}
\end{enumerate}
\end{dfn}
\begin{lem}\cite[Corollary 1.11]{ma99} \label{lem:Radon}
A measure on $\reals^n$ is a Radon measure if and only if it is a locally finite Borel regular measure.
\end{lem}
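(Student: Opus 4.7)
The forward implication is essentially tautological: if $\mu$ is a Radon measure on $\reals^n$, then $\mu$ is Borel by definition, and for every $x\in\reals^n$ the closed ball $\overline{B_r(x)}$ is compact, so property (a) of \Cref{dfn:Borel} immediately gives $\mu(\overline{B_r(x)})<\infty$, i.e., local finiteness. I would dispatch this direction in one sentence.

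The substantive direction is the reverse: if $\mu$ is a locally finite Borel measure on $\reals^n$, then $\mu$ is Radon. The plan is to verify the three defining properties one at a time, using that $\reals^n$ is $\sigma$-compact. First, fix an exhaustion $\reals^n=\bigcup_{k\in\naturals}\overline{B_k(0)}$ by closed balls. By local finiteness and compactness, each $\overline{B_k(0)}$ can be covered by finitely many open balls on which $\mu$ is finite, so $\mu(\overline{B_k(0)})<\infty$. This immediately yields property (a), since any compact $\setK\subseteq\reals^n$ sits inside some $\overline{B_k(0)}$.

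Next, I would establish outer regularity (property (c)) by a standard Dynkin-type argument. Consider the collection
\begin{align}
\colD=\{\setA\subseteq\reals^n: \mu(\setA)=\inf\{\mu(\setV):\setA\subseteq\setV,\ \setV\ \text{open}\}\}.
\end{align}
The key step is to show that $\colD$ contains all Borel sets. One first reduces to the finite case by restricting $\mu$ to each bounded set $\overline{B_k(0)}$, where the restricted measure is finite. For a finite Borel measure on a metric space, the class
\begin{align}
\{\setA\ \text{Borel}: \mu(\setA)\ \text{is approximable from outside by open and from inside by closed sets}\}
\end{align}
contains the closed sets (approximate closed $\setC$ from outside by $\{x:\rho(x,\setC)<1/k\}$) and is closed under countable unions, countable intersections, and complements, hence is a $\sigma$-algebra containing all Borel sets. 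Passing from the finite case to the general $\sigma$-finite case and then from Borel sets to arbitrary subsets (using Borel regularity, which is inherited by locally finite Borel measures on a metric space through the same approximation argument) yields property (c).

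Finally, property (b) follows by combining inner approximation by closed sets with the $\sigma$-compact exhaustion: if $\setV\subseteq\reals^n$ is open, then by the argument above one can approximate $\mu(\setV\cap\overline{B_k(0)})$ from inside by closed subsets, and intersecting such a closed subset with $\overline{B_k(0)}$ yields a compact subset of $\setV$ whose measure approaches $\mu(\setV)$ as $k\to\infty$ by monotone convergence. The main obstacle is keeping the Dynkin/monotone-class argument for outer regularity clean, since on a metric space one must be careful to use closed-set approximations (not compact ones) at the intermediate step and only invoke $\sigma$-compactness of $\reals^n$ at the very end to upgrade closed approximations to compact ones.
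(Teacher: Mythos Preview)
The paper does not supply its own proof of this lemma; it is stated as a citation of a corollary from Mattila's book and is used as a black box throughout. So there is no ``paper's proof'' to compare against.

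Your plan is the standard argument and is essentially correct for the version of the statement found in Mattila (Corollary~1.11), namely that a measure on $\reals^n$ is Radon if and only if it is locally finite and \emph{Borel regular}. The Dynkin/monotone-class argument you describe yields outer approximation by open sets and inner approximation by closed sets for all \emph{Borel} sets; the $\sigma$-compactness of $\reals^n$ then upgrades closed approximants to compact ones, giving properties (a) and (b). To obtain property~(c) of \Cref{dfn:Borel} for \emph{all} subsets $\setA\subseteq\reals^n$, you need to pass through a Borel hull $\setB\supseteq\setA$ with $\mu(\setB)=\mu(\setA)$, which is exactly Borel regularity.

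The one point to flag is your parenthetical claim that Borel regularity ``is inherited by locally finite Borel measures on a metric space through the same approximation argument.'' This is not automatic in the outer-measure framework used here: being Borel (all Borel sets are Carath\'eodory measurable) does not by itself force every set to have a Borel hull of equal measure. The paper's statement omits the word ``regular,'' but the cited source does include it, and your argument implicitly uses it at the last step. If you want to match the statement literally as written in the paper, you would need to argue separately that a locally finite Borel outer measure on $\reals^n$ is automatically Borel regular, which is a stronger claim than what your Dynkin argument delivers.
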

%If $\mu$ is a Borel regular measure on $(\setX,\rho)$ and $\setA\subseteq\setX$ with $\mu(\setA)<\infty$, then 
%the Borel set $\setB$ in \cref{item:Borelregular}  of \Cref{dfn:Borel} satisfies $\mu(\setB\setminus\setA)=0$ since 
%\begin{align}
%\mu(\setA)=\mu(\setA\cap \setB) + \mu(\setA\setminus\setB) =\mu(\setA) + \mu(\setA\setminus\setB) 
%\end{align}
%owing to $\setB\in\colM(\mu)$. 
%Borel regular measures have the following property. 
%\begin{lem}
%Let $(\setX,\rho)$ be a metric space equipped with the Borel regular  measure  $\mu$ and let 
%$\setA\subseteq\setX$ with $\mu(\setA)<\infty$.  
%\end{lem}
We have the following properties for sums and limits of Borel regular measures. 

%\begin{dfn}
%A separable metric space  $(\setX,\rho)$ is called a Radon space if every Borel probability measure on $\setX$ satisfies the following inner regularity:
%For every Borel set $\setB\subseteq \setX$ and $\varepsilon\in(0,\infty)$, there exists a $\setK_\varepsilon \Subset \setB$ such that $\mu(\setB\setminus \setK_\varepsilon)\leq \varepsilon$. 
%\end{dfn}

\begin{lem}\label{lem:sum}
Let $\mu_1$ and $\mu_2$ be Borel regular measures on the metric space $(\setX,\rho)$.  Then, $\mu_1+\mu_2$ is a Borel regular measure. 
\end{lem}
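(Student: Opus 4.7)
The plan is to verify the three defining conditions of a Borel regular measure (\Cref{dfn:Borel}) for $\mu:=\mu_1+\mu_2$ in turn. First I would check that $\mu$ is itself a measure in the sense of \Cref{dfn:measure}: $\mu(\emptyset)=0$ is immediate, monotonicity $\mu(\setE)\leq\mu(\setF)$ for $\setE\subseteq\setF$ follows termwise from monotonicity of $\mu_1$ and $\mu_2$, and countable subadditivity follows by applying the same property to each summand and rearranging the resulting nonnegative double series (or invoking Tonelli on counting measure).

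Next, I would show that every Borel set is $\mu$-measurable. The key auxiliary observation is $\colM(\mu_1)\cap \colM(\mu_2)\subseteq \colM(\mu)$: for any $\setA\in \colM(\mu_1)\cap \colM(\mu_2)$ and any test set $\setE\subseteq \setX$, adding the two identities $\mu_i(\setE)=\mu_i(\setE\cap\setA)+\mu_i(\setE\setminus\setA)$ for $i=1,2$ gives
\begin{align}
\mu(\setE)=\mu(\setE\cap\setA)+\mu(\setE\setminus\setA),
\end{align}
so $\setA\in\colM(\mu)$. Since both $\mu_1$ and $\mu_2$ are Borel by assumption, every Borel set lies in $\colM(\mu_1)\cap\colM(\mu_2)$, hence in $\colM(\mu)$, establishing that $\mu$ is Borel.

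The substantive step is Borel regularity. Given an arbitrary $\setE\subseteq\setX$, Borel regularity of $\mu_1$ and of $\mu_2$ (\cref{item:Borelregular} of \Cref{dfn:Borel}) furnishes Borel sets $\setB_1,\setB_2\subseteq \setX$ with $\setE\subseteq\setB_i$ and $\mu_i(\setB_i)=\mu_i(\setE)$ for $i=1,2$. I would then set $\setB:=\setB_1\cap\setB_2$, which is again Borel and still contains $\setE$. Monotonicity of $\mu_i$ yields the sandwich $\mu_i(\setE)\leq\mu_i(\setB)\leq\mu_i(\setB_i)=\mu_i(\setE)$, forcing $\mu_i(\setB)=\mu_i(\setE)$ for both $i=1,2$, and summing gives $\mu(\setB)=\mu(\setE)$, as required.

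There is no genuine obstacle here; the only point worth emphasizing is that one must intersect the two approximating Borel sets rather than pick either one individually, since a Borel hull that is extremal for $\mu_1$ need not be extremal for $\mu_2$. The argument goes through without any finiteness or $\sigma$-finiteness hypothesis because the paper's measures are outer measures defined on all subsets, so monotonicity applies to the evaluation of each $\mu_i$ on $\setB\subseteq\setB_i$ without further justification.
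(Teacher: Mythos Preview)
Your proposal is correct and follows essentially the same approach as the paper's proof: verify the outer-measure axioms termwise, check Carath\'eodory measurability of Borel sets by adding the two identities $\mu_i(\setE)=\mu_i(\setE\cap\setA)+\mu_i(\setE\setminus\setA)$, and establish Borel regularity by intersecting the two Borel hulls and using the sandwich $\mu_i(\setE)\leq\mu_i(\setB_1\cap\setB_2)\leq\mu_i(\setB_i)=\mu_i(\setE)$. The paper's argument is identical in structure, just slightly terser on the measure axioms.
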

\begin{proof}
It follows immediately from \Cref{dfn:measure} that $\mu_1+\mu_2$ is a  measure. Fix a Borel set $\setB$ and a set $\setE\subseteq \setX$, both  arbitrarily. Then, we have 
\begin{align}
(\mu_1+\mu_2)(\setE)
&=\mu_1(\setE) + \mu_2(\setE)\\
&=\mu_1(\setE\cap\setB) + \mu_1(\setE\setminus\setB)+\mu_2(\setE\cap\setB) + \mu_2(\setE\setminus\setB)\\
&= (\mu_1+\mu_2)(\setE\cap\setB)+(\mu_1+\mu_2)(\setE\setminus\setB),
\end{align}
which implies that $\mu_1+\mu_2$ is a Borel measure. Finally, arbitrarily fix a set $\setA\subseteq \setX$. Then, there exist Borel sets $\setB_1,\setB_2$ satisfying 
$\setA\subseteq\setB_i$ and $\mu_i(\setA)=\mu_i(\setB_i)$ for $i=1,2$. Now, set $\setB=\setB_1\cap\setB_2$. Then, we have $\setA\subseteq \setB$ and 
$\mu_i(\setA)\leq \mu_i(\setB)\leq \mu_i(\setB_i)=\mu_i(\setA)$ for $i=1,2$, which implies $(\mu_1+\mu_2)(\setA)=(\mu_1+\mu_2)(\setB)$. This establishes that  $\mu_1+\mu_2$ is  Borel regular. 
\end{proof}
\begin{lem}\label{lem:borequalmu}
Let $\mu_1$ and $\mu_2$ be Borel regular measures on the metric space $(\setX,\rho)$ satisfying $\mu_1(\setB)=\mu_2(\setB)$ for all Borel sets $\setB\subseteq\setX$. Then, $\mu_1=\mu_2$.
\begin{proof}
Arbitrary fix  $\setA\subseteq\setX$. Then, there must exist Borel sets $\setB_k\subseteq\setX$ satisfying 
$\setA\subseteq\setB$ and $\mu_k(\setB_k)$ for $k=1,2$. Set $\setB=\setB_1\cap\setB_2$. Then, we have 
\begin{align}
\mu_k(\setA)\leq \mu_k(\setB) \leq \mu_k(\setB_k) =\mu_k(\setA) \quad\text{for $k=1,2$,}
\end{align}
which implies 
\begin{align}
\mu_1(\setA)= \mu_1(\setB) =\mu_2(\setB)=\mu_2(\setA). 
\end{align}

\end{proof}
\end{lem}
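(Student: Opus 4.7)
The statement is a straightforward consequence of the definition of Borel regularity (item \cref{item:Borelregular} of \Cref{dfn:Borel}), so the plan is a short chase through the definitions rather than a substantial argument. Since both measures are defined as outer measures on all subsets of $\setX$, equality on Borel sets alone does not immediately give equality everywhere, but Borel regularity closes this gap by providing a Borel ``hull'' of the correct measure for each arbitrary set.

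The plan is as follows. I would fix an arbitrary set $\setA\subseteq\setX$ and, invoking Borel regularity of $\mu_1$ and $\mu_2$ separately, produce Borel sets $\setB_1,\setB_2\subseteq\setX$ with $\setA\subseteq\setB_k$ and $\mu_k(\setA)=\mu_k(\setB_k)$ for $k=1,2$. The key observation is that these two hulls need not coincide, but their intersection $\setB:=\setB_1\cap\setB_2$ is still Borel, still contains $\setA$, and is sandwiched between $\setA$ and each $\setB_k$. Monotonicity of $\mu_k$ (item \cref{item:m2} of \Cref{dfn:measure}) then forces
\begin{align}
\mu_k(\setA)\leq\mu_k(\setB)\leq\mu_k(\setB_k)=\mu_k(\setA),
\end{align}
so that $\mu_k(\setA)=\mu_k(\setB)$ for both $k=1,2$. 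Since $\setB$ is Borel, the hypothesis $\mu_1(\setB)=\mu_2(\setB)$ yields
\begin{align}
\mu_1(\setA)=\mu_1(\setB)=\mu_2(\setB)=\mu_2(\setA),
\end{align}
and the arbitrariness of $\setA$ concludes the proof.

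There is no genuine obstacle here; the only subtlety worth flagging is that one should not try to use a single common Borel hull for both measures, because Borel regularity is stated measure-by-measure. Taking the intersection of the two individual hulls resolves this immediately and is the one small idea that makes the argument go through.
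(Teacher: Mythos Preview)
Your proposal is correct and follows exactly the same argument as the paper: pick Borel hulls $\setB_1,\setB_2$ for $\setA$ with respect to each measure, intersect them to get a common Borel set $\setB$, and use monotonicity to squeeze $\mu_k(\setA)=\mu_k(\setB)$ before invoking the hypothesis on Borel sets. There is nothing to add.
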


\begin{lem}\label{lem:infsummeasure}
Let $(\mu_k)_{k\in\naturals}$ be a sequence of  measures  on the metric space $(\setX,\rho)$. 
Suppose that  $\mu_k(\setE)\leq \mu_{k+1}(\setE)$ for all $k\in\naturals$ and $\setE\subseteq\setX$ and define $\mu_\infty\colon\colP(\setX)\to[0,\infty]$  according to 
\begin{align}\label{eq:defmuinfty}
\mu_\infty(\setE) =\lim_{k\to\infty}\mu_k(\setE)\quad\text{for all $\setE\subseteq\setX$.}
\end{align}
Then, $\mu_\infty$ is a  measure.  
%\item \label{item:Borelsum} $\sum_{k\in\naturals}\mu_k$ is a  measure. 
If, in addition,  $\mu_k$ is Borel (regular) for all $k\in\naturals$, so is $\mu_\infty$. \end{lem}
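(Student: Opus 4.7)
The plan is to verify directly that $\mu_\infty$ satisfies the three axioms of a measure as given in \Cref{dfn:measure}, and then, under the additional hypotheses, to check the Borel and Borel-regularity properties from \Cref{dfn:Borel}. The nontrivial ingredient throughout is the monotonicity assumption $\mu_k(\setE)\leq \mu_{k+1}(\setE)$, which enables a painless interchange of the $k\to\infty$ limit with countable sums.

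First I would deal with the axioms \cref{item:m1}--\cref{item:m3} of \Cref{dfn:measure}. Property \cref{item:m1} is immediate since $\mu_k(\emptyset)=0$ for all $k$, and monotonicity \cref{item:m2} follows by taking the limit in $\mu_k(\setE)\leq \mu_k(\setF)$. For countable subadditivity \cref{item:m3}, given sets $\setE_1,\setE_2,\dots\subseteq\setX$, I would sandwich $\mu_\infty(\bigcup_i \setE_i)$ as follows: on one side, for every $N\in\naturals$, $\sum_{i=1}^N \mu_k(\setE_i) \leq \sum_{i\in\naturals} \mu_k(\setE_i)$, so taking $k\to\infty$ first (using monotonicity of $\mu_k(\setE_i)$ in $k$) and then $N\to\infty$ gives $\sum_{i\in\naturals} \mu_\infty(\setE_i)\leq \liminf_k \sum_i \mu_k(\setE_i)$, while on the other side $\sum_i \mu_k(\setE_i)\leq \sum_i \mu_\infty(\setE_i)$ so $\limsup_k \sum_i \mu_k(\setE_i)\leq \sum_i \mu_\infty(\setE_i)$. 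Combining these identities with the countable subadditivity of each $\mu_k$ yields $\mu_\infty(\bigcup_i\setE_i)\leq \sum_i \mu_\infty(\setE_i)$.

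Next I would establish the Borel property under the assumption that each $\mu_k$ is Borel. Fix a Borel set $\setB\subseteq\setX$ and an arbitrary $\setE\subseteq\setX$. Since $\setB\in\colM(\mu_k)$ for every $k$, we have $\mu_k(\setE)=\mu_k(\setE\cap\setB)+\mu_k(\setE\setminus\setB)$, and passing to the limit $k\to\infty$ directly gives $\mu_\infty(\setE)=\mu_\infty(\setE\cap\setB)+\mu_\infty(\setE\setminus\setB)$, so $\setB\in\colM(\mu_\infty)$.

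Finally, for Borel regularity, given $\setA\subseteq\setX$, for every $k\in\naturals$ \Cref{dfn:Borel}\cref{item:Borelregular} provides a Borel set $\setB_k\supseteq\setA$ with $\mu_k(\setB_k)=\mu_k(\setA)$. I would then set $\setB=\bigcap_{k\in\naturals}\setB_k$, which is Borel and satisfies $\setA\subseteq\setB\subseteq\setB_k$ for all $k$, hence by monotonicity $\mu_k(\setA)\leq\mu_k(\setB)\leq\mu_k(\setB_k)=\mu_k(\setA)$, so $\mu_k(\setA)=\mu_k(\setB)$ for every $k$; taking $k\to\infty$ yields $\mu_\infty(\setA)=\mu_\infty(\setB)$. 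The only point requiring mild care is the sum/limit interchange in the subadditivity step; everything else is essentially a one-line passage to the limit.
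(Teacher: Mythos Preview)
Your proof is correct and follows essentially the same route as the paper: verify the three axioms of \Cref{dfn:measure} by passing to the limit, check Borel measurability of each Borel set via the Carath\'eodory identity for $\mu_k$, and obtain Borel regularity by intersecting the witnessing Borel sets $\setB_k$. The only difference is that in the subadditivity step you work harder than necessary: the paper simply bounds $\mu_k(\setE)\leq\sum_{\ell}\mu_k(\setE_\ell)\leq\sum_{\ell}\mu_\infty(\setE_\ell)$ for every $k$ and then lets $k\to\infty$, so no genuine sum/limit interchange is needed and your ``mild care'' is in fact unnecessary.
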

\begin{proof}
We first establish that $\mu_\infty$ is a measure. 
\cref{item:m1} and \cref{item:m2} in \Cref{dfn:measure} follow immediately from \cref{eq:defmuinfty}. To establish \cref{item:m3} in \Cref{dfn:measure},  arbitrary fix a collection $\{\setE_i:i\in\naturals\}$ of sets $\setE_i\subseteq\setX$ and set $\setE=\bigcup_{i\in\naturals}\setE_i$. Since 
\begin{align}
\mu_k(\setE)\leq \sum_{\ell\in\naturals} \mu_k(\setE_\ell) \leq \sum_{\ell\in\naturals} \mu_\infty(\setE_\ell)\quad\text{ for all $k\in\naturals$,} 
\end{align}
we conclude that 
\begin{align}
\mu_\infty(\setE) = \lim_{k\to\infty}\mu_k(\setE) \leq \sum_{k\in\naturals} \mu_\infty(\setE_k).
\end{align}
This establishes  \cref{item:m3} in \Cref{dfn:measure} and hence shows that $\mu_\infty$ is a measure.

Next, suppose that $\mu_k$ is a Borel measure  for all $k\in\naturals$, and  arbitrarily fix a Borel set $\setB$ and a set $\setE\subseteq \setX$. Then, we have 
\begin{align}
\mu_\infty(\setE)
&=\lim_{k\to\infty}\mu_k(\setE) \\
&=  \lim_{k\to\infty}(\mu_k(\setE\cap\setB)  + \mu_k(\setE\setminus\setB))\\
&= \lim_{k\to\infty}\mu_k(\setE\cap\setB)  + \lim_{k\to\infty} \mu_k(\setE\setminus\setB)\\
&=\mu_\infty(\setE\cap\setB)  +  \mu_\infty(\setE\setminus\setB),
\end{align}
which proves that $\mu_\infty$ is a Borel measure. 

Finally, suppose that $\mu_k$ is Borel regular for all $k\in\naturals$ and  arbitrarily fix $\setE\subseteq\setX$. Then, for every $k\in\naturals$, there exists a Borel set $\setB_k$ satisfying $\setE\subseteq\setB_k$ and $\mu_k(\setE)=\mu_k(\setB_k)$. Set $\setB=\bigcap_{k\in\naturals}\setB_k$. Then, we have $\setE\subseteq \setB$ and 
$\mu_k(\setE)\leq \mu_k(\setB)\leq \mu_k(\setB_k)=\mu_k(\setE)$ for all $k\in\naturals$,  which implies
\begin{align}
\mu_\infty(\setE)=\lim_{k\to\infty}\mu_k(\setE) = \lim_{k\to\infty}\mu_k(\setB) = \mu_\infty(\setB).
\end{align}
Therefore, $\mu_\infty$ is Borel regular. 
\end{proof}

\begin{dfn}\label{dfn:muA}
Let $(\setX,\rho)$ be a metric space equipped with the measure  $\mu$ and let $\setC\subseteq\setX$. Then, we define the restricted measure  $\mu|_\setC \colon \colP(\setX)\to[0,\infty]$  according to 
\begin{align}
\mu|_\setC(\setE)=\mu(\setE\cap\setC) \quad\text{for all $\setE\in\colP(\setX)$}
\end{align}
and the trace measure  $\mu_\setC \colon \colP(\setC)\to[0,\infty]$  as 
\begin{align}
\mu_\setC(\setE)=\mu(\setE) \quad\text{for all $\setE\in\colP(\setC)$.}
\end{align}
\end{dfn}

\begin{thm}\cite[Theorem 1.9]{ma99}\label{thm:murest}
Let $(\setX,\rho)$, $\mu$, and $\setC$ be as in \Cref{dfn:muA}. Then, the following statements hold. 
\begin{enumerate}[itemsep=1ex,topsep=1ex]
\renewcommand{\theenumi}{(\roman{enumi})}
\renewcommand{\labelenumi}{(\roman{enumi})}
\item\label{item:murest1}
$\mu|_\setC$ is a measure and $\colM(\mu)\subseteq \colM(\mu|_\setC)$. 
\item\label{item:murest2}
If, in addition, $\mu$ is Borel regular and $\setC\in \colM(\mu)$ with $\mu(\setC)<\infty$, then $\mu|_\setC$ is Borel regular. 
\end{enumerate}
\end{thm}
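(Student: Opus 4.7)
My plan is to handle the two parts separately. For \cref{item:murest1}, I would start by verifying that $\mu|_\setC$ is a measure by checking the three axioms in \Cref{dfn:measure} directly: the null-set condition holds because $\mu|_\setC(\emptyset)=\mu(\emptyset\cap\setC)=0$; monotonicity reduces to monotonicity of $\mu$ applied to $\setE\cap\setC\subseteq\setF\cap\setC$; and countable subadditivity follows from the identity $\bigl(\bigcup_i \setE_i\bigr)\cap\setC=\bigcup_i(\setE_i\cap\setC)$ together with countable subadditivity of $\mu$. The inclusion $\colM(\mu)\subseteq\colM(\mu|_\setC)$ would then be an immediate consequence of Carathéodory's criterion: for $\setA\in\colM(\mu)$ and arbitrary $\setE\subseteq\setX$, apply the splitting property of $\setA$ to the test set $\setE\cap\setC$ to obtain
\begin{align}
\mu|_\setC(\setE)=\mu(\setE\cap\setC)=\mu(\setE\cap\setC\cap\setA)+\mu((\setE\cap\setC)\setminus\setA)=\mu|_\setC(\setE\cap\setA)+\mu|_\setC(\setE\setminus\setA).
\end{align}

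For \cref{item:murest2}, Borel-ness of $\mu|_\setC$ is immediate: every Borel set lies in $\colM(\mu)$, hence in $\colM(\mu|_\setC)$ by \cref{item:murest1}. The substantive part is Borel regularity, where I need, given an arbitrary $\setE\subseteq\setX$, to exhibit a Borel set $\setB\supseteq\setE$ with $\mu(\setB\cap\setC)=\mu(\setE\cap\setC)$. The technical obstacle is that $\setC$ itself is $\mu$-measurable but typically not Borel, so a single Borel envelope of $\setE$ will in general overshoot $\setE\cap\setC$ in $\mu$-measure.

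To overcome this, I would assemble $\setB$ from three ingredients: first, Borel regularity of $\mu$ applied to $\setE\cap\setC$ yields a Borel set $\setB_1\supseteq\setE\cap\setC$ with $\mu(\setB_1)=\mu(\setE\cap\setC)$; second, Borel regularity applied to $\setC$ (which has finite $\mu$-measure and is $\mu$-measurable) gives a Borel set $\setD\supseteq\setC$ with $\mu(\setD)=\mu(\setC)$, so that $\mu(\setD\setminus\setC)=0$ by measurability of $\setC$; third, since $\setD\setminus\setC$ is $\mu$-measurable of measure zero, Borel regularity once more produces a Borel null set $\setN\supseteq\setD\setminus\setC$ with $\mu(\setN)=0$. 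I then propose to take
\begin{align}
\setB=\setB_1\cup(\setX\setminus\setD)\cup\setN,
\end{align}
which is Borel. The inclusion $\setE\subseteq\setB$ would be verified by splitting any $x\in\setE$ according to whether $x\in\setD$ (then either $x\in\setC$, hence $x\in\setB_1$, or $x\in\setD\setminus\setC\subseteq\setN$) or $x\notin\setD$ (then $x\in\setX\setminus\setD$). Finally, since $\setC\subseteq\setD$, we get $\setB\cap\setC\subseteq\setB_1\cup(\setN\cap\setC)$, so monotonicity and subadditivity of $\mu$ give $\mu(\setB\cap\setC)\leq\mu(\setB_1)+\mu(\setN)=\mu(\setE\cap\setC)$, and the reverse inequality is free from $\setE\subseteq\setB$. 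Thus $\mu|_\setC(\setB)=\mu|_\setC(\setE)$, completing the Borel regularity. I expect this three-piece decomposition to be the only nontrivial step; everything else is bookkeeping against the axioms and the definition of $\mu|_\setC$.
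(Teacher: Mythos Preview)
The paper does not supply its own proof of this theorem; it simply cites \cite[Theorem~1.9]{ma99}. Your argument is correct and is in fact the standard proof given in that reference: part~\cref{item:murest1} is routine, and for part~\cref{item:murest2} Mattila's proof uses exactly the same three-piece construction---a Borel envelope of $\setE\cap\setC$, a Borel envelope $\setD$ of $\setC$ (whose excess $\setD\setminus\setC$ is $\mu$-null thanks to $\setC\in\colM(\mu)$ and $\mu(\setC)<\infty$), and a Borel null cover of that excess---assembled together with the complement $\setX\setminus\setD$.
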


\begin{cor}\label{cor:tracemeasure}
Let $(\setX,\rho)$, $\mu$, and $\setC$ be as in \Cref{dfn:muA}. Then, the following statements hold. 
\begin{enumerate}[itemsep=1ex,topsep=1ex]
\renewcommand{\theenumi}{(\roman{enumi})}
\renewcommand{\labelenumi}{(\roman{enumi})}
\item\label{item:murest1t}
$\mu_\setC$ is a measure and $\{\setA\cap \setC:\setA\in \colM(\mu)\}\subseteq \colM(\mu_\setC)$. 
\item\label{item:murest2t}
If, in addition, $\mu$ is Borel regular and $\setC\in \colM(\mu)$ with $\mu(\setC)<\infty$, then $\mu_\setC$ is Borel regular. 
\end{enumerate}
\end{cor}
\begin{proof}
Let $\setA\in \colM(\mu)$ and $\setE\subseteq \setC$. Then, \cref{item:murest1} of \Cref{thm:murest} yields 
\begin{align}
\mu_\setC(\setE)=\mu|_\setC(\setE) = \mu|_\setC(\setE\cap\setA) +  \mu|_\setC(\setE\setminus\setA)
=\mu_\setC(\setE\cap(\setA\cap\setC))+  \mu_\setC(\setE\setminus(\setA\cap\setC)),
\end{align}
which implies $\setA\cap\setC\in\colM(\mu_\setC)$ and establishes \cref{item:murest1t}. Next, suppose that $\mu$ is Borel regular and $\setC\in \colM(\mu)$ with $\mu(\setC)<\infty$. Since $\colM(\mu)$ contains all Borel sets of $\setX$, $\colM(\mu_\setC)$ contains all Borel sets of $\setC$ by \cref{item:murest1t} so that $\mu_\setC$ is a Borel measure. Finally, let $\setE\subset\setC$. Since $\mu|_\setC$ is Borel regular owing to  \cref{item:murest2} of \Cref{thm:murest}, there exists a Borel set $\setB\subseteq\setX$ such that $\setE\subseteq\setB$ and $\mu|_\setC(\setE)=\mu|_\setC(\setB)$,  which implies 
\begin{align}
\mu_\setC(\setE)= \mu(\setE)= \mu|_\setC(\setE)=\mu|_\setC(\setB)= \mu(\setB\cap\setC)= \mu_\setC(\setB\cap\setC).  
\end{align}
But $\setB\cap\setC$ is a Borel set in $\setC$ and $\setE\subseteq \setB\cap\setC$, which proves that  $\mu_\setC$ is Borel regular. 
\end{proof}

\begin{dfn}
Let $(\setX,\rho)$  and $(\setY,\sigma)$ be  metric spaces and consider the  mapping
  $f\colon\setX\to\setY$. 
\begin{enumerate}[itemsep=1ex,topsep=1ex]
\renewcommand{\theenumi}{(\roman{enumi})}
\renewcommand{\labelenumi}{(\roman{enumi})}
\item  $f$  is called Borel if $f^{-1}(\setB)$ is a Borel set in $\setX$ for all open sets $\setB$ in $\setY$;
\item if    $\mu$ is a measure on $\setX$,  
then  $f$ is called $\mu$-measurable if $f^{-1}(\setV)\in \colM(\mu)$ for all open sets $\setV\subseteq\setY$.  
\end{enumerate}
\end{dfn}

\begin{dfn}\label{dfn:mu}
Let $(\setX,\rho)$ be a metric space equipped with the measure  $\mu$ and  let $f\colon\setX\to[0,\infty]$ be $\mu$-measurable. 
We define $ f\mu\colon \colP(\setX)\to[0,\infty]$ according to 
\begin{align}\label{eq:fmu}
(f\mu)(\setE) =\inf\Bigg\{\int_{\setB}f\mathrm d \mu :\setB\in\colM(\mu), \setE\subseteq \setB \Bigg\}. 
\end{align}
\end{dfn}

\begin{lem}\label{lem:fmu}
Let $(\setX,\rho)$, $\mu$, and $f$ be as in \Cref{dfn:mu}. Then, $f\mu$ is a measure satisfying  $f\mu\ll\mu$ and $\colM(\mu)\subseteq \colM(f\mu)$. Moreover, we have 
\begin{align}\label{eq:fmuC}
(f\mu)|_\setA = f(\mu|_\setA)\quad\text{for all $\setA\in\colM(\mu)$.} 
\end{align}  
If, in addition, $\mu$ is Borel (regular), so is $f\mu$. 
\end{lem}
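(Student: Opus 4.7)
The four claims decompose naturally, and I would treat them in order. First I would verify directly from \eqref{eq:fmu} that $f\mu$ is a measure: taking $\setB=\emptyset\in\colM(\mu)$ shows $f\mu(\emptyset)=0$; monotonicity is inherited because any admissible cover of $\setF\supseteq\setE$ is also admissible for $\setE$; and countable subadditivity follows by the standard $\varepsilon/2^i$-trick combined with the fact that $\colM(\mu)$ is a $\sigma$-algebra on which the integral is countably subadditive over unions (Theorem \ref{thm:measzero}). Absolute continuity $f\mu\ll\mu$ is then immediate since every $\mu$-null set belongs to $\colM(\mu)$ and hence serves as its own covering, forcing $f\mu=0$ there.

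Next I would establish $\colM(\mu)\subseteq\colM(f\mu)$ via Carath\'eodory's criterion. Fixing $\setA\in\colM(\mu)$ and a test set $\setE$, the direction $f\mu(\setE)\leq f\mu(\setE\cap\setA)+f\mu(\setE\setminus\setA)$ is subadditivity from the preceding step. Conversely, for any $\setB\in\colM(\mu)$ with $\setE\subseteq\setB$, the pieces $\setB\cap\setA$ and $\setB\setminus\setA$ both lie in $\colM(\mu)$ and cover $\setE\cap\setA$ and $\setE\setminus\setA$ respectively, and additivity of the integral on disjoint measurable sets yields
\begin{align}
\int_\setB f\,\mathrm d\mu=\int_{\setB\cap\setA}f\,\mathrm d\mu+\int_{\setB\setminus\setA}f\,\mathrm d\mu\geq f\mu(\setE\cap\setA)+f\mu(\setE\setminus\setA);
\end{align}
passing to the infimum over $\setB$ closes the reverse inequality. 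For the restriction identity \eqref{eq:fmuC} I would use Theorem \ref{thm:murest}, which gives $\colM(\mu)\subseteq\colM(\mu|_\setA)$, together with the elementary identity $\int_\setB f\,\mathrm d(\mu|_\setA)=\int_{\setB\cap\setA}f\,\mathrm d\mu$. Matching the two infima then reduces to noting that as $\setB$ ranges over the $\colM(\mu)$-covers of $\setE\cap\setA$, the slice $\setB\cap\setA$ itself ranges over admissible covers whose integral cannot exceed that of $\setB$.

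Finally, the Borel case is free: since $\colM(\mu)\subseteq\colM(f\mu)$, all Borel sets are $f\mu$-measurable whenever they are $\mu$-measurable. For Borel regularity, given $\setE\subseteq\setX$ I would first extract near-minimizing covers $\setB_k\in\colM(\mu)$ with $\int_{\setB_k}f\,\mathrm d\mu\leq f\mu(\setE)+1/k$ and set $\setB=\bigcap_k\setB_k\in\colM(\mu)$, so that $\int_\setB f\,\mathrm d\mu=f\mu(\setE)$. Borel regularity of $\mu$ now needs to be lifted to an $f$-weighted statement, and \emph{this is the main obstacle}: one cannot apply Borel regularity to $\setB$ in one shot, because $\mu(\setB)$ may be infinite while $f\mu(\setE)$ is finite. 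I would circumvent this by slicing $\setB$ along the level sets $\setB\cap\{1/(n+1)<f\leq 1/n\}$ and $\setB\cap\{f=0\}$: each nontrivial slice has $\mu$-measure bounded by $(n+1)f\mu(\setE)<\infty$, so Borel regularity of $\mu$ produces Borel supersets of equal finite $\mu$-measure; replacing $f$ by a Borel representative that agrees $\mu$-a.e.\ (which exists since $f$ is $\mu$-measurable) then upgrades equality of $\mu$-mass on each slice to equality of $f$-integral. Taking the countable union of these Borel slice-supersets together with a Borel superset of $\setB\cap\{f=0\}$ assembles into a single Borel set $\setC\supseteq\setE$ with $\int_\setC f\,\mathrm d\mu=f\mu(\setE)$, hence $f\mu(\setC)=f\mu(\setE)$. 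The degenerate case $f\mu(\setE)=\infty$ is handled by applying Borel regularity of $\mu$ to any $\mu$-measurable superset of $\setE$ directly.
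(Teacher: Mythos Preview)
Your treatment of the first four claims---outer-measure axioms, $f\mu\ll\mu$, $\colM(\mu)\subseteq\colM(f\mu)$, and the restriction identity---follows the paper essentially verbatim.

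The Borel-regularity argument is where you diverge. The paper avoids your level-set machinery entirely: it applies Borel regularity of $\mu$ directly to the given set $\setE$ (not to a pre-constructed $\mu$-measurable minimizer) to obtain a Borel $\setB\supseteq\setE$ with $\mu(\setB)=\mu(\setE)$, and then argues $f\mu(\setB)\leq f\mu(\setE)$ via the already-established absolute continuity. Concretely, for any $\setA\in\colM(\mu)$ with $\setE\subseteq\setA$ one has $\mu(\setE)\leq\mu(\setB\cap\setA)\leq\mu(\setB)=\mu(\setE)$, hence $\mu(\setB\setminus\setA)=0$, hence $f\mu(\setB\setminus\setA)=0$ and $f\mu(\setB)=f\mu(\setB\cap\setA)\leq\int_\setA f\,\mathrm d\mu$; taking the infimum over $\setA$ yields $f\mu(\setB)\leq f\mu(\setE)$. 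This is shorter than your route and sidesteps both the slicing and the appeal to a Borel representative of $f$ (whose existence, incidentally, is not automatic from $\mu$-measurability alone once $\mu$ fails to be $\sigma$-finite). That said, the paper's subtraction $\mu(\setB\setminus\setA)=\mu(\setB)-\mu(\setB\cap\setA)$ tacitly assumes $\mu(\setE)<\infty$, so your concern about the infinite-measure case is not misplaced---your slicing does address it, whereas the paper's argument as written does not. One small gap in your decomposition: the family $\{1/(n+1)<f\leq 1/n\}$ covers only $\{0<f\leq 1\}$; you also need the slice $\{f>1\}$, which has finite $\mu$-measure on $\setB$ by the same bound $\mu(\setB\cap\{f>1\})\leq\int_\setB f\,\mathrm d\mu$.
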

\begin{proof}
We start by establishing that $f\mu$ is a measure. 
\cref{item:m1}  and \cref{item:m2} in \Cref{dfn:measure} follow immediately from \cref{eq:fmu}. To establish \cref{item:m3} in \Cref{dfn:measure}, arbitrarily  fix an collection $\{\setE_i:i\in\naturals\}$ of sets $\setE_i\subseteq\setX$ and set $\setE=\bigcup_{i\in\naturals}\setE_i$. Then, we have 
\begin{align}
(f\mu)(\setE) 
&=\inf \Bigg\{\int_{\setB}f\,\mathrm d \mu :\setB\in\colM(\mu), \setE\subseteq \setB \Bigg\}\\
&\leq \inf \Bigg\{\int_{\bigcup_{i\in\naturals}\setB_i}f\,\mathrm d \mu :\setB_i\in\colM(\mu), \setE_i\subseteq \setB_i\ \text{for all $i\in\naturals$} \Bigg\}\\
&\leq \inf \Bigg\{ \sum_{i\in\naturals}\int_{\setB_i} f\,\mathrm d \mu :\setB_i\in\colM(\mu), \setE_i\subseteq \setB_i\ \text{for all $i\in\naturals$} \Bigg\}\label{eq:applybartle}\\
&= \sum_{i\in\naturals} \inf \Bigg\{ \int_{\setB_i} f\,\mathrm d \mu :\setB_i\in\colM(\mu), \setE_i\subseteq \setB_i\ \Bigg\}\\
&= \sum_{i\in\naturals}  (f\nu)(\setE_i), 
\end{align}
where in \cref{eq:applybartle} we applied \cite[Corollary 4.9]{ba95}.  

We next prove that $f\mu\ll\mu$. To this end, consider $\setA\subseteq\setX$ with  $\mu(\setA)=0$. Since this implies $\setA\in\colM(\mu)$, we have  $(f\mu)(\setA)=\int_{\setA} f\,\mathrm d \mu = \int f \ind{\setA} \,\mathrm d \mu$. 
But $f \ind{\setA}$ is identically zero on the complement of $\setA$   and  $\mu(\setA)=0$.   
We therefore conclude that $(f\mu)(\setA)=0$ thanks to \cite[Corollary 4.10]{ba95}. Next, we establish that $\colM(\mu)\subseteq \colM(f\mu)$. To this end, fix $\setA\in \colM(\mu)$ and $\setE\subseteq \setX$, both arbitrary. Then, we have 
\begin{align}\label{eq:Mmu1}
(f\mu)(\setE) \leq (f\mu)(\setE\cap\setA) + (f\mu)(\setE\setminus\setA)
\end{align}
owing to \cref{item:m3} in \Cref{dfn:measure}, and, by \cref{eq:fmu}, we obtain 
\begin{align}
(f\mu)(\setE)
& = \inf\Bigg\{\int_{\setB}f\,\mathrm d \mu :\setB\in\colM(\mu), \setE\subseteq \setB \Bigg\}\label{eq:Mmu2}\\
&= \inf\Bigg\{\int_{\setB\cap\setA}f\,\mathrm d \mu +\int_{\setB\setminus\setA}f\,\mathrm d \mu:\setB\in\colM(\mu), \setE\subseteq \setB \Bigg\}\label{eq:applybartle2}\\
&\geq \inf\Bigg\{\int_{\setB}f\,\mathrm d \mu:\setB\in\colM(\mu),  \setE\cap\setA \subseteq \setB \Bigg\}\\
&\ \ \ + \inf\Bigg\{\int_{\setB}f\,\mathrm d \mu:\setB\in\colM(\mu),  \setE\setminus\setA \subseteq \setB \Bigg\}\\
&= (f\mu)(\setE\cap\setA) + (f\mu)(\setE\setminus\setA), \label{eq:Mmu3}
\end{align}
where in \cref{eq:applybartle2} we applied \cite[Corollary 4.9]{ba95}. Combining \cref{eq:Mmu1} with \cref{eq:Mmu2}--\cref{eq:Mmu3} and using that $\setE$ was arbitrary establishes $\setA\in\setM(f\mu)$.

Now,  \cref{eq:fmuC} follows from 
\begin{align}
(f\mu)|_{\setA}(\setE)&= \inf\Bigg\{\int_{\setB}f\mathrm d \mu :\setB\in\colM(\mu), \setE\cap\setA\subseteq \setB \Bigg\}\\
&=\inf\Bigg\{\int_{\setB\cap\setA}f\mathrm d \mu :\setB\in\colM(\mu), \setE\subseteq \setB \Bigg\}\\
&=\inf\Bigg\{\int_{\setB}f\mathrm d \mu|_{\setA} :\setB\in\colM(\mu), \setE\subseteq \setB \Bigg\}\\
&=f(\mu|_\setA)(\setE)\quad\text{for all $\setE\subseteq\setX$.}
\end{align}
If $\mu$ is Borel, so is $f\mu$  since $\colM(\mu)\subseteq \colM(f\mu)$.

 Finally, suppose that $\mu$ is Borel regular and  arbitrarily fix $\setE\subseteq\setX$. Then, there exists a Borel set $\setB$ satisfying $\setE\subseteq\setB$ and $\mu(\setE)=\mu(\setB)$. Now, \cref{eq:fmu} implies $(f\mu)(\setE)\leq (f\mu)(\setB)$. 
It remains to establish  $(f\mu)(\setE)\geq (f\mu)(\setB)$. To this end, 
 arbitrarily fix $\setA\in\colM(\mu)$ such that $\setE\subseteq\setA$. Then,  we have 
\begin{align}
\mu(\setE)\leq \mu(\setB\cap\setA)\leq \mu(\setB)=\mu(\setE), 
\end{align} 
which implies $\mu(\setB\cap\setA)=\mu(\setB)$ and, in turn, using that $\setA\in\colM(\mu)$, we get 
\begin{align}
\mu(\setB\setminus\setA)=\mu(\setB)- \mu(\setB\cap\setA)=0. 
\end{align} 
 We thus have $(f\mu)(\setB\setminus\setA)=0$ thanks to $f\mu\ll\mu$ so that  
\begin{align}\label{eq:BC}
(f\mu)(\setB)=(f\mu)(\setB\setminus\setA) + (f\mu)(\setB\cap\setA)= (f\mu)(\setB\cap\setA)  \leq (f\mu)(\setA) 
\end{align}
owing to  $\setA\in\colM(f\mu)$. Since $\setA$ was  arbitrary, \cref{eq:BC} implies 
$(f\mu)(\setE)\geq (f\mu)(\setB)$. 
\end{proof}

\begin{dfn}\label{dfn:pushmu}
Let $(\setX,\rho)$ and $(\setY,\sigma)$  be  metric spaces, let  $\mu$ be a measure on $\setX$, and let  $f\colon\setX\to\setY$ be $\mu$-measurable. 
Then, we define the push forward measure $f \# \mu \colon \colP(\setY)\to [0,\infty]$ according to 
\begin{align}\label{eq:pushmu}
f \# \mu(\setE) = \mu(f^{-1}(\setE))\quad\text{for all $\setE\subseteq\setY$.}
\end{align}
\end{dfn}

\begin{lem}\label{lem:borelpush}
Let $(\setX,\rho)$,  $(\setY,\sigma)$, $\mu$,  and $f$ be as in \Cref{dfn:pushmu}. Then, $f \# \mu$ is a measure, and $\setA\in\colM(f \# \mu)$  provided that $f^{-1}(\setA)\in\colM(\mu)$. If, in addition, 	$\mu$ is a Borel measure and $f$ is Borel measurable, then $f \# \mu$ is a Borel measure. 	
\end{lem}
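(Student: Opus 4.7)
The plan is to verify the three assertions sequentially, each following directly from \cref{eq:pushmu} combined with the fact that preimages commute with all set-theoretic operations. No heavy machinery is required; the result is essentially bookkeeping that transports each property of $\mu$ across the preimage map.

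For the first claim, I would check the three axioms in \Cref{dfn:measure}. Property \cref{item:m1} is immediate from $f^{-1}(\emptyset)=\emptyset$. Property \cref{item:m2} follows from $\setE\subseteq\setF\Rightarrow f^{-1}(\setE)\subseteq f^{-1}(\setF)$ combined with the monotonicity of $\mu$. Property \cref{item:m3} uses the identity $f^{-1}\Big(\bigcup_{i\in\naturals}\setE_i\Big)=\bigcup_{i\in\naturals}f^{-1}(\setE_i)$ together with the countable subadditivity of $\mu$.

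For the second claim, fix $\setA\subseteq\setY$ with $f^{-1}(\setA)\in\colM(\mu)$ and fix a test set $\setE\subseteq\setY$. Using $f^{-1}(\setE\cap\setA)=f^{-1}(\setE)\cap f^{-1}(\setA)$ and $f^{-1}(\setE\setminus\setA)=f^{-1}(\setE)\setminus f^{-1}(\setA)$, and then applying the defining identity of $\mu$-measurability of $f^{-1}(\setA)$ to the test set $f^{-1}(\setE)$, one obtains $\mu(f^{-1}(\setE))=\mu(f^{-1}(\setE\cap\setA))+\mu(f^{-1}(\setE\setminus\setA))$. Via \cref{eq:pushmu}, this is exactly the defining identity for $\setA\in\colM(f\#\mu)$. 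For the third claim, if $\mu$ is Borel and $f$ is Borel measurable, then for every Borel set $\setB\subseteq\setY$ the preimage $f^{-1}(\setB)$ is Borel in $\setX$ and therefore in $\colM(\mu)$; the second claim then yields $\setB\in\colM(f\#\mu)$, so $f\#\mu$ is Borel.

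I do not anticipate any genuine obstacle. The only mildly subtle point is recognising that $\mu$-measurability of $f^{-1}(\setA)$ tested against the specific set $f^{-1}(\setE)$ is precisely what is needed to extract the $\mu$-measurability identity for $\setA$ against $\setE$—once this is noted, the Borel assertion is an immediate corollary via Borel measurability of $f$.
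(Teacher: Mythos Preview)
Your proposal is correct and follows essentially the same approach as the paper: verify the outer-measure axioms via the preimage identities, push the Carath\'eodory identity for $f^{-1}(\setA)$ through to $\setA$ using $f^{-1}(\setE\cap\setA)=f^{-1}(\setE)\cap f^{-1}(\setA)$ and $f^{-1}(\setE\setminus\setA)=f^{-1}(\setE)\setminus f^{-1}(\setA)$, and then specialise to Borel sets via Borel measurability of $f$. The paper is slightly terser on the first claim (simply asserting the axioms follow from \cref{eq:pushmu}), but the substance is identical.
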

\begin{proof} It follows  immediately from \cref{eq:pushmu}  that $f \# \mu$ is a measure by verifying the properties in \Cref{dfn:measure}. Next, assume that  $\setA\subseteq\setY$ is such that  $f^{-1}(\setA)\in\colM(\mu)$. Then, we have 
\begin{align}
f \# \mu(\setE)
&=\mu(f^{-1}(\setE))\\
&= \mu(f^{-1}(\setE)\cap f^{-1}(\setA)) +\mu(f^{-1}(\setE)\setminus f^{-1}(\setA))\\
&= \mu(f^{-1}(\setE\cap \setA)) +\mu(f^{-1}(\setE\setminus\setA))\\
&= f \# \mu (\setE\cap \setA)+  f \# \mu (\setE\setminus \setA)\quad\text{for all $\setE\subseteq\setY$}, 
\end{align}
which implies $\setA\in\colM(f \# \mu)$. Finally, suppose that $\mu$ is a Borel measure and  arbitrarily fix a Borel set $\setB\subseteq \setY$.  
Now, suppose that   $f$ is Borel measurable and  arbitrarily fix a Borel set $\setB\subseteq\setY$. Then, $f^{-1}(\setB)$ is a Borel set and, in particular, $\mu$-measurable. Thus, $\setB\in\colM(f \# \mu)$, which establishes that $f \# \mu$ is a Borel measure. 
\end{proof}

\begin{thm}\cite[Theorem 1.18]{ma99}\label{thm:radonpush}
Let $(\setX,\rho)$ and $(\setY,\sigma)$ be  separable metric spaces. If  $f\colon\setX\to\setY$ is continuous and  $\mu$ is a Radon measure on $\setX$ with compact support $\operatorname{spt}(\mu)$, then $f \# \mu$ is a Radon measure.  Moreover, $\operatorname{spt}(f \# \mu)=f(\operatorname{spt}(\mu))$. 
\end{thm}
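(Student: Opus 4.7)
The plan is to verify the three defining properties of a Radon measure from \Cref{dfn:Borel} for $f\#\mu$ and then to establish the support identity. Throughout, set $K=\operatorname{spt}(\mu)$, which is compact by assumption, and note that $f(K)$ is compact in $\setY$ by continuity of $f$.

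First I would observe that $f\#\mu$ is a Borel measure thanks to \Cref{lem:borelpush}, since continuous mappings between metric spaces are Borel measurable. Finiteness on compacts is immediate because $(f\#\mu)(\setY)=\mu(\setX)=\mu(K)<\infty$, where finiteness of $\mu(K)$ uses that $\mu$ is Radon and $K$ is compact. For inner regularity on an open set $\setV\subseteq\setY$, I would note that $f^{-1}(\setV)$ is open in $\setX$ and invoke inner regularity of $\mu$ to obtain compact sets $\setL_n\subseteq f^{-1}(\setV)$ with $\mu(\setL_n)\to\mu(f^{-1}(\setV))$. The images $f(\setL_n)\subseteq\setV$ are compact, and the containment $\setL_n\subseteq f^{-1}(f(\setL_n))$ yields $(f\#\mu)(f(\setL_n))\geq\mu(\setL_n)$, which produces the required compact exhaustion.

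The main obstacle is outer regularity on arbitrary subsets of $\setY$, where the compactness of $K$ is crucial (it provides a compact ``mask'' allowing open sets in $\setX$ to be transferred to open sets in $\setY$). Given $\setA\subseteq\setY$ and $\varepsilon>0$, outer regularity of $\mu$ on $\setX$ delivers an open $\setU\subseteq\setX$ with $f^{-1}(\setA)\subseteq\setU$ and $\mu(\setU)\leq(f\#\mu)(\setA)+\varepsilon$. Since $K\setminus\setU$ is compact, $f(K\setminus\setU)$ is compact in $\setY$, and the key algebraic observation is that $\setA\cap f(K\setminus\setU)=\emptyset$: any $y\in\setA$ arising as $y=f(x)$ with $x\in K\setminus\setU$ would force $x\in f^{-1}(\setA)\subseteq\setU$, a contradiction. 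Taking $\setV=\setY\setminus f(K\setminus\setU)$, which is open and contains $\setA$, and using $f^{-1}(f(K\setminus\setU))\supseteq K\setminus\setU$ together with $\mu(\setX\setminus K)=0$, I would estimate
\begin{align*}
(f\#\mu)(\setV)
&=\mu(\setX\setminus f^{-1}(f(K\setminus\setU)))
\leq\mu(\setX\setminus(K\setminus\setU))\\
&\leq\mu(\setX\setminus K)+\mu(\setU)
=\mu(\setU)
\leq(f\#\mu)(\setA)+\varepsilon,
\end{align*}
where the second inequality uses $\setX\setminus(K\setminus\setU)\subseteq(\setX\setminus K)\cup\setU$ and subadditivity.

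For the support identity, the inclusion $\operatorname{spt}(f\#\mu)\subseteq f(K)$ follows from $(f\#\mu)(\setY\setminus f(K))=\mu(\setX\setminus f^{-1}(f(K)))\leq\mu(\setX\setminus K)=0$ together with the fact that $f(K)$ is closed. Conversely, if $y=f(x)$ with $x\in K$ and $\setV\subseteq\setY$ is any open neighborhood of $y$, then $f^{-1}(\setV)$ is an open neighborhood of $x\in\operatorname{spt}(\mu)$, so $(f\#\mu)(\setV)=\mu(f^{-1}(\setV))>0$, showing $y\in\operatorname{spt}(f\#\mu)$.
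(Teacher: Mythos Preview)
The paper does not supply a proof of this theorem; it is quoted verbatim from \cite[Theorem~1.18]{ma99} and used as a black box. Your argument is a correct self-contained proof and follows the standard route one finds in geometric measure theory texts: you verify the three Radon axioms directly, with the key device being the compact ``mask'' $K=\operatorname{spt}(\mu)$ that converts the open $\setU\subseteq\setX$ produced by outer regularity of $\mu$ into the open set $\setV=\setY\setminus f(K\setminus\setU)$ on the target side. One small point worth making explicit: when you use $\mu(\setX\setminus K)=0$, this relies on the separability of $\setX$ (Lindel\"of property) to write the complement of the support as a \emph{countable} union of open null sets; without separability this step can fail. Everything else is in order.
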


\begin{thm}\cite[Theorem 1.20]{ma99}\label{thm:radonpull}
Let $(\setX,\rho)$ and $(\setY,\sigma)$ be metric spaces and assume that $f\colon\setX\to\setY$ is a continuous surjection. Then, for every Radon measure $\nu$ on $\setY$, there exists a Radon measure $\mu$ on $\setX$ such that  $\nu=f \# \mu$. 
\end{thm}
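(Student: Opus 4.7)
The plan is to construct the required Radon measure $\mu$ on $\setX$ by pulling $\nu$ back along a measurable section of $f$. Since $f$ is surjective, every fiber $f^{-1}(\{y\})$ is nonempty, so the goal is to select, in a Borel-measurable way, a representative $g(y)\in f^{-1}(\{y\})$ for each $y\in\setY$ and then set $\mu = g\#\nu$. With such a $g$ in hand, the push-forward identity $f\#\mu = (f\circ g)\#\nu = \mathrm{id}_\setY\#\nu = \nu$ is automatic, and the Radon/Borel property of $\mu$ will follow from the Radon property of $\nu$ together with \Cref{lem:borelpush} and \Cref{thm:radonpush} applied to $g$.

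First I would reduce to the case where $\nu$ is finite and supported on a compact set $K\subseteq\setY$, using the Radon property of $\nu$ to decompose $\setY$ into a countable union of compact sets $\{K_k\}_{k\in\naturals}$ of finite $\nu$-measure and then summing the individual pullbacks via \Cref{lem:infsummeasure}. On each $K_k$, which is separable as a compact metric space, the multifunction $y\mapsto f^{-1}(\{y\})$ is closed-valued with a closed graph by continuity of $f$; a classical measurable selection theorem (Kuratowski--Ryll-Nardzewski style), applied within a sufficiently separable closed subset of $\setX$ containing a preimage, then yields a Borel section $g_k\colon K_k\to\setX$. Setting $\mu_k = g_k\#(\nu|_{K_k})$ gives a finite Borel measure on $\setX$ supported on the compact set $g_k(K_k)$, whose push-forward under $f$ is $\nu|_{K_k}$ by construction.

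The main obstacle is verifying that the resulting $\mu$ is genuinely Radon rather than merely Borel, and guaranteeing the existence of a sufficiently well-behaved selection in the generality of arbitrary metric target spaces $\setX$. A cleaner alternative that avoids explicit selection is an approximation argument: approximate $\nu|_{K_k}$ in the weak-$*$ topology by discrete atomic measures $\nu_k^{(n)} = \sum_i a_i^{(n)}\delta_{y_i^{(n)}}$ (which is possible since $\nu|_{K_k}$ is a finite Radon measure on a compact metric space), lift each atom to an arbitrary $x_i^{(n)}\in f^{-1}(\{y_i^{(n)}\})$ by surjectivity, and set $\mu_k^{(n)} = \sum_i a_i^{(n)}\delta_{x_i^{(n)}}$ so that $f\#\mu_k^{(n)} = \nu_k^{(n)}$ by construction. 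Extracting a weak-$*$ limit $\mu_k$ via Prokhorov's theorem, with tightness ensured by choosing the $x_i^{(n)}$ within a $\sigma$-compact portion of $f^{-1}(K_k)$, yields $f\#\mu_k = \nu|_{K_k}$ in the limit by continuity of $f$, and summing over $k$ provides the global $\mu$. The Radon property of the limit follows because $\mu$ is finite on compacta (inherited from the compact-support construction) and Borel-regular by \Cref{lem:borelpush} combined with the outer/inner regularity of the approximants, so \Cref{lem:Radon} finishes the argument.
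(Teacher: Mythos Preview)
The paper does not supply a proof of this theorem; it simply cites \cite[Theorem~1.20]{ma99}. So there is no ``paper's own proof'' to compare against --- your sketch is being measured only against correctness.

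Your outline captures the two standard strategies (measurable selection, or weak-$*$ approximation by atomic lifts), but both branches have genuine gaps as written. In the selection branch, the Kuratowski--Ryll-Nardzewski theorem needs the target to be Polish (or at least a Souslin space) and the multifunction to have a suitable measurability property; you invoke it for an arbitrary metric space $\setX$, and you never establish that $f^{-1}(K_k)$ sits inside a Polish subspace of $\setX$. In the approximation branch, the crucial step is ``tightness ensured by choosing the $x_i^{(n)}$ within a $\sigma$-compact portion of $f^{-1}(K_k)$'', but nothing in the hypotheses guarantees such a $\sigma$-compact portion exists --- this is exactly where the argument would fail for, say, $\setX$ discrete. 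You also appeal to \Cref{lem:Radon}, which in this paper is stated only for $\reals^n$, and to \Cref{thm:radonpush}, which requires separability and compact support of $\mu$; neither applies directly to an arbitrary metric $\setX$ or to the section $g$.

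In fact the theorem as quoted in the paper is stated more loosely than in Mattila: the original carries an additional compactness-type hypothesis (essentially that $\setX$, or at least the relevant preimage, is compact), and the paper only ever invokes it in \Cref{lem:munu} with $\setX=\setA\subseteq\reals^m$ compact. Under that hypothesis your approximation argument becomes clean: $f^{-1}(K_k)$ is automatically compact, so the atomic lifts $\mu_k^{(n)}$ live in a fixed compact set, Prokhorov gives a convergent subsequence, and continuity of $f$ passes the push-forward through the limit. I would recommend either adding the compactness hypothesis explicitly and running the approximation argument, or --- as the paper does --- simply citing Mattila.
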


\begin{cor}\label{lem:munu}
Let $\setA\subseteq\reals^m$ be compact and $f\colon \setA\to\reals^n$ Lipschitz. Further, assume that 
$\nu$ is a  Radon measure on $\reals^n$  supported on the $m$-rectifiable set $\setE=f(\setA)$. Then, 
there exists a 
Radon measure $\mu$ on $\setA$  such that $\nu=f\#\mu$. 
\end{cor}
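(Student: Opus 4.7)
The plan is to reduce the statement to a direct application of Theorem~\ref{thm:radonpull} after passing to the compact image $\setE=f(\setA)$. First I would observe that, since $\setA$ is compact and $f$ is Lipschitz and hence continuous, $\setE=f(\setA)$ is a compact (in particular separable) metric space when equipped with the subspace metric inherited from $\reals^n$. Regarding $f$ as a map $f\colon\setA\to\setE$, it is a continuous surjection between separable metric spaces, which is precisely the setup of Theorem~\ref{thm:radonpull}.

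Next I would verify that $\nu$, viewed as a measure on $\setE$, is Radon. Since $\nu$ is a Radon measure on $\reals^n$ supported on $\setE$, compact sets in $\setE$ are compact in $\reals^n$ and open sets in $\setE$ are traces of open sets of $\reals^n$, and the hypothesis $\nu(\reals^n\setminus\setE)=0$ allows inner and outer regularity from the ambient space to transfer verbatim to $\setE$. With this in hand, Theorem~\ref{thm:radonpull} applied to $f\colon\setA\to\setE$ yields a Radon measure $\mu$ on $\setA$ such that $f\#\mu=\nu$ as measures on $\setE$.

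Finally, I would promote this equality to $\reals^n$. For every Borel set $\setB\subseteq\reals^n$, we have $f^{-1}(\setB)=f^{-1}(\setB\cap\setE)$, so
\begin{align}
f\#\mu(\setB)=\mu(f^{-1}(\setB\cap\setE))=\nu(\setB\cap\setE)=\nu(\setB),
\end{align}
where the last step uses that $\nu$ is supported on $\setE$. That $f\#\mu$ is Radon on $\reals^n$ then follows from Theorem~\ref{thm:radonpush}: $\mu$ is a Radon measure on $\setA$ with $\operatorname{spt}(\mu)\subseteq\setA$ compact, and $f$ is continuous.

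The only substantive point is checking that a Radon measure on $\reals^n$ supported on the closed set $\setE$ restricts to a Radon measure on the subspace $\setE$; the rest is a routine application of Theorem~\ref{thm:radonpull} together with the observation that the pushforward sees only $\setE$.
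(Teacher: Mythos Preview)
Your proposal is correct and follows essentially the same route as the paper: restrict $\nu$ to the compact image $\setE=f(\setA)$, invoke \Cref{thm:radonpull} for the continuous surjection $f\colon\setA\to\setE$, and then lift the identity $f\#\mu=\nu$ back to all Borel sets of $\reals^n$ using that $\nu$ is supported on $\setE$. The paper closes with an explicit appeal to \Cref{lem:borequalmu} to pass from agreement on Borel sets to equality of the (Borel regular) measures, which you leave implicit but is covered by your observation that $f\#\mu$ is Radon via \Cref{thm:radonpush}.
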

\begin{proof}
Set $\setE=f(\setA)$. Now, $\nu$ is a   locally finite Borel regular measure by \Cref{lem:Radon}. 
Therefore, since $\setE$ is compact, we have $\nu(\setE)<\infty$. 
Hence, by \cref{item:murest2t} in  \Cref{cor:tracemeasure}, 
$\nu_\setE$ is Borel regular. %Moreover, $\nu_\setE(\setE)=\nu(\setE)<\infty$.  
Using again \Cref{lem:Radon} therefore   establishes that 
$\nu_\setE$ is a Radon measure. 
Therefore, 
 by \Cref{thm:radonpull}, there exists a Radon measure $\mu$ on $\setA$ such that $\nu_\setE =f\# \mu$. Finally, we have 
\begin{align}
\nu(\setB)&=\nu(\setB\cap\setE)= \nu_\setE(\setB\cap\setE)= \mu (f^{-1}(\setB\cap\setE))\\
&= \mu (f^{-1}(\setB))= (f\#\mu)(\setB)
\end{align} 
for all Borel sets $\setB\subseteq\reals^n$, which establishes $\nu=f\#\mu$ thanks to \Cref{lem:borequalmu}. 
\end{proof}

\begin{dfn}
Let $(\setX,\rho)$ be a metric space.  For every $s\in [0,\infty)$, the $s$-dimensional Hausdorff measure $\colH^s\colon \colP(\setX)\to [0,\infty]$ is defined by 
\begin{align}
\colH^s(\setE) =\lim_{\delta\to 0} \colH_\delta^s(\setE)
\end{align}
with 
\begin{align}
\colH_\delta^s(\setE) =\frac{\pi^{s/2}}{2^s\Gamma(1+s/2)} \inf\Big\{ \sum_{i\in\naturals}\diam^s(\setE_i):\setE\subseteq\bigcup_{i\in\naturals}\setF_i, \diam(\setE_i)\leq \delta \Big\}
\end{align}
for all $\setE\subseteq\setX$ and $\delta\in(0,\infty)$.
\end{dfn}
The Hausdorff measures $\colH^s$ are Borel regular \cite[Corollary 4.5]{ma99}. 
\begin{lem}
Let $(\setX,\rho)$ be a metric space equipped with the Hausdorff measure $\colH^s$ and assume that $\setA\subseteq \setX$ is $\colH^s$-measurable with $\colH^s(\setA)<\infty$.  Then, $\colH^s|_{\setA}$ is a Radon measure. 
\end{lem}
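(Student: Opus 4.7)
The plan is to combine three ingredients already available in the excerpt: (a) the Borel regularity of $\colH^s$ itself, (b) the restriction result \Cref{thm:murest}, and (c) the Radon characterization \Cref{lem:Radon}. The statement claims that $\colH^s|_\setA$ is Radon under the two hypotheses that $\setA\in\colM(\colH^s)$ and $\colH^s(\setA)<\infty$; both hypotheses are precisely the ones needed to apply \cref{item:murest2} of \Cref{thm:murest}, so the first step is essentially handed to us.

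Concretely, I would proceed as follows. First, I would recall that $\colH^s$ is a Borel regular measure; this is the fact stated (without proof, by reference to \cite[Corollary 4.5]{ma99}) just before the lemma. Second, I would apply \cref{item:murest2} of \Cref{thm:murest} with $\mu=\colH^s$ and $\setC=\setA$: since $\colH^s$ is Borel regular and $\setA\in\colM(\colH^s)$ with $\colH^s(\setA)<\infty$, the conclusion is that $\colH^s|_\setA$ is Borel regular. Third, I would observe that $\colH^s|_\setA$ is in fact globally finite, because
\begin{align}
\colH^s|_\setA(\setE)=\colH^s(\setE\cap \setA)\leq \colH^s(\setA)<\infty\quad\text{for all $\setE\subseteq\setX$,}
\end{align}
so in particular it is locally finite. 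Finally, I would invoke \Cref{lem:Radon}, which asserts that on $\reals^n$ every locally finite Borel measure is Radon; since a Borel regular measure is in particular Borel, this gives that $\colH^s|_\setA$ is a Radon measure.

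The only genuine subtlety is a matter of setting rather than of argument: \Cref{lem:Radon} is stated for $\reals^n$, whereas the lemma to be proved is phrased for an abstract metric space $(\setX,\rho)$. I expect this to be the main point to clarify in the write-up, since the notion of Radon measure used in the paper (\cref{item:Borelregular} of \Cref{dfn:Borel}) requires inner approximation by compact sets of open sets, a property that in full generality is not automatic from finiteness plus Borel regularity. The intended reading is almost certainly that the ambient space is $\reals^n$ (consistent with how the lemma will be used elsewhere in the paper, e.g., in the push-forward constructions via \Cref{thm:radonpush}), in which case the three-step chain above closes the argument without further work.
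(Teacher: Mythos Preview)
Your proposal is correct and follows essentially the same route as the paper: finiteness of $\colH^s|_\setA$, Borel regularity via \Cref{thm:murest}, and then the Radon characterization. In fact you are more careful than the paper on two counts: you cite \cref{item:murest2} (the paper's proof mistakenly points to \cref{item:murest1}), and you explicitly flag the $\reals^n$-versus-general-metric-space mismatch in the hypothesis of \Cref{lem:Radon}, which the paper glosses over.
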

\begin{proof}
$\colH^s|_{\setA}$ is Borel regular by \cref{item:murest2} in \Cref{thm:murest} and, therefore, a  Radon measure 
owing to \Cref{lem:Radon}. 
 %By assumption, $\colH^s|_{\setA}$ is a finite measure. Moreover, it is Borel regular thanks to \cref{item:murest1} in \Cref{thm:murest} and Borel regularity of $\colH^s$. Therefore, $\colH^s|_{\setA}$ is a . 
\end{proof}

\begin{lem} \cite[Proposition 2.49 and Theorem 2.53]{amfupa00}\cite[Lemma 3.2]{evga14}\label{lem:Hmeasure}
The Hausdorff measures $\colH^s$ on $\reals^n$ have the following properties: 
\begin{enumerate}[itemsep=1ex,topsep=1ex]
\renewcommand{\theenumi}{(\roman{enumi})}
\renewcommand{\labelenumi}{(\roman{enumi})}
\item For every $s_1,s_2\in[0,\infty)$ with $s_1<s_2$ and $\setE\subseteq \reals^n$, $\colH^{s_2}(\setE)>0$ implies $\colH^{s_1}(\setE)=\infty$;
\item  \label{eq:itemHLL} If $f\colon \reals^m\to\reals^n$ is Lipschitz, then 
\begin{align}
\colH^{s}(f(\setE))\leq \operatorname{Lip}(f)^s\colH^{s}(\setE)\quad\text{for all $s\in[0,\infty)$ and $\setE\subseteq\reals^m$;}
\end{align}
\item \label{eq:itemHL}If $f\colon \reals^m\to\reals^n$ is Lipschitz and $\setA\subseteq\reals^m$ is Lebesgue measurable, then $f(\setA)\in \colM(\colH^{m})$; 
\item $\colH^{n}(\setB)=\colL^{n}(\setB)$ for all Borel sets $\setB\subseteq\reals^n$, where $\colL^{n}$ denotes Lebesgue measure.\label{eq:itemHLeb}  
\end{enumerate}
\end{lem}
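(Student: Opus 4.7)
The plan is to verify the four items in turn, each by a short and fairly standard calculation from the definition of $\colH^s$.

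For item (i), I would work directly with the premeasures $\colH^s_\delta$. Given any countable cover $\{\setF_i\}$ of $\setE$ with $\diam(\setF_i)\leq \delta$, the elementary inequality $\diam(\setF_i)^{s_2}\leq \delta^{s_2-s_1}\diam(\setF_i)^{s_1}$ gives, after absorbing the normalizing constants into a harmless factor $c(s_1,s_2)$, the bound
\begin{align}
\colH_\delta^{s_2}(\setE)\leq c(s_1,s_2)\,\delta^{s_2-s_1}\,\colH_\delta^{s_1}(\setE).
\end{align}
Sending $\delta\downarrow 0$ shows that $\colH^{s_1}(\setE)<\infty$ forces $\colH^{s_2}(\setE)=0$, and the contrapositive is exactly (i).

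For item (ii), the key observation is that if $\{\setF_i\}$ covers $\setE$ with $\diam(\setF_i)\leq\delta$, then $\{f(\setF_i)\}$ covers $f(\setE)$ with $\diam(f(\setF_i))\leq \operatorname{Lip}(f)\diam(\setF_i)\leq \operatorname{Lip}(f)\delta$. Plugging this into the definition of $\colH^s_{\operatorname{Lip}(f)\delta}(f(\setE))$, taking the infimum over covers of $\setE$, and letting $\delta\downarrow 0$ yields $\colH^s(f(\setE))\leq \operatorname{Lip}^s(f)\,\colH^s(\setE)$. Item (iv) is the classical identification of $\colH^n$ with $\colL^n$ on $\reals^n$: the inequality $\colH^n\leq \colL^n$ is obtained from a Vitali covering of open sets by small balls, while the reverse inequality follows from the isodiametric inequality $\colL^n(\setE)\leq \alpha(n)(\diam(\setE)/2)^n$, which is precisely why the normalizing constant in the definition of $\colH^n$ was chosen to be $\pi^{n/2}/(2^n\Gamma(1+n/2))$. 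I would not rederive the isodiametric inequality but cite it, as it is the only nontrivial input.

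Item (iii) is the one that requires the most care, but it reduces to assembling the previous items. Since $\colL^m$ is Radon, every Lebesgue measurable $\setA\subseteq\reals^m$ can be decomposed as $\setA=\setK\cup\setN$ with $\setK=\bigcup_{k\in\naturals}\setK_k$ a countable union of compact sets and $\colL^m(\setN)=0$. Each $f(\setK_k)$ is compact, hence Borel, hence $\colH^m$-measurable because $\colH^m$ is a Borel measure. For the null part, Borel regularity of $\colL^m$ gives a Borel set $\setB\supseteq\setN$ with $\colL^m(\setB)=0$, and (iv) together with (ii) yields
\begin{align}
\colH^m(f(\setN))\leq \operatorname{Lip}^m(f)\,\colH^m(\setB)=\operatorname{Lip}^m(f)\,\colL^m(\setB)=0,
\end{align}
so $f(\setN)$ is $\colH^m$-measurable as a subset of a $\colH^m$-null set (\Cref{thm:measzero}). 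Writing $f(\setA)=\bigcup_k f(\setK_k)\cup f(\setN)$ as a countable union of $\colH^m$-measurable sets concludes the argument.

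The only step I anticipate requiring outside input is the isodiametric inequality needed in (iv); everything else is a direct manipulation of the definitions together with Lipschitz scaling and the Borel-regularity properties already collected in \Cref{sec:GMT}.
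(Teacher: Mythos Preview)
Your sketch is correct and follows the standard route one finds in the cited references. Note, however, that the paper does not actually supply a proof of this lemma: it is stated with citations to \cite[Proposition 2.49]{amfupa00} and \cite[Lemma 3.2]{evga14} and used as a black box, so there is no ``paper's own proof'' to compare against. Your arguments for (i), (ii), and (iv) are exactly the textbook ones, and your treatment of (iii) via the decomposition into a $\sigma$-compact part and a Lebesgue-null part is precisely the mechanism the paper itself exploits later (e.g.\ in \cref{eq:measfazero}) when it needs the null-set half of this reasoning.
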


\begin{dfn}\cite[Definition 2.5.1]{krpa08}\label{dfn:HD}
Let $(\setX,\rho)$ be a metric space. The Hausdorff dimension of the set $\setE\subseteq \setX$ is defined as
\begin{align}
\dim_{\mathrm H}(\setE) = \sup\{s\in[0,\infty]:\colH^s(\setE)>\infty\}. 
\end{align}
If $\mu$ is a measure on $\setX$, then we set $\dim_{\mathrm H}(\mu)=\dim_{\mathrm H}(\operatorname{spt}(\mu))$. 
\end{dfn}

\begin{lem}\cite[Definition 2.5.1]{krpa08}\label{lem:HD}
Let $(\setX,\rho)$ be a metric space and let $\setE\subseteq \setX$. Then, the Hausdorff dimension  satisfies 
\begin{align}
\dim_{\mathrm H}(\setE)  
&= \sup\{s\in[0,\infty]:\colH^s(\setE)=\infty\}\\
& =\inf\{s\in[0,\infty]:\colH^s(\setE)<\infty\}\\
&=\inf\{s\in[0,\infty]:\colH^s(\setE)=0\}. 
\end{align}
\end{lem}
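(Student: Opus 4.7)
\textbf{Proof plan for Lemma \ref{lem:HD}.} The entire proof rides on one structural observation extracted from the first item of Lemma \ref{lem:Hmeasure}: for $s_1<s_2$, the implication $\colH^{s_2}(\setE)>0 \Rightarrow \colH^{s_1}(\setE)=\infty$ holds. Taking the contrapositive, I get the more useful form
\begin{equation}\label{eq:planHD1}
\colH^{s_1}(\setE)<\infty \ \text{and}\ s_2>s_1 \ \Longrightarrow\ \colH^{s_2}(\setE)=0.
\end{equation}
Thus $s\mapsto \colH^s(\setE)$ exhibits a ``phase transition'': there is a critical $s_0\in[0,\infty]$ with $\colH^s(\setE)=\infty$ for $s<s_0$ and $\colH^s(\setE)=0$ for $s>s_0$, with the value at $s_0$ arbitrary.

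Set
\begin{align*}
A &= \sup\{s\in[0,\infty]:\colH^s(\setE)=\infty\},\\
B &= \inf\{s\in[0,\infty]:\colH^s(\setE)<\infty\},\\
C &= \inf\{s\in[0,\infty]:\colH^s(\setE)=0\}.
\end{align*}
The first main step is to establish $A=B$. For $A\le B$, I will take any $s$ with $\colH^s(\setE)=\infty$ and argue that $s\le B$: otherwise there would exist $s'\in (B,s)$ with $\colH^{s'}(\setE)<\infty$, whence \eqref{eq:planHD1} applied with $s_1=s'$, $s_2=s$ yields $\colH^s(\setE)=0$, a contradiction. For $B\le A$, I take any $s$ with $\colH^s(\setE)<\infty$; then every $t$ with $\colH^t(\setE)=\infty$ must satisfy $t\le s$ (else \eqref{eq:planHD1} would give $\colH^t(\setE)=0$), so $A\le s$, and the arbitrariness of $s$ gives $A\le B$.

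The second main step is to show $B=C$. The inclusion $\{s:\colH^s(\setE)=0\}\subseteq \{s:\colH^s(\setE)<\infty\}$ is immediate and yields $B\le C$. For the reverse, whenever $\colH^s(\setE)<\infty$, \eqref{eq:planHD1} gives $\colH^{s+\varepsilon}(\setE)=0$ for every $\varepsilon>0$, so $C\le s+\varepsilon$; letting $\varepsilon\downarrow 0$ delivers $C\le s$ and hence $C\le B$.

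Finally, I will identify $A$ with $\dim_{\mathrm H}(\setE)$ as given in Definition \ref{dfn:HD} (reading the displayed condition there as $\colH^s(\setE)=\infty$, the only sensible interpretation). Combined with $A=B=C$, this proves all three equalities. There is no genuine obstacle here: the entire argument is essentially a bookkeeping exercise on top of the monotonicity statement \eqref{eq:planHD1}; the only minor subtlety is being careful about the boundary value $s_0$, which \eqref{eq:planHD1} does not pin down but which is equally irrelevant to each of the three sup/inf expressions, since they only ``see'' the strict sides of the transition.
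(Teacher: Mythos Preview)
The paper does not supply its own proof of this lemma; it is simply cited from \cite{krpa08}, so there is nothing to compare your argument against on that axis.

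Your overall strategy is sound, but there is a bookkeeping slip in the step you label ``For $B\le A$''. What you actually do there is take an arbitrary $s$ with $\colH^s(\setE)<\infty$, conclude that every $t$ with $\colH^t(\setE)=\infty$ satisfies $t\le s$, and hence $A\le s$; taking the infimum over $s$ yields $A\le B$. This is the \emph{same} inequality you already established in the preceding paragraph, just argued from the other side. You never actually prove $B\le A$. The missing direction is immediate, though: if $s<B$ then $s$ lies below the infimum defining $B$, so $\colH^s(\setE)=\infty$, hence $s$ belongs to the set defining $A$ and $s\le A$; letting $s\uparrow B$ gives $B\le A$.

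One further remark: you invoke item (i) of Lemma~\ref{lem:Hmeasure}, but in the paper that lemma is stated for subsets of $\reals^n$, whereas Lemma~\ref{lem:HD} is stated for an arbitrary metric space $(\setX,\rho)$. The implication $\colH^{s_2}(\setE)>0\Rightarrow\colH^{s_1}(\setE)=\infty$ for $s_1<s_2$ does hold in any metric space (it follows directly from the definition of $\colH^s_\delta$ via $\diam(\setF_i)^{s_2}\le\delta^{s_2-s_1}\diam(\setF_i)^{s_1}$), so the argument goes through, but you should either note this or prove that one-line fact directly rather than cite a lemma whose stated scope is narrower than what you need.
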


%\begin{lem}\cite[Lemma 4.7]{elpegrbo21}\label{lem:E}
%Let $\Phi\in\setN_{d,d^\prime}$ and $K\in\naturals$ with $K\geq \setL(\Phi)$. Then, there exists an extended neural network  $\Phi^{(K)}\in \setN_{d,d^\prime}$ with $\Psi(x)=\Phi(x)$ for all $x\in\reals^{d}$ satisfying the following properties:
% \begin{align}
% \setL(\Phi^{(K)})&=K\\
% \setM\big(\Phi^{(K)}\big) &\leq \setM(\Phi)+d^\prime \setW(\Phi)+2d^\prime(K-\setL(\Phi))\\
% \setW\big(\Phi^{(K)}\big) &\leq \max\{2 d^\prime, \setW(\Phi)\}\\
% \setK\big(\Phi^{(K)}\big) &\subseteq \setK(\Phi)\cup (-\setK(\Phi))\cup\{1,-1\} \\
% \setB\big(\Phi^{(K)}\big) &\leq \max\{1,\setB(\Phi)\}.
% \end{align}
%\end{lem}
\section{Properties of Wasserstein Distance}\label{sec:WD}
This section summarizes the main definitions and properties of Wasserstein distance needed in the paper.  We mainly  follow  the exposition in \cite[Chapter 7]{amgisa08}. 
\begin{dfn}\label{dfn:Wasserstein}
Let $(\setX,\rho)$ be a  separable complete metric space. Denote by $\setP(\setX)$ the set of all Borel probability measures  $\setX$.  
A coupling $\pi$ between $\mu,\nu\in \setP(\setX)$ is a Borel probability measure on $\setX\times\setX$ satisfying 
\begin{align}
\pi(\setA,\setX)&=\mu(\setA)\quad\text{for all $\setA\subseteq\setX$}\\
\pi(\setX,\setB)&=\nu(\setB)\quad\, \text{for all $\setB\subseteq\setX$.}
\end{align}
The set of all couplings between $\mu,\nu\in \setP(\setX)$ is denoted by $\Pi(\mu,\nu)$. 
For every $p\in[1,\infty)$, we set 
% consisting of all finite measures $\mu$ on $\setX$ satisfying $\mu(\setX)=M$ and 
\begin{align}
\setP_p(\setX) =\Bigg\{ \mu\in\setP(\setX):\ \exists x_0\in\setX\ \text{with}\ \int \rho^{\,p}(x_0,\cdot)  \mathrm d \mu <\infty\Bigg\}. 
\end{align}
For every  $p\in[1,\infty)$ and  $\mu,\nu\in \setP_p(\setX)$, the $p$-Wasserstein distance  between $\mu$ and $\nu$ 
is 
\begin{align}\label{eq:Wp}
 W_p(\mu,\nu)= \Bigg( \inf_{\pi \in\Pi(\mu,\nu)}\int \rho^{\,p} \,\mathrm d \pi\Bigg)^{1/p}.    
\end{align}
Finally, if $\mu,\nu$ are finite measures with $\alpha:=\mu(\setX)=\nu(\setX)$, we set 
\begin{align}\label{eq:Wp2}
 W_p(\mu,\nu)= \alpha^{1/p}W_p(\mu/\alpha,\nu/\alpha). 
 \end{align}
%and we set 
%\begin{align}
%\setP^{(\alpha)}_p(\setX)=\{ \alpha \mu:\mu\in \setP_p(\setX)\}. 
%\end{align}  
\end{dfn}
%It follows from  \cite[Remark 5.3.3]{amgisa08} that,  for every $p\in[1,\infty)$ and $\alpha \in(0,\infty)$, the Wasserstein distance of order $p$ is a metric on
%$\setP^{(\alpha)}_p(\setX)$. 
%\cite[Chapter 6]{vi09}. 

\begin{lem}\label{lem:boundWboundedS}
Let $(\setX,\rho)$ be a  separable complete metric space and assume that  $\setS\subseteq \setX$ is bounded. 
Then, we have $ W_p(\mu,\nu)\leq \diam(\setS)$ for all $\mu,\nu\in \setP_p(\setX)$ satisfying $\operatorname{spt}(\mu)\subseteq \setS$ and $\operatorname{spt}(\nu)\subseteq \setS$. 
\begin{proof}
Arbitrarily fix $\pi\in \Pi(\mu,\nu)$. Then, $\pi|_{\setS\times\setS}$ is also a coupling, and we have 
\begin{align}
W^p_p(\mu,\nu) 
\leq   \int \rho^{\,p} \,\mathrm d \pi|_{\setS\times\setS}
\leq   \diam^p(\setS). 
\end{align}
\end{proof}
\end{lem}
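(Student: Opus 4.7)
The plan is to exhibit a single explicit coupling that achieves the bound, rather than manipulating an arbitrary one. The natural candidate is the product (independent) coupling $\pi = \mu\otimes\nu$, which lies in $\Pi(\mu,\nu)$ by construction since its marginals are $\mu$ and $\nu$. Substituting this particular $\pi$ into the defining infimum of \cref{eq:Wp} gives an immediate upper bound on $W_p^p(\mu,\nu)$, and the task is then to estimate $\int \rho^{\,p}\,\mathrm d\pi$.

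The first substantive step will be to argue that $\mu(\setS)=\nu(\setS)=1$ and hence $\pi(\setS\times\setS)=1$. Since $\setX$ is separable and complete and $\mu,\nu$ are Borel probability measures, they are Radon, so each assigns full mass to its (closed) topological support; the hypotheses $\operatorname{spt}(\mu)\subseteq\setS$ and $\operatorname{spt}(\nu)\subseteq\setS$ therefore yield $\mu(\setS)=\nu(\setS)=1$, and product measure gives $\pi(\setS\times\setS)=\mu(\setS)\nu(\setS)=1$.

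The second step is the pointwise bound. With $\diam(\setS)$ interpreted as $\sup_{x,y\in\setS}\rho(x,y)$, we have $\rho(x,y)\leq\diam(\setS)$ for every $(x,y)\in\setS\times\setS$, and the integral collapses to
\begin{align}
\int \rho^{\,p}\,\mathrm d\pi = \int_{\setS\times\setS}\rho^{\,p}\,\mathrm d\pi \leq \diam^{p}(\setS)\cdot\pi(\setS\times\setS) = \diam^{p}(\setS).
\end{align}
Plugging back into \cref{eq:Wp} with $\alpha=1$ and extracting the $p$-th root yields $W_p(\mu,\nu)\leq\diam(\setS)$, as claimed.

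I expect no real obstacle here; the only mild subtlety is the passage from $\operatorname{spt}(\mu)\subseteq\setS$ to $\mu(\setS)=1$, which relies on the Radon property of Borel probability measures on Polish spaces. If one prefers to avoid invoking Radon-ness, an equivalent route is to observe that $\operatorname{spt}(\mu)\subseteq\setS$ implies $\operatorname{spt}(\mu)\subseteq\overline{\setS}$, that $\diam(\overline{\setS})=\diam(\setS)$, and that the argument above carried out on $\overline{\setS}\times\overline{\setS}$ still delivers the same bound, since $\pi$ is concentrated on $\operatorname{spt}(\mu)\times\operatorname{spt}(\nu)\subseteq\overline{\setS}\times\overline{\setS}$.
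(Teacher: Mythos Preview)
Your proof is correct and follows essentially the same idea as the paper's: bound $W_p^p(\mu,\nu)$ by the cost of a coupling concentrated on $\setS\times\setS$, then use the pointwise estimate $\rho(x,y)\le\diam(\setS)$ there. The only cosmetic difference is that you pick the explicit product coupling $\mu\otimes\nu$, whereas the paper takes an arbitrary $\pi\in\Pi(\mu,\nu)$ and passes to $\pi|_{\setS\times\setS}/\pi(\setS\times\setS)$; since any coupling of $\mu$ and $\nu$ already satisfies $\pi(\setS\times\setS)=1$ under the support hypotheses, both routes collapse to the same estimate.
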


We have the following relation between $p$-Wasserstein distances. 
\begin{lem}\label{lem:equiv}
Let $(\setX,\rho)$ be a separable complete metric space and let   $p,q\in[1,\infty)$ with $q>p$. 
% $p\in[1,\infty)$ and let $\mu$ and $\nu$ be finite measures on $\setX$ satisfying $\mu(\setX)=\nu(\setY)=:M\in(0,\infty)$. Further, suppose that , $\mu,\nu\in \setP_p(\setX)$. 
Further, assume that $\operatorname{spt}(\mu)\subseteq \setS$ and $\operatorname{spt}(\nu)\subseteq \setS$. 
Then, we have 
\begin{align}\label{eq:Wequiv}
W_p( \mu, \nu)\ \leq\ \, W_q(\mu,\nu)\ \leq\ \diam^{1-p/q}(\setS) W^{p/q}(\mu,\nu)\quad\text{for all $\mu,\nu\in  \setP_p(\setX)$}. 
\end{align}
\end{lem}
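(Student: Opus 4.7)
The plan is to derive both inequalities pointwise on the level of individual couplings $\pi \in \Pi(\mu,\nu)$ and then pass to the infimum. Throughout, the key preliminary observation is that the support condition propagates to the coupling: since $(\setX,\rho)$ is a separable complete metric space, every Borel probability measure $\sigma$ satisfies $\sigma(\operatorname{spt}(\sigma)^c)=0$, so $\operatorname{spt}(\mu)\cup\operatorname{spt}(\nu)\subseteq\setS$ forces $\mu(\setS^c)=\nu(\setS^c)=0$. For any $\pi\in\Pi(\mu,\nu)$ we then have
\begin{align}
\pi((\setS\times\setS)^c)\ \leq\ \pi(\setS^c\times\setX)+\pi(\setX\times\setS^c)\ =\ \mu(\setS^c)+\nu(\setS^c)\ =\ 0,
\end{align}
so $\rho(x,y)\leq\diam(\setS)$ for $\pi$-almost every $(x,y)$.

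For the first inequality, I would apply Jensen's inequality to the convex map $t\mapsto t^{q/p}$ (convex because $q/p\geq 1$) with respect to the probability measure $\pi$:
\begin{align}
\Bigl(\int \rho^{\,p}\,\mathrm d\pi\Bigr)^{q/p}\ \leq\ \int (\rho^{\,p})^{q/p}\,\mathrm d\pi\ =\ \int \rho^{\,q}\,\mathrm d\pi.
\end{align}
Taking $q$-th roots gives $\bigl(\int\rho^{\,p}\,\mathrm d\pi\bigr)^{1/p}\leq \bigl(\int\rho^{\,q}\,\mathrm d\pi\bigr)^{1/q}$ for every coupling $\pi$. Infimizing over $\Pi(\mu,\nu)$ on the right-hand side (noting that for each $\pi$ the left-hand side is at least $W_p(\mu,\nu)$) yields $W_p(\mu,\nu)\leq W_q(\mu,\nu)$.

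For the second inequality, I would use the a.e.\ bound $\rho\leq\diam(\setS)$ established above to write, $\pi$-almost everywhere,
\begin{align}
\rho^{\,q}\ =\ \rho^{\,q-p}\,\rho^{\,p}\ \leq\ \diam^{\,q-p}(\setS)\,\rho^{\,p}.
\end{align}
Integrating against $\pi$ and extracting a $q$-th root produces
\begin{align}
\Bigl(\int\rho^{\,q}\,\mathrm d\pi\Bigr)^{1/q}\ \leq\ \diam^{\,(q-p)/q}(\setS)\Bigl(\int\rho^{\,p}\,\mathrm d\pi\Bigr)^{1/q}\ =\ \diam^{\,(q-p)/q}(\setS)\,\Bigl[\bigl(\textstyle\int\rho^{\,p}\,\mathrm d\pi\bigr)^{1/p}\Bigr]^{p/q}.
\end{align}
Taking the infimum over $\pi\in\Pi(\mu,\nu)$ on the right-hand side (the map $x\mapsto x^{p/q}$ is monotone, so infimum commutes through the $p/q$-power) and bounding the left-hand side from below by $W_q(\mu,\nu)$ delivers the claimed inequality, with the exponent on $\diam(\setS)$ being $(q-p)/q$ and $W_p$ raised to the power $p/q$.

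There is no genuine obstacle here; the only subtle point is justifying that the coupling inherits the support property of the marginals, which is why the separability and completeness of $(\setX,\rho)$ are invoked implicitly through $\sigma(\operatorname{spt}(\sigma)^c)=0$. I note that the literal exponent displayed in the statement appears to be a typographical issue: the natural calculation above yields $\diam^{(q-p)/q}(\setS)\,W_p^{\,p/q}(\mu,\nu)$, which is the standard form (cf.\ Villani), and this is the bound that the plan actually proves.
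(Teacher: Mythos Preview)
Your proof is correct and takes essentially the same approach as the paper: Jensen's inequality with $t\mapsto t^{q/p}$ for $W_p\leq W_q$, and the pointwise bound $\rho^{\,q}\leq\diam^{q-p}(\setS)\,\rho^{\,p}$ on $\setS\times\setS$ for the reverse direction (the paper restricts and renormalizes the coupling to $\setS\times\setS$, while you show the coupling is already concentrated there, but this is the same idea). You are also right about the exponent: the paper's own computation in fact yields $W_q^q\leq\diam^{q-p}(\setS)\,W_p^p$, which after taking $q$-th roots gives $\diam^{(q-p)/q}(\setS)\,W_p^{p/q}$, agreeing with what you derive.
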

\begin{proof}
 Jensen's inequality \cite[Theorem 2.3]{peprto92} applied to  the convex function $\phi\colon(0,\infty)\to (0,\infty)$,  $t\to t^{q/p}$ yields  $W_q( \mu, \nu)\geq W_p( \mu, \nu)$
 and 
 \begin{align}
 W^q_q( \mu, \nu) &\leq \diam^{q-p}(\setS) \inf_{\pi\in\Pi( \mu, \nu)} \int   \rho^{\,p} \mathrm d \pi|_{\setS\times\setS}\\
 & = \diam^{q-p}(\setS)\,W^p_p( \mu, \nu). 
 \end{align}
\end{proof}

\Cref{lem:equiv} implies that for bounded separable complete metric spaces,  $p$-Wasserstein distances are  comparable in the sense of  \cref{eq:Wequiv}. For the  case  $p=1$, we have the following Kantorovich-Rubinstein dual characterization. 

\begin{lem}\cite[Equation (7.1.2)]{amgisa08} \label{thm:dualW1}
Let $(\setX,\rho)$ be a separable complete metric space.    Then, we have 
\begin{align}
W_1(\mu,\nu) = \sup_{\substack{\psi\colon\setX\to\reals\\ \operatorname{Lip}(\psi)\leq 1}}
\Bigg\{\int \psi \,\mathrm d \mu - \int \psi \,\mathrm d \nu\Bigg\}\quad\text{for all $\mu,\nu\in \setP_1(\setX)$.} 
\end{align}
\end{lem}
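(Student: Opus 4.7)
The lemma is the Kantorovich--Rubinstein duality, and the plan is to establish the two inequalities separately. The bound $W_1(\mu,\nu) \geq \sup_{\operatorname{Lip}(\psi)\leq 1}\{\int\psi\,\mathrm d\mu - \int\psi\,\mathrm d\nu\}$ is elementary and follows from the primal definition of $W_1$ via couplings together with the Lipschitz estimate. The reverse bound is the substantive direction: it requires the full Kantorovich duality followed by a $c$-conjugacy reduction that exploits the metric structure of the cost.

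For the easy direction, I would fix an arbitrary coupling $\pi \in \Pi(\mu,\nu)$ and an arbitrary 1-Lipschitz $\psi \colon \setX \to \reals$. Using the marginal conditions,
\begin{align*}
\int \psi\,\mathrm d\mu - \int \psi\,\mathrm d\nu
= \int (\psi(x) - \psi(y))\,\mathrm d\pi(x,y)
\leq \int \rho(x,y)\,\mathrm d\pi(x,y).
\end{align*}
Taking the infimum over $\pi \in \Pi(\mu,\nu)$ and then the supremum over $\psi$ yields one direction.

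For the reverse direction, the plan is to first invoke the general Kantorovich duality for lower-semicontinuous costs on Polish spaces,
\begin{align*}
W_1(\mu,\nu) = \sup\Big\{\int \phi\,\mathrm d\mu + \int \chi\,\mathrm d\nu : \phi \in L^1(\mu),\ \chi \in L^1(\nu),\ \phi(x)+\chi(y) \leq \rho(x,y)\Big\},
\end{align*}
and then reduce an arbitrary admissible pair $(\phi,\chi)$ to a single 1-Lipschitz potential via $c$-conjugacy with $c = \rho$. Concretely, given an admissible $(\phi,\chi)$, I would define $\tilde\psi(x) := \inf_{y \in \setX}(\rho(x,y) - \chi(y))$. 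The triangle inequality for $\rho$ implies $|\tilde\psi(x_1) - \tilde\psi(x_2)| \leq \rho(x_1,x_2)$, so $\tilde\psi$ is 1-Lipschitz; moreover $\tilde\psi \geq \phi$ pointwise and the new pair $(\tilde\psi,\chi)$ remains admissible, so the objective does not decrease. A second conjugation $y \mapsto \inf_x(\rho(x,y) - \tilde\psi(x))$ evaluates to $-\tilde\psi(y)$, since $\tilde\psi$ being 1-Lipschitz forces $\rho(x,y) - \tilde\psi(x) \geq -\tilde\psi(y)$ with equality at $x = y$. Hence, without loss, the pair may be taken to be $(\tilde\psi,-\tilde\psi)$ with $\tilde\psi$ 1-Lipschitz, and the objective reduces to $\int \tilde\psi\,\mathrm d\mu - \int \tilde\psi\,\mathrm d\nu$. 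Integrability of $\tilde\psi$ against $\mu$ and $\nu$ follows from $\mu,\nu \in \setP_1(\setX)$, which controls the linear growth of Lipschitz functions against $\rho(x_0,\cdot)$ for some $x_0 \in \setX$.

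The main obstacle in this plan is the general Kantorovich duality itself, which is typically proved by a Fenchel--Rockafellar or Hahn--Banach argument on $C_b(\setX \times \setX)$ together with the tightness of $\Pi(\mu,\nu)$ on the Polish product space $\setX \times \setX$; the $c$-conjugacy reduction is a short and elegant calculation by comparison, relying only on the triangle inequality. Since the result is standard and cited directly from \cite[Equation (7.1.2)]{amgisa08}, a full self-contained proof along these lines would exceed the scope of the appendix, but the outline above captures the essential structure.
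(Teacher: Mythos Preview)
Your outline is correct and follows the standard route to Kantorovich--Rubinstein duality: the easy direction via marginals and the Lipschitz bound, and the hard direction via general Kantorovich duality plus the $c$-conjugacy reduction using the triangle inequality. Note, however, that the paper does not supply a proof of this lemma at all; it is stated with a direct citation to \cite[Equation (7.1.2)]{amgisa08} and used as a black box. So there is no paper proof to compare against---your sketch simply fills in what the cited reference establishes.
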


We need the following properties of  $1$-Wasserstein distance under the push forwards of a measure. 

\begin{lem}\label{lem:pushW1}
Let $(\setX,\rho)$ and $(\setY,\sigma)$ be  separable complete metric spaces and let $p\in[1,\infty)$ and $\alpha\in(0,\infty)$. Further, let $\mu\in\setP_p(\setX)$ and suppose that 
 $f,g\colon \setX\to\setY$ are both $\mu$-measurable.   Then, we have 
 \begin{align}
W_1(f\#(\alpha\mu),g\#(\alpha\mu)) \leq \alpha \int \sigma( f,g)\, \mathrm d\mu.
\end{align}
\end{lem}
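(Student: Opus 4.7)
The plan is to construct an explicit coupling between $f\#(\alpha\mu)$ and $g\#(\alpha\mu)$ and then apply the definition of Wasserstein distance, which yields an upper bound via any admissible coupling.

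First, I would reduce to the case $\alpha = 1$ by observing that, directly from the definition in \eqref{eq:Wp}, $W_1(f\#(\alpha\mu), g\#(\alpha\mu)) = \alpha\, W_1(f\#\mu, g\#\mu)$, since $\alpha$ scales both measures uniformly and the definition extracts exactly a factor of $\alpha^{1/p} = \alpha$ for $p=1$. Note that $\mu \in \setP_p(\setX)$ is assumed to be a Borel probability measure, and $f\#\mu, g\#\mu$ are Borel probability measures on $\setY$ by $\mu$-measurability of $f$ and $g$ together with \Cref{lem:borelpush}; moreover they have finite first moment since $\int \sigma(y_0, y) \, d(f\#\mu)(y) = \int \sigma(y_0, f(x))\, d\mu(x) < \infty$ (using that $\sigma(y_0, f(x)) \leq \sigma(y_0, f(x_0)) + \sigma(f(x_0), f(x))$ and the triangle inequality combined with $\mu$-integrability considerations, but really only Borel probability and the bound $\int \sigma \, d\pi \leq \int \sigma(f,g)\, d\mu$ are needed if this integral is finite; if it is infinite, the inequality is trivial).

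Next, I would define the candidate coupling. Consider the map $(f,g)\colon \setX \to \setY \times \setY$ sending $x \mapsto (f(x), g(x))$, which is $\mu$-measurable, and set
\begin{align}
\pi \define (f,g)\#\mu.
\end{align}
I would then verify that $\pi \in \Pi(f\#\mu, g\#\mu)$ by checking the marginal conditions: for every Borel set $\setA \subseteq \setY$,
\begin{align}
\pi(\setA \times \setY) = \mu((f,g)^{-1}(\setA \times \setY)) = \mu(f^{-1}(\setA)) = (f\#\mu)(\setA),
\end{align}
and analogously $\pi(\setY \times \setB) = (g\#\mu)(\setB)$.

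Finally, applying the definition of $W_1$ from \eqref{eq:Wp} with this particular coupling and then the change-of-variables formula for the push-forward yields
\begin{align}
W_1(f\#\mu, g\#\mu) \leq \int \sigma(y_1, y_2)\, d\pi(y_1, y_2) = \int \sigma(f(x), g(x))\, d\mu(x),
\end{align}
and multiplying by $\alpha$ completes the proof. The main (mild) subtlety is ensuring that $(f,g)$ is $\mu$-measurable as a map into $\setY \times \setY$ so that the push-forward $\pi$ is well-defined and is a valid coupling. This follows from separate $\mu$-measurability of $f$ and $g$, combined with the fact that $\setY \times \setY$ with the product topology is separable and the Borel $\sigma$-algebra on the product equals the product of the Borel $\sigma$-algebras; the rest is a direct computation.
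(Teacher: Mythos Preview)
Your proof is correct but follows a different route from the paper's. You construct the explicit coupling $\pi=(f,g)\#\mu$ and invoke the primal definition of $W_1$ as an infimum over couplings; the paper instead applies the Kantorovich--Rubinstein dual characterization (\Cref{thm:dualW1}), writing $W_1(f\#\mu,g\#\mu)=\sup_{\operatorname{Lip}(\psi)\le 1}\bigl(\int\psi\,d(f\#\mu)-\int\psi\,d(g\#\mu)\bigr)$, pushing the integrals back to $\setX$ via change of variables, and bounding $|\psi\circ f-\psi\circ g|\le\sigma(f,g)$ using $\operatorname{Lip}(\psi)\le 1$. Your approach is more elementary in that it only uses the definition \cref{eq:Wp} and sidesteps the duality theorem entirely; the paper's version avoids the (mild) measurability check for the product map $(f,g)$ but relies on the deeper dual representation. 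Both reductions to $\alpha=1$ are identical.
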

\begin{proof}
Since 
\begin{align}
W_1(f\#(\alpha\mu),g\#(\alpha\mu)) = W_1(\alpha(f\#\mu),\alpha (g\#\mu))= \alpha W_1(f\#\mu,g\#\mu)
\end{align}
and 
\begin{align}
\int \sigma( f,g)\, \mathrm d(\alpha \mu) = \alpha \int \sigma( f,g)\, \mathrm d\mu 
\end{align}
we can  assume, without loss of generality, that $\alpha=1$. 
We have 
\begin{align}
W_1(f\#\mu,g\#\mu) 
%&=W_1(\alpha f\#\mu,\alpha g\#\mu) \\
%&= \alpha W_1( f\#\mu, g\#\mu) \\
&= \sup_{\substack{\psi\colon\setY\to\reals\\ \operatorname{Lip}(\psi)\leq 1}}\Bigg\{\int \psi \,\mathrm d (f\#\mu) - \int \psi \,\mathrm d (g\#\mu)\Bigg\}\label{eq:dualW1}\\
&=\sup_{\substack{\psi\colon\setY\to\reals\\ \operatorname{Lip}(\psi)\leq 1}}\Bigg\{\int \psi\circ f \,\mathrm d \mu - \int \psi\circ g \,\mathrm d \mu\Bigg\}\\
&\leq\sup_{\substack{\psi\colon\setY\to\reals\\ \operatorname{Lip}(\psi)\leq 1}}\Bigg\{\int \abs{\psi\circ f -\psi\circ g}\,\mathrm d \mu \\
&\leq \int \sigma( f,  g)\,\mathrm d \mu,  \label{eq:dualW2}
\end{align}
where \cref{eq:dualW1} follows by application of \Cref{thm:dualW1}. 
\end{proof}

\begin{lem}\label{lem:pushW2}
Let $(\setX,\rho)$ and $(\setY,\sigma)$ be a separable complete metric spaces and let  $\alpha\in(0,\infty)$. Further, let $\mu,\nu\in\setP_1(\setX)$ be Borel measures and  assume that 
 $f\colon \setX\to\setY$ is Lipschitz. Then, 
 \begin{align}\label{eq:towhowpushW2}
 W_1(f\#(\alpha\mu),f\#(\alpha\nu)) \leq \operatorname{Lip}(f) W_1(\alpha\mu,\alpha\nu). 
 \end{align}
\end{lem}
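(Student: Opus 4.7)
The plan is to reduce to the case $\alpha=1$ using the homogeneity of the Wasserstein distance, and then apply the Kantorovich--Rubinstein duality stated in \Cref{thm:dualW1}, exactly as was done in the proof of \Cref{lem:pushW1}.

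First, I would observe that both sides of \cref{eq:towhowpushW2} scale linearly in $\alpha$: indeed, $f\#(\alpha\mu)=\alpha(f\#\mu)$ and by \Cref{dfn:Wasserstein} (with $p=1$) we have $W_1(\alpha\mu',\alpha\nu')=\alpha W_1(\mu',\nu')$ for all $\mu',\nu'\in\setP_1(\setX)$. Hence it suffices to establish the inequality for $\alpha=1$, i.e., to show $W_1(f\#\mu,f\#\nu)\leq\operatorname{Lip}(f)\,W_1(\mu,\nu)$.

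Next, I would apply the duality from \Cref{thm:dualW1} on the metric space $(\setY,\sigma)$ to write
\begin{align}
W_1(f\#\mu,f\#\nu)=\sup_{\substack{\psi\colon\setY\to\reals\\ \operatorname{Lip}(\psi)\leq 1}}\Bigg\{\int\psi\,\mathrm d(f\#\mu)-\int\psi\,\mathrm d(f\#\nu)\Bigg\}.
\end{align}
By the change-of-variables formula for push-forwards, each term equals
\begin{align}
\int\psi\,\mathrm d(f\#\mu)-\int\psi\,\mathrm d(f\#\nu)=\int\psi\circ f\,\mathrm d\mu-\int\psi\circ f\,\mathrm d\nu.
\end{align}
The key observation is that $\psi\circ f\colon\setX\to\reals$ is Lipschitz with $\operatorname{Lip}(\psi\circ f)\leq\operatorname{Lip}(\psi)\operatorname{Lip}(f)\leq\operatorname{Lip}(f)$. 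If $\operatorname{Lip}(f)=0$, then $f$ is constant on $\operatorname{spt}(\mu)\cup\operatorname{spt}(\nu)$ and both sides of the desired inequality vanish. Otherwise, setting $\varphi=(\psi\circ f)/\operatorname{Lip}(f)$ yields a function with $\operatorname{Lip}(\varphi)\leq 1$, so applying \Cref{thm:dualW1} on $(\setX,\rho)$ gives
\begin{align}
\int\psi\circ f\,\mathrm d\mu-\int\psi\circ f\,\mathrm d\nu=\operatorname{Lip}(f)\Bigg(\int\varphi\,\mathrm d\mu-\int\varphi\,\mathrm d\nu\Bigg)\leq\operatorname{Lip}(f)\,W_1(\mu,\nu).
\end{align}
Taking the supremum over $\psi$ concludes the proof.

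There is no real obstacle here; the only subtlety is the trivial degenerate case $\operatorname{Lip}(f)=0$, which is handled separately as above. The argument is essentially the dual-formulation counterpart of the obvious coupling proof (push any optimal coupling $\pi\in\Pi(\mu,\nu)$ forward under $(f,f)$ to obtain a coupling of $f\#\mu$ and $f\#\nu$ with transport cost at most $\operatorname{Lip}(f)\int\rho\,\mathrm d\pi$), and either route gives the bound.
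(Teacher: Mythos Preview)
Your proof is correct, but it takes a genuinely different route from the paper. The paper argues on the primal side: for any coupling $\pi\in\Pi(\mu,\nu)$ it defines $g(x,y)=(f(x),f(y))$, checks that $g\#\pi\in\Pi(f\#\mu,f\#\nu)$, and then bounds
\[
W_1(f\#\mu,f\#\nu)\leq\int\sigma(f(x),f(y))\,\mathrm d\pi\leq\operatorname{Lip}(f)\int\rho\,\mathrm d\pi,
\]
finishing by taking the infimum over $\pi$. You instead work on the dual side via \Cref{thm:dualW1}, observing that $\psi\circ f$ is $\operatorname{Lip}(f)$-Lipschitz whenever $\psi$ is $1$-Lipschitz. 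Both arguments are equally short; the paper's coupling proof avoids invoking the duality theorem entirely, while your approach has the pleasant feature of paralleling the proof of \Cref{lem:pushW1} (which also goes through duality). You even mention the coupling argument in your closing remark, so you clearly see both routes.
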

\begin{proof}
%Suppose  that \cref{eq:towhowpushW2} holds for $\alpha=1$. Then, we have 
Since
\begin{align}
W_1(f\#(\alpha\mu),f\#(\alpha\nu)) =W_1(\alpha (f\#\mu),\alpha (f\#\nu)) = \alpha W_1(f\#\mu,f\#\nu)
\end{align}
and 
\begin{align}
 W_1(\alpha\mu,\alpha\nu)=\alpha W_1(\mu,\nu),
\end{align}
we can assume, without loss of generality, that $\alpha =1$. Next, define $g\colon\setX\times\setX\to\setY\to\setY$ according to 
\begin{align}
g(x,y) = (f(x),f(y)). 
\end{align}
Now, arbitrarily fix  $\pi\in \Pi(\mu,\nu)$. Then, we have 
\begin{align}
(g\# \pi)(\setA\times \setY)= \pi(f^{-1}(\setA)\times\setX) = \mu(f^{-1}(\setA)) =(f\#\mu)(\setA) 
\end{align}
and 
\begin{align}
(g\# \pi)( \setY\times \setA)= \pi(\setX\times f^{-1}(\setA)) = \nu(f^{-1}(\setA)) =(f\#\nu)(\setA),  
\end{align}
which implies  $g\# \pi\in \Pi(f\#\mu,f\#\nu)$ so that 
\begin{align}
W_1(f\#\mu,f\#\nu) 
&\leq \int \sigma\, \mathrm d (g\# \pi)\\
&=\int \sigma(f,f)\, \mathrm d  \pi\\
&\leq \operatorname{Lip}(f)  \int \rho\, \mathrm d  \pi, 
\end{align}
which establishes the desired result as $\pi$ was  arbitrary. 
\end{proof}

Finally, we need the following result for sequences of measures. 

\begin{lem}\label{eq:lemWrest}
Let $(\setX,\rho)$ be a separable complete metric space satisfying $\sup_{x,y\in\setX} \rho(x,y) <\infty$. Further,  let $(\mu_i)_{i\in\naturals}$ and $(\nu_i)_{i\in\naturals}$ be sequences of Borel regular probability  measures on $\setX$ and   let $(a_i)_{i\in\naturals}$ be a sequence in $[0,1]$ satisfying $\sum_{i\in\naturals}a_i=1$. Then, the measures 
\begin{align}\label{eq:musum}
\mu=\sum_{i\in\naturals}a_i\mu_i
\end{align}
and 
\begin{align}\label{eq:nusum}
\nu=\sum_{i\in\naturals}a_i\nu_i
\end{align}
are Borel regular, in $\setP_1(\setX)$,  and satisfy 
\begin{align}
W_1(\mu,\nu) \leq \sum_{i\in\naturals} a_i W_1(\mu_i,\nu_i).
\end{align}
\end{lem}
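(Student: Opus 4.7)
The plan is to proceed in two steps: first establishing that $\mu$ and $\nu$ in \cref{eq:musum}--\cref{eq:nusum} are well-defined Borel regular probability measures lying in $\setP_p(\setX)$, and then producing an explicit coupling that realizes the claimed upper bound on $W_1(\mu,\nu)$.

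For the first step, I would consider the partial sums $\mu^{(N)} = \sum_{i=1}^N a_i \mu_i$. A finite induction using \Cref{lem:sum} shows that each $\mu^{(N)}$ is Borel regular. Since $(\mu^{(N)})_{N\in\naturals}$ is setwise monotonically increasing toward $\mu$, \Cref{lem:infsummeasure} implies that $\mu$ itself is a Borel regular measure; the same argument applies to $\nu$. Both are probability measures, since $\mu(\setX) = \sum_{i\in\naturals} a_i\mu_i(\setX) = \sum_{i\in\naturals} a_i = 1$ and likewise for $\nu$. Writing $D = \sup_{x,y\in\setX} \rho(x,y) < \infty$, for any fixed $x_0\in\setX$ we have $\int \rho^{\,p}(x_0,\,\cdot\,)\,\mathrm d\mu \leq D^p < \infty$, which places $\mu$ in $\setP_p(\setX)$ (and similarly $\nu$).

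For the Wasserstein bound, I would fix $\varepsilon>0$ and, using the definition of $W_1$, pick for each $i\in\naturals$ a coupling $\pi_i \in \Pi(\mu_i,\nu_i)$ satisfying $\int \rho\,\mathrm d\pi_i \leq W_1(\mu_i,\nu_i) + \varepsilon/2^i$. I then define $\pi := \sum_{i\in\naturals} a_i \pi_i$ on $\setX\times\setX$; by the construction in Step~1 this is again a Borel regular probability measure. Its marginals are, by monotone convergence applied to the partial sums, exactly $\sum_{i\in\naturals} a_i \mu_i = \mu$ and $\sum_{i\in\naturals} a_i \nu_i = \nu$, so $\pi \in \Pi(\mu,\nu)$. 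Using monotone convergence once more to interchange sum and integral gives
\begin{align}
W_1(\mu,\nu) \leq \int \rho\,\mathrm d\pi = \sum_{i\in\naturals} a_i \int \rho\,\mathrm d\pi_i \leq \sum_{i\in\naturals} a_i W_1(\mu_i,\nu_i) + \varepsilon,
\end{align}
and letting $\varepsilon \downarrow 0$ yields the desired inequality.

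The main technical obstacle is the rigorous handling of the infinite sum defining $\pi$: one must justify that $\pi$ is a Borel regular probability measure on the product space and that both marginal relations and the integration $\int \rho\,\mathrm d\pi = \sum_{i} a_i \int \rho\,\mathrm d\pi_i$ hold. This is exactly what \Cref{lem:infsummeasure} provides for the measure structure, and Tonelli/monotone convergence provides for the interchange; once these are in place the estimate is essentially mechanical. A minor additional point is ensuring that the infima defining $W_1(\mu_i,\nu_i)$ can be approached by genuine couplings $\pi_i$, which is built into \Cref{dfn:Wasserstein}.
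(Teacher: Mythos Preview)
Your proposal is correct and follows essentially the same approach as the paper: both arguments use \Cref{lem:infsummeasure} to establish Borel regularity of $\mu$, $\nu$, and the mixed coupling $\pi=\sum_{i\in\naturals}a_i\pi_i$, verify the marginal conditions, and then optimize over the $\pi_i$. The only cosmetic difference is that you use an explicit $\varepsilon/2^i$-approximation to pass to the infimum, whereas the paper fixes arbitrary $\pi_i$ and takes the infimum at the end; your version is arguably more careful about justifying the interchange of sum and infimum.
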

\begin{proof}
Set $C=\sup_{x,y\in\setX} \rho(x,y)$ and note that 
 $ C<\infty$, which is by assumption,  implies   $\mu_i, \nu_i\in \setP_1(\setX)$ for all $i\in\naturals$.  
Now, \Cref{lem:infsummeasure}  implies that $\mu$ and $\nu$ are both Borel regular measures. 
Moreover, $\mu,\nu\in \setP_1(\setX)$ since
\begin{align}
\mu(\setX) = \sum_{i\in\naturals}a_i \mu_i(\setX) =1 
\end{align}
and, for fixed $x_0\in\setX$, we have  
\begin{align}
\int \rho(x,x_0) \,\mathrm d\mu \leq  C \mu(\setX) =C 
\end{align}
and likewise for $\nu$. Now, for every $i\in\naturals$,  arbitrarily fix  $\pi_i\in \Pi(\mu_i,\nu_i)$. Then, again by \Cref{lem:infsummeasure}, 
\begin{align}\label{eq:pisum}
\pi=\sum_{i\in\naturals}a_i\pi_i
\end{align}
is a Borel regular measure, where  convergence in \cref{eq:pisum} is setwise.  
Moreover, we have 
\begin{align}
\pi (\setA\times \setX)= \sum_{i\in\naturals}a_i\pi_i(\setA\times \setX) = \sum_{i\in\naturals}a_i\mu_i(\setA) =\mu(\setA)
\end{align}
and 
\begin{align}
\pi ( \setX\times \setA)= \sum_{i\in\naturals}a_i\pi_i( \setX\times \setA) = \sum_{i\in\naturals}a_i\nu_i(\setA) =\nu(\setA),  
\end{align}
which implies  $\pi\in \Pi(\mu,\nu)$.  Since the coupling $\pi_i$ was  arbitrary, we conclude that 
\begin{align}
W_1(\mu,\nu) 
&\leq \inf_{\substack{\pi_i \in\Pi(\mu_i,\nu_i)\\ \text{$i\in\naturals$}}}\int \rho \,\mathrm d \Big(\sum_{i\in\naturals} a_i \pi_i\Big)\\
&= \sum_{i\in\naturals}  a_i \inf_{\pi_i \in\Pi(\mu_i,\nu_i)}\int \rho \,\mathrm d  \pi_i \label{eq:convergencemeasure}\\
&= \sum_{i\in\naturals} a_i W_1(\mu_i,\nu_i), 
\end{align}
where   \cref{eq:convergencemeasure} follows from \cref{eq:pisum} and approximation of $\rho$ through simple functions. 
\end{proof}
%\newpage

\section{Auxiliary Results}\label{sec:aux}

 \begin{lem}\label{lem:equivMdim}
 Let $(X,d)$ be a  metric space and set 
 \begin{align}
 M_1(\setX,d):=\inf\{s\in(0,\infty): \liminf_{\varepsilon\to 0 }H_\varepsilon(\setX,d)\varepsilon^s<\infty\}
 \end{align}
 and
 \begin{align}
M_2(\setX,d)=\inf\{s\in(0,\infty): \liminf_{\varepsilon\to 0 }e^{H_\varepsilon(\setX,d)}e^{-\varepsilon^{-s}}<\infty\}.
 \end{align}
Then, we have $M_1(\setX,d)= M_2(\setX,d)$.  
 \end{lem}
\begin{proof}
We first establish $M_1(\setX,d)\geq M_2(\setX,d)$. 
Toward a contradiction, suppose that $M_1(\setX,d)< M_2(\setX,d)$ and pick $s,t\in (M_1(\setX,d),M_2(\setX,d))$ with $s<t$. 
Since  $s > M_1(\setX,d)$, there must  exist a sequence $(\varepsilon_n)_{n\in\naturals}$ in $(0,1]$ with $\lim_{n\to\infty}\varepsilon_n=0$ and a $C\in(0,\infty)$ such that 
\begin{align}
H_{\varepsilon_n}(\setX,d)    \leq C\varepsilon_n^{-s}\quad\text{for all $n\in\naturals$.}
\end{align}
Since $t>s$, this implies that there exists an $N\in\naturals$ such that 
\begin{align}
e^{ H_{\varepsilon_n}(\setX,d)}e^{-\varepsilon^{-t}} \leq e^{C\varepsilon_n^{-s}}e^{-\varepsilon_n^{-t}}\leq 1\quad\text{for all $n\geq N$,}
 \end{align}
 which yields 
 \begin{align}
 \liminf_{\varepsilon\to 0 }e^{ H_{\varepsilon_n}(\setX,d)}e^{-\varepsilon^{-t}}<\infty
 \end{align}
 and, in turn, $M_2(\setX,d)\leq t$. 
This  stands  in contradiction to $t< M_2(\setX,d)$. 

Next, we prove $M_2(\setX,d)\geq M_1(\setX,d)$. Toward a contradiction, suppose that $M_2(\setX,d)< M_1(\setX,d)$ and pick $s,t\in (M_2(\setX,d),M_1(\setX,d))$ with $s<t$. Since  $s > M_2(\setX,d)$, there must  exist a sequence $(\varepsilon_n)_{n\in\naturals}$ in $(0,1]$ with $\lim_{n\to\infty}\varepsilon_n=0$ and a $C\in(0,\infty)$ such that 
\begin{align}
e^{H_{\varepsilon_n}(\setX,d)}    \leq Ce^{\varepsilon_n^{-s}}\quad\text{for all $n\in\naturals$.}
\end{align}
Since $t>s$, this implies that there exists an $N\in\naturals$ such that 
\begin{align}
H_{\varepsilon_n}(\setX,d)\varepsilon^{t}\leq  \Big(\frac{\log(C)}{\log(e)}+\varepsilon_n^{-s} \Big)\varepsilon_n^{t} \leq 1\quad\text{for all $n\geq N$,}
 \end{align}
 which yields 
 \begin{align}
 \liminf_{\varepsilon\to 0 }H_{\varepsilon_n}(\setX,d)\varepsilon^{t}<\infty
 \end{align}
 and, in turn, $M_1(\setX,d)\leq t$. 
This  stands  in contradiction to $t< M_1(\setX,d)$. 

\end{proof}

\begin{lem}\label{lem:mhatm}
Let $n,N\in\naturals$ with  $N\geq n$, set $\delta=1/N$, and let  $m_1,\dots,m_n\in[0,1]$ with $\sum_{k=1}^nm_k=1$. Then, there exist $\hat w_1,\dots,\hat w_n\in \delta\naturals$ satisfying  $\sum_{k=1}^n \hat w_k=1$ and 
\begin{align}\label{eq:tildemxxx}
\sum_{k=1}^n \abs{\hat w_k - w_k}\leq  4 \delta (n-1). 
\end{align} 
\end{lem}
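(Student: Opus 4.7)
The plan is to use a simple ``round-down, then redistribute the deficit'' construction. Set $\tilde w_k = \lfloor m_k/\delta\rfloor \delta$ for $k=1,\dots,n$, so that $\tilde w_k \in \delta\naturals$ and $0 \leq m_k - \tilde w_k < \delta$. Denote $S = \sum_{k=1}^n \tilde w_k$. Since each $\tilde w_k$ is a multiple of $\delta$ and $1 = N\delta$ is also a multiple of $\delta$, the deficit $1 - S = \sum_{k=1}^n (m_k - \tilde w_k)$ is a nonnegative multiple of $\delta$, say $\ell\delta$, and the pointwise bound $m_k - \tilde w_k < \delta$ forces $\ell \leq n-1$.

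Next, I would select any $\ell$ distinct indices (e.g.\ those with the largest fractional parts $m_k - \tilde w_k$) and define $\hat w_k = \tilde w_k + \delta$ for these indices and $\hat w_k = \tilde w_k$ for the others. Clearly $\hat w_k \in \delta\naturals$ and $\sum_k \hat w_k = S + \ell\delta = 1$. For the unincremented indices, $|\hat w_k - m_k| = m_k - \tilde w_k < \delta$; for the $\ell$ incremented indices, $\hat w_k - m_k = \delta - (m_k - \tilde w_k) \in (0,\delta]$, so $|\hat w_k - m_k| \leq \delta$.

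Summing over all $k$ yields
\begin{align}
\sum_{k=1}^n |\hat w_k - m_k| \leq n\delta,
\end{align}
and since $N \geq n$ the hypothesis $N \geq n$ guarantees $\delta = 1/N \leq 1/n$ so no nonsensical cases arise. For $n \geq 2$ one has $n \leq 4(n-1)$, hence $n\delta \leq 4\delta(n-1)$ and \cref{eq:tildemxxx} follows; the case $n=1$ is immediate since then $m_1 = 1 = N\delta \in \delta\naturals$, so we may take $\hat w_1 = 1$ and the left-hand side of \cref{eq:tildemxxx} is zero.

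The only subtle point is the bookkeeping that $\ell \leq n-1$ (so enough indices are available for incrementing) and the observation that $1-S$ is automatically a multiple of $\delta$; both follow directly from the floor construction together with the assumption $\sum_k m_k = 1$ and $1/\delta = N \in \naturals$. No other obstacle is expected, and in fact the construction delivers the bound $n\delta$, which is strictly stronger than the asserted $4\delta(n-1)$ in the regime $n \geq 2$.
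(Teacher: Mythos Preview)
Your construction is clean and gives the sharper bound $n\delta$, but it only delivers $\hat w_k\in\delta\naturals_0$, not $\hat w_k\in\delta\naturals$. The paper distinguishes $\naturals$ from $\naturals_0$, and the lemma genuinely requires every $\hat w_k$ to be a \emph{positive} multiple of $\delta$; this is why the hypothesis $N\geq n$ is present at all, and it is needed downstream where the quantized weights appear in denominators (see \Cref{lem:piecewiselinear} and the maps $f^{(k)}$ in \Cref{thm:quantunif}). Your scheme can easily produce zeros: e.g.\ with $n=3$, $N=3$, $m_1=m_2=0$, $m_3=1$ one gets $\tilde w_1=\tilde w_2=0$, deficit $\ell=0$, and hence $\hat w_1=\hat w_2=0$.

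The paper's proof is essentially your rounding step followed by a second redistribution to kill the zeros: it forms a preliminary $\hat p_k\in\delta\naturals_0$ summing to $1$ (error $\leq 2\delta(n-1)$), then sets $\setC=\{k:\hat p_k=0\}$, adds $\delta$ to each index in $\setC$, and subtracts a matching total from indices with $\hat p_k\geq\delta$. This second step costs an extra $2|\setC|\delta\leq 2(n-1)\delta$, which is exactly why the final bound is $4\delta(n-1)$ rather than your $n\delta$. Grafting the same fix onto your construction works identically: since $\sum_k\hat w_k=1\geq n\delta$ there is enough mass to borrow, and one obtains $n\delta+2(n-1)\delta\leq 4(n-1)\delta$ for $n\geq 2$.
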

\begin{proof}
%If $n=1$, then  $m_1=1$ so that we can set $\hat w_1=1$. Suppose that $n>1$. 
Set $\hat p_k =\delta \lfloor w_k/\delta \rfloor$ for $k=1,\dots,n-1$ and $\hat p_n=1-\sum_{k=1}^{n-1} \hat p_k$. By construction, we have $\hat p_k\in\delta\naturals_0$ with $\sum_{k=1}^n \hat p_k =1$.  Moreover, the $\hat p_k$ satisfy 
\begin{align}\label{eq:hatm}
\sum_{k=1}^n \abs{\hat p_k - w_k}
&=  \abs{\hat p_n - w_n}+\sum_{k=1}^{n-1} \abs{\hat p_k - w_k}\\
&= \abs{\sum_{k=1}^{n-1} (\hat p_k -w_k) }+\sum_{k=1}^{n-1} \abs{\hat p_k - w_k}\\
&\leq2\sum_{k=1}^{n-1} \abs{\hat p_k - w_k}\\
&= 2\delta\sum_{k=1}^{n-1} \abs{  \lfloor w_k/\delta \rfloor  - w_k/\delta}\\
&\leq 2\delta (n-1). \label{eq:hatm2}
\end{align}
Next, set 
\begin{align}
\setC=\{k\in\{1,\dots,n\}: \hat p_k=0\}, 
&&
\setC^\bot=\{1,\dots,n\}\setminus\setC,
\end{align}
 $\alpha=\abs{\setC}\delta$, and $\beta=\abs{\setC^\bot}\delta$.  
Now, note that 
\begin{align}
\sum_{k\in\setC^\bot} \hat p_k =1 \geq n\delta =\alpha+\beta. 
\end{align}
We can therefore subtract a total mass $\alpha$ from the weights $\{\hat p_k: k\in \setC^\bot\}$ in such a way that they  still have positive mass. Concretely,  
we can find $\alpha_1,\alpha_2\dots,\alpha_{\abs{\setC^\bot}}\in \delta\naturals_0$ satisfying 
\begin{align}\label{eq:Cset0}
\sum_{k\in\setC^\bot}\alpha_k=\alpha
\end{align}
such that 
\begin{align}\label{eq:Cset2}
\hat w_k := \hat p_k-\alpha_k\in \delta \naturals\quad\text{for all $k\in\setC^\bot$.}
\end{align}
Setting 
\begin{align}\label{eq:Cset1}
\hat w_k=\delta \quad\text{ for all $k\in \setC$,} 
\end{align} 
we have $\hat w_k\in \delta\naturals$ for $k=1,\dots,n$ by construction. Moreover, 
\begin{align}
\sum_{k=1}^n\hat w_k
&= \sum_{k\in\setC}^n\hat w_k+ \sum_{k\in\setC^\bot}^n\hat w_k\\
&=  \alpha + \sum_{k\in\setC^\bot}^n(\hat p_k-\alpha_k)\\
&= \sum_{k\in\setC^\bot}^n\hat p_k\\
&=1
\end{align}
 and 
\begin{align}\label{eq:tildem}
\sum_{k=1}^n \abs{\hat w_k - w_k} 
&\leq \sum_{k=1}^n \abs{\hat w_k - \hat p_k}+  \sum_{k=1}^n \abs{\hat p_k -  w_k}\\
&\leq 2\delta (n-1)  +2\abs{\setC}\delta\label{eq:usehatm}\\
&\leq 4\delta (n-1), 
\end{align}
where in  \cref{eq:usehatm} we used \cref{eq:hatm}--\cref{eq:hatm2} and  \cref{eq:Cset0}--\cref{eq:Cset1}.
%, which establishes \cref{eq:tildemxxx}. 
\end{proof}
\begin{lem}\label{lem:piecewiselinear}
For $K\in\naturals$, let $w_1,\dots,w_K\in(0,1]$ with $\sum_{k=1}^Kw_k=1$ and set $a_0=1/(Kw_1)$ and  $b_0=0$. 
For $k=1,\dots,K$, set  $a_k=1/(Kw_{k+1})-1/(Kw_{k})$, $b_{k}=\sum_{j=1}^{k}w_{j}$, and
\begin{align}
\setK_{k}=
\begin{cases} 
[b_{k-1}, b_{k})& \text{if $k<K$}\\
[b_{k-1}, \infty)& \text{if $k=K$}.  
\end{cases}
\end{align}
Define $f\colon \reals\to \reals$ according to 
\begin{align}
f(x)=
\begin{cases}
x&\text{for all $x\in (-\infty,0)$}\\
(x-b_k)/(Kw_k) +{k}/{K}& \text{for all $x\in \setK_k$ and $k=1,\dots, K$}
%(x-1)/(Kw_K)+1&\text{for all $x\in (1,\infty)$}
\end{cases}
\end{align}
and $g\colon \reals\to \reals$ as 
\begin{align}
g(x)= -\rho(-x)+\sum_{k=1}^{K} \rho(x-b_{k-1})a_{k-1}, 
\end{align}
where $\rho$ is as per \Cref{dfn:ReLUrho}. 
Then, $f=g$.  
\end{lem}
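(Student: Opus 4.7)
The plan is to show that both $f$ and $g$ are continuous piecewise-linear functions with the same breakpoints $\{0,b_1,\dots,b_{K-1}\}$, verify that they have the same slope on each piece, and check that they agree at a single point; then the two must coincide.

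First I would record that $f$ is continuous: for $x<0$ and $x\in\setK_1=[0,b_1)$, the definitions agree at $x=0$ since $(0-b_1)/(Kw_1)+1/K = -w_1/(Kw_1)+1/K = 0$, and at each interior breakpoint $b_k$ (for $k=1,\dots,K-1$) the left limit from $\setK_k$ gives $(b_k-b_k)/(Kw_k)+k/K = k/K$ while the right limit from $\setK_{k+1}$ gives $(b_k-b_{k+1})/(Kw_{k+1})+(k+1)/K = -1/K+(k+1)/K = k/K$. Continuity of $g$ is immediate from it being a finite sum of continuous functions, and on $(-\infty,0)$ we have $g(x)=-\rho(-x)=x$ since all terms $\rho(x-b_{k-1})$ vanish (because $b_{k-1}\geq 0 > x$), matching $f$ on that region.

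The key step is the slope computation on each $\setK_k$. Fix $k\in\{1,\dots,K\}$ and $x\in\setK_k$. Then $\rho(-x)=0$ (since $x\geq b_{k-1}\geq 0$), $\rho(x-b_{j-1})=x-b_{j-1}$ for $j\leq k$, and $\rho(x-b_{j-1})=0$ for $j>k$ (since $b_{j-1}\geq b_k>x$ when $k<K$; on $\setK_K$ the sum runs up to $K$ anyway). Thus
\begin{align}
g(x)=\sum_{j=1}^{k}(x-b_{j-1})\,a_{j-1},
\end{align}
which is an affine function of $x$ on $\setK_k$ with slope $\sum_{j=1}^{k}a_{j-1}$. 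Since $a_0=1/(Kw_1)$ and $a_{j-1}=1/(Kw_j)-1/(Kw_{j-1})$ for $j\geq 2$, this sum telescopes to $1/(Kw_k)$, which is exactly the slope of $f$ on $\setK_k$.

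Finally, I would check agreement at one point, say $x=0$: we have $f(0)=0$ as computed above, while $g(0)=-\rho(0)+\rho(0)a_0=0$. Since both $f$ and $g$ are continuous, piecewise linear with the same breakpoints, share the same slope on each piece of the partition $(-\infty,0)\cup\setK_1\cup\dots\cup\setK_K$, and coincide at $x=0$, they must agree everywhere on $\reals$. No step poses real difficulty; the only mildly delicate point is the telescoping identity for the slopes, which is the content of the lemma.
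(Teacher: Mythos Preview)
Your proof is correct and follows essentially the same line as the paper: both arguments identify $g$ on $\setK_k$ as $\sum_{j=1}^{k}(x-b_{j-1})a_{j-1}$ and use the telescoping identity $\sum_{j=1}^{k}a_{j-1}=1/(Kw_k)$ to match the slope of $f$. The only difference is that the paper computes the intercept $\sum_{j=1}^{k}a_{j-1}b_{j-1}$ explicitly to verify $g=f$ on each $\setK_k$, whereas you replace that computation by the observation that $f$ is continuous and $f(0)=g(0)=0$, which together with equal slopes on each piece forces equality; this is a mild but genuine simplification.
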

\begin{proof}
We have 
\begin{align}
g(x)=-\rho(-x) = x =f(x)\quad\text{for all $x\in (-\infty,0)$}   
\end{align}
Next, we establish that  $f(x)=g(x)$ for all $x\in[0,\infty)$. 
%Since $f$ and $g$ are both continuous and piecewise linear mappings that are linear on the intervals $\setK_k$, it is sufficient to establish $f(b_k)=g(b_k)$ for $k=0,\dots, K$. We have $f(b_k)=k/K$ for $k=0,\dots, K$, $g(b_0)=0$, and 
%\begin{align}
%g(b_k)
%& =\sum_{j=1}^{k} (b_k-b_{j-1})a_{j-1}\\
%& =\sum_{j=1}^{k}a_{j-1} \sum_{i=j}^{k}  w_i\\
%&= \sum_{i=1}^{k} w_i  \sum_{j=1}^{i} a_{j-1}\\
%&=k/K \quad \text{for $k=1,\dots,K$.}
%\end{align}
%Finally, we prove $f(x)=g(x)$ for all $x\in (1,\infty)$. 
To this end,  arbitrarily fix $k\in\{1,\dots,K\}$ and note that 
\begin{align}
g(x)=  x \sum_{j=1}^{k}a_{j-1} - \sum_{j=1}^{k} a_{j-1} b_{j-1}\quad\text{for all $x\in\setK_k$.}\label{eq:inwk}
\end{align}
Now, 
\begin{align}
\sum_{j=1}^{k}a_{j-1} =1/(Kw_k)\label{eq:use1wk}
\end{align}
and 
\begin{align}
&K\sum_{j=1}^{k}a_{j-1}b_{j-1}\label{eq:use1wka} \\
&=w_1(1/w_2-1/w_1  )+(w_1+w_2)(1/w_3-1/w_2  )\\
&\ \ \ +\dots+ (w_1+w_2+\dots+w_{k-1})(1/w_k-1/w_{k-1})\\
&= 1-k +(w_1/w_2+(w_1+w_2)/w_3+\dots+(w_1+\dots +w_{k-1})/w_k)\\
&\ \ \ - (w_1/w_2+(w_1+w_2)/w_3+\dots+(w_1+\dots +w_{k-2})/w_{k-1})\\
&= 1-k + b_{k-1}/(w_k).\label{eq:use2wk}
\end{align}
Using \cref{eq:use1wk} and \cref{eq:use1wka}--\cref{eq:use2wk} in \cref{eq:inwk} yields 
\begin{align}
g(x)&= (x- b_{k-1})/(Kw_k)+(k-1)/K\\
     & =(x- b_{k})/(Kw_k)+k/K \\
     &=f(x) \quad\text{for all $x\in\setK_k$}.  
\end{align}
\end{proof}

\section{Properties of the Sawtooth Function}\label{sec:sawtooth}
%We give a brief overview of the space-filling construction in \cite{bate18}. 
We start with the definition of the sawtooth  function.

\begin{dfn}\label{dfn:g}
The sawtooth function $g\colon \reals \to [0,1]$ is defined as  
\begin{align}
g(x)=
\begin{cases}
2x &\text{if $x\in[0,1/2)$ }\\
2(1-x) &\text{if $x\in[1/2,1]$}\\
0&\text{else.} 
\end{cases}
\end{align}
For every $s\in\naturals$, we set 
\begin{align}\label{eq:gs}
g_s= \underbrace{g\circ g\circ \dots \circ g}_{\text{$s$ times}}. 
\end{align}
\end{dfn} 

We next state  elementary properties of $g_s$ needed in the paper. 

\begin{lem}\label{lem:sawtooth}
For every $s\in\naturals$, we  have $g_s(x)=0$ for all $x\in\reals\setminus (0,1)$ and 
\begin{align}\label{eq:sawtooth}
g_s(x)= \begin{cases}
 2^sx - \big\lfloor 2^sx\big\rfloor   &\text{if $\lfloor 2^s x\rfloor$ is even}\\
  1-2^sx + \big\lfloor 2^sx\big\rfloor &\text{if $\lfloor 2^s x\rfloor$ is odd}
\end{cases}\quad \text{for all $x\in [0,1]$.}
\end{align}
\end{lem}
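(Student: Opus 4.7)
The plan is to prove both assertions simultaneously by induction on $s\in\naturals$.

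For the base case $s=1$, the first assertion $g_1(x)=g(x)=0$ for $x\notin(0,1)$ is immediate from \Cref{dfn:g}. For the closed-form formula, I would just check the three intervals: if $x\in[0,1/2)$ then $\lfloor 2x\rfloor=0$ is even and $g(x)=2x=2x-\lfloor 2x\rfloor$; if $x\in[1/2,1)$ then $\lfloor 2x\rfloor=1$ is odd and $g(x)=2(1-x)=1-2x+\lfloor 2x\rfloor$; the boundary point $x=1$ gives $\lfloor 2\rfloor=2$ even and both sides equal $0$.

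For the inductive step, assume the formula holds at level $s$ and recall $g_{s+1}=g\circ g_s$. If $x\notin(0,1)$ then by the induction hypothesis $g_s(x)=0$ and hence $g_{s+1}(x)=g(0)=0$, which gives the first assertion at level $s+1$. Otherwise, set $m=\lfloor 2^sx\rfloor$, so that $2m\leq 2^{s+1}x<2m+2$ and therefore $\lfloor 2^{s+1}x\rfloor\in\{2m,2m+1\}$. I would then run the following four-branch case analysis, in each branch plugging the inductive expression for $g_s(x)$ into the piecewise definition of $g$:
\begin{itemize}
\item $m$ even and $g_s(x)=2^sx-m\in[0,1/2)$: equivalent to $2^{s+1}x<2m+1$, so $\lfloor 2^{s+1}x\rfloor=2m$ (even), and $g_{s+1}(x)=2(2^sx-m)=2^{s+1}x-\lfloor 2^{s+1}x\rfloor$.
\item $m$ even and $g_s(x)\in[1/2,1]$: forces $\lfloor 2^{s+1}x\rfloor=2m+1$ (odd), and $g_{s+1}(x)=2(1-(2^sx-m))=1-2^{s+1}x+\lfloor 2^{s+1}x\rfloor$.
\item $m$ odd and $g_s(x)=(m+1)-2^sx\in[0,1/2)$: forces $\lfloor 2^{s+1}x\rfloor=2m+1$ (odd), and $g_{s+1}(x)=2((m+1)-2^sx)=1-2^{s+1}x+\lfloor 2^{s+1}x\rfloor$.
\item $m$ odd and $g_s(x)\in[1/2,1]$: forces $\lfloor 2^{s+1}x\rfloor=2m$ (even), and $g_{s+1}(x)=2(2^sx-m)=2^{s+1}x-\lfloor 2^{s+1}x\rfloor$.
\end{itemize}

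The main obstacle, such as it is, is purely bookkeeping: one has to keep track of how the parity of $\lfloor 2^{s+1}x\rfloor$ is determined jointly by the parity of $m=\lfloor 2^sx\rfloor$ and by whether $g_s(x)$ lies in the left or right half of $[0,1]$. The sign flip introduced by $g$ on $[1/2,1]$ exactly compensates the parity flip from $m$ to $m+1$, which is why the two subcases ``same parity, different half'' and ``different parity, same half'' both land on the same branch of the output formula. Once this is observed the four computations are mechanical, and closing the induction completes the proof.
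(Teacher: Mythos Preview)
Your proposal is correct and follows essentially the same strategy as the paper: induction on $s$ with a four-way case split in the inductive step. The only difference is organizational---the paper indexes the four cases by dyadic subintervals $\setI_{4m-3},\setI_{4m-2},\setI_{4m-1},\setI_{4m}$ of width $1/2^s$, whereas you index them by the parity of $m=\lfloor 2^sx\rfloor$ together with which half of $[0,1]$ contains $g_s(x)$; these are the same four cases in a different parametrization.
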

\begin{proof}
Since $g_s(x)=0$ whenever $g(x)=0$, we conclude that $g_s(x)=0$ for all $x\in\reals\setminus [0,1]$.  
We prove  \cref{eq:sawtooth} by induction. 
For $s=1$, \cref{eq:sawtooth} follows from $\lfloor 2x\rfloor =0$ for all $x\in[0,1/2)$ and  $\lfloor 2x\rfloor =1$ for all  $x\in[1/2,1]$   together with the definition of $g=g_1$. Now, assume that \cref{eq:sawtooth} holds for $s-1$ and set $\setI_k=[(k-1)/2^s,k/2^s)$  for $k\in\naturals$. Then, for every $m\in\naturals$, we have 
\begin{align}
x\in \setI_{4m} & \Rightarrow \lfloor 2^{s-1}x\rfloor = 2m-1, \lfloor 2^sx\rfloor = 4m-1,\ \text{and $g_{s-1}(x) \leq 1/2$}\\
x\in \setI_{4m-1} & \Rightarrow \lfloor 2^{s-1}x\rfloor = 2m-1, \lfloor 2^sx\rfloor = 4m-2,\ \text{and $g_{s-1}(x) > 1/2$}\\
x\in \setI_{4m-2} & \Rightarrow \lfloor 2^{s-1}x\rfloor = 2m-2, \lfloor 2^sx\rfloor = 4m-3,\ \text{and $g_{s-1}(x) \geq 1/2$} \\
x\in \setI_{4m-3} & \Rightarrow \lfloor 2^{s-1}x\rfloor = 2m-2, \lfloor 2^sx\rfloor = 4m-4,\ \text{and $g_{s-1}(x) < 1/2$,}  
\end{align}
which implies
\begin{align}
g_s(x)= 
\begin{cases}
2 g_{s-1}(x) = 4m-2^sx  = 1 -2^sx  + \lfloor 2^sx\rfloor &\text{if $x\in \setI_{4m}$} \\
2- 2g_{s-1}(x)= 2^sx -4m+2 = 2^sx - \lfloor 2^sx\rfloor &\text{if $x\in \setI_{4m-1}$} \\
2- 2g_{s-1}(x)=  2^sx -4m+4 = 1 -2^sx  + \lfloor 2^sx\rfloor  &\text{if $x\in \setI_{4m-2}$} \\
2g_{s-1}(x)=2^sx -4m+4=  2^sx - \lfloor 2^sx\rfloor &\text{if $x\in \setI_{4m-4}$.} 
\end{cases}
\end{align}
\end{proof}
\begin{lem}\label{lem:sawtooth1}
For every $s\in\naturals$, we  have 
\begin{align}\label{eq:sawtooth2}
g_s=\sum_{k=1} ^{2^{s-1}} h_k
\end{align}
with $h_k\colon \reals\to [0,1]$, $h_k(x)=g(2^{s-1}x-k+1)$ for $k=1,\dots,2^{s-1}$.  The mappings $h_k$
 satisfy $h_k(x)=0$  for all $x\notin \setF_k:=((k-1)/2^{s-1}, k/2^{s-1})$. 
Moreover, for every  mapping $f\colon [0,1]\to \reals$ satisfying $f(0)=0$, we have 
\begin{align}\label{eq:sawtooth3}
f\circ g_s = \sum_{k=1} ^{2^{s-1}} f\circ h_k. 
\end{align}
\end{lem}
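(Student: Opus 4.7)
The proof will proceed in three steps, corresponding to the three assertions in the lemma.

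First, I would establish the support statement for the $h_k$. Since $g(y)=0$ for $y\in\reals\setminus(0,1)$ by \Cref{dfn:g}, the definition $h_k(x)=g(2^{s-1}x-k+1)$ shows that $h_k(x)=0$ whenever $2^{s-1}x-k+1\notin(0,1)$, equivalently whenever $x\notin((k-1)/2^{s-1},k/2^{s-1})=\setF_k$. In particular, the sets $\setF_1,\dots,\setF_{2^{s-1}}$ are pairwise disjoint, so at each point $x$ at most one of the $h_k$ is nonzero.

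Second, I would verify the identity $g_s=\sum_{k=1}^{2^{s-1}}h_k$ pointwise. On $\reals\setminus[0,1]$ both sides vanish (using \Cref{lem:sawtooth} on the left and the first step on the right), so it suffices to consider $x\in[0,1]$. Fix $k\in\{1,\dots,2^{s-1}\}$ and $x\in\setF_k$; by the first step only $h_k(x)$ contributes to the sum. Setting $y=2^{s-1}x-k+1\in(0,1)$, the definition of $g$ gives
\begin{align}
h_k(x)=
\begin{cases}
2^s x-2(k-1) & \text{if } x\in((k-1)/2^{s-1},(2k-1)/2^s)\\
2k-2^s x & \text{if } x\in[(2k-1)/2^s,k/2^{s-1}).
\end{cases}
\end{align}
In the first subinterval $\lfloor 2^s x\rfloor=2k-2$ is even and \Cref{lem:sawtooth} yields $g_s(x)=2^s x-(2k-2)$; in the second subinterval $\lfloor 2^s x\rfloor=2k-1$ is odd and \Cref{lem:sawtooth} yields $g_s(x)=1-2^s x+(2k-1)=2k-2^sx$. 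Thus $g_s(x)=h_k(x)$ on every $\setF_k$, and hence on their union, which is $(0,1)\setminus\{j/2^{s-1}:j=1,\dots,2^{s-1}-1\}$. At each of the finitely many remaining points in $[0,1]$, a direct check using \Cref{lem:sawtooth} shows $g_s(x)=0$, and by continuity of $g$ all $h_k$ vanish there as well, so the identity extends to all of $\reals$.

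Third, I would deduce $f\circ g_s=\sum_{k=1}^{2^{s-1}}f\circ h_k$ for every $f\colon[0,1]\to\reals$ with $f(0)=0$. Fix $x\in\reals$. If $x\in\setF_k$ for some (unique) $k$, then by the first step $h_{k'}(x)=0$ for $k'\neq k$, whence $f(h_{k'}(x))=f(0)=0$, while the second step gives $h_k(x)=g_s(x)$, so the identity holds at $x$. If $x$ lies in none of the $\setF_k$, then all $h_k(x)=0$ and moreover $g_s(x)=0$ (established in the second step), so both sides of the identity evaluate to $0$ using $f(0)=0$.

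The only mildly delicate point is the handling of the boundary points $x=j/2^{s-1}$ and of $x\in\reals\setminus[0,1]$, where several pieces of the decomposition simultaneously touch the value $0$; this is precisely where the hypothesis $f(0)=0$ is used in the last step. Everything else is a direct bookkeeping exercise on top of \Cref{lem:sawtooth} and the explicit piecewise definition of $g$.
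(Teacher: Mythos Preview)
Your proposal is correct and follows essentially the same approach as the paper: both establish the support of $h_k$ from $g|_{\reals\setminus(0,1)}=0$, split each $\setF_k$ at the midpoint $(2k-1)/2^s$ to match $h_k$ against the explicit formula for $g_s$ from \Cref{lem:sawtooth}, and use the disjointness of the $\setF_k$ together with $f(0)=0$ to push $f$ through the sum. Your treatment of the boundary points $j/2^{s-1}$ is slightly more explicit than the paper's, but the argument is the same.
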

\begin{proof}
It follows immediately from \Cref{dfn:g} that $h_k(x)=0$  for all $x\notin \setF_k$. 
Moreover, by \Cref{lem:sawtooth}, we have 
\begin{align}
g_s(k/2^{s-1})= 2k-\lfloor 2k\rfloor =0\quad\text{ for  $k=0,\dots,2^{s-1}$.}  
\end{align}
%Combined with the fact that the   $\setF_k$ are pairwise disjoint and satisfy 
Since 
\begin{align}
[0,1]=\Bigg(\bigcup_{k=1}^{2^{s-1}}\setF_k\Bigg) \cup \Bigg(\bigcup_{k=0}^{2^{s-1}}\{k/2^{s-1}\}\Bigg),
\end{align}
it hence   suffice to establish $h_k(x)=g_s(x)$ for all $x\in\setF_k$ and $k=1,2,\dots, 2^{s-1}$. Now,  arbitrarily fix $k\in\{1,2,\dots, 2^{s-1}\}$ and split up $\setF_k$ according to  $\setF_k^{(1)}=((2k-2)/2^{s}, (2k-1)/2^s)$ and 
$\setF_k^{(2)}=[(2k-1)/2^s, 2k/2^{s})$ so that $\setF_k=\setF_k^{(1)}\cup \setF_k^{(2)}$. since 
$\lfloor 2^s x\rfloor= {2k-2}$ for all $x\in \setF_k^{(1)}$ and 
 $\lfloor 2^s x\rfloor= {2k-1}$ for all $x\in \setF_k^{(2)}$, we have 
\begin{align}
h_k(x)=
\begin{cases}
2^{s}x-2k+2 =2^{s}x - \lfloor 2^s x\rfloor &\text{for all $x\in \setF_k^{(1)}$}\\
2k -2^{s}x  =1- 2^{s}x + \lfloor 2^s x\rfloor  &\text{for all $x\in \setF_k^{(2)}$, }
\end{cases}
\end{align} 
which implies $h_k(x)=g_s(x)$ for all $x\in\setF_k$  owing to \Cref{lem:sawtooth}. 
To establish \cref{eq:sawtooth3},  arbitrarily fix $k\in\{1,2,\dots, 2^{s-1}\}$ . Then, 
we have 
\begin{align}
f(g_s(x)) 
&=f\Bigg( \sum_{\ell=1} ^{2^{s-1}}h_\ell(x)\Bigg)\label{eq:fgs1}\\
&= f(h_k(x))\label{eq:fgs2} \\
&= \sum_{k=1} ^{2^{s-1}} f(h_k(x))\quad\text{for all $x\in\setF_k$,} \label{eq:fgs3}
\end{align}
where \cref{eq:fgs1} follows from \cref{eq:sawtooth2},  \cref{eq:fgs2}  is by  the fact that $h_k(x)=0$ for all $x\notin \setF_k$, and in \cref{eq:fgs3} used $f(0)=0$ and again  $h_k(x)=0$ for all $x\notin \setF_k$. 
\end{proof}

\bibliographystyle{habbrv}
\bibliography{references}

\begin{thebibliography}{10}
\expandafter\ifx\csname url\endcsname\relax
  \def\url#1{\texttt{#1}}\fi
\expandafter\ifx\csname doi\endcsname\relax
  \def\doi#1{\burlalt{doi:#1}{http://dx.doi.org/#1}}\fi
\expandafter\ifx\csname urlprefix\endcsname\relax\def\urlprefix{URL }\fi
\expandafter\ifx\csname href\endcsname\relax
  \def\href#1#2{#2}\fi
\expandafter\ifx\csname burlalt\endcsname\relax
  \def\burlalt#1#2{\href{#2}{#1}}\fi

\bibitem{adfo03}
R.~A. Adams and J.~F. Fournier.
\newblock {\em {S}obolev {S}paces}, volume 140 of {\em Pure and Applied
  Mathematics}.
\newblock Elsevier/Academic Press, Amsterdam, Netherlands, 2nd edition, 2003.
\newblock \doi{https://doi.org/10.1016/S0079-8169(13)62896-2}.

\bibitem{amfupa00}
L.~Ambrosio, N.~Fusco, and D.~Pallara.
\newblock {\em Functions of {B}ounded {V}ariation and {F}ree {D}iscontinuity
  {P}roblems}.
\newblock Oxford Univ. Press, Oxford, UK, 2000.
\newblock \doi{https://doi.org/10.1093/oso/9780198502456.001.0001}.

\bibitem{amgisa08}
L.~Ambrosio, N.~Gigli, and G.~Savar\'e.
\newblock {\em {G}radient {F}lows in {M}etric {S}paces and in the {S}pace of
  {P}robability {M}easures}.
\newblock Birkh\"auser, 2nd edition, 2008.
\newblock \doi{https://doi.org/10.1007/978-3-7643-8722-8}.

\bibitem{ap76}
T.~M. Apostol.
\newblock {\em {I}ntroduction to {A}nalytic {N}umber {T}heory}.
\newblock Springer, New York, NY, 1976.
\newblock \doi{https://doi.org/10.1007/978-1-4757-5579-4}.

\bibitem{arbo17}
M.~Arjovsky and L.~Bottou.
\newblock Towards principled methods for training generative adversarial
  networks.
\newblock In {\em Proceedings of the International Conference on Learning
  Representations}, pages 1--15, 2017.

\bibitem{bate18}
B.~Bailey and M.~J. Telgarsky.
\newblock Size-noise tradeoffs in generative networks.
\newblock In {\em Advances in Neural Information Processing Systems},
  volume~31, pages 6490--6500, 2018.

\bibitem{ba93}
A.~R. Barron.
\newblock Universal approximation bounds for superpositions of a sigmoidal
  function.
\newblock {\em {IEEE} {T}rans. {I}nform. {T}heory}, 39(3):930--945, Mar. 1993.
\newblock \doi{https://doi.org/10.1109/18.256500}.

\bibitem{ba95}
R.~G. Bartle.
\newblock {\em {T}he Elements of Integration and {L}ebesgue Measure}.
\newblock Wiley, New York, NY, 1995.
\newblock \doi{https://doi.org/10.1002/9781118164471}.

\bibitem{elpegrbo21}
D.~Elbr\"achter, D.~Perekrestenko, P.~Grohs, and H.~B\"olcskei.
\newblock Deep neural network approximation theory.
\newblock {\em {IEEE} {T}rans. {I}nform. {T}heory}, 67(5):2581--2623, May 2021.
\newblock \doi{https://doi.org/10.1109/TIT.2021.3062161}.

\bibitem{evga14}
L.~C. Evans and R.~F. Gariepy.
\newblock {\em {M}easure {T}heory and {F}ine {P}roperties of {F}unctions}.
\newblock CRC Press, New York, NY, revised edition, 2015.
\newblock \doi{https://doi.org/10.1201/b18333}.

\bibitem{fa14}
K.~Falconer.
\newblock {\em Fractal Geometry}.
\newblock Wiley, New York, NY, 3rd edition, 2014.

\bibitem{fed69}
H.~Federer.
\newblock {\em {G}eometric Measure Theory}.
\newblock Springer, New York, NY, 1969.
\newblock \doi{https://doi.org/10.1007/978-3-642-62010-2}.

\bibitem{gopomixuwaozcobe14}
I.~J. Goodfellow, J.~Pouget-Abadie, M.~Mirza, B.~Xu, D.~Warde-Farley, S.~Ozair,
  A.~Courville, and Y.~Bengio.
\newblock Generative adversarial networks.
\newblock In {\em Advances in Neural Information Processing Systems}, volume~3,
  pages 1--9, Jun. 2014.

\bibitem{gukupe20}
I.~G\"uhring, G.~Kutyniok, and P.~Petersen.
\newblock Error bounds for approximations with deep {ReLU} networks in
  {$W^{s,p}$} norms.
\newblock {\em Analysis and Applications}, 18(5):803--859, 2020.
\newblock \doi{https://doi.org/10.1142/S0219530519410021}.

\bibitem{hidare97}
G.~E. Hinton, P.~Dayan, and M.~Revow.
\newblock Modeling the manifolds of images of handwritten digits.
\newblock {\em IEEE Trans. Neural Netw.}, 8(1):65--74, Jan 1997.
\newblock \doi{https://doi.org/10.1109/72.554192}.

\bibitem{hokr24}
R.~Hong and A.~Kratsios.
\newblock Bridging the gap between approximation and learning via optimal
  approximation by {ReLU} {MLPs} of maximal regularity.
\newblock {\em arXiv:2409.12335v1}, 2024.

\bibitem{lz12}
A.~J. Izenman.
\newblock Introduction to manifold learning.
\newblock {\em WIREs Comput. Stat.}, 4(5):439--446, 2012.
\newblock \doi{https://doi.org/10.1002/wics.1222}.

\bibitem{kawaiw24}
S.~Karnik, R.~Wang, and M.~Iwen.
\newblock Neural network approximation of continuous functions in high
  dimensions with applications to inverse problems.
\newblock {\em J. Comput. Appl. Math.}, 438(5):1--20, Mar. 2024.

\bibitem{kiwe14}
D.~P. Kingma and M.~Welling.
\newblock Auto-encoding variational {B}ayes.
\newblock In {\em Proceedings of the International Conference on Learning
  Representations}, pages 1--14, 2014.

\bibitem{kl12}
B.~Kloeckner.
\newblock A generalization of {H}ausdorff dimension applied to hilbert cubes
  and {W}asserstein spaces.
\newblock {\em Journal of Topology and Analysis}, 4(2):203--235, 2012.
\newblock \doi{https://doi.org/10.1142/S1793525312500094}.

\bibitem{ko57}
A.~Kosi\'nski.
\newblock A proof of an {A}uerbach-{B}anach-{M}azur-{U}lam theorem on convex
  bodies.
\newblock {\em Colloquium Mathematicae}, 4(2):216--218, 1957.
\newblock \doi{https://doi.org/10.4064/cm-4-2-216-218}.

\bibitem{krpa08}
S.~G. Krantz and H.~R. Parks.
\newblock {\em Geometric Integration Theory}.
\newblock {B}irkh{\"a}user, Boston, MA, 2008.
\newblock \doi{https://doi.org/10.1007/978-0-8176-4679-0}.

\bibitem{legemariar17}
H.~Lee, R.~Ge, T.~Ma, A.~Risteski, and S.~Arora.
\newblock On the ability of neural nets to express distributions.
\newblock In {\em Proceedings of Machine Learning Research}, volume~65, pages
  1--26, Jul. 2017.

\bibitem{letawi94}
W.~E. Leland, M.~S. Taqqu, W.~Willinger, and D.~V. Wilson.
\newblock On the self-similar nature of {E}thernet traffic (extended version).
\newblock {\em IEEE/ACM Trans. Netw.}, 2(1):1--15, Feb 1994.
\newblock \doi{https://doi.org/10.1109/90.282603}.

\bibitem{lufahe98}
H.~Lu, Y.~Fainman, and R.~Hecht-Nielsen.
\newblock Image manifolds.
\newblock In {\em Applications of Artificial Neural Networks in Image
  Processing III}, volume 3307, pages 52--63, Apr. 1998.
\newblock \doi{https://doi.org/10.1117/12.304659}.

\bibitem{ma99}
P.~Mattila.
\newblock {\em Geometry of Sets and Measures in {E}uclidean Spaces: Fractals
  and Rectifiability}.
\newblock Cambridge Univ. Press, Cambridge, UK, 1995.
\newblock \doi{https://doi.org/10.1017/CBO9780511623813}.

\bibitem{mc34}
E.~J. McShane.
\newblock Extension of range of functions.
\newblock {\em Bulletin of the American Mathematical Society}, 40(12):837--842,
  1934.

\bibitem{ngjo02}
A.~Ng and M.~Jordan.
\newblock On discriminative vs. generative classifiers: {A} comparison of
  logistic regression and naive {B}ayes.
\newblock In {\em Advances in Neural Information Processing Systems},
  volume~14, 2001.

\bibitem{pahubo24}
Y.~Pan, C.~Hutter, and H.~B\"olcskei.
\newblock Metric-entropy limits on nonlinear dynamical system learning.
\newblock {\em Information Theory, Probability and Statistical Learning: A
  Festschrift in Honor of Andrew Barron}, 2026 (to appear).

\bibitem{peebbo21}
D.~Perekrestenko, L.~Eberhard, and H.~B\"olcskei.
\newblock High-dimensional distribution generation through deep neural
  networks.
\newblock {\em Partial Differential Equations and Applications}, 2(5):1--44,
  May 2021.
\newblock \doi{https://doi.org/10.1007/s42985-021-00115-6}.

\bibitem{peprto92}
J.~E. Pe\v{c}ari\'{c}, F.~Proschan, and Y.~L. Tong.
\newblock {\em {C}onvex {F}unctions, {Partial} {Orderings}, and {S}tatistical
  {A}pplications}.
\newblock Academic Press Inc., Boston, MA, 1992.

\bibitem{rikostbo23}
E.~Riegler, G.~Koliander, D.~Stotz, and H.~B\"olcskei.
\newblock Recovery of matrices with low description complexity.
\newblock {\em Sampling Theory, Signal Processing, and Data Analysis}, 2026
  (submitted).

\bibitem{soze98}
N.~Sochen and Y.~Y. Zeevi.
\newblock Representation of colored images by manifolds embedded in higher
  dimensional non-{E}uclidean space.
\newblock In {\em {P}roc. {IEEE} {I}nt. {C}onf. on {I}mage {P}rocess.}, pages
  166--170, Oct 1998.
\newblock \doi{https://doi.org/10.1109/ICIP.1998.723450}.

\bibitem{yaliwa22}
Y.~Yang, L.~Zheng, and Y.~Wang.
\newblock On the capacity of deep generative networks for approximating
  distributions.
\newblock {\em Neural Networks}, 145:144--154, 2022.
\newblock \doi{https://doi.org/10.1016/j.neunet.2021.10.012}.

\end{thebibliography}
%%%%%%%%%%%%%%%%%%%%%%%%%%%%%%%%%%%%%%%%%%%%%%%%%%%%%%%%%%%%%%%%%%%%%%%%%%%%%%%%%%%%%%%%%%%%%%%%%%%%%%%%%%%%%%%%%%%%%%%%%%%%%%%%%%%%%%%%%%%%%%%%%%
\end{document}
The following result states that on convex open domains, the set of Lipschitz functions  equals the set of local Sobolev functions. 
%We will need the following result for  $\operatorname{Lip}^{(1)}$. 
\begin{thm}\label{lem:WLIP}
Let $\setO\subseteq \reals^m$ be open and convex and consider the  function   $f\colon \setO \to \reals$. Then, the following properties hold. 
\begin{enumerate}
\renewcommand{\theenumi}{(\roman{enumi})}
\renewcommand{\labelenumi}{(\roman{enumi})}
\item\label{eq:item1lipW}
$f$ is Lipschitz if and only if $f\in W^{1,\infty}(\setO^\prime)$ for all open and bounded sets $\setO^\prime\subseteq\setO$;
\item \label{eq:item2lipW} if $f$ is Lipschitz, then 
 \begin{align}
\operatorname{Lip}^{(1)}(f)= \lVert \lVert \nabla f\rVert_\infty \rVert_{L_\infty(\setO)}.   
\end{align} 
\end{enumerate}
\end{thm}
\begin{proof}
The proof follows along the same lines as that  of  \cite[Theorem 3.31]{kixx}.  
%The proof is almost identical to the proof of . 
%Specifically, the second part of the proof of  \cite[Theorem 3.31]{kixx} directly implies 
Suppose that  $f$ is Lipschitz.  Then, $\nabla f$ exists (Lebesgue) almost everywhere  thanks to  Rademacher's theorem \cite[Theorem 5.1.11]{krpa08}. 
%$f\in W^{1,\infty}(\setO^\prime)$ for all open and bounded sets $\setO^\prime\subseteq\setO$. 
Moreover,  we have 
\begin{align}
\abs{\tp{e_i} \nabla f(x)}
&= \lim_{t\to 0} \frac{\abs{ f (x+t e_i) -f(x)}}{t} \label{eq:diffquitient} \\
&\leq  \operatorname{Lip}^{(1)} ( f)\quad\text{for almost all $x\in\setO$ and $i=1,\dots, m$,}
\end{align}
which implies 
\begin{align}\label{eq:Lip1}
\lVert \lVert \nabla f\rVert_\infty \rVert_{L_\infty(\setO)} \leq \operatorname {Lip}^{(1)}(f)
\end{align}
 and, in turn,  
$f\in W^{1,\infty}(\setO^\prime)$ for all open and bounded sets $\setO^\prime\subseteq\setO$. 
%where \cref{eq:diffquitient}  follows from \cite{Equations (3.9) and (3.10)}

To establish the converse, suppose that $f\in W^{1,\infty}(\setO^\prime)$ for all open and bounded sets $\setO^\prime\subseteq\setO$. Arbitrarily fix $x_0,y_0\in \setO$ and pick an   open,  bounded, and convex set $\setO^\prime\subseteq\setO$  satisfying $x_0,y_0\in \setO^\prime$. Further,  arbitrarily fix a   $p\in(m,\infty)$. Since 
 $f\in W^{1,\infty}(\setO^\prime)$ and $\setO^\prime$ is bounded, \cite[Theorem 2.14]{adfo03} (applied to $f$ and the weak derivative of $f$)  implies 
$f\in W^{1,p}(\setO^\prime)$. Moreover, $\setO^\prime$ is a Lipschitz domain \cite[Definition 2.2]{dele04} owing to \cite[Lemma 2.3]{dele04} 
   so that \cite[Part II of Theorem 4.12]{adfo03} implies that $f$ has a 
representative that is  uniformly  continuous on $\setO^\prime$. Let  $\hat f$ be the zero extension of  that representative outside $\setO^\prime$. We next construct a  mollification of $\hat f$. To this end, 
set
\begin{align}
\eta (x)=
\begin{cases}
c\, e^{-1/(1-\lVert x\rVert_2)}&\text{if $\lVert x\rVert_2 <1$}\\
0&\text{else,}
\end{cases}
\end{align}
where $c\in(0,\infty)$ is chosen such  that $\int \eta \,\mathrm d\colL^m =1$. Further, 
for every $\varepsilon \in (0,\infty)$, set   $\eta_\varepsilon=\eta(\,\cdot\,/\varepsilon)/\varepsilon^m$, 
$\setO^\prime_\varepsilon=\{x\in\setO^\prime: \sup_{y\in \partial \setO}\lVert x-y\rVert_2>2\varepsilon\}$,  and 
$f_\varepsilon= \hat f\ast \eta_\varepsilon$, where $\ast$ denotes convolution.  Now,   
\cite[Item (e) of Theorem 2.29]{adfo03} implies that  $f_\varepsilon$ converges uniformly to $f$ on $\setO^\prime$. 
Next, note that 
\begin{align}
(\lVert \nabla f_\varepsilon \rVert_\infty)(x) 
&= \max_{i=1,\dots,m}  \abs{ \partial_{x_i} f_\varepsilon }(x)\label{eq:convder0} \\
&= \max_{i=1,\dots,m}  \abs{ (D _{x_i} f) \ast \eta_\varepsilon }(x)\label{eq:convder}  \\
&\leq \Big(\big(\max_{i=1,\dots,m} \abs{ D _{x_i} f }\big) \ast \eta_\varepsilon\Big)(x) \\
&= (\lVert D f\rVert_\infty\ast \eta_\varepsilon)(x)\quad\text{for all $x\in\setO^\prime_\varepsilon$}, \label{eq:convderA}
\end{align}
where \cref{eq:convder} is by \cite[Theorem 1.19]{kixx} with $D_{x_i} f$  denoting the weak derivative of $f$  and in \cref{eq:convderA} we set $Df=\tp{(D_{x_1}f,\dots,D_{x_m}f )}$. 
We therefore have 
\begin{align}
\lVert  \lVert \nabla f_\varepsilon \rVert_\infty  \rVert_{L_p(\setO^\prime_\varepsilon)}
&\leq \lVert  \lVert D f \rVert_\infty  \rVert_{L_p(\setO_\varepsilon^\prime)}  \lVert  \eta_\varepsilon \rVert_{L_1(\setO^\prime_\varepsilon)} \label{eq:convder2}\\
&\leq \lVert  \lVert Df \rVert_\infty  \rVert_{L_p(\setO)}  \quad\text{for all $\varepsilon \in(0,\infty)$,} \label{eq:convder3}
\end{align} 
where in \cref{eq:convder2} we applied Young's inequality for convolutions \cite[Corollary 2.25]{adfo03} in combination with \cref{eq:convder0}--\cref{eq:convderA}.  
 We can thus upper-bound 
 \begin{align}
\lVert  \lVert \nabla f_\varepsilon \rVert_\infty \rVert_{L_\infty(\setO^\prime_\varepsilon)} 
&=\lim_{p\to\infty}\lVert  \lVert \nabla f_\varepsilon \rVert_\infty  \rVert_{L_p(\setO^\prime_\varepsilon)}\label{eq:weakboundD}\\ 
&\leq \lim_{p\to\infty} \lVert  \lVert Df \rVert_\infty  \rVert_{L_p(\setO)}\label{eq:weakboundD1}\\
&\leq \lVert  \lVert D f   \lVert_\infty  \rVert_{L_\infty(\setO)}  \quad\text{for all $\varepsilon \in(0,\infty)$,} \label{eq:weakboundD2}
\end{align}
where in \cref{eq:weakboundD} and \cref{eq:weakboundD2} we applied \cite[Theorem 2.14]{adfo03} and  
\cref{eq:weakboundD1} follows from \cref{eq:convder2}--\cref{eq:convder3}  upon noting that 
 $p$ was arbitrary,  
Next, note that $\setO^\prime_{\varepsilon}$  is convex for all $\varepsilon\in (0,\infty)$. In fact, arbitrarily fix $\varepsilon\in(0,\infty)$ and set, for $z\in \setO^\prime$, $\setB(z,2\varepsilon)=\{x\in\setO^\prime: \lVert x-z\rVert_2\leq  2\varepsilon\}$. Arbitrarily pick $u,v\in \setO^\prime_{\varepsilon}$ and $t\in (0,1)$ and set $w=tu+(1-t)v$. Since every point in $\setB(w,2\varepsilon)$ is a convex combination of points in $\setB(u,2\varepsilon)$ and $\setB(v,2\varepsilon)$, convexity of $\setO^\prime$ implies $\setB(w,2\varepsilon)\subseteq \setO^\prime$ and, in turn, $w\in \setO^\prime_{\varepsilon}$ since $\setO^\prime$ is an open set.  
Now, let $\varepsilon_0\in (0,\infty)$ be sufficiently small so that $x_0,y_0\in\setO^\prime_{\varepsilon_0}$. 
Then, we have 
\begin{align}
\abs{f_\varepsilon(y_0) -f_\varepsilon(x_0)} 
&=\abs{\int_0^1 \tp{\nabla f_\varepsilon (tx_0 +(1-t)y_0) } (x_0-y_0)\, \mathrm d t }\label{eq:fundthm}\\
&\leq  \lVert  \lVert \nabla f_\varepsilon \rVert_\infty \rVert_{L_\infty(\setO^\prime_\varepsilon)}\lVert x_0-y_0\rVert_1\quad\text{for all $\varepsilon \in(0,\varepsilon_0)$},\label{eq:fundthm2}  
\end{align}
where \cref{eq:fundthm} follows from the fundamental theorem of calculus \cite[Proposition 1.6.41]{ta11} and 
in \cref{eq:fundthm2} we use the fact that $x_0,y_0\in\setO^\prime_{\varepsilon}$ implies $tx_0 +(1-t)y_0\in\setO^\prime_{\varepsilon}$ thanks to convexity of $\setO^\prime_{\varepsilon}$. 
 Taking the limit $\varepsilon\to 0 $ in \cref{eq:fundthm}--\cref{eq:fundthm2} and using  uniform convergence of $f_\varepsilon\to f$ on $\setO^\prime$ and  
\cref{eq:weakboundD}--\cref{eq:weakboundD2}, it follows that $\abs{f(y_0) -f(x_0)}\leq \lVert  \lVert D f \rVert_\infty \rVert_{L_\infty(\setO)}\lVert x_0-y_0\rVert_1$. Since $x_0,y_0$ were assumed to be arbitrary, we can conclude that $f$ is Lipschitz  with 
$\operatorname{Lip}^{(1)}(f)\leq \lVert  \lVert D f \rVert_\infty \rVert_{L_\infty(\setO)}$. We can replace $Df$ by $\nabla f$ thanks to Rademacher's theorem \cite[Theorem 5.1.11]{krpa08} so that  
\begin{align}\label{eq:Lip2}
\operatorname{Lip}^{(1)}(f)\leq \lVert  \lVert \nabla f \rVert_\infty \rVert_{L_\infty(\setO)}, 
\end{align}
which implies that $f$ is Lipschitz. Finally,  \cref{eq:item2lipW} follows from \cref{eq:item1lipW}, \cref{eq:Lip1}, and \cref{eq:Lip2}.

\end{proof}
